\documentclass{article}


\usepackage{arxiv}

\usepackage[utf8]{inputenc} 
\usepackage[T1]{fontenc}    
\usepackage{hyperref}       
\usepackage{url}            
\usepackage{booktabs}       
\usepackage{amsfonts}       
\usepackage{nicefrac}       
\usepackage{microtype}      
\usepackage{graphicx}
\usepackage{natbib}
\usepackage{doi}
\usepackage{tikz}

\usepackage[export]{adjustbox}
\usepackage{algorithm}
\usepackage[noend]{algpseudocode}
\usepackage{amssymb}
\usepackage{amsmath}
\usepackage{amsthm}
\usepackage{bbm}
\usepackage[shortlabels]{enumitem}
\usepackage{graphicx}
\usepackage{mathtools}
\usepackage{placeins}
\usepackage{subcaption}
\usepackage[titles]{tocloft}
\usepackage[compact]{titlesec}
\usepackage{thm-restate}
\usepackage{thmtools}
\usepackage{verbatim}
\usepackage{xcolor}

\usepackage{tikz-cd}

\newtheorem{theorem}{Theorem}
\newtheorem{lemma}[theorem]{Lemma}
\newtheorem{proposition}[theorem]{Proposition}
\newtheorem{corollary}[theorem]{Corollary}
\newtheorem{assumption}[theorem]{Assumption}
\theoremstyle{definition}

\newtheorem{remark}[theorem]{Remark}

\newcommand*{\R}{\mathbb{R}}
\newcommand*{\E}{\mathbb{E}}
\newcommand*{\dt}{\frac{\mathrm{d}}{\mathrm{dt}}}
\newcommand*{\ttheta}{\tilde{\theta}}
\newcommand*{\st}{\mathrm{s.t.}}
\newcommand*{\diff}{\mathrm{d}}

\newcommand*{\supp}{\operatorname{supp}}
\newcommand*{\tba}{\tilde{\boldsymbol{a}}}

\newcommand*{\bb}{\boldsymbol{b}}
\newcommand*{\ba}{\boldsymbol{a}}
\newcommand*{\tbf}{\tilde{\boldsymbol{f}}}
\newcommand*{\inner}[2]{\langle#1,#2\rangle}


\date{}

\title{Implicit Bias of Mirror Flow for Shallow Neural Networks in Univariate Regression}

\author{Shuang Liang\\UCLA\\\texttt{liangshuang@g.ucla.edu}
    \And
    Guido Mont\'ufar\\UCLA \& MPI MIS\\\texttt{montufar@math.ucla.edu}
}

\begin{document}

\maketitle

\begin{abstract}  
    We examine the implicit bias of mirror flow in least squares error regression with wide and shallow neural networks. For a broad class of potential functions, we show that mirror flow exhibits lazy training and has the same implicit bias as ordinary gradient flow when the network width tends to infinity. For univariate ReLU networks, we characterize this bias through a variational problem in function space. Our analysis includes prior results for ordinary gradient flow as a special case and lifts limitations which required either an intractable adjustment of the training data or networks with skip connections. We further introduce \emph{scaled potentials} and show that for these, mirror flow still exhibits lazy training but is not in the kernel regime. For univariate networks with absolute value activations, we show that mirror flow with scaled potentials induces a rich class of biases, which generally cannot be captured by an RKHS norm. A takeaway is that whereas the parameter initialization determines how strongly the curvature of the learned function is penalized at different locations of the input space, the scaled potential determines how the different magnitudes of the curvature are penalized. 
\end{abstract}




\section{Introduction} 

The implicit bias of a parameter optimization procedure refers to the phenomenon where, among the many candidate parameter values that might minimize the training loss, the optimization procedure is biased towards selecting one with certain particular properties. 
This plays an important role in explaining how overparameterized neural networks, even when trained without explicit regularization, can still learn suitable functions that perform well on new data \citep{DBLP:journals/corr/NeyshaburTS14, zhang2017understanding}. 
As such, the concrete characterization of the implicit biases of parameter optimization in neural networks and how they affect the solution functions is one of the key concerns in deep learning theory and has been subject of intense study in recent years \citep[see, e.g.,][]{ji2018gradient,
lyu2020gradient, 
williams2019gradient, 
pmlr-v125-chizat20a, 
sahs2022shallow, 
frei2023implicit, 
jin2023implicit}. 
In this work we consider an important class of parameter optimization procedures that have remained relatively underexplored, namely mirror descent as applied to solving regression problems with overparametrized neural networks.

Mirror descent is a broad class of first-order optimization algorithms that generalize ordinary gradient descent 
\citep{nemirovskij1983problem}. 
It can be viewed as using a general distance-like function, defined by the choice of a convex potential function, instead of the usual Euclidean squared distance to determine the update direction in the search space \citep{beck2003mirror}. 
The choice of the geometry of the parameter space of a learning system is crucial, as it can affect the speed of convergence and the implicit bias of parameter optimization \citep{neyshabur2017geometry}. 
The implicit bias of mirror descent has been studied theoretically for linear models 
\citep{gunasekar2018characterizing, sun2023unified, pesme2024implicit}, 
matrix sensing \citep{wu2021implicit}, 
and also for nonlinear models at least in a local sense \citep{9488312}. 
However, we are not aware of any works characterizing the implicit bias of mirror decent for neural networks describing more than the possible properties of the parameters also the nature of the returned solution functions. 
This is the gap that we target in this article. We focus on the particular setting of mirror flow applied to least squares error regression with shallow and infinitely wide neural networks.

We show for networks with standard parametrization that, under a broad class of potential functions, mirror flow displays lazy training, 
meaning that even though the loss converges to zero the parameters remain nearly unchanged. We show in turn that mirror flow has the same implicit bias as gradient flow when the network width tends to infinity. 
This is surprising because, in contrast, for finite-dimensional parameter models mirror flow has been observed to exhibit a different implicit bias than gradient descent \citep[see, e.g.,][]{gunasekar2018characterizing, sun2023unified}. 
We then proceed to characterize the particular solution function that is returned by mirror flow for wide ReLU networks,  
by expressing the difference between the network functions before and after training, $g(x)=f_{\mathrm{trained}}(x)-f_{\mathrm{initial}}(x)$, as the solution to a variational problem. 
By examining the function representation cost, we show that mirror flow is biased towards minimizing a weighted $L^2$ norm of the second-order derivative of the difference function, $\int p^{-1}(x) (g^{\prime\prime}(x))^2 \diff x$, where the weight $p^{-1}$ is determined by the network parameter initialization, and two additional terms that control the first-order derivative. 
This includes as a special case a previous result for gradient flow and ReLU activations by \citet{jin2023implicit}.  Our analysis also removes a key limitation of that result, which required either an intractable linear adjustment of the training data or otherwise adding skip connections to the network. 
We observe that other previous works on the implicit bias of gradient descent suggested that shallow ReLU networks in the kernel regime could be well-approximated by a cubic spline interpolator of the training data \citep{williams2019gradient, heiss2019implicit, sahs2022shallow} but overlooked the need of linearly adjusting the training data or adding skip connections. 
We illustrate that even a slight modification of the training data by applying a constant shift to all training labels can lead to significantly different solutions upon training which are biased not only in terms of curvature but also slopes.

To explore training algorithms capable of implementing different forms of regularization depending on the parameter geometry,  we introduce mirror descent with \emph{scaled potentials}. 
These are separable potentials $\Phi(\theta)=\sum_k \phi(\theta_k)$ 
whose input is scaled by the network width $n$ as $\Phi(n\theta)$. 
We show that for networks with standard parametrization, mirror flow with scaled potentials also exhibits lazy training but is not in the kernel regime. In this case the training dynamics can become fundamentally different from that of gradient flow, for which the lazy and kernel regimes fully overlap \citep[see, e.g.,][]{lee2019wide}. 
For networks with absolute value activations, we show that mirror flow with scaled potentials induces a rich class of implicit regularizations. 
The difference function minimizes a functional of the form $\int D_\phi(g^{\prime\prime}(x)p^{-1}(x), 0)p(x) dx$, where $D_\phi$ denotes the Bregman divergence induced by $\phi$. 
Notably, for non-homogeneous potential functions $\phi$, the resulting regularization is not homogeneous and therefore it cannot be expressed by an RKHS norm. 
Overall our results show that, whereas the choice of parameter initialization distribution determines how strongly the curvature is penalized at different locations in the input domain via $p$, the choice of the potential function $\phi$ determines how the different magnitudes of the curvature are penalized.

\subsection{Contributions} 

The goal of this article is to provide insights into the implicit bias of mirror flow for shallow and wide neural networks with standard parametrization. Our contributions can be summarized as follows. 

\begin{itemize}[leftmargin=*]
    \item We show for a broad class of potential functions that, mirror flow, when applied to networks with general input dimension, displays lazy training. We show that in this case the implicit bias of mirror flow is equivalent to that of gradient flow when the number of hidden units tends to infinity. 

    \item We characterize the implicit bias of mirror flow for univariate networks in function space. Concretely, we express the difference of the trained and initialized network function as the solution to a variational minimization problem in function space (Theorem~\ref{thm:ib-md-unscaled}). 
    
    \item We introduce a class of scaled potentials for which we show that mirror flow, applied to networks with general input dimension, falls in the lazy regime but not in the kernel regime. 
    We characterize the corresponding implicit bias for infinitely wide networks and observe it is strongly dependent on the potential and thus different from ordinary gradient flow. 
        
    \item Finally, we characterize the implicit bias of mirror flow with scaled potentials in function space for univariate networks with absolute value activation function (Theorem~\ref{thm:ib-md-scaled}). 
    Our results show that the choice of the potential biases the magnitude of the curvature of the solution function. 
\end{itemize}

\subsection{Related works}

\paragraph{Implicit bias of mirror descent} 
The implicit bias of mirror descent has been studied for under-determined linear models. 
In classification problems with linearly separable data, \citet{sun2022mirror} show that mirror descent with homogeneous potentials converges in direction to a max-margin classifier with respect to a norm defined by the potential. 
\citet{pesme2024implicit} extend this work to non-homogeneous potentials. 
For regression, \citet{gunasekar2018characterizing, azizan2018stochastic} show that mirror descent is biased towards the minimizer of the 
loss that is closest to the initialization with respect to the Bregman divergence defined by the potential. 
\citet{9488312} extend this result to nonlinear models, albeit placing a strong assumption that the initial parameter is sufficiently close to a solution. 
For matrix sensing, \citet{wu2021implicit} show mirror descent with spectral hypentropy potential tends to recover low-rank matrices. 
\citet{li2022implicit} connect gradient flow with commuting reparameterization to mirror flow, and show that both have identical trajectories and implicit biases. 
This connection is further studied by \citet{chou2023induce} specifically for linear models. 

\paragraph{Implicit bias of gradient descent for overparametrized networks} 
A seminal line of works in theoretical deep learning investigates the training dynamics of overparameterized neural networks in the so-called lazy regime \citep{du2018gradient, lee2019wide, lai2023generalization}. 
Gradient descent training in this regime minimizes the training loss but, remarkably, the network parameters remain nearly unchanged. 
In turn, the network becomes equivalent to a kernel model \citep{jacot2018neural}. 
A particular consequence is that in the lazy regime the implicit bias of gradient descent can be described as the minimization of an RKHS norm \citep{zhang2019type}. 
\citet{williams2019gradient, sahs2022shallow} derived the explicit form of this RKHS norm for univariate shallow ReLU networks. 
However, \citet{williams2019gradient} implicitly constrain the computation to a subspace in the RKHS consisting of functions with specific boundary conditions, and \citet{sahs2022shallow} placed an incorrect assumption on the behavior of the initial breakpoints of the network in the infinite width limit. 
\citet{jin2023implicit} obtained a function description of the implicit bias of gradient descent for univariate and multivariate shallow ReLU networks where they explicitly characterize the effect of parameter initialization. 
However, their result requires either networks with skip connections or an intractable linear data adjustment. The latter, noted by the authors, is also equivalent to computing the RKHS norm in a function subspace. 
Beside the kernel regime, we observe that a complementary line of works studies the dynamics and implicit bias in the active regime, also known as the rich or adaptive regime, 
where the hidden features change during training and the networks do not behave like kernel models \citep{mei2019mean, williams2019gradient, pmlr-v125-chizat20a, jacot2021saddle, mulayoff2021implicit,  JMLR:v23:21-1365, boursier2022gradient, chistikov2023learning, qiao2024stable}.

\paragraph{Representation cost} 
For a parameterized model and a given cost function, e.g., a norm, on the parameter space, the representation cost of a target function refers to the minimal cost required to implement that function. 
Existing work has revealed a close connection between representation cost and the implicit bias of gradient-based optimization methods, as the latter can often be described by specific parameter norm minimization problems. 
A number of works have studied the representation cost for neural networks with varying architectures \citep{gunasekar2018implicit, savarese2019infinite, ongie2019function, boursier2023penalising, jacot2023implicit, parkinson2024linear}. 
Among these, \citet{savarese2019infinite, ongie2019function, boursier2023penalising} have focused on infinitely wide and shallow ReLU networks, similar to our work. 
While these works focused on the $L^1$ norm on the output weights as the cost function, we considered the $L^2$ norm and encompasses a broad family of cost functions induced by Bregman divergence. 
Moreover, instead of studying representation cost in a static and optimization-free manner, we provide detailed analyses of the optimization dynamics of mirror flow and characterize the implicit bias with concrete convergence guarantees.


\section{Problem Setup}

We focus on fully-connected two-layer neural networks with $d$ input units, $n$ hidden units and one output unit, defined by: 
\begin{equation}
    f(x, \theta) = \sum_{k=1}^n a_k \sigma(w_k^T x - b_k) + d, 
    \label{model} 
\end{equation}
where $\sigma(z)$ is a nonlinear activation function. 
We write $\theta=\operatorname{vec}(\boldsymbol{W}, \boldsymbol{b}, \boldsymbol{a}, d) \in \R^{p}$, with $p=n(d+2)+1$, for the vector of all parameters, comprising the input weights $\boldsymbol{W}=(w_1, \ldots, w_n)^T\in\R^{n\times d}$, input biases $\boldsymbol{b}=(b_1, \ldots, b_n)^T\in\R^n$, output weights $\boldsymbol{a}=(a_1, \ldots, a_n)^T\in\R^n$, and output bias $d\in\mathbb{R}$. 
We assume the parameters of the network are initialized by independent samples of mutually independent random variables $\mathcal{W}, \mathcal{B}, \mathcal{A}$ and $\mathcal{D}$ as follows: 
\begin{equation}\label{init}
        w_k \sim \mathcal{W} = \operatorname{Unif}(\mathbb{S}^{d-1}),\ b_k \sim \mathcal{B},\ a_k \sim \frac{1}{\sqrt{n}}\mathcal{A}, \ k=1,\ldots, n; \quad d \sim \mathcal{D}. 
\end{equation}
We assume $\mathcal{W}$ is uniform on the unit sphere in $\R^d$, 
and $\mathcal{B}$, $\mathcal{A}$, $\mathcal{D}$ are sub-Gaussian random variables. 
Further, assume that $\mathcal{B}$ has a continuous density function $p_{\mathcal{B}}$ 
and that the random vector $(\mathcal{W}, \mathcal{B})$ is symmetric, i.e., $(\mathcal{W},\mathcal{B})$ and $(-\mathcal{W},-\mathcal{B})$ have the same distribution. 
We use $\mu$ for the distribution of $(\mathcal{W},\mathcal{B})$,
and $\hat{\theta}=\operatorname{vec}(\hat{\boldsymbol{W}}, \hat{\boldsymbol{b}}, \hat{\boldsymbol{a}}, \hat{d})$ for the initial parameter vector.  

In Appendix~\ref{app:init-param}, we further discuss our settings on the network parametrization and initialization. 
Specifically, we compare the parametrization in \eqref{model}, which is known as the standard parametrization, to the \emph{NTK parametrization} \citep{jacot2018neural}, and discuss the effect of the \emph{Anti-Symmetrical Initialization} \citep{zhang2019type}, which is a common technique to ensure zero function output at initialization, on the implicit bias of mirror descent training. 

Consider a finite training data set $\{(x_i, y_i)\}_{i=1}^m \subset \R^d \times \R$. 
Consider the mean squared error $L(\theta) = \frac{1}{2m}\sum_{i=1}^m(f(x_i, \theta) - y_i)^2$. 
Given a strictly convex and twice differentiable \emph{potential function} $\Phi$ on the parameter space, the \emph{Bregman divergence} \citep{bregman1967relaxation} with respect to $\Phi$ is defined as $D_{\Phi}(\theta, \theta') = \Phi(\theta)-\Phi(\theta')-\langle \nabla \Phi(\theta'), \theta-\theta'\rangle$. 
The mirror descent parameter update \citep{nemirovskij1983problem, beck2003mirror} 
for minimizing the loss $L(\theta)$ is defined by: 
\begin{equation}\label{eq:mirror-parameter-update}
    \theta(t+1) = \arg\min_{\theta'} \, \eta\langle \theta', \nabla L(\theta(t))\rangle + D_{\Phi}(\theta', \theta(t)), 
\end{equation}
where the constant $\eta>0$ is the learning rate. 
When the learning rate takes infinitesimal values (we consider $\eta=\frac{\eta_0}{n}$ for some constant $\eta_0>0$ and $n\rightarrow \infty$), 
\eqref{eq:mirror-parameter-update} can be characterized by the \emph{mirror flow}: 
\begin{equation}\label{traj-param}
    \dt \theta(t) = -\eta (\nabla^2 \Phi(\theta(t)))^{-1} \nabla L(\theta(t)). 
\end{equation}
The mirror flow \eqref{traj-param} can be interpreted as a Riemannian gradient flow with metric tensor $\nabla^2\Phi$. 
For convenience, we provide in Appendix~\ref{app:riemann} a brief introduction to the Riemannian geometry of the parameter space endowed with metric $\nabla^2\Phi$.

The dynamics for the predictions can be given as follows. 
Let $\boldsymbol{y}=[y_1,\ldots,y_m]^T\in\mathbb{R}^m$ denote the vector of the training labels and $\boldsymbol{f}(t)$ denote the vector of the network prediction on the training input data points at time $t$, i.e., $\boldsymbol{f}(t) = [f(x_1,\theta(t)), \ldots, f(x_m,\theta(t))]^T\in\mathbb{R}^m$. 
By the chain rule, 
\begin{equation}\label{traj-func}
    \begin{aligned}
        \dt \boldsymbol{f}(t) &= - n\eta H(t) (\boldsymbol{f}(t) - \boldsymbol{y}),\\ 
        \text{where } 
        H(t) &\triangleq n^{-1} J_\theta \boldsymbol{f}(t) (\nabla^2\Phi(\theta(t)))^{-1} J_\theta (\boldsymbol{f}(t))^T.  
    \end{aligned}
\end{equation}
Here $J_\theta \boldsymbol{f}(t) = [\nabla_\theta f(x_1,\theta(t)),\ldots,\nabla_\theta f(x_m,\theta(t))]^T \in \R^{m \times p}$ denotes the Jacobian matrix of the network output $\boldsymbol{f}(t)$ with respect to the network parameter $\theta$.  
We see in \eqref{traj-func} that different choices of $\Phi$ induce different modifications of the Gram matrix $J_\theta \boldsymbol{f}(t) (J_\theta \boldsymbol{f}(t))^T$.

To fix a few notations, we write $[n]=\{1,2,\ldots,n\}$ for $n\in\mathbb{N}$. 
We use $C^k(\Omega)$ for the space of $k$ times continuously differentiable real-valued functions on $\Omega$ and $C(\Omega)$ for the space of continuous real-valued functions on $\Omega$. 
We use $\supp(\cdot)$ to denote the support set of a function or a distribution. 
We say that a sequence of events $\{\mathcal{E}_n\}_{n\in \mathbb{N}}$ holds with high probability if $P(\mathcal{E}_n)\rightarrow 1$ as $n\rightarrow \infty$.

\section{Main Results} 

We present our main theoretical results, followed by an overview of the proof techniques.

\subsection{Implicit bias of mirror flow with unscaled potentials} 
\label{sec:main-uns}

We consider the following assumptions on the potential function. 
\begin{assumption}
\label{assump-potential-unscaled} 
    The potential function $\Phi\colon \R^{p} \rightarrow \R$ satisfies: 
    (i) $\Phi$ can be written in the form $\Phi(\theta) = \sum_{k=1}^{p} \phi (\theta_k-\hat{\theta}_k)$, where $\phi$ is a real-valued function on $\R$; 
    (ii) $\Phi$ is twice continuously differentiable; 
    (iii) $\nabla^2 \Phi(\theta)$ is positive definite for all $\theta\in \R^{p}$. 
\end{assumption}

Potentials that satisfy (i) are referred to as \emph{separable} in the literature, as they operate independently on each coordinate of the input. We consider separable potentials since they have a natural definition as $n$ grows and can be conveniently extended to infinitely dimensional spaces (see details in Section~\ref{sec:genral-framework}). Items (ii) and (iii) are common in the mirror descent literature \citep{sun2023unified, pesme2024implicit}. 
Assumption~\ref{assump-potential-unscaled} is quite mild and covers a broad class of potentials including: 
(1) $\phi(x) = x^2$ (which recovers ordinary gradient descent); 
(2) $\phi(x)=\cosh(x)$; and 
(3) the \emph{hypentropy function} $\phi(x)=\rho_\beta(x) = x\operatorname{arcsinh}(x/\beta)-\sqrt{x^2+\beta^2}$ for $\beta>0$ \citep{pmlr-v117-ghai20a}. 
In particular and significantly, we do not require homogeneity. 
Assumption~\ref{assump-potential-unscaled} does not cover the unnormalized negative entropy $\phi(x)=x\log x - x$ and the corresponding natural gradient descent (NGD). 
However, since the second derivative of the hypentropy function, $1/\sqrt{x^2 + \beta^2}$, converges to that of the negative entropy, $1/x$, as $\beta\rightarrow 0^+$, 
we can still obtain insights into NGD by applying our results to hypentropy potentials. 
We refer to potentials that satisfy Assumption~\ref{assump-potential-unscaled} as \emph{unscaled potentials} in order to distinguish them from the \emph{scaled potentials} which we examine in Section~\ref{sec:main-s}. 


The following theorem characterizes the implicit bias of mirror flow for least squares regression with wide neural networks with ReLU activation $\sigma(s)=\max\{0,s\}$. 

\begin{theorem}[\textbf{Implicit bias of mirror flow for wide ReLU network}]
    \label{thm:ib-md-unscaled}
    Consider a two layer ReLU network \eqref{model} with $d\geq 1$ input units and $n$ hidden units, where we assume $n$ is sufficiently large. 
    Consider parameter initialization \eqref{init} and, specifically, let $p_{\mathcal{B}}(b)$ denote the density function for random variable $\mathcal{B}$. 
    Consider any finite training data set $\{(x_i, y_i)\}_{i=1}^m$ that satisfies $x_i \neq x_j$ when $i \neq j$ and $\{\|x_i\|_2\}_{i=1}^m \subset \supp(p_{\mathcal{B}})$, and consider mirror flow \eqref{traj-param} with a potential function $\Phi$ satisfying Assumption~\ref{assump-potential-unscaled} and learning rate $\eta = \Theta(1/n)$. 
    Then, there exist constants $C_1, C_2>0$ such that 
    with high probability over the random parameter initialization, 
    for any $t\geq 0$, 
    \begin{equation}\label{lazy-kernel-uns}
        \|\theta(t) - \hat{\theta}\|_\infty  \leq C_1 n^{-1/2}, \ \lim_{n\rightarrow \infty} \|H(t)-H(0)\|_2 = 0, \ 
        \|\boldsymbol{f}(t)-\boldsymbol{y}\|_2 \leq e^{- \eta_0 C_2 t}\|\boldsymbol{f}(0)-\boldsymbol{y}\|_2. 
    \end{equation}
    Moreover, letting $\theta(\infty)=\lim_{t\rightarrow \infty} \theta(t)$, we have for any given $x\in\mathbb{R}^d$, that 
    $\lim_{n\rightarrow \infty}|f(x, \theta(\infty)) - f(x, \theta_{\mathrm{GF}}(\infty))|=0$, where $\theta_{\mathrm{GF}}(\infty)$ denotes the limiting point of gradient flow, i.e., mirror flow \eqref{traj-param} with $\Phi(\theta)=\|\theta\|_2^2$, on the same training data and initial parameter. 
    
    Assuming univariate input data, i.e., $d=1$, we have for any given $x\in\mathbb{R}$, that $\lim_{n\rightarrow \infty}|f(x, \theta(\infty)) - f(x, \hat{\theta}) - \bar{h}(x)|=0$, where $\bar{h}(\cdot)$ is the solution to the following variational problem: 
    \begin{equation}\label{ib-func-space-unscaled} 
    \begin{aligned} 
        &\min_{h\in \mathcal{F}_{\mathrm{ReLU}}} \mathcal{G}_1(h)+\mathcal{G}_2(h)+\mathcal{G}_3(h)
        \quad \st \  h(x_i) = y_i - f(x_i, \hat{\theta}),\ \forall i \in [m],\\
        &\text{where }
        \begin{dcases}
            \mathcal{G}_1(h) = \int_{\supp(p_{\mathcal{B}})}  \frac{\left(h^{\prime\prime}(x)\right)^2}{p_{\mathcal{B}}(x)} \diff x\\
            \mathcal{G}_2(h) = ( h^\prime(+\infty) + h^\prime(-\infty) )^2\\
            \mathcal{G}_3(h) = \frac{1}{\mathbb{E}[\mathcal{B}^2]}\Big( \int_{\supp(p_{\mathcal{B}})} h^{\prime\prime}(x)|x|\diff x - 2h(0) \Big)^2. 
        \end{dcases}        
    \end{aligned}
    \end{equation}
    Here $\mathcal{F}_{\mathrm{ReLU}}=\{ h(x) = \int \alpha(w,b) [wx-b]_+  \diff \mu \colon \alpha\in C(\supp(\mu)), \alpha \text{ is uniformly continuous}\}$, 
    $h^\prime(+\infty)=\lim_{x\rightarrow +\infty}h^\prime(x)$, and $h^\prime(-\infty)=\lim_{x\rightarrow -\infty}h^\prime(x)$. 
    In addition, if $\mathcal{B}$ is supported on $[-B, B]$ for some $B>0$, then 
    $\mathcal{G}_3(h) = \frac{1}{\mathbb{E}[\mathcal{B}^2]} ( B (h^\prime(+\infty) - h^\prime(-\infty) ) - (h(B) + h(-B) ) )^2$. 
\end{theorem}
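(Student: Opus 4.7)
The plan is to split the proof into three parts: (a) the lazy-training estimates in \eqref{lazy-kernel-uns}; (b) the asymptotic equivalence with gradient flow; and (c) the function-space variational characterization for $d=1$, with the compactly-supported form of $\mathcal{G}_3$ reduced to (c) by integration by parts. The unifying observation is that, by Assumption~\ref{assump-potential-unscaled}(i), $\nabla^2\Phi(\hat\theta)=\phi''(0)\,I$, so near initialization the Bregman divergence is a scaled Euclidean norm and mirror flow is a small perturbation of gradient flow. In particular, for (b) this gives $D_\Phi(\theta,\hat\theta)=\tfrac12\phi''(0)\|\theta-\hat\theta\|^2+O(\|\theta-\hat\theta\|^3)$, and the cubic remainder summed over the $p=n(d+2)+1$ coordinates is $O(n^{-1/2})\to 0$; hence the $D_\Phi$-minimizer on the linearized feasible slice agrees asymptotically with its $L^2$ counterpart, which is what gradient flow converges to, giving the stated equivalence with $\theta_{\mathrm{GF}}(\infty)$ after post-composing with $J_\theta f(x,\hat\theta)$.

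For (a) I would run a standard bootstrap. Under the hypothesis $\|\theta(t)-\hat\theta\|_\infty\leq C_1n^{-1/2}$, the $C^2$-regularity of $\Phi$ yields $\|(\nabla^2\Phi(\theta(t)))^{-1}-\phi''(0)^{-1}I\|_2=O(n^{-1/2})$, and the Lipschitzness of $J_\theta f$ in $\theta$ away from activation ties yields an analogous estimate on $J_\theta\mathbf{f}(t)J_\theta\mathbf{f}(t)^T$, so $H(t)$ stays close to $H(0)$. A concentration argument using the sub-Gaussian assumptions on $\mathcal{A},\mathcal{B}$ and the symmetry of $\mu$ gives $\lambda_{\min}(H(0))\geq C_2>0$ with high probability; the linearized dynamics $\dot{\mathbf{f}}\approx -n\eta H(0)(\mathbf{f}-\mathbf{y})$ then yield exponential residual decay, and integrating $\|\dot\theta\|_2=O(\eta\|\nabla L\|)$ over $t\in[0,\infty)$ using the exponential envelope closes the bootstrap.

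For (c) I would translate the parameter-space minimum-norm problem into function space. The lazy linearization
\[h(x)=\sum_k\Delta a_k[\hat w_kx-\hat b_k]_+ + \sum_k \hat a_k\mathbbm{1}[\hat w_kx>\hat b_k](x\Delta w_k-\Delta b_k) + \Delta d,\]
with $\hat w_k\in\{\pm 1\}$ and $\hat a_k=A_k/\sqrt n$, combined with the representer-type structure $\Delta\theta=J_\theta\mathbf{f}(0)^T\lambda$ of the minimum-norm solution, allows one to take $n\to\infty$ and obtain an integral representation of $h$ against a uniformly continuous density $\alpha$, placing $h\in\mathcal{F}_{\mathrm{ReLU}}$. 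One then shows that the limit of $\|\Delta\|^2$ admits a decomposition into (i) a norm on $\alpha$ equivalent, after change of variables and using the symmetry of $\mu$, to $\mathcal{G}_1(h)$, and (ii) two further quadratic penalties in scalar functionals of $h$ arising from the joint contribution of $\Delta w_k, \Delta b_k, \Delta d$ together with the random factor $\hat a_k$ (which enters via $\mathbb{E}[\mathcal{A}^2]$ rather than $\mathbb{E}[\mathcal{A}]$). Identifying those two functionals as the slope-at-infinity sum $h'(+\infty)+h'(-\infty)$ and the weighted moment $\int h''|x|\,dx-2h(0)$ produces $\mathcal{G}_2(h)$ and $\mathcal{G}_3(h)$.

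The principal obstacle is step (c): rigorously justifying the discrete-to-integral passage with a \emph{uniformly continuous} density $\alpha$, and correctly isolating the two scalar couplings between $h$ and $(\Delta w_k,\Delta b_k,\Delta d)$ so that exactly $\mathcal{G}_2$ and $\mathcal{G}_3$ emerge rather than, for instance, an $L^2$ penalty on an infinite family of slope-type functionals. The compactly-supported form of $\mathcal{G}_3$ then follows by a direct integration by parts: on $[-B,B]$, $\int h''(x)|x|\,dx = B(h'(B)-h'(-B)) - (h(B)+h(-B)) + 2h(0)$, and since $h$ is affine outside $[-B,B]$ (no breakpoints lie there), $h'(\pm B)=h'(\pm\infty)$, yielding the displayed expression.
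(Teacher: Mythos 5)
Your parts (a) and (b) are sound and essentially the paper's route: lazy-training estimates are obtained by a bootstrap (Proposition~\ref{lazy-unscaled}), and the convergence to the gradient-flow limit follows because $\nabla^2\Phi(\hat\theta)=\phi''(0)I$ makes the Bregman divergence asymptotically quadratic (Proposition~\ref{prop:reduce-to-l2}). One small caveat: Assumption~\ref{assump-potential-unscaled} only gives $C^2$ regularity, so the remainder is $o(\|\theta-\hat\theta\|^2)$ (Peano), not $O(\|\theta-\hat\theta\|^3)$; your $O(n^{-1/2})$ count survives anyway after a $\max_k R(\cdot)\to 0$ argument.

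Part (c) has a genuine error in where you locate the sources of $\mathcal{G}_2$ and $\mathcal{G}_3$. Under standard parametrization \eqref{model}--\eqref{init}, the Jacobian blocks scale as $J_{a_k}f=\Theta(1)$, $J_{w_k}f,J_{b_k}f=O(\hat a_k)=O(n^{-1/2})$, $J_d f=O(1)$, and the lazy bound gives $\Delta w_k,\Delta b_k,\Delta d = O(n^{-1})$ (see the definition of $\tau(t)$ in the paper's Proposition~\ref{lazy-unscaled}). Consequently the term $\sum_k\hat a_k\mathbbm{1}[\hat w_k x>\hat b_k](x\Delta w_k-\Delta b_k)+\Delta d$ in your linearization is $O(n^{-1/2})$ and vanishes in the limit — equivalently, the corresponding contribution to the limiting kernel is $\frac1n\sum_k\hat a_k^2(\cdot)=O(n^{-1})\to 0$, as seen in Lemma~\ref{MNTK0} and \eqref{H0-tH0-unscaled}. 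So the $\Delta w,\Delta b,\Delta d$ directions cannot be the origin of $\mathcal{G}_2,\mathcal{G}_3$, and the $\mathbb{E}[\mathcal{A}^2]$ you predict for those terms (from the $\hat a_k^2$ coupling) does not appear anywhere in the theorem — the denominator in $\mathcal{G}_3$ is $\mathbb{E}[\mathcal{B}^2]$. In the paper, all three functionals arise from the output-weight space alone: after linearization reduces the problem to \eqref{ib-finite} in the $\boldsymbol{a}$-variables, one passes to an integral representation with density $\alpha$ and uses the even-odd decomposition $\alpha=\alpha^++\alpha^-$ together with the identity \eqref{eq:decomp-abs-aff}, under which the even part yields $\mathcal{G}_1$ and the odd part (which encodes the affine component of $h$) yields $\mathcal{G}_2+\mathcal{G}_3$ via the minimal-representation-cost computation of Proposition~\ref{min-rep-unscaled}; $\mathbb{E}[\mathcal{B}^2]$ enters as the second-moment constraint on the odd block. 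Your integration-by-parts identity at the end, $\int_{-B}^{B}h''(x)|x|\,\mathrm{d}x=B(h'(B)-h'(-B))-(h(B)+h(-B))+2h(0)$, is correct and does reduce the compactly-supported form of $\mathcal{G}_3$ to the general one.
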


To interpret the result, \eqref{lazy-kernel-uns} means that mirror flow exhibits lazy training and the network behaves like a kernel model with kernel $H(0)$. 
%
%
The theorem indicates that mirror flow has the same implicit bias as ordinary gradient flow, and the reason behind this is that, loosely speaking, since the parameters stay nearly unchanged during training, the effect of the potential reduces to the effect of a quadratic form, i.e., $\Phi(\theta) \simeq \|\theta\|^2_2$. Further below we discuss scaled potentials breaking outside this regime. 

To interpret the implicit bias described in \eqref{ib-func-space-unscaled}, 
we discuss the effect of the different functionals on the solution function: 
(1) $\mathcal{G}_1$ serves as a curvature penalty, which can be viewed as the $p_{\mathcal{B}}^{-1}$-weighted $L^2$ norm of the second derivative of $h$. 
Note in our setting $p_\mathcal{B}$ corresponds to the density of $\mathcal{B}/\mathcal{W}$, which is the breakpoint of a ReLU in the network. 
(2) $\mathcal{G}_2$ favors opposite slopes at positive and negative infinity. 
(3) The effect of $\mathcal{G}_3$ is less clear under general sub-Gaussian initializations. 
In the case where $\mathcal{B}$ is compactly supported, $\mathcal{G}_3$ can be interpreted as follows: if $h$ is generally above the $x$-axis (specifically, $h(B)+h(-B)>0$), then $\mathcal{G}_3$ encourages $h^\prime(+\infty) > h^\prime(-\infty)$ and thus favors a U-shaped function; if on the contrary $h$ is below the $x$-axis, $\mathcal{G}_3$ favors an inverted U-shaped function.

The functional $\mathcal{G}_1$ is familiar from the literature on space adaptive interpolating splines and in particular natural cubic splines when $p_\mathcal{B}$ is constant \citep[see, e.g.,][]{ahlberg2016theory}. 
However, with 
$\mathcal{G}_2$ and $\mathcal{G}_3$, the solution to \eqref{ib-func-space-unscaled} has regularized boundary conditions. 
In Figure~\ref{fig:thm1}, we illustrate numerically that gradient flow returns natural cubic splines only on very specific training data. 


\begin{remark}[Relaxed potential assumption] 
Assumption~\ref{assump-potential-unscaled} requires the potential $\Phi$ to be ``centered'' at the initial parameter $\hat{\theta}$. 
In Appendix~\ref{app:uncentered}, we show Theorem~\ref{thm:ib-md-unscaled} also holds for potentials of the form $\Phi(\theta)=\sum_{k}\phi(\theta_k)$, given the initial input biases are sampled from a bounded distribution. 
\end{remark}

\begin{remark}[Gradient flow with reparametrization]
As observed by \citet{li2022implicit}, mirror flow is equivalent to gradient flow with commuting reparameterization. 
In Appendix~\ref{app:reparam}, we use our results to characterize the implicit bias of gradient flow with particular reparametrizations. 
\end{remark}

\begin{remark}[Absolute value activations]
For networks with absolute value activation $\sigma(s) = |s|$, a similar result to Theorem~\ref{thm:ib-md-unscaled} holds, which we formally present in Theorem~\ref{thm:ib-md-unscaled-abs}. The only change is that $\mathcal{G}_2$ and $\mathcal{G}_2$ disappear in the objective functional. 
\end{remark}

\begin{remark}[Skip connections]
For networks with skip connections, i.e., $f_{\mathrm{skip}}(x, (\theta, u, v)) = f(x,\theta) + ux + v$ with $u,v \in \R$, 
the skip connections can modify the value of $\mathcal{G}_2(f_{\mathrm{skip}})$ and $\mathcal{G}_3(f_{\mathrm{skip}})$ without affecting $\mathcal{G}_1(f_{\mathrm{skip}})$.  
Therefore, if we assume that the skip connections have a much faster training dynamics than the other parameters\footnote{This is can achieved by setting the learning rate for skip connections significantly larger than the learning rate of the other network parameters.}, then the solution function is biased towards solving $\min_{h\in \mathcal{F}_{\mathrm{ReLU}}, u,v\in\R} \mathcal{G}_1(h)$ subject to $h(x_i)+ux_i+v=y_i$ for $i\in[m]$. By this we recover the result obtained by \citet{jin2023implicit} for univariate networks with skip connections. 
\end{remark}

\begin{remark}[Natural gradient descent]
Theorem~\ref{thm:ib-md-unscaled} holds for hypentropy potential $\rho_\beta$ for any $\beta>0$, and in particular, in the limit as $\beta\rightarrow 0^+$. 
This is consistent with a result by \citet{zhang2019fast} stating NGD converges to the same point as ordinary gradient descent. 
\end{remark}

\subsection{Implicit bias of mirror flow with scaled potentials}
\label{sec:main-s}

We introduce scaled potentials and show that these induce a rich class of implicit biases. 

\begin{assumption}
\label{assump-potential-scaled}
    Assume the potential function $\Phi\colon \R^{p} \rightarrow \R$ satisfies: (i) $\Phi$ can be written in the form of 
    $\Phi(\theta) = \frac{1}{n^2} \sum_{k=1}^{p} \phi\big( n(\theta_k-\hat{\theta}_k ) \big)$, where $\phi$ is a real-valued function on $\R$; 
    (ii) $\phi$ takes the form of $\phi(x) = \frac{1}{1+\omega} \big(\psi(x) + \omega x^2\big)$, where $\omega>0$ and $\psi \in C^3(\R)$ is a convex function on $\R$. 
\end{assumption}

In Assumption~\ref{assump-potential-scaled}, the factor $1/n^{2}$ prevents the largest eigenvalue of the Hessian $\nabla^2\Phi$ from diverging as $n\rightarrow \infty$. 
The form of $\phi$ can be interpreted as a convex combination of an arbitrary 
smooth, convex function $\psi$ and the quadratic function $x^2$. 
A further discussion is presented in Section~\ref{sec:genral-framework}.

The following result characterizes the implicit bias of mirror flow with scaled potentials for wide networks with absolute value activations. We consider absolute value activations because, compared to ReLU they yield a smaller function space that makes the problem more tractable.

\begin{theorem}[\textbf{Implicit bias of scaled mirror flow for wide absolute value network}]
\label{thm:ib-md-scaled}
    Consider a two layer network \eqref{model} with absolute value activation, $d\geq 1$ input units and $n$ hidden units, where we assume $n$ is sufficiently large. 
    Consider parameter initialization \eqref{init} and, specifically, let $p_{\mathcal{B}}(b)$ denote the density function for random variable $\mathcal{B}$. 
    Additionally, assume $\mathcal{B}$ is compactly supported. 
    Given any finite training data set $\{(x_i, y_i)\}_{i=1}^m$ with $x_i \neq x_j$ when $i \neq j$ and $\{\|x_i\|_2\}_{i=1}^m \subset \supp(p_{\mathcal{B}})$, consider mirror flow \eqref{traj-param} with a potential satisfying Assumptions~\ref{assump-potential-scaled} and learning rate $\eta=\Theta(1/n)$. 
    Specifically, assume the potential is induced by a univariate function $\phi(z)=\frac{1}{1+\omega}(\psi(z)+\omega z^2)$.  
    There exists $\omega_0>0$ (depending on $\psi$, the training data and the initialization distribution) such that for any $\omega>\omega_0$, 
    there exist constants $C_1, C_2>0$ such that 
    with high probability over the random parameter initialization, 
    for any $t\geq 0$, 
    \begin{equation}\label{lazy-kernel-s}
        \|\theta(t) - \hat{\theta}\|_\infty  \leq C_1 n^{-1}, \ 
        \|\boldsymbol{f}(t)-\boldsymbol{y}\|_2 \leq e^{- \eta_0 C_2 t}\|\boldsymbol{f}(0)-\boldsymbol{y}\|_2
    \end{equation}
    Moreover, assuming univariate input data, i.e., $d=1$, and letting $\theta(\infty)=\lim_{t\rightarrow \infty} \theta(t)$, we have for any given $x\in\mathbb{R}$ that $\lim_{n\rightarrow \infty}|f(x, \theta(\infty)) - f(x,\hat{\theta}) -\bar{h}(x)|=0$, where $\bar{h}(\cdot)$ is the solution to the following variational problem: 
    \begin{equation}\label{ib-func-space-scaled}
        \min_{h\in\mathcal{F}_{\mathrm{Abs}}} \int_{\supp(p_{\mathcal{B}})} D_\phi\Big( \frac{h^{\prime\prime}(x)}{2p_{\mathcal{B}}(x)} , 0\Big) p_{\mathcal{B}}(x) \diff x \quad \st \ h(x_i) = y_i-f(x_i, \hat{\theta}),\ \forall i\in[m]. 
    \end{equation}
    Here $D_\phi$ denotes the Bregman divergence on $\R$ induced by the univariate function $\phi$, and 
    $\mathcal{F}_{\mathrm{Abs}}=\{ h(x) = \int \alpha(w,b) |wx-b| \diff \mu \colon \alpha\in C(\supp(\mu)), \alpha\text{ is even and uniformly continuous}\}$. 
\end{theorem}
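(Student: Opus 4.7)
The proof has two parts. First, I would establish the lazy-training bounds \eqref{lazy-kernel-s}; second, identify the infinite-width limit of $f(\cdot,\theta(\infty))-f(\cdot,\hat{\theta})$ with the unique minimizer $\bar h$ of \eqref{ib-func-space-scaled} by matching the first-order optimality conditions of the mirror flow trajectory at $t=\infty$ with those of the variational problem. The crucial feature of the scaled potential is that parameter drift is of order $1/n$ while the dual argument $n(\theta_k-\hat{\theta}_k)$ is of order $1$, which keeps the nonlinearity of $\phi'$ alive in the limit and produces the Bregman-divergence bias (contrasting with the unscaled case where the potential effectively reduces to its quadratic part).

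For the lazy training, under Assumption~\ref{assump-potential-scaled} the Hessian $\nabla^2\Phi(\theta)=\operatorname{diag}(\phi''(n(\theta_k-\hat{\theta}_k)))$ satisfies $\phi''\ge \tfrac{2\omega}{1+\omega}>0$ by convexity of $\psi$, so the componentwise mirror flow $\dot\theta_k=-\eta\,\partial_k L(\theta)/\phi''(n(\theta_k-\hat{\theta}_k))$ has a bounded positive preconditioner. I would adapt the bootstrap argument used for Theorem~\ref{thm:ib-md-unscaled}: assuming $\|\theta(t)-\hat{\theta}\|_\infty\le C_1/n$, the kernel $H(t)$ in \eqref{traj-func} stays $o(1)$-close to its positive definite limit $H(0)$ (bounded below with high probability under the sub-Gaussian, symmetric initialization), yielding the exponential loss decay; integrating $\dot\theta_k$ over time then produces drifts of order $1/n$ for the output weights (whose loss-gradient components are $O(1)$) and of order $n^{-3/2}$ for the input weights and biases (whose loss-gradient components are $O(1/\sqrt n)$ because $a_k=O(1/\sqrt n)$). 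The threshold $\omega_0$ is chosen large enough that the nonlinear preconditioner is sufficiently close to a constant for this bootstrap to close.

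To match optimality conditions, integrating $\tfrac{d}{dt}\nabla\Phi(\theta)=-\eta\nabla L(\theta)$ on the $a_k$-coordinate from $0$ to $\infty$ and using the lazy bounds to freeze $(w_k(s),b_k(s))\approx(\hat w_k,\hat b_k)$ inside the activations yields
\begin{equation*}
    \phi'(\gamma_k)-\phi'(0)\;=\;\sum_{i=1}^m\tilde\lambda_i\,|\hat w_k^T x_i-\hat b_k|+o(1),
\end{equation*}
where $\gamma_k:=n(a_k(\infty)-\hat a_k)=O(1)$ and $\tilde\lambda_i:=-\tfrac{\eta_0}{m}\int_0^\infty(f(x_i,\theta(s))-y_i)\,ds$. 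Since $\phi'$ is a strictly monotone $C^2$ bijection and the right-hand side is continuous in $(\hat w_k,\hat b_k)$, the $\gamma_k$ are the evaluations of a continuous, even-in-$(w,b)$ function $\gamma(w,b)=(\phi')^{-1}(\phi'(0)+\sum_i\tilde\lambda_i|w^T x_i-b|)$ at the i.i.d.\ samples $(\hat w_k,\hat b_k)\sim\mu$. A law-of-large-numbers argument then gives
\begin{equation*}
    f(x,\theta(\infty))-f(x,\hat{\theta})\;\to\;\int\gamma(w,b)\,|w^T x-b|\,d\mu(w,b)\;=:\;\bar h(x).
\end{equation*}
On the variational side, assuming $d=1$ and using that $w\in\{\pm1\}$ combined with the symmetry of $\mu$ and the evenness of $\alpha$, the constraint reduces to $h(x)=\int \alpha(1,b)|x-b|p_{\mathcal{B}}(b)\,db$ with $h''(x)=2\alpha(1,x)p_{\mathcal{B}}(x)$; the functional derivative of the Lagrangian of \eqref{ib-func-space-scaled} then yields precisely the optimality condition $\phi'(\alpha(w,b))-\phi'(0)=\sum_i\mu_i|w^T x_i-b|$. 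Strict convexity of $D_\phi(\cdot,0)$ gives uniqueness, so $\alpha=\gamma$ and $\bar h$ is the desired minimizer.

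The most delicate step is the uniform passage from the discrete sample $\{\gamma_k\}$ to the continuum field $\gamma$, together with control of the finite-time integrals defining $\tilde\lambda_i$. This requires quantitative empirical-process estimates over the compact support of $\mu$ (using the $C^3$-smoothness of $\psi$ for equicontinuity of $(\phi')^{-1}$), along with a refined lazy estimate showing $\sup_{t\ge0}|w_k(t)-\hat w_k|+|b_k(t)-\hat b_k|=O(n^{-3/2})$ uniformly in $k$. The large-$\omega$ threshold $\omega_0$ serves a dual role: (i) it keeps the initial kernel $H(0)$ well-conditioned in the infinite-width limit, and (ii) it forces each $\gamma_k$ to stay in a bounded set where $\phi''$ is uniformly bounded below, closing the bootstrap.
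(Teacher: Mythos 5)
Your proposal follows essentially the same outline as the paper's argument: lazy training via a bootstrap, a freeze-the-features linearization to reduce to a model that is linear in the output weights, a KKT characterization of the flow's terminal point, a law-of-large-numbers passage to the continuum, and finally a change of variables $h''(x)=2\alpha(1,x)p_{\mathcal{B}}(x)$ to rewrite the cost as a functional of $h$. The one place you depart from the paper is in how you obtain the KKT system: you integrate $\frac{\mathrm{d}}{\mathrm{d}t}\nabla\Phi(\theta)=-\eta\nabla L(\theta)$ directly on the $a_k$-coordinate to get $\phi'(\gamma_k)-\phi'(0)=\sum_i\tilde\lambda_i\,|\hat w_k^{T}x_i-\hat b_k|+o(1)$ with $\tilde\lambda_i$ defined as a time integral of the residual, whereas the paper first reduces to training only the output weights (Propositions~\ref{lazy-scaled-lin}, \ref{linear-scaled}) and then invokes the implicit-bias result of \citet{gunasekar2018characterizing} (Theorem~\ref{guna-ib}) to obtain the Bregman projection, whose KKT conditions it then analyzes. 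Your direct integration is really an inline re-derivation of that theorem in this setting; it buys a somewhat more self-contained narrative at the cost of having to track the $o(1)$ errors and the random Lagrange multipliers $\tilde\lambda^{(n)}$ by hand, which is precisely what the paper packages up via Proposition~\ref{inf-scaled} and the root-convergence and uniform-LLN lemmas (\ref{lemma-convergence_of_root}, \ref{lemma-uniform_convergence}). You correctly flag this as the delicate step.

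Two small points worth flagging. First, your role (i) for the threshold $\omega_0$ is off: $\omega$ only rescales $H(0)=(\phi''(0))^{-1}\cdot n^{-1}J_\theta\boldsymbol{f}(0)J_\theta\boldsymbol{f}(0)^T$ by a scalar and does not affect its conditioning, which is already handled by Lemma~\ref{MNTK0} without any constraint on $\omega$. In the paper, $\omega_0$ is instead chosen so that the preconditioner $\nabla^2\Phi(\theta(t))^{-1}$ stays well-conditioned and Lipschitz along the whole trajectory — conditions \eqref{convergence-condtion-scaled} and \eqref{linearization-condition-scaled} in Propositions~\ref{lazy-scaled}, \ref{linear-scaled}, \ref{condition-a} — which is your role (ii). Second, the $O(n^{-3/2})$ drift for the input weights and biases you derive from $|a_k|=O(n^{-1/2})$ is correct as a pointwise refinement, but the paper does not need it: an $O(n^{-1})$ bound for all coordinates already makes the activation-freezing error $o(1)$. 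The quantity the paper does bound at order $n^{-3/2}$ is $\|\boldsymbol a(t)-\tilde{\boldsymbol a}(t)\|_\infty$, the gap between the full and the frozen-feature output-weight trajectories, which is a different estimate serving the linearization step.
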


Note, given any non-homogeneous function $\phi$, the objective functional in \eqref{ib-func-space-scaled} is not homogeneous and therefore it cannot be expressed by any function norm (including RKHS norm). 
Also note, even though mirror flow displays lazy training, the kernel matrix $H(t)$ does not necessarily stay close to $H(0)$ due to the scaling factor $n$ inside the potential. 
This implies that training may not be in the kernel regime (see numerical illustration in Figure~\ref{fig:dist-s}).

The implicit bias described in \eqref{ib-func-space-scaled} can be interpreted as follows: 
the parameter initialization distribution determines $p_\mathcal{B}$ and thus the strength of the penalization of the second derivative of the learned function at different locations of the input space; 
on the other hand, the potential function $\phi$ determines the strength of the penalization of the different magnitudes of the second derivative. 
In particular, if we choose $\phi=\frac{1}{2}(|x|^p+x^2)$, then a smaller $p$ allows relatively extreme magnitudes for the second derivative, 
whereas a larger $p$ allows for more moderate values. 
This is illustrated in Figure~\ref{fig:thm2}.

Theorem~\ref{thm:ib-md-scaled} relies on the assumption that $\omega$ in $\phi=\frac{1}{1+\omega}(\psi + \omega x^2)$ is larger than a threshold $\omega_0$. 
In Proposition~\ref{condition-a}, we give a closed-form formula for this threshold. 
Further, numerical experiments in Section~\ref{sec:experiment} indicate the theorem holds even when $\omega$ takes relatively small values, for instance, $\omega = 1$.

\begin{remark}[Scaled natural gradient descent]
In Theorem~\ref{prop:hypentory}, we show Theorem~\ref{thm:ib-md-scaled} also holds for scaled hypentropy potentials $\Phi(\theta)=\frac{1}{n^2}\rho_\beta(n(\theta_k - \hat{\theta}_k))$ when $\beta$ is sufficiently large, and it therefore provides insights into scaled NGD. 
Empirically we observe that as $\beta\rightarrow 0^+$, networks trained with scaled hypentropy converge to the solution to problem \eqref{ib-func-space-scaled} with $\phi(x)=|x|^3$ (see Figure~\ref{fig:hypentro}). 
\end{remark}

\subsection{General framework for the proof of the main results}
\label{sec:genral-framework}

We present a general framework to derive our main results, which builds on the strategy of \citet{jin2023implicit} with a few key differences. 
%
%
First, we examine the training dynamics of mirror flow, 
for which the analysis is more intricate than that of gradient flow due to the need to account for the evolution of the inverse Hessian of the potential throughout training. 
In particular, in the scaled potential case, 
we carefully analyze how the Hessian's properties, such as the decay rate of its smallest eigenvalue, affect the parameter trajectories. 
Second, we solve the minimal representation cost problem to derive the function description of the implicit bias. 
This allows us to remove a previous requirement to linearly adjust the data when the network does not include skip connections. 
The detailed proofs for Theorems~\ref{thm:ib-md-unscaled} and \ref{thm:ib-md-scaled} are presented in Appendix~\ref{app:ib-md-unscaled} and Appendix~\ref{app:ib-scaled}, respectively.

\paragraph{Linearization} 
The first step of the proof is to show that, under mirror flow, functions obtained by training the network are close to those obtained by training only the output weights. 
Let $\ttheta(t) = \operatorname{vec}(\boldsymbol{W}(0), \boldsymbol{b}(0), \tilde{\boldsymbol{a}}(t), d(0))$ denote the parameter under the update rule where $\boldsymbol{W}, \boldsymbol{b}, d$ are kept at their initial values and 
\begin{equation} 
\label{traj-param-lin}
\dt \tilde{\boldsymbol{a}}(t) = -\eta (\nabla^2\Phi(\tilde{\boldsymbol{a}}(t))) \nabla_{\tilde{\boldsymbol{a}}} L(\ttheta(t)), \quad  \tilde{\boldsymbol{a}}(0) = \boldsymbol{a}(0). 
\end{equation}
Here, we write $\Phi(\tba)$ for the potential function applied to the output weights. For unscaled potentials, $\Phi(\tba)=\sum_{k=1}^n\phi(\tilde{a}_k-\hat{a}_k)$; for scaled potentials, $\Phi(\tba)=n^{-2}\sum_{k=1}^n \phi(n(\tilde{a}_k-\hat{a}_k))$. 
Let $\tbf(t)=[f(x_1,\tilde{\theta}(t)), \ldots, f(x_m,\tilde{\theta}(t))]^T\in\mathbb{R}^m$. 
The evolution of $\tbf(t)$ can be given by 
\begin{equation}
\label{traj-func-lin}
    \begin{aligned}
        \dt \tbf(t) &= - n\eta \tilde{H}(t) (\tbf(t) - \boldsymbol{y}), \quad \tbf(0) = \boldsymbol{f}(0), \\
        \text{where } \tilde{H}(t) &\triangleq n^{-1} J_{\tba} \tbf(t)(\nabla^2\Phi(\tba(t)))^{-1} J_{\tba} \tbf(t)^T. 
    \end{aligned}    
\end{equation}

We aim to show that for any fixed $x\in \R^d$,  $\lim_{n\rightarrow\infty} \sup_{t\geq 0}|f(x, \theta(t))-f(x, \tilde{\theta}(t)| = 0$. 
For unscaled potentials, we adapt an established approach from gradient flow analysis \citep{du2018gradient, lee2019wide}. 
Specifically, we show $\|\theta(t) - \hat{\theta}\|_\infty$ and $\|\ttheta(t) - \hat{\theta}\|_\infty$ are bounded by $O(n^{-1/2})$ (Proposition~\ref{lazy-unscaled}). 
Then we bound $\|H(t)-\tilde{H}(t)\|_2$ with the continuity of the Hessian map $\nabla^2\Phi(\cdot)$ in Assumption~\ref{assump-potential-unscaled}, which gives the desired bound on $|f(x, \theta(t))-f(x, \tilde{\theta}(t)|$ (Proposition~\ref{linear-unscaled}). 
For scaled potentials, we use a similar strategy to show $\|\theta(t) - \hat{\theta}\|_\infty$ and $\|\ttheta(t) - \hat{\theta}\|_\infty$ are bounded by $O(n^{-1})$ (Proposition~\ref{lazy-scaled}). 
Then we show $\|n\theta(t) - n\ttheta(t)\|_\infty$ can be bounded when the smallest and largest eigenvalues of $\nabla^2\Phi(\theta)$ have mild decay and growth rates, respectively, as $\|\theta\|_2$ increases, which allows us to control $|f(x, \theta(t))-f(x, \tilde{\theta}(t))|$ (Proposition~\ref{linear-scaled}).

\paragraph{Implicit bias in parameter space} 
The second step is to characterize the implicit bias for wide networks in their parameter space. 
In view of the previous step, it suffices to characterize the bias for networks where only the output weights are trained. 
Since the network is linear with respect to the output weights, we can use the following result on the implicit bias of mirror flow for linear models. 
\begin{theorem}[\citealp{gunasekar2018characterizing}]\label{guna-ib}
    If mirror flow \eqref{traj-param-lin} converges to zero loss, then the final parameter is the solution to following constrained optimization problem: 
    \begin{equation}\label{ib-finite}
        \min_{\tba \in \R^{n}} D_{\Phi}(\tba, \hat{\ba}) \quad \st \ f(x_i, \ttheta) = y_i,\ \forall i \in [m]. 
    \end{equation}
\end{theorem}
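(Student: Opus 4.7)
The statement is the classical mirror descent implicit bias result for linear models, applied to the linearized dynamics \eqref{traj-param-lin} in which only the output weights move. The plan is to exploit strict convexity of $D_\Phi(\cdot,\hat{\ba})$ and identify the KKT conditions of \eqref{ib-finite} with two invariants of the flow: (a) an algebraic conservation law for the mirror image $\nabla\Phi(\tba(t))$, and (b) the zero-loss limit assumption.

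First I would write the linearized network as a linear model in $\tba$. With $\boldsymbol{W},\boldsymbol{b},d$ frozen at initialization, set $\phi_i \triangleq (\sigma(w_k(0)^T x_i - b_k(0)))_{k=1}^n \in \R^n$, so that $f(x_i,\ttheta(t)) = \langle \phi_i,\tba(t)\rangle + d(0)$, and $\nabla_{\tba} L(\ttheta(t)) = \frac{1}{m}\sum_{i=1}^m (f(x_i,\ttheta(t))-y_i)\,\phi_i$ lies in $V\triangleq\operatorname{span}\{\phi_1,\dots,\phi_m\}\subset\R^n$ for every $t$. Rewriting \eqref{traj-param-lin} in mirror coordinates gives $\dt \nabla\Phi(\tba(t)) = -\eta\,\nabla_{\tba} L(\ttheta(t))$, so integrating from $0$ to $t$ yields the conservation law
\begin{equation*}
    \nabla\Phi(\tba(t)) - \nabla\Phi(\hat{\ba}) \;=\; -\eta\!\int_0^t \nabla_{\tba} L(\ttheta(s))\,\diff s \;\in\; V, \qquad \forall\, t\geq 0.
\end{equation*}
Passing to the limit $t\to\infty$ (which exists because the parameter trajectory is confined in a bounded set by the Bregman divergence estimate, and $\Phi$ is continuously differentiable by Assumption~\ref{assump-potential-unscaled}), the limit $\tba(\infty)$ satisfies $\nabla\Phi(\tba(\infty)) - \nabla\Phi(\hat{\ba}) \in V$.

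Next I would write down the Lagrangian for \eqref{ib-finite}. The constraints $f(x_i,\ttheta)=y_i$ are affine in $\tba$, namely $\langle \phi_i, \tba\rangle = y_i - d(0)$, and the objective $D_\Phi(\cdot,\hat{\ba})$ is strictly convex by Assumption~\ref{assump-potential-unscaled}(iii). The KKT conditions read: there exist multipliers $\lambda_i$ with $\nabla_{\tba} D_\Phi(\tba,\hat{\ba}) = \sum_i \lambda_i \phi_i$ together with primal feasibility. Since $\nabla_{\tba} D_\Phi(\tba,\hat{\ba}) = \nabla\Phi(\tba)-\nabla\Phi(\hat{\ba})$, this is precisely $\nabla\Phi(\tba)-\nabla\Phi(\hat{\ba})\in V$ together with $\langle\phi_i,\tba\rangle = y_i-d(0)$ for all $i\in[m]$.

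Finally I would verify both KKT conditions at $\tba(\infty)$. The stationarity condition follows from the conservation law above. Primal feasibility follows from the assumed convergence to zero training loss: $\lim_{t\to\infty}(\boldsymbol{f}(t)-\boldsymbol{y})=0$ means $\langle\phi_i,\tba(\infty)\rangle + d(0) = y_i$ for all $i$. By strict convexity of $D_\Phi(\cdot,\hat{\ba})$, the KKT point is the unique minimizer of \eqref{ib-finite}, so $\tba(\infty)$ coincides with it.

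The only delicate step is justifying the existence of the limit $\tba(\infty)$ and the exchange of limit and mirror map in the conservation law; this is handled because $D_\Phi(\tba(t),\hat{\ba})$ is nonincreasing along the flow (a standard mirror descent energy identity using $\dt D_\Phi(\tba(t),\hat{\ba}) = -\eta\langle \tba(t)-\hat{\ba},\,\nabla_{\tba}L(\ttheta(t))\rangle$ combined with the loss decrease), keeping $\tba(t)$ in a sublevel set on which $\nabla\Phi$ is continuous, so both sides pass to the limit. The rest is routine convex duality.
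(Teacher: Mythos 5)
The paper does not prove Theorem~\ref{guna-ib}; it is cited from \citet{gunasekar2018characterizing} as an external result. Your proof is the standard argument for that result: pass to mirror coordinates $\dt \nabla\Phi(\tba(t)) = -\eta\nabla_{\tba} L(\ttheta(t))$, note that the right-hand side lies in $V=\operatorname{span}\{\phi_1,\dots,\phi_m\}$ for all $t$ so that $\nabla\Phi(\tba(\infty))-\nabla\Phi(\hat\ba)\in V$ in the limit, and recognize this together with primal feasibility (from zero loss) as exactly the KKT system for \eqref{ib-finite}; strict convexity of $D_\Phi(\cdot,\hat\ba)$ then gives uniqueness. This is the correct and standard route, so on the substance there is nothing to fault.

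One technical remark on your final paragraph. The identity you invoke, $\dt D_\Phi(\tba(t),\hat{\ba}) = -\eta\langle \tba(t)-\hat\ba,\nabla_{\tba}L\rangle$, is stated with the Bregman arguments in the wrong order: for the mirror flow $\dt\tba = -\eta(\nabla^2\Phi)^{-1}\nabla L$ one has
\begin{equation*}
\dt D_\Phi(\hat\ba,\tba(t)) \;=\; -\langle \nabla^2\Phi(\tba(t))\,\dt\tba(t),\,\hat\ba-\tba(t)\rangle \;=\; -\eta\,\langle\tba(t)-\hat\ba,\,\nabla_{\tba}L(\ttheta(t))\rangle,
\end{equation*}
not the analogous identity with the divergence taken from $\tba(t)$ to $\hat\ba$. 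Moreover even the corrected quantity need not be monotone, because $L(\tba(t))-L(\hat\ba)$ is generically negative after some time. The standard Lyapunov function is $D_\Phi(\tba^*,\tba(t))$ for $\tba^*$ a feasible (zero-loss) point, which decreases by convexity of $L$ since $\dt D_\Phi(\tba^*,\tba(t)) = -\eta\langle\tba(t)-\tba^*,\nabla L(\tba(t))\rangle \leq -\eta L(\tba(t)) \leq 0$. This would give the confinement you want; alternatively, since the theorem hypothesis already assumes convergence to zero loss, you may simply take the existence of a limit point as part of the hypothesis, as the cited paper does, and avoid the boundedness discussion altogether.
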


We reformulate \eqref{ib-finite} in a way that allows us to consider the infinite width limit, i.e., $n\rightarrow \infty$.  
Let $\mu_n$ denote the empirical distribution on $\R^d \times \R$ induced by the sampled input weights and biases $\{(\hat{w}_k, \hat{b}_k)\}_{k=1}^n$.\footnote{$\mu_n(A)=\frac{1}{n}\sum_{k=1}^n \mathbb{I}_A((\hat{w}_k, \hat{b}_k))$ for any measurable set $A\subset \R^d \times \R$, where $\mathbb{I}_A$ is the indicator of $A$.} 
Consider a function $\alpha \colon \supp(\mu) \rightarrow \R$ whose value encodes the difference of the output weight from its initialization for a hidden unit given by the argument, i.e., 
$\alpha(\hat{w}_k, \hat{b}_k) = n(\tilde{a}_k - \hat{a}_k) ,\ \forall k\in[n]$. 
Then the difference of the network output at a given $x\in\R^d$ from its initialization is $g_n(x, \alpha) = f(x, \tilde{\theta}) - f(x, \hat{\theta}) = \int \alpha(w,b) \sigma(w^Tx-b) \diff \mu_n$. 
Noticing $\mu_n$ converges weakly to $\mu$, we write the difference between the infinitely wide networks before and after training as: 
\begin{equation}\label{eq-inf-network}
g(x, \alpha) = \int \alpha(w,b) \sigma(w^Tx - b)\diff \mu, \quad \alpha \in \Theta. 
\end{equation}
Here $\Theta \subset C(\supp(\mu))$ is chosen as the parameter space for the model \eqref{eq-inf-network}. 
Our goal is to identify a cost functional on the parameter space $\mathcal{C}_\Phi\colon \Theta\rightarrow \R$ such that, as the width of the network goes to infinity, the solution to \eqref{ib-finite} converges to the solution to the following problem:\footnote{The convergence is in the following sense: let $\bar{\boldsymbol{a}}$ be the solution to \eqref{ib-finite} and $\bar{\alpha}_n \in \Theta$ be a continuous function satisfying $\bar{\alpha}_n(\hat{w}_k, \hat{b}_k) = n(\bar{a}_k - \hat{a}_k)$ for all $k\in[n]$. 
Let $\bar{\alpha}$ be the solution to \eqref{ib-inf-param}. Then for any given $x\in\R^d$, $g_n(x, \bar{\alpha}_n)$ converges in probability to $g(x, \bar{\alpha})$ as $n\rightarrow \infty$.} 
\begin{equation}\label{ib-inf-param}
    \min_{\alpha \in \Theta}\; \mathcal{C}_\Phi(\alpha) \quad \st\ g(x_i, \alpha) = y_i - f(x_i, \hat{\theta}), \ \forall i \in [m]. 
\end{equation}
For unscaled potentials, noticing the final output weight returned by mirror flow, i.e., the minimizer of \eqref{ib-finite}, has $O(n^{-1/2})$ bounded difference from its initialization, we have the following approximation for the objective function of \eqref{ib-finite} when $n$ tends to infinity:\footnote{Here we write $x_n \overset{n\rightarrow \infty}{=} y_n$ to signify that $\lim_{n\rightarrow \infty} x_n - y_n = 0$.} 
\begin{equation}\label{Dphi-uns}
    D_{\Phi}(\tba, \hat{\ba}) 
    \overset{n\rightarrow \infty}{=} \sum_{k=1}^n \frac{\phi^{\prime\prime}(0)}{2} (\tilde{a}_k - \hat{a}_k)^2 
    = \frac{\phi^{\prime\prime}(0)}{2n} \int (\alpha(w,b))^2 \diff \mu_n,
\end{equation}
where we consider the Taylor approximation of $\phi$ at the origin. 
Based on this, for unscaled potentials we show that in the infinite width limit, problem \eqref{ib-finite} can be described by problem \eqref{ib-inf-param} with $\mathcal{C}_\Phi(\alpha) = \int (\alpha(w,b))^2 \diff \mu$, regardless of the choice of $\Phi$ (Proposition~\ref{prop:reduce-to-l2}).

In the case of scaled potentials, the objective function in \eqref{ib-finite} can be reformulated as 
\begin{equation}\label{Dphi-s}
    \begin{aligned}
    D_{\Phi}(\tba, \hat{\ba}) = \frac{1}{n^2} \sum_{k=1}^n \Big( \phi(n(\tilde{a}_k-\hat{a}_k)) - \phi(0) - n\phi^\prime(0)(\tilde{a}_k-\hat{a}_k) \Big)
    = \frac{1}{n} \int D_\phi(\alpha(w,b), 0) \ \diff \mu_n. 
    \end{aligned}
\end{equation}
Based on this, for scaled potentials we show that, as $n$ tends to infinity, problem \eqref{ib-finite} can be described by problem \eqref{ib-inf-param} with $\mathcal{C}_\Phi(\alpha) = \int D_\phi(\alpha(w,b), 0) \diff \mu$ (Proposition~\ref{inf-scaled}).

\paragraph{Function space description of the implicit bias}
The last step of our proof is to translate the implicit bias in parameter space, as described by problem \eqref{ib-inf-param}, to the function space of univariate networks, $\mathcal{F}=\{ h(x) = \int \alpha(w,b)\sigma(wx - b)\diff \mu \colon \alpha \in \Theta \}$.
We do so by ``pulling back'' the cost functional $\mathcal{C}_\Phi$ from the parameter space to the function space, via the map $\mathcal{R}_\Phi \colon \mathcal{F} \rightarrow \Theta$ defined by 
\begin{equation}\label{min-rep}
    \mathcal{R}_\Phi(h)= \underset{\alpha \in \Theta}{\arg\min} \ \mathcal{C}_\Phi(\alpha) \quad \st \ g(x, \alpha) = h(x),\ \forall x\in \R. 
\end{equation}
The optimization problem in \eqref{min-rep} is known as the \emph{minimal representation cost problem}, as it seeks the parameter $\alpha$ with the minimal cost $\mathcal{C}_\Phi(\alpha)$ such that $g(\cdot,\alpha)$ represents a given function $h$. 
%
%
In the following result, we show that for general cost functionals, the map $\mathcal{R}_\Phi$ allows one to translate problem \eqref{ib-inf-param} to a variational problem on the function space. The proof is presented in Appendix~\ref{app:uns-func}. 

\begin{proposition}\label{prop-param-to-func}
    Assume that $\Theta$ is a vector subspace in $C(\supp(\mu))$ and $C_\Phi$ is strictly convex functional on $\Theta$. Consider the map $\mathcal{R}_\Phi$ defined in \eqref{min-rep}. 
    If $\bar{\alpha}$ is the solution to problem \eqref{ib-inf-param}, 
    then $g(\cdot, \bar{\alpha})$ solves the following variational problem:
    \begin{equation}\label{ib-func}
    \min_{h\in\mathcal{F}} \mathcal{C}_\Phi\circ \mathcal{R}_\Phi(h) \quad \st \ h(x_i) = y_i-f(x_i, \hat{\theta}),\ \forall i \in[m].
    \end{equation}
\end{proposition}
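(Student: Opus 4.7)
The plan is to carry out a direct two-way comparison between the parameter-space problem \eqref{ib-inf-param} and the function-space problem \eqref{ib-func}, exploiting linearity of the map $\alpha\mapsto g(\cdot,\alpha)$ together with strict convexity of $\mathcal{C}_\Phi$. The argument is essentially a change-of-variables bookkeeping from $\alpha$ to $h=g(\cdot,\alpha)$; no genuine technical obstacle arises, only a well-posedness check on $\mathcal{R}_\Phi$.

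First I would set $\bar h := g(\cdot,\bar\alpha)$ and observe that $\bar h(x_i)=g(x_i,\bar\alpha)=y_i-f(x_i,\hat\theta)$, so $\bar h$ is feasible for \eqref{ib-func}. The key step is to show that $\bar\alpha$ must itself be the minimal-representation parameter of $\bar h$, i.e.\ $\mathcal{R}_\Phi(\bar h)=\bar\alpha$. Indeed, any competitor $\tilde\alpha\in\Theta$ with $g(\cdot,\tilde\alpha)\equiv \bar h$ automatically satisfies the $m$ pointwise constraints $g(x_i,\tilde\alpha)=y_i-f(x_i,\hat\theta)$ of \eqref{ib-inf-param}, hence $\mathcal{C}_\Phi(\tilde\alpha)\geq \mathcal{C}_\Phi(\bar\alpha)$ by optimality of $\bar\alpha$. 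Since $\Theta$ is a vector subspace and $g$ is linear in $\alpha$, the set $\{\alpha\in\Theta : g(\cdot,\alpha)=\bar h\}$ is affine, so strict convexity of $\mathcal{C}_\Phi$ promotes this inequality to uniqueness of the minimizer. Consequently $\mathcal{R}_\Phi(\bar h)=\bar\alpha$ and $\mathcal{C}_\Phi\circ\mathcal{R}_\Phi(\bar h)=\mathcal{C}_\Phi(\bar\alpha)$.

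For the matching lower bound, I would take an arbitrary feasible $h\in\mathcal{F}$ for \eqref{ib-func} and lift it via $\alpha_h:=\mathcal{R}_\Phi(h)$. Then $g(x_i,\alpha_h)=h(x_i)=y_i-f(x_i,\hat\theta)$, so $\alpha_h$ is feasible for \eqref{ib-inf-param}, and optimality of $\bar\alpha$ gives
\[
\mathcal{C}_\Phi\circ\mathcal{R}_\Phi(h) = \mathcal{C}_\Phi(\alpha_h) \,\geq\, \mathcal{C}_\Phi(\bar\alpha) = \mathcal{C}_\Phi\circ\mathcal{R}_\Phi(\bar h).
\]
Combining with the previous paragraph, $\bar h$ attains the minimum in \eqref{ib-func}, as claimed.

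The only subtle point I would flag but not belabor is well-posedness of $\mathcal{R}_\Phi$ on $\mathcal{F}$: uniqueness of the minimizer in \eqref{min-rep} is guaranteed by strict convexity of $\mathcal{C}_\Phi$ on the affine feasible set, while existence is automatic whenever the feasible set is non-empty, which holds by definition for any $h\in\mathcal{F}$. Everything else is tautological from the constraint-implication $\{g(\cdot,\alpha)=h\}\Rightarrow\{g(x_i,\alpha)=h(x_i),\ i\in[m]\}$.
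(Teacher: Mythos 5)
Your proof is correct and follows essentially the same route as the paper: both establish $\mathcal{R}_\Phi(g(\cdot,\bar\alpha))=\bar\alpha$ by feasibility transfer and strict convexity, then compare costs against an arbitrary feasible $h$ via the lift $\alpha_h=\mathcal{R}_\Phi(h)$. The only cosmetic difference is that the paper assumes a minimizer $\hat h$ of \eqref{ib-func} and derives $\hat h=g(\cdot,\bar\alpha)$, whereas you show directly that $g(\cdot,\bar\alpha)$ attains the minimum, which is a slightly cleaner way to avoid presupposing existence of the function-space minimizer.
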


With Proposition~\ref{prop-param-to-func}, to obtain function description of the implicit bias, it suffices to characterize the exact form of the map $\mathcal{R}_\Phi$, which is equivalent to solving the minimal representation cost problem. 
For unscaled potentials and ReLU networks, 
we set $\Theta_{\mathrm{ReLU}}$ as the space of uniformly continuous functions on $\supp(\mu)$ and 
our approach builds on the following key observations: 
\begin{itemize}[leftmargin=*,itemsep=0pt,topsep=0pt]
\item[(1)] every $\alpha \in \Theta_{\mathrm{ReLU}}$ has a unique even-odd decomposition, $\alpha = \alpha^+ + \alpha^-$ where $\alpha^+\in \Theta_{\mathrm{Even}}=\{\alpha\in\Theta_{\mathrm{ReLU}}\colon \alpha \text{ is even}\}$ and $\alpha^-\in \Theta_{\mathrm{Odd}}=\{\alpha\in\Theta_{\mathrm{ReLU}}\colon \alpha \text{ is odd}\}$;\footnote{Recall a function $\alpha$ on $\{-1,1\}\times[-B_b,B_b]$ is called an even function if $\alpha(w,b) = \alpha(-w,-b)$ for all $(w,b)$ in its domain; and an odd function if $\alpha(w,b) = -\alpha(-w,-b)$ for all $(w,b)$ in its domain.} 
\item[(2)] under the cost functional $\mathcal{C}_{\Phi}(\alpha) = \int (\alpha(w,b))^2 \diff \mu$, 
the Pythagorean-type equality holds: 
$\mathcal{C}_{\Phi}(\alpha) = \mathcal{C}_{\Phi}(\alpha^+) + \mathcal{C}_{\Phi}(\alpha^-), \forall \alpha \in \Theta_{\mathrm{ReLU}}$.
\item[(3)] the even-odd decomposition in the parameter space is equivalent to decomposing any infinitely wide ReLU network as the sum of an infinitely wide network with absolute value activation and an affine function (this has been previously observed by \citealp{ongie2019function}): 
\begin{equation}\label{eq:decomp-abs-aff}
    \int \alpha(w,b) [wx-b]_+ \diff \mu = \int \alpha^+(w,b) \frac{|wx-b|}{2} \diff \mu + \int \alpha^-(w,b) \frac{wx-b}{2} \diff \mu. 
\end{equation}
\end{itemize}
Based on these, we solve the minimal representation cost problem \eqref{min-rep} for $h\in \mathcal{F}_{\mathrm{ReLU}}$ 
by decomposing $h$ as in \eqref{eq:decomp-abs-aff} 
and then solving two simpler problems on $\mathcal{F}_{\mathrm{Abs}}$ and $\mathcal{F}_{\mathrm{Affine}}$ (Proposition~\ref{min-rep-unscaled}),  
where $\mathcal{F}_{\mathrm{Affine}}$ is the space of affine functions on $\R$. 
We show $\mathcal{C}_\Phi\circ \mathcal{R}_\Phi=\mathcal{G}_1$ on $\mathcal{F}_{\mathrm{Abs}}$ and $\mathcal{C}_\Phi\circ \mathcal{R}_\Phi=\mathcal{G}_2+\mathcal{G}_3$ on $\mathcal{F}_{\mathrm{Affine}}$ and thus the implicit bias on $\mathcal{F}_{\mathrm{ReLU}}$ 
can be described by $\mathcal{G}_1 + \mathcal{G}_2 + \mathcal{G}_3$ (Proposition~\ref{prop:app-ib-unscaled}).

For the case of scaled potentials, where the cost functional is $\mathcal{C}_{\Phi}(\alpha) = \int D_\phi(\alpha(w,b), 0) \diff \mu$, 
the decomposition technique discussed above no longer applies since the Pythagorean equality generally does not holds. Therefore, solving the minimal representation cost 
problem remains challenging (see more details in Appendix~\ref{app:general-phi}). 
To mitigate this, we restricted our attention to networks with absolute value activation functions, and solve problem \eqref{min-rep} on the subspace $\mathcal{F}_{\mathrm{Abs}}$ (Proposition~\ref{lem-min-rep-phi}).

\section{Experiments}
\label{sec:experiment}

\begin{figure}[ht]
    \centering
    \includegraphics[width=.97\linewidth]{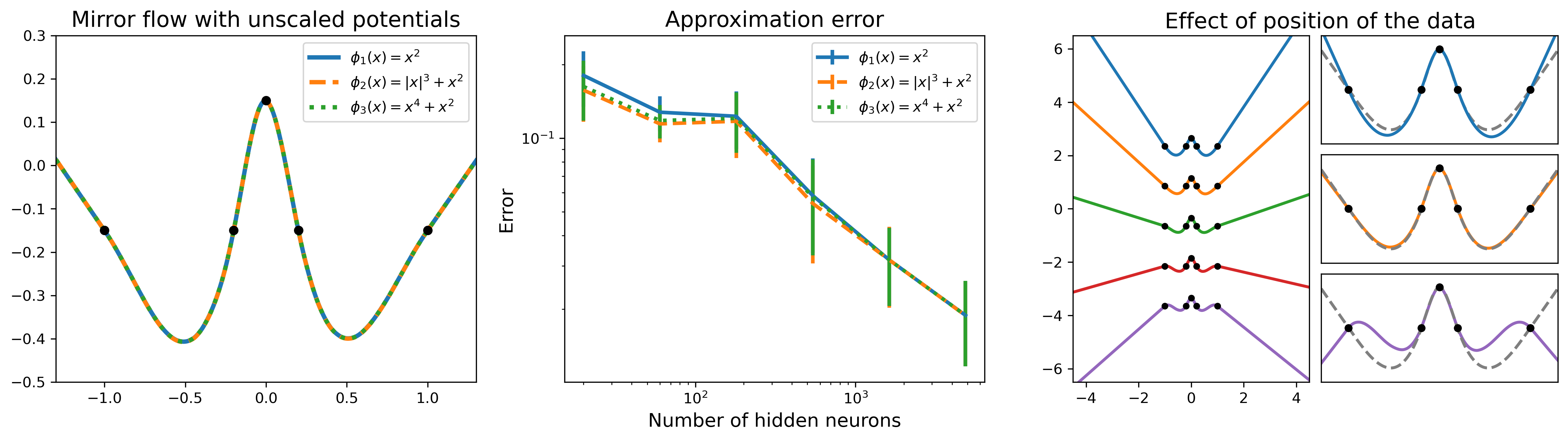}
    \caption{
    Illustration of Theorem \ref{thm:ib-md-unscaled}. 
    Left: ReLU networks with $4860$ hidden neurons, 
    uniformly initialized input biases and zero initialized output weights and biases, 
    trained with mirror flow on a common data set using unscaled potentials: $\phi_1= x^2$, $\phi_2 = |x|^3 + x^2$, and $\phi_3 = x^4 + x^2$. 
    Middle: $L^\infty$-error between the solution to the variational problem and the networks trained using mirror descent with $\phi_1$, $\phi_2$ and $\phi_3$, against the 
    network width. 
    Right: ReLU networks trained with gradient descent on five different data sets, each obtained by translating the same data set along the $y$-axis. 
    }
    \label{fig:thm1}
\end{figure}

We numerically illustrate Theorem~\ref{thm:ib-md-unscaled} in Figure~\ref{fig:thm1}. 
In agreement with the theory, the left panel shows that 
mirror flow with different unscaled potentials have the same implicit bias. 
The middle panel verifies that the solution to the variational problem captures the networks obtained by mirror descent training, with decreasing error as the number of neurons increases. 
Notice the errors across different 
potentials are nearly equal. In fact, we find that the parameter trajectories for different potentials also overlap (see Figure~\ref{fig:traj}). 
The right panel demonstrates that applying vertical translations to the training data results in much different training outcomes. 
Only on specific training data, the trained network is equal to the natural cubic spline interpolation of the data (shown as a dashed gray curve). 
Meanwhile, networks have opposite slopes at positive and negative infinity, which illustrates the effect of functional $\mathcal{G}_2$, 
and that when the data lies above the $x$-axis, networks have a U-shape and when the data is below the $x$-axis, they have an inverted U-shape, which illustrates the effect of $\mathcal{G}_3$. 

\begin{figure}[ht]
    \centering
    \includegraphics[width=.97\linewidth]{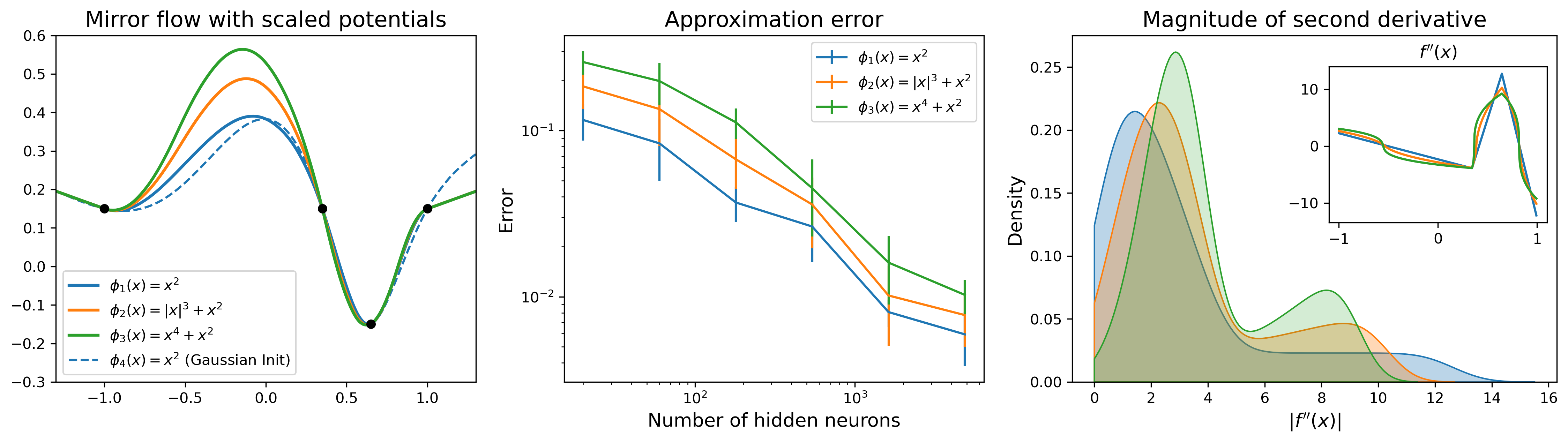}
    \caption{
    Illustration of Theorem~\ref{thm:ib-md-scaled}. 
    Left: absolute value networks with $4860$ hidden neurons, uniformly initialized input biases and zero initialized output weights and biases, 
    trained with mirror descent on a common data set using scaled potentials: $\phi_1= x^2$, $\phi_2 = |x|^3 + x^2$, and $\phi_3= x^4 + x^2$. 
    For comparison we also plot a network with Gaussian initialized input biases, trained with gradient descent. 
    Middle: $L^\infty$-error between the solution to the variational problems and networks trained using mirror descent for scaled potentials $\phi_1$, $\phi_2$ and $\phi_3$, against the network width. 
    Right: the distribution of the magnitude of the second derivative of the solutions to the variational problems for $\phi_1$ (blue), $\phi_2$ (orange), and $\phi_3$ (green). 
    The inset shows the second derivatives over the input domain. 
    } 
    \label{fig:thm2}
\end{figure}

We illustrate Theorem~\ref{thm:ib-md-scaled} numerically in Figure~\ref{fig:thm2}. 
The left panel illustrates the effect of the scaled potentials and parameter initialization on the implicit bias of mirror descent. 
When fixing the potential, a network with Gaussian initialization (dashed blue curve) learns a function whose curvature is more concentrated around the origin, compared to a network with uniform initialization (solid blue curve). 
When fixing the initialization, different potentials result in functions with different curvature magnitudes (solid curves in different colors). 
The middle plot verifies that the solution to the variational problem captures the trained network solutions as the network width increases. 
The right panel further illustrates how the potentials in the left panel lead to functions with different curvature magnitudes. 
We see $\phi_1$ allows occurrences of more extreme small and large magnitudes, whereas $\phi_3$ allows the second derivative to concentrate more on moderate values.

\section{Conclusion} 

We obtained a function space description of the implicit bias of mirror descent in wide neural networks. 
In the infinite width limit, the implicit bias becomes independent of the potential and thus equivalent to that of gradient descent. 
On the other hand, for scaled potentials we found that the implicit bias strongly depends on the potential and in general cannot be captured as a minimum norm in a RKHS. 
A takeaway message is that the parameter initialization distribution determines the strength of the curvature penalty over different locations of the input space and the potential determines the strength of the penalty over different magnitudes of the curvature.

\paragraph{Limitations and future work}

Our variational characterization of the implicit bias only applies to 
univariate, shallow networks with unit-norm initialized input weights. 
Characterizing the implicit bias for deep nonlinear networks remains an open challenge even for 
ordinary gradient descent. 
Extending the results to multivariate networks with general input weights initialization requires stronger theoretical tools (see Appendix~\ref{app:multivariate-rep} for details). 
We considered separable potentials which allow for a convenient definition in the infinite width limit and we did not cover potentials defined on a proper subset of the parameter space, such as the negative entropy. 
Extending our results to more general potentials could be an interesting direction for future work. 
Other future research directions include deriving closed-form solutions for the variational problems and investigating the rate of convergence as the number of neurons increases.

\paragraph{Reproducibility statement}
Code to reproduce our experiments is made available at \url{https://github.com/shuangliang15/implicit-bias-mirror-descent}.

\subsection*{Acknowledgment} 
This project has been supported by NSF CCF-2212520. GM has also been supported by NSF DMS-2145630, DFG SPP 2298 project 464109215, and BMBF in DAAD project 57616814.


\bibliographystyle{iclr2025_conference}
\bibliography{main}

\begin{thebibliography}{56}
\providecommand{\natexlab}[1]{#1}
\providecommand{\url}[1]{\texttt{#1}}
\expandafter\ifx\csname urlstyle\endcsname\relax
  \providecommand{\doi}[1]{doi: #1}\else
  \providecommand{\doi}{doi: \begingroup \urlstyle{rm}\Url}\fi

\bibitem[Ahlberg et~al.(2016)Ahlberg, Nilson, and Walsh]{ahlberg2016theory}
J~Harold Ahlberg, Edwin~Norman Nilson, and Joseph~Leonard Walsh.
\newblock \emph{The Theory of Splines and Their Applications: Mathematics in Science and Engineering: A Series of Monographs and Textbooks, Vol. 38}, volume~38.
\newblock Elsevier, 2016.

\bibitem[Azizan \& Hassibi(2019)Azizan and Hassibi]{azizan2018stochastic}
Navid Azizan and Babak Hassibi.
\newblock Stochastic gradient/mirror descent: Minimax optimality and implicit regularization.
\newblock In \emph{International Conference on Learning Representations}, 2019.
\newblock URL \url{https://openreview.net/forum?id=HJf9ZhC9FX}.

\bibitem[Azizan et~al.(2022)Azizan, Lale, and Hassibi]{9488312}
Navid Azizan, Sahin Lale, and Babak Hassibi.
\newblock Stochastic mirror descent on overparameterized nonlinear models.
\newblock \emph{IEEE Transactions on Neural Networks and Learning Systems}, 33\penalty0 (12):\penalty0 7717--7727, 2022.
\newblock \doi{10.1109/TNNLS.2021.3087480}.

\bibitem[Azulay et~al.(2021)Azulay, Moroshko, Nacson, Woodworth, Srebro, Globerson, and Soudry]{azulay2021implicit}
Shahar Azulay, Edward Moroshko, Mor~Shpigel Nacson, Blake~E Woodworth, Nathan Srebro, Amir Globerson, and Daniel Soudry.
\newblock On the implicit bias of initialization shape: Beyond infinitesimal mirror descent.
\newblock In \emph{International Conference on Machine Learning}, pp.\  468--477. PMLR, 2021.

\bibitem[Beck \& Teboulle(2003)Beck and Teboulle]{beck2003mirror}
Amir Beck and Marc Teboulle.
\newblock Mirror descent and nonlinear projected subgradient methods for convex optimization.
\newblock \emph{Operations Research Letters}, 31\penalty0 (3):\penalty0 167--175, 2003.

\bibitem[Boursier \& Flammarion(2023)Boursier and Flammarion]{boursier2023penalising}
Etienne Boursier and Nicolas Flammarion.
\newblock Penalising the biases in norm regularisation enforces sparsity.
\newblock In \emph{Thirty-seventh Conference on Neural Information Processing Systems}, 2023.
\newblock URL \url{https://openreview.net/forum?id=JtIqG47DAQ}.

\bibitem[Boursier et~al.(2022)Boursier, Pillaud-Vivien, and Flammarion]{boursier2022gradient}
Etienne Boursier, Loucas Pillaud-Vivien, and Nicolas Flammarion.
\newblock Gradient flow dynamics of shallow {ReLU} networks for square loss and orthogonal inputs.
\newblock In \emph{Advances in Neural Information Processing Systems}, volume~35, pp.\  20105--20118. Curran Associates, Inc., 2022.
\newblock URL \url{https://proceedings.neurips.cc/paper_files/paper/2022/file/7eeb9af3eb1f48e29c05e8dd3342b286-Paper-Conference.pdf}.

\bibitem[Bregman(1967)]{bregman1967relaxation}
Lev~M Bregman.
\newblock The relaxation method of finding the common point of convex sets and its application to the solution of problems in convex programming.
\newblock \emph{USSR computational mathematics and mathematical physics}, 7\penalty0 (3):\penalty0 200--217, 1967.

\bibitem[Chistikov et~al.(2023)Chistikov, Englert, and Lazic]{chistikov2023learning}
Dmitry Chistikov, Matthias Englert, and Ranko Lazic.
\newblock Learning a neuron by a shallow {ReLU} network: Dynamics and implicit bias for correlated inputs.
\newblock In \emph{Thirty-seventh Conference on Neural Information Processing Systems}, 2023.
\newblock URL \url{https://openreview.net/forum?id=xgY4QcOiEZ}.

\bibitem[Chizat \& Bach(2020)Chizat and Bach]{pmlr-v125-chizat20a}
L\'ena\"ic Chizat and Francis Bach.
\newblock Implicit bias of gradient descent for wide two-layer neural networks trained with the logistic loss.
\newblock In \emph{Proceedings of Thirty Third Conference on Learning Theory}, volume 125 of \emph{Proceedings of Machine Learning Research}, pp.\  1305--1338. PMLR, 2020.
\newblock URL \url{https://proceedings.mlr.press/v125/chizat20a.html}.

\bibitem[Chizat et~al.(2019)Chizat, Oyallon, and Bach]{chizat2019lazy}
L\'{e}na\"{\i}c Chizat, Edouard Oyallon, and Francis Bach.
\newblock On lazy training in differentiable programming.
\newblock In \emph{Advances in Neural Information Processing Systems}, volume~32. Curran Associates, Inc., 2019.
\newblock URL \url{https://proceedings.neurips.cc/paper_files/paper/2019/file/ae614c557843b1df326cb29c57225459-Paper.pdf}.

\bibitem[Chou et~al.(2023)Chou, Maly, and St{\"o}ger]{chou2023induce}
Hung-Hsu Chou, Johannes Maly, and Dominik St{\"o}ger.
\newblock How to induce regularization in generalized linear models: A guide to reparametrizing gradient flow.
\newblock \emph{arXiv preprint arXiv:2308.04921}, 2023.

\bibitem[Diamond \& Boyd(2016)Diamond and Boyd]{cvxpy}
Steven Diamond and Stephen Boyd.
\newblock {CVXPY}: A {P}ython-embedded modeling language for convex optimization.
\newblock \emph{Journal of Machine Learning Research}, 2016.
\newblock URL \url{https://stanford.edu/~boyd/papers/pdf/cvxpy_paper.pdf}.
\newblock To appear.

\bibitem[Do~Carmo \& Flaherty~Francis(1992)Do~Carmo and Flaherty~Francis]{do1992riemannian}
Manfredo~Perdigao Do~Carmo and J~Flaherty~Francis.
\newblock \emph{Riemannian Geometry}, volume~2.
\newblock Springer, 1992.

\bibitem[Du et~al.(2019)Du, Zhai, Poczos, and Singh]{du2018gradient}
Simon~S. Du, Xiyu Zhai, Barnabas Poczos, and Aarti Singh.
\newblock Gradient descent provably optimizes over-parameterized neural networks.
\newblock In \emph{International Conference on Learning Representations}, 2019.
\newblock URL \url{https://openreview.net/forum?id=S1eK3i09YQ}.

\bibitem[Frei et~al.(2023)Frei, Vardi, Bartlett, Srebro, and Hu]{frei2023implicit}
Spencer Frei, Gal Vardi, Peter Bartlett, Nathan Srebro, and Wei Hu.
\newblock Implicit bias in leaky re{LU} networks trained on high-dimensional data.
\newblock In \emph{The Eleventh International Conference on Learning Representations}, 2023.
\newblock URL \url{https://openreview.net/forum?id=JpbLyEI5EwW}.

\bibitem[Ghai et~al.(2020)Ghai, Hazan, and Singer]{pmlr-v117-ghai20a}
Udaya Ghai, Elad Hazan, and Yoram Singer.
\newblock Exponentiated gradient meets gradient descent.
\newblock In \emph{Proceedings of the 31st International Conference on Algorithmic Learning Theory}, volume 117 of \emph{Proceedings of Machine Learning Research}, pp.\  386--407. PMLR, 2020.
\newblock URL \url{https://proceedings.mlr.press/v117/ghai20a.html}.

\bibitem[Ghorbani et~al.(2021)Ghorbani, Mei, Misiakiewicz, and Montanari]{ghorbani2021linearized}
Behrooz Ghorbani, Song Mei, Theodor Misiakiewicz, and Andrea Montanari.
\newblock Linearized two-layers neural networks in high dimension.
\newblock \emph{The Annals of Statistics}, 49\penalty0 (2), 2021.

\bibitem[Gunasekar et~al.(2018{\natexlab{a}})Gunasekar, Lee, Soudry, and Srebro]{gunasekar2018characterizing}
Suriya Gunasekar, Jason Lee, Daniel Soudry, and Nathan Srebro.
\newblock Characterizing implicit bias in terms of optimization geometry.
\newblock In \emph{Proceedings of the 35th International Conference on Machine Learning}, volume~80, pp.\  1832--1841. PMLR, 2018{\natexlab{a}}.
\newblock URL \url{http://proceedings.mlr.press/v80/gunasekar18a.html}.

\bibitem[Gunasekar et~al.(2018{\natexlab{b}})Gunasekar, Lee, Soudry, and Srebro]{gunasekar2018implicit}
Suriya Gunasekar, Jason~D Lee, Daniel Soudry, and Nati Srebro.
\newblock Implicit bias of gradient descent on linear convolutional networks.
\newblock \emph{Advances in neural information processing systems}, 31, 2018{\natexlab{b}}.

\bibitem[Heiss et~al.(2019)Heiss, Teichmann, and Wutte]{heiss2019implicit}
Jakob Heiss, Josef Teichmann, and Hanna Wutte.
\newblock How implicit regularization of {ReLU} neural networks characterizes the learned function--part {I}: the {1-D} case of two layers with random first layer.
\newblock \emph{arXiv preprint arXiv:1911.02903}, 2019.

\bibitem[Helgason(1999)]{helgason1999radon}
Sigurdur Helgason.
\newblock \emph{The {R}adon Transform}, volume~5.
\newblock Springer Science \& Business Media, 1999.

\bibitem[Jacod \& S{\o}rensen(2018)Jacod and S{\o}rensen]{jacod2018review}
Jean Jacod and Michael S{\o}rensen.
\newblock A review of asymptotic theory of estimating functions.
\newblock \emph{Statistical Inference for Stochastic Processes}, 21:\penalty0 415--434, 2018.

\bibitem[Jacot(2023)]{jacot2023implicit}
Arthur Jacot.
\newblock Implicit bias of large depth networks: a notion of rank for nonlinear functions.
\newblock In \emph{The Eleventh International Conference on Learning Representations}, 2023.
\newblock URL \url{https://openreview.net/forum?id=6iDHce-0B-a}.

\bibitem[Jacot et~al.(2018)Jacot, Gabriel, and Hongler]{jacot2018neural}
Arthur Jacot, Franck Gabriel, and Clement Hongler.
\newblock Neural tangent kernel: Convergence and generalization in neural networks.
\newblock In \emph{Advances in Neural Information Processing Systems}, volume~31. Curran Associates, Inc., 2018.
\newblock URL \url{https://proceedings.neurips.cc/paper_files/paper/2018/file/5a4be1fa34e62bb8a6ec6b91d2462f5a-Paper.pdf}.

\bibitem[Jacot et~al.(2021)Jacot, Ged, {\c{S}}im{\c{s}}ek, Hongler, and Gabriel]{jacot2021saddle}
Arthur Jacot, Fran{\c{c}}ois Ged, Berfin {\c{S}}im{\c{s}}ek, Cl{\'e}ment Hongler, and Franck Gabriel.
\newblock Saddle-to-saddle dynamics in deep linear networks: Small initialization training, symmetry, and sparsity.
\newblock \emph{arXiv preprint arXiv:2106.15933}, 2021.

\bibitem[Jensen(2004)]{jensen2004sufficient}
Sine~R Jensen.
\newblock Sufficient conditions for the inversion formula for the $k$-plane {R}adon transform in $\mathbb{R}^n$.
\newblock \emph{Mathematica Scandinavica}, pp.\  207--226, 2004.

\bibitem[Ji \& Telgarsky(2019)Ji and Telgarsky]{ji2018gradient}
Ziwei Ji and Matus Telgarsky.
\newblock Gradient descent aligns the layers of deep linear networks.
\newblock In \emph{International Conference on Learning Representations}, 2019.
\newblock URL \url{https://openreview.net/forum?id=HJflg30qKX}.

\bibitem[Jin \& Mont\'ufar(2023)Jin and Mont\'ufar]{jin2023implicit}
Hui Jin and Guido Mont\'ufar.
\newblock Implicit bias of gradient descent for mean squared error regression with two-layer wide neural networks.
\newblock \emph{Journal of Machine Learning Research}, 24\penalty0 (137):\penalty0 1--97, 2023.
\newblock URL \url{http://jmlr.org/papers/v24/21-0832.html}.

\bibitem[Lai et~al.(2023)Lai, Xu, Chen, and Lin]{lai2023generalization}
Jianfa Lai, Manyun Xu, Rui Chen, and Qian Lin.
\newblock Generalization ability of wide neural networks on $\mathbb{R}$.
\newblock \emph{arXiv preprint arXiv:2302.05933}, 2023.

\bibitem[Lee et~al.(2019)Lee, Xiao, Schoenholz, Bahri, Novak, Sohl-Dickstein, and Pennington]{lee2019wide}
Jaehoon Lee, Lechao Xiao, Samuel Schoenholz, Yasaman Bahri, Roman Novak, Jascha Sohl-Dickstein, and Jeffrey Pennington.
\newblock Wide neural networks of any depth evolve as linear models under gradient descent.
\newblock In \emph{Advances in Neural Information Processing Systems}, volume~32. Curran Associates, Inc., 2019.
\newblock URL \url{https://proceedings.neurips.cc/paper_files/paper/2019/file/0d1a9651497a38d8b1c3871c84528bd4-Paper.pdf}.

\bibitem[Li et~al.(2022)Li, Wang, Lee, and Arora]{li2022implicit}
Zhiyuan Li, Tianhao Wang, Jason~D. Lee, and Sanjeev Arora.
\newblock Implicit bias of gradient descent on reparametrized models: On equivalence to mirror descent.
\newblock In \emph{Advances in Neural Information Processing Systems}, 2022.
\newblock URL \url{https://openreview.net/forum?id=k4KHXS6_zOV}.

\bibitem[Lyu \& Li(2020)Lyu and Li]{lyu2020gradient}
Kaifeng Lyu and Jian Li.
\newblock Gradient descent maximizes the margin of homogeneous neural networks.
\newblock In \emph{International Conference on Learning Representations}, 2020.
\newblock URL \url{https://openreview.net/forum?id=SJeLIgBKPS}.

\bibitem[Mei et~al.(2019)Mei, Misiakiewicz, and Montanari]{mei2019mean}
Song Mei, Theodor Misiakiewicz, and Andrea Montanari.
\newblock Mean-field theory of two-layers neural networks: dimension-free bounds and kernel limit.
\newblock In \emph{Proceedings of the Thirty-Second Conference on Learning Theory}, volume~99 of \emph{Proceedings of Machine Learning Research}, pp.\  2388--2464. PMLR, 2019.
\newblock URL \url{https://proceedings.mlr.press/v99/mei19a.html}.

\bibitem[Mulayoff et~al.(2021)Mulayoff, Michaeli, and Soudry]{mulayoff2021implicit}
Rotem Mulayoff, Tomer Michaeli, and Daniel Soudry.
\newblock The implicit bias of minima stability: A view from function space.
\newblock In \emph{Advances in Neural Information Processing Systems}, volume~34, pp.\  17749--17761. Curran Associates, Inc., 2021.
\newblock URL \url{https://proceedings.neurips.cc/paper_files/paper/2021/file/944a5ae3483ed5c1e10bbccb7942a279-Paper.pdf}.

\bibitem[Nemirovskij \& Yudin(1983)Nemirovskij and Yudin]{nemirovskij1983problem}
Arkadij~Semenovi{\v{c}} Nemirovskij and David~Borisovich Yudin.
\newblock \emph{Problem Complexity and Method Efficiency in Optimization}.
\newblock A Wiley-Interscience publication. Wiley, 1983.
\newblock URL \url{https://books.google.com.tw/books?id=6ULvAAAAMAAJ}.

\bibitem[Newey \& McFadden(1994)Newey and McFadden]{newey1994large}
Whitney~K Newey and Daniel McFadden.
\newblock Large sample estimation and hypothesis testing.
\newblock \emph{Handbook of econometrics}, 4:\penalty0 2111--2245, 1994.

\bibitem[Neyshabur et~al.(2015)Neyshabur, Tomioka, and Srebro]{DBLP:journals/corr/NeyshaburTS14}
Behnam Neyshabur, Ryota Tomioka, and Nathan Srebro.
\newblock In search of the real inductive bias: On the role of implicit regularization in deep learning.
\newblock In \emph{3rd International Conference on Learning Representations, {ICLR} 2015, San Diego, CA, USA, May 7-9, 2015, Workshop Track Proceedings}, 2015.
\newblock URL \url{http://arxiv.org/abs/1412.6614}.

\bibitem[Neyshabur et~al.(2017)Neyshabur, Tomioka, Salakhutdinov, and Srebro]{neyshabur2017geometry}
Behnam Neyshabur, Ryota Tomioka, Ruslan Salakhutdinov, and Nathan Srebro.
\newblock Geometry of optimization and implicit regularization in deep learning.
\newblock \emph{arXiv preprint arXiv:1705.03071}, 2017.

\bibitem[Ongie et~al.(2020)Ongie, Willett, Soudry, and Srebro]{ongie2019function}
Greg Ongie, Rebecca Willett, Daniel Soudry, and Nathan Srebro.
\newblock A function space view of bounded norm infinite width {ReLU} nets: The multivariate case.
\newblock In \emph{International Conference on Learning Representations}, 2020.
\newblock URL \url{https://openreview.net/forum?id=H1lNPxHKDH}.

\bibitem[Parkinson et~al.(2024)Parkinson, Ongie, and Willett]{parkinson2024linear}
Suzanna Parkinson, Greg Ongie, and Rebecca Willett.
\newblock {ReLU} neural networks with linear layers are biased towards single- and multi-index models.
\newblock \emph{arXiv preprint arXiv:2305.15598}, 2024.

\bibitem[Pesme et~al.(2024)Pesme, Dragomir, and Flammarion]{pesme2024implicit}
Scott Pesme, Radu-Alexandru Dragomir, and Nicolas Flammarion.
\newblock Implicit bias of mirror flow on separable data.
\newblock \emph{arXiv preprint arXiv:2406.12763}, 2024.

\bibitem[Qiao et~al.(2024)Qiao, Zhang, Singh, Soudry, and Wang]{qiao2024stable}
Dan Qiao, Kaiqi Zhang, Esha Singh, Daniel Soudry, and Yu-Xiang Wang.
\newblock Stable minima cannot overfit in univariate {ReLU} networks: Generalization by large step sizes.
\newblock \emph{arXiv preprint arXiv:2406.06838}, 2024.

\bibitem[Sahs et~al.(2022)Sahs, Pyle, Damaraju, Caro, Tavaslioglu, Lu, Anselmi, and Patel]{sahs2022shallow}
Justin Sahs, Ryan Pyle, Aneel Damaraju, Josue~Ortega Caro, Onur Tavaslioglu, Andy Lu, Fabio Anselmi, and Ankit~B Patel.
\newblock Shallow univariate {ReLU} networks as splines: Initialization, loss surface, hessian, and gradient flow dynamics.
\newblock \emph{Frontiers in artificial intelligence}, 5:\penalty0 889981, 2022.

\bibitem[Savarese et~al.(2019)Savarese, Evron, Soudry, and Srebro]{savarese2019infinite}
Pedro Savarese, Itay Evron, Daniel Soudry, and Nathan Srebro.
\newblock How do infinite width bounded norm networks look in function space?
\newblock In \emph{Proceedings of the Thirty-Second Conference on Learning Theory}, volume~99 of \emph{Proceedings of Machine Learning Research}, pp.\  2667--2690. PMLR, 2019.
\newblock URL \url{https://proceedings.mlr.press/v99/savarese19a.html}.

\bibitem[Shevchenko et~al.(2022)Shevchenko, Kungurtsev, and Mondelli]{JMLR:v23:21-1365}
Alexander Shevchenko, Vyacheslav Kungurtsev, and Marco Mondelli.
\newblock Mean-field analysis of piecewise linear solutions for wide relu networks.
\newblock \emph{Journal of Machine Learning Research}, 23\penalty0 (130):\penalty0 1--55, 2022.
\newblock URL \url{http://jmlr.org/papers/v23/21-1365.html}.

\bibitem[Shima \& Yagi(1997)Shima and Yagi]{shima1997geometry}
Hirohiko Shima and Katsumi Yagi.
\newblock Geometry of {H}essian manifolds.
\newblock \emph{Differential geometry and its applications}, 7\penalty0 (3):\penalty0 277--290, 1997.

\bibitem[Solmon(1987)]{solmon1987asymptotic}
Donald~C Solmon.
\newblock Asymptotic formulas for the dual {R}adon transform and applications.
\newblock \emph{Mathematische Zeitschrift}, 195:\penalty0 321--343, 1987.

\bibitem[Sun et~al.(2022)Sun, Ahn, Thrampoulidis, and Azizan]{sun2022mirror}
Haoyuan Sun, Kwangjun Ahn, Christos Thrampoulidis, and Navid Azizan.
\newblock Mirror descent maximizes generalized margin and can be implemented efficiently.
\newblock In \emph{Advances in Neural Information Processing Systems}, 2022.
\newblock URL \url{https://openreview.net/forum?id=0SVOleKNRAU}.

\bibitem[Sun et~al.(2023)Sun, Gatmiry, Ahn, and Azizan]{sun2023unified}
Haoyuan Sun, Khashayar Gatmiry, Kwangjun Ahn, and Navid Azizan.
\newblock A unified approach to controlling implicit regularization via mirror descent.
\newblock \emph{Journal of Machine Learning Research}, 24\penalty0 (393):\penalty0 1--58, 2023.
\newblock URL \url{http://jmlr.org/papers/v24/23-0836.html}.

\bibitem[Wainwright(2019)]{wainwright2019high}
Martin~J Wainwright.
\newblock \emph{High-dimensional Statistics: A Non-asymptotic Viewpoint}, volume~48.
\newblock Cambridge university press, 2019.

\bibitem[Williams et~al.(2019)Williams, Trager, Panozzo, Silva, Zorin, and Bruna]{williams2019gradient}
Francis Williams, Matthew Trager, Daniele Panozzo, Claudio Silva, Denis Zorin, and Joan Bruna.
\newblock Gradient dynamics of shallow univariate {ReLU} networks.
\newblock In \emph{Advances in Neural Information Processing Systems}, volume~32. Curran Associates, Inc., 2019.
\newblock URL \url{https://proceedings.neurips.cc/paper_files/paper/2019/file/1f6419b1cbe79c71410cb320fc094775-Paper.pdf}.

\bibitem[Wu \& Rebeschini(2021)Wu and Rebeschini]{wu2021implicit}
Fan Wu and Patrick Rebeschini.
\newblock Implicit regularization in matrix sensing via mirror descent.
\newblock In \emph{Advances in Neural Information Processing Systems}, volume~34, pp.\  20558--20570. Curran Associates, Inc., 2021.
\newblock URL \url{https://proceedings.neurips.cc/paper_files/paper/2021/file/abf0931987f2f8eb7a8d26f2c21fe172-Paper.pdf}.

\bibitem[Zhang et~al.(2017)Zhang, Bengio, Hardt, Recht, and Vinyals]{zhang2017understanding}
Chiyuan Zhang, Samy Bengio, Moritz Hardt, Benjamin Recht, and Oriol Vinyals.
\newblock Understanding deep learning requires rethinking generalization.
\newblock In \emph{International Conference on Learning Representations}, 2017.
\newblock URL \url{https://openreview.net/forum?id=Sy8gdB9xx}.

\bibitem[Zhang et~al.(2019)Zhang, Martens, and Grosse]{zhang2019fast}
Guodong Zhang, James Martens, and Roger~B Grosse.
\newblock Fast convergence of natural gradient descent for over-parameterized neural networks.
\newblock \emph{Advances in Neural Information Processing Systems}, 32, 2019.

\bibitem[Zhang et~al.(2020)Zhang, Xu, Luo, and Ma]{zhang2019type}
Yaoyu Zhang, Zhi-Qin~John Xu, Tao Luo, and Zheng Ma.
\newblock A type of generalization error induced by initialization in deep neural networks.
\newblock In \emph{Proceedings of The First Mathematical and Scientific Machine Learning Conference}, volume 107 of \emph{Proceedings of Machine Learning Research}, pp.\  144--164. PMLR, 2020.
\newblock URL \url{http://proceedings.mlr.press/v107/zhang20a.html}.

\end{thebibliography}


\clearpage
\appendix
\newpage 
\section*{Appendix} 

The appendix is organized into the following sections. 
\begin{itemize}
    \item Appendix \ref{app:ib-md-unscaled}: Proof of Theorem~\ref{thm:ib-md-unscaled}
    \item Appendix \ref{app:ib-scaled}: Proof of Theorem~\ref{thm:ib-md-scaled}
    \item Appendix \ref{app:init-param}: Discussion on parametrization and initialization
    \item Appendix \ref{app:reparam}: Gradient flow with reparametrization
    \item Appendix \ref{app:min-rep-general}: Minimal representation cost problem in general settings 
    \item Appendix~\ref{app:riemann}: Riemannian geometry of the parameter space $(\Theta, \nabla^2\Phi)$
    \item Appendix \ref{app:ex-detail}: Experiment details
    \item Appendix \ref{app:additional-ex}: Additional numerical experiments
\end{itemize}

\section{Proof of Theorem~\ref{thm:ib-md-unscaled}} 
\label{app:ib-md-unscaled}

In this section, we present the proof of Theorem~\ref{thm:ib-md-unscaled}. We follow the strategy that we introduced in Section~\ref{sec:genral-framework} and break down the main theorem into several intermediate results. 
Specifically, in Proposition~\ref{lazy-unscaled} we show mirror flow training converges to zero training error and is in the lazy regime. 
In Proposition~\ref{prop:reduce-to-l2}, we show mirror flow with unscaled potentials has the same implicit bias as ordinary gradient flow as the network width tends to infinity. 
In Proposition~\ref{prop:app-ib-unscaled}, we obtain the function space description of the implicit bias.

We use $\operatorname{plim}_{n\rightarrow \infty} X_n = X$ or $X_n\overset{P}{\rightarrow}X$ to denote that random variables $X_n$ converge to $X$ in probability as $n\rightarrow \infty$. 
We use the notations $O_p$ and $o_p$ to denote the standard mathematical orders in probability, i.e., $X_n=O_p(a_n)$ if for every $\varepsilon>0$, there exist $M_\varepsilon, N_\varepsilon>0$ such that $P(|X_n|\leq M_\varepsilon a_n)>1-\varepsilon$ for every $n>N_\varepsilon$; and $X_n=o_p(a_n)$ if for every $\varepsilon>0$, there exists $n_\varepsilon$ such that $P(|X_n| < \varepsilon a_n)>1-\varepsilon$ for every $n> n_\varepsilon$.

\subsection{Linearization} 
\label{app:uns-lin}
We show that under mirror flow with unscaled potentials, the network obtained by training all parameters can be well-approximated by that obtained by training only the output weights. 

We first show that the initial kernel matrices $H(0)$, $\tilde{H}(0)$ converge to a positive definite matrix as the width of the network increases. 

\begin{lemma}\label{MNTK0}
    Consider network \eqref{model} with ReLU or absolute value activation function and initialization \eqref{init}. Assume that 
    the potential function satisfies Assumption~\ref{assump-potential-unscaled}, 
    and that the training data $\{x_i\}_{i\in[m]}$ satisfies $x_i \neq x_j$ when $i\neq j$, and 
    $\{\|x_i\|_2\}_{i=1}^m \subset \supp(p_{\mathcal{B}})$. 
    Then there exists a constant $\lambda_0>0$, which is determined by the activation function, the training data, and the initialization distribution, 
    such that 
    $\operatorname{plim}_{n\rightarrow\infty} \lambda_{\min}(H(0)) = \operatorname{plim}_{n\rightarrow\infty} \lambda_{\min}(\tilde{H}(0)) = (\phi^{\prime\prime}(0))^{-1} \lambda_0$. 
\end{lemma}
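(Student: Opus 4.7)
The plan is to exploit the separability of $\Phi$ at initialization to reduce both kernels to classical empirical NTK matrices, to identify the dominant Jacobian block, and to combine a law of large numbers with a strict positive definiteness argument for the limit matrix. Since $\theta(0) = \hat{\theta}$, Assumption~\ref{assump-potential-unscaled}(i) gives $\nabla^2\Phi(\hat\theta) = \phi''(0)\, I_p$, and therefore
\[
H(0) = \frac{1}{\phi''(0)}\, n^{-1} J_\theta \boldsymbol{f}(0)\, J_\theta \boldsymbol{f}(0)^T,\qquad \tilde H(0) = \frac{1}{\phi''(0)}\, n^{-1} J_{\tba}\tbf(0)\, J_{\tba}\tbf(0)^T.
\]
It thus suffices to prove that the empirical NTK matrix $K_n \triangleq n^{-1} J_\theta \boldsymbol{f}(0) J_\theta \boldsymbol{f}(0)^T$ and its output-layer analogue $\tilde K_n \triangleq n^{-1} J_{\tba}\tbf(0) J_{\tba}\tbf(0)^T$ share a common deterministic limit $K$ (entrywise in probability) whose smallest eigenvalue $\lambda_0$ is strictly positive.

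The next step is to identify which Jacobian block dominates. Because $a_k \sim \frac{1}{\sqrt{n}} \mathcal{A}$, the derivatives $\partial_{w_k} f(x_i,\hat\theta)$ and $\partial_{b_k} f(x_i,\hat\theta)$ are $O_p(n^{-1/2})$ while $\partial_d f = 1$; after the $n^{-1}$ normalization their contribution to any entry of $K_n$ is $O_p(n^{-1})$. By contrast, $\partial_{a_k}f(x_i,\hat\theta) = \sigma(\hat w_k^T x_i - \hat b_k)$ is $O_p(1)$, and its $n$ summands aggregate to an $O_p(1)$ term. Consequently both $K_n$ and $\tilde K_n$ agree with $n^{-1}\sum_{k=1}^n \sigma(\hat w_k^T x_i - \hat b_k)\sigma(\hat w_k^T x_j - \hat b_k)$ up to $o_p(1)$, and a law of large numbers for sub-exponential sums (justified by the boundedness of $\mathcal{W}$ and the sub-Gaussianity of $\mathcal{B}$) yields convergence in probability of both sequences to the matrix $K$ with entries $K_{ij} = \mathbb{E}\bigl[\sigma(\mathcal{W}^T x_i - \mathcal{B})\,\sigma(\mathcal{W}^T x_j - \mathcal{B})\bigr]$.

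To finish, I would show $\lambda_0 \triangleq \lambda_{\min}(K) > 0$. For any $c\in\mathbb{R}^m$, $c^T K c = \mathbb{E}\bigl[\bigl(\sum_{i} c_i \sigma(\mathcal{W}^T x_i - \mathcal{B})\bigr)^2\bigr]$, so $c^T K c = 0$ forces the linear combination to vanish $\mu$-almost everywhere. For ReLU or absolute-value $\sigma$, the map $(w,b)\mapsto \sigma(w^T x_i - b)$ is non-smooth along the hyperplane $\{w^T x_i = b\}$, and these hyperplanes are pairwise distinct when the $x_i$ are distinct. Fixing the direction $w = x_i/\|x_i\|_2\in\mathbb{S}^{d-1}$, the kink occurs at $b = \|x_i\|_2$, and the hypothesis $\{\|x_i\|_2\}\subset \supp(p_\mathcal{B})$ together with $\mathcal{W}\sim\operatorname{Unif}(\mathbb{S}^{d-1})$ guarantees that $\mu$ assigns positive mass to any neighbourhood of this point. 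Comparing one-sided $b$-derivatives of the linear combination at each $\|x_i\|_2$ peels off the coefficients one by one and forces $c = 0$. Positive definiteness of $K$ combined with continuity of $\lambda_{\min}(\cdot)$ in the matrix entries then yields the claimed limits.

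The main obstacle is this kink-isolation step: one must verify that the localization near each kink hyperplane has strictly positive $\mu$-mass, which is precisely the role of the support hypothesis on $p_\mathcal{B}$ and of the uniform distribution of $\mathcal{W}$ on $\mathbb{S}^{d-1}$, and one must then patch together the one-sided derivative comparison across all $m$ training points without collisions. For absolute-value activations the identification uses the cusp instead of the ReLU kink, so the two activation cases can be handled in parallel.
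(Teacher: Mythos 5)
Your plan matches the paper's proof essentially step for step: factor $\nabla^2\Phi(\hat\theta)=\phi''(0)I_p$ out of both kernels, observe that the input-layer Jacobian blocks contribute only $O_p(n^{-1})$ to each kernel entry because $a_k=O_p(n^{-1/2})$, apply a law of large numbers to land on $K_{ij}=\mathbb{E}[\sigma(\mathcal{W}^Tx_i-\mathcal{B})\sigma(\mathcal{W}^Tx_j-\mathcal{B})]$, and establish positive definiteness of $K$ by viewing it as a Gram matrix and locating, for each $i$, a $\mu$-positive-mass kink point of $h_i(w,b)=\sigma(w^Tx_i-b)$ on $\mathbb{S}^{d-1}\times\R$. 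The one piece you flag as an obstacle (isolating the kink at $(x_i/\|x_i\|_2,\|x_i\|_2)$ from the kinks of $h_j$, $j\neq i$) is dispatched in the paper by a simple ordering: take $\|x_1\|_2\geq\cdots\geq\|x_m\|_2$; at the top-norm point one has $(x_1/\|x_1\|_2)^Tx_j-\|x_1\|_2=\|x_j\|_2\cos\angle(x_1,x_j)-\|x_1\|_2<0$ for $j>1$ (strict because $x_j\neq x_1$), so only $h_1$ is non-smooth there, forcing $c_1=0$, and the recursion peels off the remaining coefficients.
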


\begin{proof}{[Proof of Lemma~\ref{MNTK0}]}
    Note for $i,j\in[m]$, we have
    \begin{align*}
        H_{ij}(0) &= \frac{1}{n}\sum_{k=1}^{3n+1} \nabla_{\theta_k} f(x_i;\hat{\theta}) \frac{1}{\phi^{\prime\prime}(0)} \nabla_{\theta_k} f(x_j;\hat{\theta}) \\ 
        &= \frac{1}{n\phi^{\prime\prime}(0)} \sum_{k=1}^n \sigma( \hat{w}_k^T x_i - \hat{b}_k) \sigma(\hat{w}_k^T x_j - \hat{b}_k) + \frac{1}{n\phi^{\prime\prime}(0)} \\ 
            &\quad\quad + \frac{1}{n \phi^{\prime\prime}(0)} \sum_{k=1}^n \hat{a}_k^2 \sigma^\prime(\hat{w}_k^T x_i - \hat{b}_k) \sigma^\prime(\hat{w}_k^T x_j - \hat{b}_k ) (x_i^Tx_j+1). 
    \end{align*}
    Noticing $\{a_k\}_{k\in[n]}$ are independent samples from the random variable $\frac{1}{\sqrt{n}}\mathcal{A}$ and by sub-Gaussian concentration bound, we have that $\max_{k\in[n]} |\hat{a}_k| \leq C \sqrt{(\log n)/n}$ holds with high probability for some constant $C>0$. 
    It then follows that: 
    $$
    \operatorname{plim}_{n\rightarrow \infty} H_{ij}(0) = \frac{1}{\phi^{\prime\prime}(0)}\E\left[ \sigma(\mathcal{W}^T x_i-\mathcal{B}) \sigma(\mathcal{W}^T x_j - \mathcal{B}) \right] . 
    $$
    
    Define $H^\infty = (\phi^{\prime\prime}(0))^{-1}G$ where $G_{ij} = \E[ \sigma(\mathcal{W}^T  x_i-\mathcal{B}) \sigma(\mathcal{W}^T x_j - \mathcal{B}) ]$. 
    Then we have $\operatorname{plim}_{n\rightarrow \infty} H(0) = H^\infty$. In the meantime, we have
    \begin{align*}
        \tilde{H}_{ij}(0) = \frac{1}{n}\sum_{k=1}^n \nabla_{a_k} f(x_i;\hat{\theta}) \frac{1}{\phi^{\prime\prime}(\hat{a}_k)} \nabla_{a_k} f(x_j;\hat{\theta}) \overset{P}{\rightarrow} H^\infty_{ij}, 
    \end{align*}   
    which means $\tilde{H}(0)$ also converges to $H^\infty$ in probability. 
    Now it suffices to show $G$ is positive definite and let $\lambda_0=\lambda_{\min}(G)$. Consider the Hilbert space of squared integrable functions: 
    $$
    \mathcal{H}=\{ h\colon \supp(\mu) \rightarrow \mathbb{R} \colon \ \mathbb{E}[(h(\mathcal{W},\mathcal{B}))^2] < \infty\}
    $$
    with inner-product $\langle h, h^\prime \rangle_{\mathcal{H}}=\mathbb{E}[h(\mathcal{W},\mathcal{B}) h^\prime(\mathcal{W},\mathcal{B})]$. 
    For each $x_i\in\mathbb{R}$, define $h_i(w,b) =  \sigma(w^T x_i - b)$. Since $(\mathcal{W},\mathcal{B})$ is sub-Gaussian and $h_i$ is uniformly continuous, $h_i\in \mathcal{H}$. Note $G$ is the Gram matrix of $\{h_i\}_{i=1}^m$. Hence, $G \succ 0$ is equivalent to $\{h_i\}_{i=1}^m$ are linearly independent in $\mathcal{H}$. 
    
    Assume $\sum_{i=1}^m c_i h_i=0$ in $\mathcal{H}$, which means $\sum_{i=1}^m c_i h_i(w,b)=0$ almost everywhere. 
    Without loss of generality, assume $\|x_1\|_2 \geq \|x_2\|_2 \geq \ldots \geq \|x_m\|\geq 0$. We aim to show $c_1=0$. Notice that, for any $i\in [m]$, $h_i$ is not differentiable at $(w,b)$ if and only if $w^Tx_i-b=0$. 
    In particular, $h_1$ is not differentiable at $(x_1/\|x_1\|_2, \|x_1\|_2)$. 
    Now for all $j > 1$, by noticing that $\|x_j\|\leq \|x_1\|$ and $x_1\neq x_j$ we have, 
    $$
    (\frac{x_1}{\|x_1\|_2})^Tx_j - \|x_1\|_2 = \|x_j\| \cos(\angle(x_1,x_j))-\|x_1\|_2 < 0
    $$
    where $\angle(x_1,x_j)$ denotes the angle between $x_1$ and $x_j$.  
    This implies that $h_j$ is differentiable at $(x_1/\|x_1\|_2, \|x_1\|_2)$ and so is $\sum_{j\neq1}c_jh_j$. Therefore, to make $\sum_i c_i h_i$ differentiable everywhere, we have $c_1=0$. 
    With the same strategy, we can show $c_j=0$ for $j=2,3,\ldots,m-1$, since $\|x_j\|>0$. It then follows that all coefficients are zero, which completes the proof. 
\end{proof}

In the next proposition, we show that the model output $\boldsymbol{f}(t)$ converges exponentially fast to $\boldsymbol{y}$ and the parameters have $O_p(n^{-1/2})$ bounded $\ell_\infty$ distance from the initialzation. 

\begin{proposition}\label{lazy-unscaled}
    Consider a single-hidden-layer univariate network \eqref{model} with ReLU or absolute value activation function and initialization \eqref{init}. Assume the number of hidden neurons is sufficiently large. 
    Given any finite training data set $\{(x_i, y_i)\}_{i=1}^m$ that satisfies $x_i \neq x_j$ when $i \neq j$, and $\{\|x_i\|_2\}_{i=1}^m \subset \supp(p_{\mathcal{B}})$, consider mirror flow \eqref{traj-param} with the potential satisfying Assumptions~\ref{assump-potential-unscaled} and learning rate $\eta = \eta_0/n$ with $\eta_0>0$. 
    Then there exist constants $C_1, C_2>0$ such that with high probability over initialization the following holds:
    $$
    \sup_{t\geq 0}\|\theta(t) - \hat{\theta}\|_\infty  \leq C_1 n^{-1/2}, \ \lim_{n\rightarrow \infty} \sup_{t\geq 0} \|H(t)-H(0)\|_2 = 0,\ \inf_{t\geq 0}\lambda_{\min}(H(t))>C_2.
    $$
    Moreover, 
    $$ 
    \|\boldsymbol{f}(t)-\boldsymbol{y}\|_2 \leq \exp(- \eta_0 C_2 t) \|\boldsymbol{f}(0)-\boldsymbol{y}\|_2,\ \forall t\geq 0. 
    $$
    Specifically, the constants $C_1$ and $C_2$ can be chosen as
    \begin{equation}\label{C1C2}
        C_1 = \frac{8 \phi^{\prime\prime}(0) \beta \sqrt{m} (B_x + 1)K\|\boldsymbol{f}(0)-\boldsymbol{y}\|_2}{\lambda_0}, \quad C_2 = \frac{\lambda_0}{4\phi^{\prime\prime}(0)},
    \end{equation}
    where $\lambda_0 = \operatorname{plim}_{n\rightarrow\infty}\lambda_{\min}(\frac{1}{n}J_{\theta} \boldsymbol{f}(0) J_{\theta} \boldsymbol{f}(0)^T)$, $\beta = \max_{z\in[-1, 1]} (\phi^{\prime\prime}(z))^{-1}$, $m$ is the size of the training data set, $B_x=\max_{i\in[m]} \|x_i\|_2$, and $K$ is a constant determined by $\mathcal{B}$ and $\mathcal{A}$.  
\end{proposition}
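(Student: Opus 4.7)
The plan is to follow the classical NTK-style lazy-training argument (as developed for gradient flow by \citet{du2018gradient} and \citet{lee2019wide}) but adapted to the mirror flow setting, where the velocity field $-\eta(\nabla^2\Phi(\theta))^{-1}\nabla L$ introduces an extra preconditioner that must be controlled alongside the Jacobian. The argument is a bootstrap/continuity argument on the parameter deviation from initialization. Define the stopping time $T^\ast=\sup\{t\ge 0\colon \|\theta(s)-\hat\theta\|_\infty\le C_1 n^{-1/2}\text{ for all }s\le t\}$. I will show that on the interval $[0,T^\ast]$, the kernel $H(t)$ stays close to $H(0)$, loss decays exponentially, and the resulting parameter displacement is strictly smaller than $C_1 n^{-1/2}$, forcing $T^\ast=\infty$.

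First I would establish the relevant a priori bounds at initialization that hold with high probability. Sub-Gaussian concentration on $\mathcal{A}$ yields $\max_k|\hat a_k|\le C\sqrt{(\log n)/n}$, which together with $\|\hat w_k\|=1$ and sub-Gaussianity of $\mathcal{B}$ gives $\max_{i\in[m]}\|\nabla_\theta f(x_i,\hat\theta)\|_\infty=O(n^{-1/2})$ and $\|\boldsymbol{f}(0)-\boldsymbol{y}\|_2=O(1)$. Combined with Lemma~\ref{MNTK0}, we get $\lambda_{\min}(H(0))\ge \lambda_0/(2\phi^{\prime\prime}(0))$ with high probability. Next, on $[0,T^\ast]$, since the potential is separable and each coordinate deviation satisfies $|\theta_k(t)-\hat\theta_k|\le C_1 n^{-1/2}\le 1$, Assumption~\ref{assump-potential-unscaled} implies $(\nabla^2\Phi(\theta(t)))^{-1}$ is a diagonal matrix with entries in $[0,\beta]$ where $\beta=\max_{|z|\le 1}(\phi^{\prime\prime}(z))^{-1}$. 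Hence the preconditioner is uniformly bounded.

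The next step is to perturb the kernel. Using that each $|a_k(t)-\hat a_k|$ and $|w_k(t)-\hat w_k|$, $|b_k(t)-\hat b_k|$ are $O(n^{-1/2})$, I would expand
\begin{equation*}
H(t)-H(0)=n^{-1}\bigl(J_\theta\boldsymbol{f}(t)-J_\theta\boldsymbol{f}(0)\bigr)(\nabla^2\Phi(\theta(t)))^{-1}J_\theta\boldsymbol{f}(t)^T+(\text{symmetric term})+n^{-1}J_\theta\boldsymbol{f}(0)\bigl((\nabla^2\Phi(\theta(t)))^{-1}-(\nabla^2\Phi(\hat\theta))^{-1}\bigr)J_\theta\boldsymbol{f}(0)^T,
\end{equation*}
and bound each piece. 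For ReLU/absolute value activations, $J_\theta\boldsymbol{f}(t)-J_\theta\boldsymbol{f}(0)$ differs from zero only at hidden units whose pre-activation switched sign during training; since each hidden unit has $O(n^{-1/2})$ parameter drift, a standard anti-concentration argument for $\hat b_k-\hat w_k^T x_i$ (using the continuous density $p_\mathcal{B}$) shows the fraction of such units is $O(n^{-1/2})$, giving $\|J_\theta\boldsymbol{f}(t)-J_\theta\boldsymbol{f}(0)\|_F^2=O(n^{1/2})$. Continuity of $\nabla^2\Phi$ handles the preconditioner term. Combining, $\sup_{t\le T^\ast}\|H(t)-H(0)\|_2\to 0$ as $n\to\infty$, so $\lambda_{\min}(H(t))\ge C_2=\lambda_0/(4\phi^{\prime\prime}(0))$ uniformly on $[0,T^\ast]$.

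Finally, I close the bootstrap. From $\dt\|\boldsymbol{f}(t)-\boldsymbol{y}\|_2^2=-2n\eta(\boldsymbol{f}(t)-\boldsymbol{y})^TH(t)(\boldsymbol{f}(t)-\boldsymbol{y})\le -2\eta_0 C_2\|\boldsymbol{f}(t)-\boldsymbol{y}\|_2^2$, Grönwall gives the exponential decay $\|\boldsymbol{f}(t)-\boldsymbol{y}\|_2\le e^{-\eta_0 C_2 t}\|\boldsymbol{f}(0)-\boldsymbol{y}\|_2$ on $[0,T^\ast]$. For parameter drift, I use
\begin{equation*}
\|\theta(t)-\hat\theta\|_\infty\le \int_0^t \eta \beta \|\nabla L(\theta(s))\|_\infty\,ds\le \frac{\eta_0\beta}{n\,m}\int_0^t \|J_\theta\boldsymbol{f}(s)\|_{\infty,2}\|\boldsymbol{f}(s)-\boldsymbol{y}\|_2\,ds,
\end{equation*}
bound $\|J_\theta\boldsymbol{f}(s)\|_{\infty,2}\le \sqrt{m}(B_x+1)K$ coordinatewise (using the high-probability control $|\hat a_k|\le K/\sqrt{n}$), and integrate the exponential to get $\|\theta(t)-\hat\theta\|_\infty\le \frac{\eta_0\beta\sqrt{m}(B_x+1)K\|\boldsymbol{f}(0)-\boldsymbol{y}\|_2}{n\, \eta_0 C_2}\le \tfrac{1}{2}C_1 n^{-1/2}$ with the claimed choice of $C_1$. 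Hence $T^\ast=\infty$ by continuity, proving all four assertions.

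The main obstacle is the kernel perturbation step, specifically controlling $\|J_\theta\boldsymbol{f}(t)-J_\theta\boldsymbol{f}(0)\|_2$ under the non-smooth ReLU/absolute-value activation. The coordinatewise parameter bound $O(n^{-1/2})$ is strictly weaker than an $\ell_2$ bound on the full vector, so the sign-flip counting must be done unit-by-unit using the density $p_\mathcal{B}$ and sub-Gaussianity of $\hat w_k^T x_i$; this is where the assumption $\{\|x_i\|_2\}\subset\supp(p_\mathcal{B})$ and the uniform distribution of $\mathcal{W}$ enter substantively, and where the high-probability language is needed rather than almost sure statements.
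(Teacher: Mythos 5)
Your overall architecture -- a bootstrap on the parameter deviation, with high-probability control of $\lambda_{\min}(H(0))$ from Lemma~\ref{MNTK0}, a uniform preconditioner bound $\|Q(t)\|_2\leq\beta$ on $[0,T^\ast]$, entrywise kernel perturbation, Gr\"onwall for the loss, and an integrated drift estimate to close -- is the same as the paper's. However, two of the concrete steps are off in ways that would frustrate the completion.

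First, the sign-flip counting / anti-concentration argument that you identify as the ``main obstacle'' is a detour the paper never takes, and as sketched it is also miscalibrated. The paper decomposes $|H_{ij}(t)-H_{ij}(0)|$ at the level of matrix entries and splits cleanly into terms involving $\sigma_{ik}$ (the activations themselves) and terms involving $\sigma^\prime_{ik}$ (the derivatives, which can flip sign). The $\sigma$-terms are controlled simply by the $1$-Lipschitz continuity of ReLU/absolute-value together with the $O(n^{-1})$ input-layer displacement. The $\sigma^\prime$-terms never need a sign-flip count because in the entrywise expansion they appear multiplied by $a_k^2$; one bounds $|\sigma^\prime_{ik}(t)\sigma^\prime_{jk}(t)-\sigma^\prime_{ik}(0)\sigma^\prime_{jk}(0)|\leq 2$ crudely and uses $\|\boldsymbol{a}(0)\|_2^2=O_p(1)$ to get $O(1/n)$ for the whole contribution. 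Your route of controlling $\|J_\theta\boldsymbol{f}(t)-J_\theta\boldsymbol{f}(0)\|_F$ and anti-concentrating the breakpoints would also work but is substantively more involved, and your stated order $\|J_\theta\boldsymbol{f}(t)-J_\theta\boldsymbol{f}(0)\|_F^2=O(n^{1/2})$ is wrong under the standard parametrization used here: with $|a_k|=O_p(\sqrt{(\log n)/n})$ and sign-flip fraction $O(n^{-1})$ (not $O(n^{-1/2})$, see below), that Frobenius norm squared is $O(1)$, not $O(n^{1/2})$. The $O(n^{1/2})$ figure implicitly assumes $|a_k|=O(1)$, i.e.\ NTK rather than standard parametrization.

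Second, the closing step has a genuine gap. You bound $\|J_\theta\boldsymbol{f}(s)\|_{\infty,2}\leq\sqrt{m}(B_x+1)K$, uniformly over columns, but the output-weight columns $\nabla_{a_k}\boldsymbol{f}$ have entries $\sigma(w_k^Tx_i-b_k)$, and since $\mathcal{B}$ is merely sub-Gaussian (not bounded), $\max_k|b_k|$ can be as large as the crude bound $\|\boldsymbol{b}(0)\|_2=O(\sqrt{n})$ that the paper actually invokes (and is in any case at least $\Theta(\sqrt{\log n})$). So the correct column bound for those columns is $\|J_{a_k}\boldsymbol{f}\|_2\leq 2K(B_x+1)\sqrt{mn}$. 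Your clean $O(\sqrt{m})$ bound applies only to the $w,b,d$ columns, which have entries proportional to $a_k$ and hence are $O(1)$. This distinction is precisely why the paper defines $\tau(t)$ with \emph{different} scalings for the two layers ($\sqrt{n}$ for $\boldsymbol{a}$, $n$ for $\boldsymbol{W},\boldsymbol{b},d$) and proves $\|\boldsymbol{a}(t)-\hat{\boldsymbol{a}}\|_\infty=O(n^{-1/2})$ but $\|w_k(t)-\hat w_k\|_\infty,\,|b_k(t)-\hat b_k|,\,|d(t)-\hat d|=O(n^{-1})$. As written, your chain of inequalities arrives at $\|\theta(t)-\hat\theta\|_\infty\leq\tfrac{1}{2}C_1 n^{-1}$, not $\tfrac{1}{2}C_1 n^{-1/2}$, which is inconsistent with the proposition's stated constant; and using the correct $\sqrt{mn}$ column bound only for the $a_k$-columns is what yields the claimed $C_1 n^{-1/2}$. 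You need to make this two-scale treatment explicit -- the single uniform bound you use does not close on the right constant, and it also feeds back into the sign-flip fraction if you pursue that route (an $O(n^{-1})$ input-layer drift gives a flip fraction $O(n^{-1})$, not $O(n^{-1/2})$).
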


\begin{proof}{[Proof of Proposition~\ref{lazy-unscaled}]}
    To ease the notation, we let $J(t) = J_{\theta} \boldsymbol{f}(t)$ and $Q(t) = (\nabla^2\Phi(\theta(t)))^{-1}$. For $i\in[m]$ and $k\in[n]$, let $\sigma_{ik}(t) = \sigma(w_k(t)^Tx_i - b_k(t))$ and $\sigma^\prime_{ik}(t) = \sigma^\prime(w_k(t)^Tx_i - b_k(t))$. 

    By Lemma~\ref{MNTK0}, for sufficiently large $n$ and with high probability, we have $\lambda_{\min} (H(0))>\lambda_0/(2\phi^{\prime\prime}(0))$. 
    According to Assumption~\ref{assump-potential-unscaled}, we have $0<\beta<\infty$. 
    
    Consider 
    $$
    \tau(t) = \max\Big\{\sqrt{n} \|\boldsymbol{a}(t) - \boldsymbol{a}(0)\|_\infty, n \max_{k\in[n]}\|w_k(t) - w_k(0)\|_\infty, n \|\boldsymbol{b}(t) - \boldsymbol{b}(0)\|_\infty, n|d(t)-d(0)|\Big\},
    $$ 
    and 
    $$
    S = \{t \in[0,\infty) \colon \tau(t) > C_1\}. 
    $$
    Let $t_1=\inf S$. Clearly, $0<t_1$. We now aim to prove that $t_1=+\infty$. Assume, for the sake of contradiction, that $t_1 < +\infty$. Since $\tau(t)$ is continuous, we have $\tau(t_1)=C_1$. 
    
    In the following let $t$ denote any point in $[0, t_1]$. 
    Noticing that $\tau(t)\leq C_1$ implies $\|\theta(t)-\theta(0)\|_\infty\leq C_1n^{-1/2}$, 
    we have for large enough $n$, 
    \begin{equation}\label{potential-property-1}
        \|Q(t)\|_2 = \|\operatorname{Diag}(\frac{1}{\phi^{\prime\prime}(\theta_k(t)-\theta_k(0))}) \|_2 = \max_{k\in[3n+1]} \frac{1}{\phi^{\prime\prime}(\theta_k(t)-\theta_k(0))}  \leq  \max_{z\in[-1, 1]} \frac{1}{\phi^{\prime\prime}(z)} = \beta. 
    \end{equation}

    Next we bound the difference between $H(t)$ and $H(0)$. 
    By concentration inequality for sub-Gaussian random variables \citep[see e.g.,][]{wainwright2019high}, 
    there exists $K>1$ such that with high probability, 
    $$
    \|\boldsymbol{b}(0)\|_\infty \leq \|\boldsymbol{b}(0)\|_2 \leq K\sqrt{n}, \quad \|\boldsymbol{a}(0)\|_\infty \leq \|\boldsymbol{a}(0)\|_2 \leq K. 
    $$
    Then we have for any $i,j\in[m], k\in[n]$, 
    $$
    |\sigma_{ik}(t)| \leq |w_k(t)^Tx_i| + |b_k(t)| \leq  (B_x + 1) 2K\sqrt{n}, 
    $$
    and
    $$
    |\sigma_{ik}(t)-\sigma_{jk}(t)| \leq \|w_k(t)-w_k(0)\|_2\|x_i\|_2 + |b_k(t)-b_k(0)| \leq  \sqrt{d}(B_x+1)C_1 n^{-1}. 
    $$

    For $i,j\in[m]$ we have 
    \begin{equation}\label{H-H0-decomp-unscaled}
    |H_{ij}(t) - H_{ij}(0)| \leq I_1 + I_2 + I_3 + I_4,
    \end{equation}
    where
    \begin{equation}\label{H-H0-decomp-unscaled-2}
    \begin{dcases}
        I_1 = \frac{1}{n}\sum_{k=1}^n \Big| \frac{ \sigma_{ik}(t) \sigma_{jk}(t)}{\phi^{\prime\prime}(a_k(t)-a_k(0))}  - \frac{ \sigma_{ik}(0) \sigma_{jk}(0)}{\phi^{\prime\prime}(0)} \Big|;
        \\
        I_2 = \frac{1}{n}\sum_{k=1}^n \sum_{r=1}^d |x_{i,r}x_{j,r}| \Big|\frac{a_k(t)^2 \sigma^\prime_{ik}(t)\sigma^\prime_{jk}(t)}{\phi^{\prime\prime}(w_{k,r}(t)-w_{k,r}(0))} - \frac{a_k(0)^2\sigma^\prime_{ik}(0)\sigma^\prime_{jk}(0)}{\phi^{\prime\prime}(0)}\Big|; 
        \\
        I_3 = \frac{1}{n}\sum_{k=1}^n \Big|\frac{a_k(t)^2 \sigma^\prime_{ik}(t)\sigma^\prime_{jk}(t)}{\phi^{\prime\prime}(b_k(t)-b_k(0))} - \frac{a_k(0)^2\sigma^\prime_{ik}(0)\sigma^\prime_{jk}(0)}{\phi^{\prime\prime}(0)}\Big|;
        \\
        I_4 = \frac{1}{n} \Big|\frac{1}{\phi^{\prime\prime}(d(t)-d(0))} - \frac{1}{\phi^{\prime\prime}(0)}\Big|. 
    \end{dcases}
    \end{equation}
    
    Notice that according to Assumption~\eqref{assump-potential-unscaled}, $\frac{1}{\phi^{\prime\prime}(\cdot)}$ is
    continuous and hence is uniformly continuous on $[-1,1]$. 
    Then by noticing $\|\theta(t)-\theta(0)\|_\infty = O(n^{-1/2})$, we have for large enough $n$,
    \begin{equation}\label{potential-property-2}
        \max_{k\in[3n+1]} \Big|\frac{1}{\phi^{\prime\prime}(\theta_k(t)-\theta_k(0))} - \frac{1}{\phi^{\prime\prime}(0)}  \Big| = o_p(1). 
    \end{equation}
    It immediately follows that $I_4 = o_p(1)$. 
    
    For $I_1$, notice that, according to the law of large numbers, 
    $$
    \operatorname{plim}_{n\rightarrow \infty} \frac{1}{n}\sum_{k\in[n]}|\sigma_{ik} \sigma_{jk}|=E[\sigma(\mathcal{W}^Tx_i-\mathcal{B}) \sigma(\mathcal{W}^Tx_j-\mathcal{B})].
    $$ 
    Therefore, we have $\frac{1}{n}\sum_{k\in[n]}|\sigma_{ik} \sigma_{jk}|=O_p(1)$. 
    Then we have  
    \begin{align*}
        I_1 &\leq \frac{1}{n} \sum_{k=1}^n \Big( |\sigma_{ik}(t) - \sigma_{ik}(0)| |\sigma_{jk}(t)| \beta + |\sigma_{ik}(0)| |\sigma_{jk}(t) - \sigma_{jk}(0)| \beta \\
            &\quad + |\sigma_{ik}(0)| |\sigma_{jk}(0)| \Big|\frac{1}{\phi^{\prime\prime}(a_k(t)-a_k(0))}-\frac{1}{\phi^{\prime\prime}(0)}\Big| \Big)
        \\
        &\leq \frac{1}{n}\sum_{k\in[n]} \Big( O_p(n^{-1}) O_p(\sqrt{n}) \beta + O_p(n^{-1}) O_p(\sqrt{n}) \beta \Big) + 
        o_p(1) \cdot \frac{1}{n}\sum_{k\in[n]}|\sigma_{ik} \sigma_{jk}| 
        \\     
        &= o_p(1). 
    \end{align*}
    For $I_2$, notice that $z\mapsto z^2$ is Lipschitz continuous on $[-2K,2K]$. Therefore, we have 
    $$
    \max_{k\in[n]} |a_k(t)^2 - a_k(0)^2| \leq O_p(1)\|\boldsymbol{a}(t) - \boldsymbol{a}(0)\|_\infty = o_p(1). 
    $$
    Then we have
    \begin{align*}
        I_2 &\leq \frac{B_x^2}{n}\sum_{k,r}\Big( a_k(0)^2 \beta |\sigma^\prime_{ik}(t)\sigma^\prime_{jk}(t) - \sigma^\prime_{ik}(0)\sigma^\prime_{ik}(0)|\\
            &\quad+ a_k(0)^2 \Big|\frac{1}{\phi^{\prime\prime}(w_{k,r}(t)-w_{k,r}(0))} - \frac{1}{\phi^{\prime\prime}(0)}\Big| + |a_k(t)^2 - a_k(0)^2| \beta \Big)
        \\
        & = \frac{B_x^2 d}{n}  \Big( 2\beta \|\boldsymbol{a}(0)\|_2^2 
        + o_p(1) \|\boldsymbol{a}(0)\|_2^2 + \beta n\cdot o_p(1)\Big)\\
        &= o_p(1). 
    \end{align*}
    Using a technique similar to that applied for $I_2$ above, we have $I_3 = o_p(1)$. 
    Therefore, with a union bound, we have
    \begin{equation}\label{Ht-H0-unscaled}
        \|H(t)-H(0)\|_2 \leq \|H(t)-H(0)\|_F \leq \sum_{i,j\in[m]}|H_{ij}(t)-H_{ij}(0)| = o_p(1). 
    \end{equation}
    
    Then with large enough $n$ we can bound the least eigenvalue of $H(t)$ as 
    \begin{equation}\label{Ht-lambda-unscaled}
    \lambda_{\min}(H(t)) \geq \lambda_{\min}(H(0)) - \|H(t)-H(0)\|_2 > \frac{\lambda_0}{4\phi^{\prime\prime}(0)}.
    \end{equation}

    Therefore we have
    $$
    \dt \|\boldsymbol{f}(t)-\boldsymbol{y}\|_2^2 \leq -\frac{\eta_0\lambda_0}{2\phi^{\prime\prime}(0)} \|\boldsymbol{f}(t)-\boldsymbol{y}\|_2^2
    $$
    and it follows that
    \begin{equation}\label{eq-converge}
        \|\boldsymbol{f}(t)-\boldsymbol{y}\|_2 \leq \exp(-\frac{\eta_0\lambda_0}{4\phi^{\prime\prime}(0)} t)\|\boldsymbol{f}(0)-y\|_2 . 
    \end{equation}
    
    Notice that 
    \begin{equation}\label{uns-a-jac}
        \|J_{a_k}\boldsymbol{f}(t)\|_2 \leq \sqrt{m}\|J_{a_k}\boldsymbol{f}\|_\infty \leq \sqrt{m} (B_x\|w_k(t)\|_2+|b_k(t)|) \leq 2K(B_x+1)\sqrt{mn}. 
    \end{equation}
    Then with large enough $n$ and for all $k\in[n]$, we have
    \begin{equation}\label{dtak-unscaled}
    \begin{aligned}
    |\dt a_k(t)| & \leq \frac{\eta_0}{n} \|Q(t)\|_2 \|J_{a_k}\boldsymbol{f}(t)\|_2 \|\boldsymbol{f}(t)-\boldsymbol{y}\|_2\\
    &\leq \frac{\eta_0}{n} \beta \cdot 2K(B_x+1)\sqrt{mn} \cdot \exp(-\frac{\eta_0\lambda_0}{4\phi^{\prime\prime}(0)} t) \|\boldsymbol{f}(0)-\boldsymbol{y}\|_2 . 
    \end{aligned}
    \end{equation}
    Hence,
    \begin{equation}\label{ak-unscaled}
    |a_k(t) -a_k(0)| \leq \int_0^t |\dt a_k(s)|\diff s < \frac{8 \phi^{\prime\prime}(0) \beta \sqrt{m} (B_x + 1)K\|\boldsymbol{f}(0)-\boldsymbol{y}\|_2}{\lambda_0} n^{-1/2} = C_1 n^{-1/2}. 
    \end{equation}

    Note that with large enough $n$, 
    \begin{equation}\label{uns-wbd-jac}
    \begin{dcases}
        \|J_{w_{k,r}}\boldsymbol{f}(t)\|_2 \leq \sqrt{m} B_x |a_k(t)| \leq 2K\sqrt{m} B_x,\quad \forall k\in[n], r\in[d];\\
        \|J_{b_k}\boldsymbol{f}(t)\|_2 \leq \sqrt{m} |a_k(t)| \leq 2K\sqrt{m},\quad \forall k\in[n];\\
        \|J_d \boldsymbol{f}(t)\|_2 = \sqrt{m}. 
    \end{dcases}
    \end{equation}
    Comparing \eqref{uns-wbd-jac} to \eqref{uns-a-jac} and using a similar computations as in \eqref{dtak-unscaled} and \eqref{ak-unscaled}, 
    we conclude that $\tau(t_1) < C_1 $, which yields a contradiction. Thus $t_1 = \infty$ and $\tau(t) \leq C_1$ for all $t\geq 0$. 
    Meanwhile, notice that \eqref{Ht-lambda-unscaled} and \eqref{eq-converge} hold for all $t\geq 0$, which gives all the claimed results. 
\end{proof}

We point out that Proposition~\ref{lazy-unscaled} can be extended to the case where only output weights are trained, without requiring additional efforts. 
We present the results in the following proposition and omit the proof for brevity. 

\begin{proposition}\label{lazy-unscaled-linear}
    Assume the same assumptions in Proposition~\ref{lazy-unscaled} and 
    consider the mirror flow of training only the output weights of the networks, as described in \eqref{traj-param-lin}. 
    Let $C_1, C_2$ be the constants defined in \eqref{C1C2}. 
    Then with high probability over initialization the following holds:
    $$
    \sup_{t\geq 0}\|\ttheta(t) - \hat{\theta}\|_\infty  \leq C_1 n^{-1}, \ \lim_{n\rightarrow \infty} \sup_{t\geq 0} \|\tilde{H}(t)-\tilde{H}(0)\|_2 = 0,
    $$
    and
    $$
    \|\tbf(t)-\boldsymbol{y}\|_2 \leq \exp(- \eta_0 C_2 t) \|\boldsymbol{f}(0)-\boldsymbol{y}\|_2,\ \forall t\geq 0. 
    $$
\end{proposition}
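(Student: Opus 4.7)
The plan is to adapt the bootstrap argument used for Proposition~\ref{lazy-unscaled}, strictly simplified to account for the fact that under the linearized dynamics \eqref{traj-param-lin} the input weights, input biases, and output bias are frozen at $\hat{\boldsymbol{W}}, \hat{\boldsymbol{b}}, \hat{d}$, so only the output weights $\tba(t)$ evolve. I would first invoke Lemma~\ref{MNTK0} to obtain $\lambda_{\min}(\tilde{H}(0)) > \lambda_0/(2\phi''(0))$ with high probability for $n$ large. I would then define a scalar progress quantity $\tilde{\tau}(t)$ that tracks $\|\tba(t)-\hat{\ba}\|_\infty$ in the appropriate normalization, introduce the bad set $S=\{t\geq 0\colon \tilde{\tau}(t) > C_1\}$, and assume for contradiction that $t_1:=\inf S<\infty$, so that continuity of $\tilde{\tau}$ forces $\tilde{\tau}(t_1)=C_1$.

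The main simplification compared to the proof of Proposition~\ref{lazy-unscaled} is that on $[0,t_1]$ the Jacobian entries $J_{\tilde{a}_k}\tbf=[\sigma(\hat{w}_k^Tx_i-\hat{b}_k)]_{i\in[m]}$ are independent of $t$, so the only time-dependence in $\tilde{H}(t)$ enters through the diagonal entries $(\nabla^2\Phi(\tba(t)))^{-1}_{kk}=1/\phi''(\tilde{a}_k(t)-\hat{a}_k)$. By uniform continuity of $1/\phi''$ on $[-1,1]$ (the analogue of \eqref{potential-property-2}), these differ from $1/\phi''(0)$ by $o_p(1)$ uniformly in $k$, which immediately gives $\|\tilde{H}(t)-\tilde{H}(0)\|_2=o_p(1)$ and hence $\lambda_{\min}(\tilde{H}(t))>C_2$ for $n$ large. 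The terms $I_2,I_3,I_4$ in the decomposition \eqref{H-H0-decomp-unscaled-2} are entirely absent, and the analogue of $I_1$ collapses to a single factor because $\sigma_{ik}(t)=\sigma_{ik}(0)$; in particular the upper bound $\|(\nabla^2\Phi(\tba(t)))^{-1}\|_2\leq \beta$ from \eqref{potential-property-1} is used in exactly the same way.

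Combining $\lambda_{\min}(\tilde{H}(t))>C_2$ with the ODE \eqref{traj-func-lin} gives $\tfrac{d}{dt}\|\tbf(t)-\boldsymbol{y}\|_2^2\leq -2\eta_0 C_2 \|\tbf(t)-\boldsymbol{y}\|_2^2$, yielding the claimed exponential convergence $\|\tbf(t)-\boldsymbol{y}\|_2\leq e^{-\eta_0 C_2 t}\|\boldsymbol{f}(0)-\boldsymbol{y}\|_2$. Integrating the coordinate-wise ODE for $\tilde{a}_k$ and using the frozen-Jacobian bound $\|J_{\tilde{a}_k}\tbf\|_2\leq 2K(B_x+1)\sqrt{mn}$ of \eqref{uns-a-jac} together with $\|(\nabla^2\Phi(\tba(t)))^{-1}\|_2\leq \beta$ and the exponential decay of $\|\tbf(t)-\boldsymbol{y}\|_2$ produces an integrated estimate on $|\tilde{a}_k(t)-\hat{a}_k|$ matching exactly the definition of $C_1$ in \eqref{C1C2}. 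This contradicts $\tilde{\tau}(t_1)=C_1$, forcing $t_1=\infty$ and giving all stated claims for every $t\geq 0$.

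I do not expect any genuinely new difficulty: the proof is a strict restriction of the one for Proposition~\ref{lazy-unscaled} in which the cross terms involving changes in the hidden-layer parameters vanish, so the Hessian-of-potential analysis is all that remains and is handled by the same uniform continuity argument. The only point requiring care is bookkeeping the scalings so that the $\ell_\infty$ bound on $\ttheta(t)-\hat{\theta}$ and the exponential-decay constants line up precisely with the $C_1,C_2$ of \eqref{C1C2}, using that the frozen coordinates of $\ttheta(t)-\hat{\theta}$ contribute zero and that the moving coordinates inherit the same constants from the $\boldsymbol{a}$-block of the earlier proof.
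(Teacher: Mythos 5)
Your proposal follows the same route the paper intends: the paper explicitly omits this proof, stating it is Proposition~\ref{lazy-unscaled} ``without requiring additional efforts,'' and your bootstrap-with-frozen-Jacobian argument is precisely that restriction.

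One subtlety worth flagging: the machinery you invoke produces $\sup_{t\geq 0}\|\tba(t)-\hat{\ba}\|_\infty \leq C_1 n^{-1/2}$ (from \eqref{uns-a-jac}'s bound $\|J_{\tilde a_k}\tbf\|_2\lesssim\sqrt{mn}$ and the exponential decay, cf.\ \eqref{ak-unscaled}), whereas the proposition statement asserts $C_1 n^{-1}$. Your argument does not yield the $n^{-1}$ rate as written — nor does the paper's own proof of Proposition~\ref{lazy-unscaled}, and indeed the summary in Section~\ref{sec:genral-framework} says both $\|\theta(t)-\hat\theta\|_\infty$ and $\|\ttheta(t)-\hat\theta\|_\infty$ are $O(n^{-1/2})$ — so the $n^{-1}$ in the displayed statement is almost certainly a typo. (With a sharper bound $\max_k|\hat b_k|=O(\sqrt{\log n})$ for sub-Gaussian $\mathcal B$ one improves to $O(n^{-1}\sqrt{\log n})$; the clean $n^{-1}$ would require compactly supported $\mathcal B$.) You should state the rate you actually prove, $C_1 n^{-1/2}$, rather than asserting that it matches the displayed claim.
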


Next we show that the network obtained by training only the output weights can be well-approximated by the network obtained by training all parameters. 

\begin{proposition}\label{linear-unscaled}
    Under the same assumption in Proposition~\ref{lazy-unscaled}, given any $x\in\mathbb{R}^d$, with high probability over initialization, 
    $$
    \lim_{n\rightarrow \infty}\sup_{t\geq 0} |f(x, \theta(t)) - f(x, \ttheta(t))| = 0. 
    $$
\end{proposition}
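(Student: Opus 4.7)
The plan is to compare the two outputs through the ODEs they satisfy and reduce the bound on $|f(x,\theta(t))-f(x,\ttheta(t))|$ to bounds on kernel quantities that have already been controlled (or admit the same control) in Propositions~\ref{lazy-unscaled} and~\ref{lazy-unscaled-linear}. By the chain rule applied to \eqref{traj-param} and \eqref{traj-param-lin}, for any fixed $x\in\R^d$,
\begin{equation*}
\dt f(x,\theta(t)) = -\tfrac{\eta_0}{m}\, h(x,t)^T(\boldsymbol{f}(t)-\boldsymbol{y}),
\qquad
\dt f(x,\ttheta(t)) = -\tfrac{\eta_0}{m}\, \tilde h(x,t)^T(\tbf(t)-\boldsymbol{y}),
\end{equation*}
where $h_i(x,t)=n^{-1}\nabla_\theta f(x,\theta(t))^T Q(t)\nabla_\theta f(x_i,\theta(t))$ and $\tilde h_i(x,t)$ is its counterpart along \eqref{traj-param-lin}. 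Integrating from $0$ to $t$ (both start at $\hat\theta$) gives
\begin{equation*}
f(x,\theta(t))-f(x,\ttheta(t))
= -\tfrac{\eta_0}{m}\int_0^t\bigl[(h(x,s)-\tilde h(x,s))^T(\boldsymbol{f}(s)-\boldsymbol{y}) + \tilde h(x,s)^T(\boldsymbol{f}(s)-\tbf(s))\bigr]\,\diff s,
\end{equation*}
so it suffices to control the two kernel differences and the output gap, all uniformly in $s$.

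\noindent Next I would show $\sup_{t\geq 0}\|H(t)-\tilde H(t)\|_2\to 0$ and $\sup_{t\geq 0}\|h(x,t)-\tilde h(x,t)\|_2\to 0$ in probability as $n\to\infty$. The first follows by triangle inequality from $\sup_t\|H(t)-H(0)\|_2\to 0$ (Proposition~\ref{lazy-unscaled}), the analogous bound $\sup_t\|\tilde H(t)-\tilde H(0)\|_2\to 0$ from Proposition~\ref{lazy-unscaled-linear}, and $\|H(0)-\tilde H(0)\|_2\to 0$ from Lemma~\ref{MNTK0} (both $H(0),\tilde H(0)$ converge to the same $H^\infty$). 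For the test-point kernel $h(x,t)$, I would repeat the decomposition \eqref{H-H0-decomp-unscaled}--\eqref{H-H0-decomp-unscaled-2} verbatim with the training point $x_j$ replaced by the fixed $x$ in the Jacobian factors, using the lazy bound $\|\theta(t)-\hat\theta\|_\infty\leq C_1 n^{-1/2}$ together with uniform continuity of $1/\phi^{\prime\prime}(\cdot)$ on $[-1,1]$; this yields $\sup_t\|h(x,t)-h(x,0)\|_2=o_p(1)$ and, identically, $\sup_t\|\tilde h(x,t)-\tilde h(x,0)\|_2=o_p(1)$, and $\|h(x,0)-\tilde h(x,0)\|_2=o_p(1)$ by the argument of Lemma~\ref{MNTK0} applied to a test-point vector. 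A by-product is that $\sup_t\|\tilde h(x,t)\|_2$ is bounded in probability.

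\noindent With kernel closeness in hand I would control $\sup_t\|\boldsymbol{f}(t)-\tbf(t)\|_2$ by Gronwall. Subtracting the ODEs in \eqref{traj-func} and \eqref{traj-func-lin},
\begin{equation*}
\dt(\boldsymbol{f}-\tbf) = -\eta_0 H(t)(\boldsymbol{f}-\tbf) - \eta_0\bigl(H(t)-\tilde H(t)\bigr)(\tbf-\boldsymbol{y}),
\end{equation*}
so using $\lambda_{\min}(H(t))\geq C_2$ from Proposition~\ref{lazy-unscaled}, the exponential decay $\|\tbf(t)-\boldsymbol{y}\|_2\leq e^{-\eta_0 C_2 t}\|\boldsymbol{f}(0)-\boldsymbol{y}\|_2$ from Proposition~\ref{lazy-unscaled-linear}, and $\epsilon_n\triangleq\sup_t\|H(t)-\tilde H(t)\|_2\to 0$, an integrating-factor argument gives $\|\boldsymbol{f}(t)-\tbf(t)\|_2\lesssim \epsilon_n\cdot t\, e^{-\eta_0 C_2 t}$, which is uniformly $o_p(1)$ in $t$.

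\noindent Finally, inserting these estimates into the integral representation and using exponential decay of $\|\boldsymbol{f}(s)-\boldsymbol{y}\|_2$ makes the integrand dominated by an exponentially decaying function times a vanishing factor, so the whole integral is $o_p(1)$ uniformly in $t$. The main obstacle is the test-point kernel step: formally the calculation parallels that for $\|H(t)-H(0)\|_2$, but it requires care in tracking the dependence on $x$ (replacing $B_x$ by $\max\{\|x\|_2, B_x\}$ in the Lipschitz-type bounds) and in verifying that the law-of-large-numbers argument in Lemma~\ref{MNTK0} carries through when one of the points is fixed outside the training set. Once that is verified, the assembly via Gronwall and integration is routine.
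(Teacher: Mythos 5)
Your proposal matches the paper's proof in all essential respects: both integrate the difference of the prediction ODEs, control $\sup_t\|H(t)-\tilde H(t)\|_2$ and the test-point analogue $\sup_t\|H(t,x)-\tilde H(t,x)\|_2$ by triangle inequality around $t=0$ using Propositions~\ref{lazy-unscaled}/\ref{lazy-unscaled-linear}, bound $\sup_t\|\boldsymbol{f}(t)-\tbf(t)\|_2$ by an integrating-factor (Gronwall) argument yielding the $t\,e^{-C_2\eta_0 t}\,o_p(1)$ bound, and assemble with the exponential decay of the residual. The one cosmetic difference is that you infer $\|H(0)-\tilde H(0)\|_2\to 0$ from the common limit $H^\infty$ established inside Lemma~\ref{MNTK0}, whereas the paper bounds that gap directly (as $O_p(n^{-1})$, equation~\eqref{H0-tH0-unscaled}); both work, and your flagged ``obstacle'' at the test-point kernel step is exactly the verification the paper handles by mirroring the decomposition \eqref{H-H0-decomp-unscaled}--\eqref{H-H0-decomp-unscaled-2} with $x_j$ replaced by $x$.
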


\begin{proof}{[Proof of Proposition~\ref{linear-unscaled}]}
    For $i\in[m]$ and $k\in[n]$, let $\sigma^\prime_{ik}(t) = \sigma^\prime(w_k(t)^Tx_i - b_k(t))$. 
    Under initialization~\ref{init}, we have $\|\boldsymbol{a}(0)\|_2=O_p(1)$ and thus 
    \begin{equation}\label{H0-tH0-unscaled}
        \begin{aligned}
        |H_{ij}(0)-\tilde{H}_{ij}(0)| &\leq  \frac{1}{n\phi^{\prime\prime}(0)}
        \sum_{k\in[n]} |a_k(0)|^2 |\sigma^\prime_{ik}(0) \sigma^\prime_{jk}(0)| |x_i^Tx_j + 1|
         + 
        \frac{1}{n\phi^{\prime\prime}(0)}\\
        &\leq \frac{B_x^2+1}{n\phi^{\prime\prime}(0)}\|\boldsymbol{a}(0)\|_2 + O_p(n^{-1})\\
        & = O_p(n^{-1}). 
        \end{aligned}
    \end{equation}
    
    Note the size of $H(0)$ and $\tilde{H}(0)$ is independent of $n$. Therefore, $\|H(0)-\tilde{H}(0)\|_2=o_p(1)$. 
    According to Proposition~\ref{lazy-unscaled} and Proposition~\ref{lazy-unscaled-linear}, we have
    \begin{equation}\label{H-tH-unscaled}
    \sup_{t\geq 0}\|H(t)-\tilde{H}(t)\|_2 \leq \sup_{t\geq 0}\|H(t)-H(0)\|_2 + \sup_{t\geq 0}\|\tilde{H}(t)-\tilde{H}(0)\|_2 + \|H(0)+\tilde{H}(0)\|_2 = o_p(1).     
    \end{equation}
    
    In the following, we use the notation $\Delta = \sup_{t\geq 0}\|H(t)-\tilde{H}(t)\|_2$. 
    Now we show $\boldsymbol{f}(t)$ remains close to $\tbf(t)$. 
    Let $\boldsymbol{r}(t)=\boldsymbol{f}(t)-\tbf(t)$ and $C_2$ be the constant in Proposition~\ref{lazy-unscaled}. 
    Consider the function $u(t)=\|\boldsymbol{r}(t)\|_2$ defined on $t \geq 0$. 
    Clearly, $u(t)$ is continuous. 
    Since $u(t)=0$ if and only if $\boldsymbol{r}(t) = 0$, we have that $u(t)$ is differentiable whenever $u(t)\neq 0$. 
    Note by Proposition~\ref{lazy-unscaled}, we have that, with high probability, $\lambda_{\min}(H(t)) \geq C_2$ for all $t\geq 0$. 
    Then when $u(t) \neq 0$, we have 
    \begin{equation}\label{ut-unscaled}
    \begin{aligned}
        \dt (e^{C_2\eta_0 t} u(t))^2 &= \dt \|e^{C_2\eta_0 t}\boldsymbol{r}(t)\|_2^2 \\
        &= 2e^{C_2 \eta_0 t} \boldsymbol{r}(t)^T \dt(e^{C_2\eta_0 t} \boldsymbol{r}(t))\\
        &= 2e^{2C_2\eta_0t} \boldsymbol{r}(t)^T \Big(C_2 \eta_0 \boldsymbol{r}(t) - \eta_0H(t)(\boldsymbol{f}(t)-\boldsymbol{y}) + \eta_0 \tilde{H}(t)(\tbf(t)-\boldsymbol{y})\Big)\\
        &= 2\eta_0 e^{2C_2\eta_0t} \Big(\boldsymbol{r}(t)^T(\tilde{H}(t) - H(t))(\tbf(t)-\boldsymbol{y}) - \boldsymbol{r}(t)^T(H(t)-C_2 I_n)\boldsymbol{r}(t)\Big)\\
        &\leq 2\eta_0 e^{2C_2\eta_0t} \boldsymbol{r}(t)^T(\tilde{H}(t) - H(t))(\tbf(t)-\boldsymbol{y})\\
        &\leq 2\eta_0 e^{2C_2\eta_0t} u(t) \|\tilde{H}(t) - H(t)\|_2 \|\tbf(t)-\boldsymbol{y}\|_2. 
    \end{aligned}
    \end{equation}
    Meanwhile, we have 
    \begin{align*}
        \dt (e^{C_2\eta_0 t} u(t))^2 = 2e^{C_2\eta_0 t} u(t) \dt (e^{C_2\eta_0 t} u(t))
    \end{align*}
    and hence by \eqref{ut-unscaled} 
    $$
    \dt (e^{C_2\eta_0 t} u(t)) \leq \eta_0 e^{C_2\eta_0 t}  \|\tilde{H}(t) - H(t)\|_2 \|\tbf(t)-\boldsymbol{y}\|_2. 
    $$

    Now for any $t \geq 0$, consider $t^\prime = \inf\{ s\in [0,t] \colon \boldsymbol{r}(s) \neq 0\}$. 
    Then we have 
    \begin{align*}
        e^{C_2\eta_0 t} u(t) \leq \int_{t^\prime}^t \eta_0 e^{C_2\eta_0 s} \|\tilde{H}(s) - H(s)\|_2 \|\tbf(s)-\boldsymbol{y}\|_2 \diff s. 
    \end{align*}
    Hence for all $t\geq 0$, we have
    \begin{equation}\label{rt-bound-unscaled}
       \begin{aligned}
        \|\boldsymbol{r}(t)\|_2 &\leq e^{-C_2\eta_0 t}  \int_{t^\prime}^t \eta_0 e^{C_2\eta_0 s} \|\tilde{H}(s) - H(s)\|_2 \|\tbf(s)-\boldsymbol{y}\|_2 \diff s\\
        &\leq \eta_0 e^{-C_2\eta_0 t} \Delta \int_{t^\prime}^t \|\boldsymbol{f}(0)-\boldsymbol{y}\|_2 \diff s\\
        &\leq \eta_0e^{-C_2\eta_0 t} \|\boldsymbol{f}(0)-\boldsymbol{y}\|_2 t \Delta. 
        \end{aligned} 
    \end{equation}

    Given $x\in \R^d$, by the chain rule we have:
    \begin{equation}\label{trajectory-test}
    \begin{aligned}
        \dt f(x,\theta(t)) &= - \eta_0 H(t, x)  (\boldsymbol{f}(t)-\boldsymbol{y}),\\
        \text{where }H(t, x) &\triangleq \frac{1}{n} J_\theta f(x, \theta(t)) (\nabla^2\Phi(\theta(t)))^{-1} J_\theta \boldsymbol{f}(t)^T 
    \end{aligned}
    \end{equation}
    and
    \begin{equation}\label{trajectory-test-lin}
    \begin{aligned}
        \dt f(x,\ttheta(t)) &= - \eta_0 \tilde{H}(t, x)  (\tbf(t)-\boldsymbol{y}),\\ 
        \text{where }\tilde{H}(t, x) &\triangleq \frac{1}{n} J_{\ba} f(x, \hat{\theta}) (\nabla^2\Phi(\tba(t)))^{-1} J_{\ba} \boldsymbol{f}(0)^T. 
    \end{aligned}
    \end{equation}
    
    Similar to \eqref{H-H0-decomp-unscaled} and \eqref{H-H0-decomp-unscaled-2}, one can show $\sup_{t\geq 0}\|H(t,x) - H(0,x)\|_2=o_p(1)$ and $\sup_{t\geq 0}\|\tilde{H}(t,x) - \tilde{H}(0,x)\|_2=o_p(1)$. 
    Meanwhile, similar to \eqref{H0-tH0-unscaled}, one can show $\|H(0,x)-\tilde{H}(0,x)\|_2=o_p(1)$. Therefore, we have
    \begin{equation}\label{Htx-tHtx-unscaled}
    \Delta_x\triangleq \sup_{t\geq 0}\|H(t,x)-\tilde{H}(t,x)\|_2 = o_p(1).         
    \end{equation}

   Notice that for both ReLU and absolute value activation, we have $|\sigma(z)|\leq |z|$ and $|\sigma^\prime(z)|\leq 1$. Then for large enough $n$ we have
    \begin{equation}\label{init-J-norm}
    \begin{aligned}
        \|J_{\theta} \boldsymbol{f}(0)\|_F^2 &=\sum_{i=1}^m \Big( 1+\sum_{k=1}^n |\sigma(w_k(0)^T x_i - b_k(0) )|^2 \\
        & \quad + \sum_{k=1}^n |a_k(0)|^2 |\sigma^\prime(w_k(0)^T x_i - b_k(0) )|^2 (\|x_i\|_2^2 + 1) \Big)\\
        &\leq m + \sum_{i=1}^m \sum_{k=1}^n \Big( 2 \|w_k(0)\|_2^2 \|x_i\|_2^2 + 2|b_k(0)|^2  +  |a_k(0)|^2 (\|x_i\|_2^2+1) \Big)\\
        &\leq m( 1 + 2 B_x^2 +  2 \|\boldsymbol{b}(0)\|_2^2 + (B_x^2+1) \|\boldsymbol{a}(0)\|_2^2) . 
    \end{aligned}
    \end{equation}
    Using a concentration inequality for sub-Gaussian random variables, we have $\|\boldsymbol{b}(0)\|_2$ and $\|\sqrt{n}\boldsymbol{a}(0)\|_2$ can be bounded by $C \sqrt{n}$ with some constant $C>0$. 
    Therefore, by \eqref{init-J-norm} we have 
    $\|J_{\theta} \boldsymbol{f}(0)\|_2 = O_p(\sqrt{n})$. Note a similar bound applies to $\|J_{\theta} f(x, \hat{\theta})\|_2$. 
    Then for large enough $n$ we have  
    $$
    \|H(0,x)\|_2 \leq \frac{1}{n} \|J_\theta f(x, \hat{\theta})\|_2 \cdot \frac{1}{\phi^{\prime\prime}(0)} \cdot \|J_\theta \boldsymbol{f}(0)\|_2 = O_p(1). 
    $$ 
    Then using \eqref{rt-bound-unscaled} and  \eqref{Htx-tHtx-unscaled}
    for $t\geq 0$ we have that 
    \begin{align*}
        |f(x,\theta(t))-f(x,\ttheta(t))|
        &\leq  \int_0^t \Big|\frac{\mathrm{d}}{\mathrm{d}s}(f(x,\theta(s))-f(x,\ttheta(s)))\Big|\diff s\\
        &\leq \int_0^t \Big| \eta_0 H(s,x)  (\boldsymbol{f}(s)-\boldsymbol{y}) - \eta_0 \tilde{H}(s,x)  (\tbf(s)-\boldsymbol{y})  \Big|\diff s\\
        &\leq \int_0^t \eta_0 (\|H(0,x)\|_2 + \|H(s,x) - H(0,x)\|_2) \|\boldsymbol{r}(s)\|_2 \diff s\\
            &\quad + \int_0^t \eta_0 \|H(s,x)-\tilde{H}(s,x)\|_2 \|\tbf(s)-\boldsymbol{y}\|_2 \diff s\\
        &\leq O_p(1) \Delta \int_0^t e^{-C_2\eta_0s} s \diff s + \Delta_x O_p(1) \int_0^t e^{-C_2\eta_0 s} \diff s\\
        & \leq O_p(1) \Delta + O_p(1) \Delta_x\\
        & = o_p(1),
    \end{align*}
    which gives the claimed results.
\end{proof}

\subsection{Implicit bias in parameter space}
\label{app:ib-unscaled-step2}

As we discussed in Section~\ref{sec:genral-framework}, the final parameter of the networks whose only output weights are trained is the solution to the following problem: 
\begin{equation}\label{ib-linear-raw}
    \begin{aligned}
        \min_{\tba \in \R^n}\ D_\Phi(\tba, \tba(0)) \quad \st \ \sum_{k\in[n]} \tilde{a}_k [w_k(0)^T x_i - b_k(0)]_+ = y_i, \ \forall i \in [m]. 
    \end{aligned}
\end{equation}

In the following result, we characterize the limit of the solution to \eqref{ib-linear-raw} when $n$ tends to infinity.

\begin{proposition}\label{prop:reduce-to-l2}
    Let $g_n$ and $g$ defined as follows:
    \begin{equation}\label{gn-g-relu}
    \begin{dcases}
    g_n(x, \alpha) = \int \alpha(w,b) [w^Tx-b]_+ \ \diff\mu_n\\
    g(x, \alpha) = \int \alpha(w,b) [w^Tx-b]_+\ \diff\mu. 
    \end{dcases}
    \end{equation}
    Assume $\bar{\boldsymbol{a}}$ is the solution to \eqref{ib-linear-raw} where the potential $\Phi$ satisfies Assumption~\ref{assump-potential-unscaled}. 
    Let $\Theta_{\mathrm{ReLU}} = \{ \alpha \in C(\supp(\mu)) \colon \alpha \text{ is uniformly continuous} \}$. 
    Assume $\bar{\alpha}_n\in \Theta_{\mathrm{ReLU}}$ satisfies that $\bar{\alpha}_n(w_k(0), b_k(0))=n(\bar{a}_k - a_k(0))$ for all $k\in[n]$. 
    Assume $\bar{\alpha}$ is the solution of the following problem:
    \begin{equation}\label{l2}
        \min_{\alpha\in \Theta_{\mathrm{ReLU}}} \int (\alpha(w,b))^2 \diff \mu, \quad \st \ g(x_i, \alpha) = y_i-f(x_i,\hat{\theta}), \ \forall i\in[m]. 
    \end{equation} 
    Then given any $x\in \R^d$, with high probability over initialization, $\lim_{n\rightarrow \infty} |g_n(x, \bar{\alpha}_n)-g(x,\bar{\alpha})| = 0$. 
\end{proposition}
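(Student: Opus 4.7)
The plan is to reformulate the finite problem \eqref{ib-linear-raw} in terms of the function $\alpha_n$, Taylor-expand the Bregman divergence to reduce it to an $L^2(\mu_n)$ minimization on sample points, and then pass to the infinite-width limit via a representer-type identification of the minimizer combined with a law-of-large-numbers convergence of the empirical Gram matrix.

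First I would change variables via $\alpha_n(\hat w_k,\hat b_k) = n(\tilde a_k-\hat a_k)$, under which the constraint in \eqref{ib-linear-raw} becomes $g_n(x_i,\alpha_n) = y_i-f(x_i,\hat\theta)$. By Proposition~\ref{lazy-unscaled-linear} the minimizer satisfies $|\tilde a_k-\hat a_k| \leq C_1/n$, so a second-order Taylor expansion of $\phi$ about zero (the linear term cancels inside the Bregman divergence) yields
$$n D_\Phi(\tba,\hat\ba) = \frac{\phi^{\prime\prime}(0)}{2}\int \alpha_n^2\,\mathrm{d}\mu_n + o(1),$$
uniformly over the relevant range of $\alpha_n$. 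The reduced problem $\min \int \alpha^2\,\mathrm{d}\mu_n$ subject to the $m$ affine constraints is a quadratic program, and its KKT conditions yield a representer-type solution
$$\alpha_n^{\star}(w,b) = \sum_{i=1}^m \lambda_i^{(n)}\sigma(w^T x_i-b), \qquad K_n \lambda^{(n)} = \bigl(y_i-f(x_i,\hat\theta)\bigr)_i,$$
where $[K_n]_{ij} = \int \sigma(w^T x_i-b)\sigma(w^T x_j-b)\,\mathrm{d}\mu_n$. The infinite-dimensional problem \eqref{l2} has an analogous representer solution $\bar\alpha(w,b) = \sum_i \bar\lambda_i \sigma(w^T x_i-b)$ with $K\bar\lambda = (y_i-f(x_i,\hat\theta))_i$, where $K_{ij}=\mathbb{E}[\sigma(\mathcal{W}^T x_i-\mathcal{B})\sigma(\mathcal{W}^T x_j-\mathcal{B})]$ is positive definite by Lemma~\ref{MNTK0}. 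Each entry of $K_n-K$ is a sample mean of an integrable random variable (using sub-Gaussianity of $\mathcal{B}$), so the strong law of large numbers gives $K_n\to K$ entrywise, whence $\lambda^{(n)}\to\bar\lambda$ in probability and $g_n(x,\alpha_n^{\star}) = \sum_i \lambda_i^{(n)} K_n(x,x_i) \to \sum_i \bar\lambda_i K(x,x_i) = g(x,\bar\alpha)$.

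To bridge between the $L^2$ minimizer $\alpha_n^{\star}$ and the actual Bregman minimizer $\bar\alpha_n$, set $\beta_n \triangleq \bar\alpha_n - \alpha_n^{\star}$. Since both are feasible, $g_n(x_i,\beta_n)=0$ for all $i$, while $\alpha_n^{\star}$ lies in the span of $\{\sigma(w^T x_i-b)\}_{i=1}^m$; hence $\int \alpha_n^{\star}\beta_n\,\mathrm{d}\mu_n = \sum_i \lambda_i^{(n)} g_n(x_i,\beta_n)=0$. The resulting Pythagorean identity
$$\int \bar\alpha_n^2\,\mathrm{d}\mu_n = \int (\alpha_n^{\star})^2\,\mathrm{d}\mu_n + \|\beta_n\|_{L^2(\mu_n)}^2,$$
combined with optimality of $\bar\alpha_n$ for the Bregman cost and the Taylor bound applied to both $\bar\alpha_n$ and $\alpha_n^{\star}$, gives $\|\beta_n\|_{L^2(\mu_n)}^2 = o(1)$. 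By Cauchy--Schwarz, $|g_n(x,\bar\alpha_n)-g_n(x,\alpha_n^{\star})| = |g_n(x,\beta_n)| \leq \|\beta_n\|_{L^2(\mu_n)}\,\|\sigma(w^T x-b)\|_{L^2(\mu_n)} \to 0$, and combined with the previous paragraph this closes the argument.

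The main obstacle is the first step: establishing that the Taylor remainder is small enough, uniformly over both $\bar\alpha_n$ and $\alpha_n^{\star}$, to transfer optimality from $D_\Phi$ to the quadratic $L^2$ cost without losing control of the gap in objective value. For $\bar\alpha_n$ this relies on the uniform $O(1/n)$ bound on $\|\ttheta-\hat\theta\|_\infty$ from Proposition~\ref{lazy-unscaled-linear}; for $\alpha_n^{\star}$ it requires a separate pointwise control of $|\alpha_n^{\star}(\hat w_k,\hat b_k)|$ at sample points, which follows from $\lambda^{(n)}=O(1)$ together with sub-Gaussian concentration of $\max_k|\hat b_k|$. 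Without these uniform controls the Pythagorean-type comparison fails, and the representer identification cannot be transferred from the Bregman minimizer to a statement about $g_n(x,\bar\alpha_n)$.
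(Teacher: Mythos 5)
Your proposal is correct and tracks the paper's own proof quite closely: both change variables to $\alpha_n$, Taylor-expand the Bregman divergence (using the pointwise $o(1)$ control on $\alpha_n/n$ from lazy training and the representer form of the $L^2$ minimizer) to reduce to the quadratic cost, exploit the representer orthogonality $\langle \alpha_n^\star,\beta_n\rangle_n=0$ (packaged in the paper as a strong-convexity inequality, in your write-up as a Pythagorean identity — the same thing for a quadratic objective), and then pass to the infinite-width limit via entrywise LLN convergence of the Gram matrix to conclude $\lambda^{(n)}\to\bar\lambda$. The only minor divergence is how you bound $\max_k|\alpha_n^\star(\hat w_k,\hat b_k)|/n$: you invoke $\lambda^{(n)}=O(1)$ plus sub-Gaussian tail control on $\hat b_k$, whereas the paper instead identifies $\alpha_n^\star$ as the gradient-flow limit and reuses Proposition~\ref{lazy-unscaled-linear}; both routes are valid.
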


\begin{proof}{[Proof of Proposition~\ref{prop:reduce-to-l2}]}
Recall $\{w_k(0), b_k(0)\}_{k=1}^n$ denote the sampled initial input weights and biases, and $\mu_n$ denotes the associated empirical measure. 
We use the notation $\|\cdot\|_n$ for the $L^2(\mu_n)$-seminorm in $\Theta_{\mathrm{ReLU}}$, i.e., for $\alpha \in \Theta_{\mathrm{ReLU}}$, 
$$
\| \alpha \|_n^2 = \frac{1}{n} \sum_{k=1}^n \big(\alpha(w_k(0), b_k(0))\big)^2.
$$
We write $\Theta_{\mathrm{ReLU}}/_{\|\cdot\|_n}$ for the quotient space of $\Theta_{\mathrm{ReLU}}$ by the subspace of functions $\alpha$ with $\|\alpha\|_n = 0$. 
In the sequel, we will use the notation $\alpha$ to represent the corresponding element (equivalence class) in $\Theta_{\mathrm{ReLU}}/_{\|\cdot\|_n}$, if there is no ambiguity. 
Let $\langle \cdot, \cdot \rangle_n$ denote the inner-product in $\Theta_{\mathrm{ReLU}}/_{\|\cdot\|_n}$ defined as: for $\alpha, \alpha^\prime \in \Theta_{\mathrm{ReLU}}/_{\|\cdot\|_n}$, 
$$
\langle \alpha, \alpha^\prime \rangle_n = \frac{1}{n} \sum_{i=1}^n \alpha\big(w_k(0), b_k(0)\big) \alpha^\prime\big(w_k(0), b_k(0)\big).
$$
Note equipped with $\langle \cdot, \cdot \rangle_n$, $\Theta_{\mathrm{ReLU}}/_{\|\cdot\|_n}$ becomes a finite dimensional Hilbert space and is isometric to $\R^n$.

Notice that for any $\boldsymbol{a}\in\R^n$ and $\alpha$ that satisfies $\alpha(w_k(0), b_k(0)) = n(a_k-a_k(0))$ for $k\in[n]$, we have
\begin{align*}
    D_\Phi(\boldsymbol{a}, \boldsymbol{a}(0)) &= \sum_{k\in[n]} \Big( \phi(a_k-a_k(0)) - \phi(0) - \phi^\prime(0) (a_k-a_k(0)) \Big)\\
    &= n\int \phi(\frac{\alpha(w,b) }{n}) - \phi(0) - \phi^\prime(0)\frac{\alpha(w,b) }{n}  \diff \mu_n . 
\end{align*}

Define $\sigma_i(w,b)=[w^Tx_i - b]_+ \in \Theta_{\mathrm{ReLU}}$. 
Since $\bar{\boldsymbol{a}}$ is the solution to \eqref{ib-linear-raw}, we have that $\bar{\alpha}_n$ solves 
\begin{equation}\label{prob1-reduce}
\begin{aligned}
&{\min}_{ \alpha \in \Theta_{\mathrm{ReLU}}} L_n( \alpha ) \quad \st\ \langle \alpha, \sigma_i \rangle_n = y_i - f(x_i, \hat{\theta}), \ i\in[m],\\
&\text{where } L_n(\alpha) \triangleq n^2 \int \phi(\frac{\alpha(w,b)}{n})  - \phi(0) - \phi^\prime(0)\frac{\alpha(w,b)}{n}  \diff\mu_n. 
\end{aligned}
\end{equation}

At the same time, let $\bar{\alpha}_n^\prime$ be the solution to the following problem: 
\begin{equation}\label{prob2-reduce}
\begin{aligned}
&{\min}_{\alpha \in \Theta_{\mathrm{ReLU}}} L_n^\prime(\alpha) \quad \st\ \langle \alpha, \sigma_i \rangle_n = y_i-f(x_i, \hat{\theta}), \ i\in[m],\\   
&\text{where } L_n^\prime(\alpha) \triangleq n^2 \int \frac{\phi^{\prime\prime}(0)}{2} \big( \frac{\alpha(w,b) }{n} \big)^2 \diff\mu_n. 
\end{aligned}
\end{equation}

Note the objectives and constraints in problem \eqref{prob1-reduce} and problem \eqref{prob2-reduce} are essentially defined in $\Theta_{\mathrm{ReLU}}/_{\|\cdot\|_n}$. 
Specifically, for any $\alpha, \alpha^\prime \in \Theta_{\mathrm{ReLU}}$, if $\|\alpha - \alpha^\prime\|_n = 0$, then $\alpha$ and $\alpha^\prime$ are equivalent with respect to feasibility and objective function value in both problems. 
Therefore, we shall address these two problems in $\Theta_{\mathrm{ReLU}}/_{\|\cdot\|_n}$. 
Note $L_n$ and $L_n^\prime$ are strictly convex in $\Theta_{\mathrm{ReLU}}/_{\|\cdot\|_n}$. Hence, $\bar{\alpha}_n$ and $\bar{\alpha}_n^\prime$ are unique in the $L^2(\mu_n)$ sense.

Consider the Taylor expansion with Peano remainder: 
$\phi(z)=\phi(0)+\phi^\prime(0)z+\frac{\phi^{\prime\prime}(0)}{2}z^2 + R(z)z^2$ where $\lim_{z\rightarrow 0}R(z)=0$. Then we have
\begin{equation*}
    \begin{aligned}
        |L_n(\alpha)-L_n^\prime(\alpha)| &= |\int R(\frac{\alpha(w,b)}{n})\alpha^2(w,b) \diff \mu_n|\\
        &\leq \max_{k\in [n]} R\left(\frac{\alpha(w_k(0), b_k(0))}{n}\right) \|\alpha\|_n^2 . 
    \end{aligned}
\end{equation*}

By Theorem \ref{guna-ib} and Proposition \ref{lazy-unscaled-linear}, $\bar{\alpha}_n$ and $\bar{\alpha}_n^\prime$ are determined by 
the limit points of mirror flow \eqref{traj-param-lin} with potential $\Phi(\theta)=\sum_k \phi(\theta_k-\hat{\theta}_k)$ and $\Phi(\theta) = \sum_k (\theta_k-\hat{\theta}_k)^2$, respectively. 
Specifically, we have that $\max_{k\in[n]} \bar{\alpha}_n(w_k(0), b_k(0))/n = \max_{k\in[n]} \bar{a}_k - a_k(0) = O_p(n^{-1/2})$ and that $\|\bar{\alpha}_n\|_n^2 = n \|\bar{\boldsymbol{a}}-\boldsymbol{a}(0)\|_2^2 = O_p(1)$. 
Similarly, we have $\max_{k\in[n]} \bar{\alpha}^\prime_n(w_k(0), b_k(0))/n =O_p(n^{-1})$ and $\|\bar{\alpha}_n^\prime\|_n^2 = O_p(1)$. 
Therefore, we have
\begin{equation}\label{eq1-reduce-to-l2}
|L_n(\bar{\alpha}_n)-L^\prime_n(\bar{\alpha}_n)|, \ |L_n(\bar{\alpha}_n^\prime)-L^\prime_n(\bar{\alpha}_n^\prime)| = o_p(1). 
\end{equation}
Using \eqref{eq1-reduce-to-l2} and noting the optimality of $\bar{\alpha}_n$ and $\bar{\alpha}_n^\prime$, we have
\begin{equation}\label{eq2-reduce-to-l2}
L^\prime_n(\bar{\alpha}_n) \leq L_n(\bar{\alpha}_n) + o_p(1) \leq  L_n(\bar{\alpha}_n^\prime) + o_p(1)
\leq L_n^\prime(\bar{\alpha}_n^\prime) + o_p(1). 
\end{equation}

Notice that, in the sense of Fr\'echet differentialbility, $L_n^\prime$ is twice differentiable and $\phi^{\prime\prime}(0)$-strongly convex. 
Thus we have
\begin{equation}\label{eq3-reduce-to-l2}
L_n^\prime(\bar{\alpha}_n) \geq L_n^\prime(\bar{\alpha}^\prime_n) + \langle \nabla L_n^\prime(\bar{\alpha}^\prime_n), \bar{\alpha}_n - \bar{\alpha}^\prime_n\rangle_n + \frac{\phi^{\prime\prime}(0)}{2}\|\bar{\alpha}_n - \bar{\alpha}^\prime_n\|_n^2,
\end{equation}
where $\nabla L_n^\prime$ denotes the Fr\'echet differential. 
Note $\nabla L_n^\prime(\bar{\alpha}^\prime_n) = \phi^{\prime\prime}(0)\bar{\alpha}_n^\prime$. Then combining \eqref{eq2-reduce-to-l2} and \eqref{eq3-reduce-to-l2} gives
$$
\langle \phi^{\prime\prime}(0)\bar{\alpha}_n^\prime, \bar{\alpha}_n - \bar{\alpha}^\prime_n\rangle_n + \frac{\phi^{\prime\prime}(0)}{4}\|\bar{\alpha}_n - \bar{\alpha}^\prime_n\|_n^2 \leq o_p(1). 
$$

By the first-order optimality condition for problem \eqref{prob2-reduce}, $\bar{\alpha}^\prime_n$ can be written as linear combination of $\{\sigma_i\}_{i=1}^m$. On the other hand, by the constraints we have for all $i\in[m]$,
$$
\langle \bar{\alpha}_n - \bar{\alpha}_n^\prime, \sigma_i \rangle_n = y_i - f(x_i, \theta(0)) - (y_i - f(x_i, \theta(0))) = 0. 
$$
Therefore, $\langle \bar{\alpha}_n^\prime, \bar{\alpha}_n - \bar{\alpha}^\prime_n\rangle_n=0$ and hence 
$$
\|\bar{\alpha}_n - \bar{\alpha}^\prime_n\|_n = o_p(1). 
$$

For a given data $x\in \R^d$, notice that
$$
\|\sigma(w^Tx-b)\|_n^2 \leq \frac{1}{n}\sum_{k=1}^n |w_k(0)^Tx - b_k(0)|^2 \leq \|x\| + \frac{1}{n} \|\boldsymbol{b}(0)\|_2^2
= O_p(1). 
$$
Therefore, we have
\begin{equation}\label{gn-eq0}
\begin{aligned}
\big|g_n(x, \bar{\alpha}_n) - g_n(x,\bar{\alpha}_n^\prime)\big| &= \big| \langle \bar{\alpha}_n(w,b) - \bar{\alpha}^\prime_n(w,b), \sigma(w^Tx-b)\rangle_n \big|\\
&\leq \|\bar{\alpha}_n - \bar{\alpha}^\prime_n\|_n\|\sigma(w^Tx-b)\|_n\\
&=o_p(1).
\end{aligned}
\end{equation}

Now it remains to show that $|g_n(x,\bar{\alpha}_n^\prime) - g(x,\bar{\alpha})|=o_p(1)$. 
By the optimality condition for problem \eqref{prob2-reduce}, $\bar{\alpha}^\prime_n$ can be chosen as follows 
$$
\bar{\alpha}^\prime_n(w,b) = \sum_{i=1}^m \lambda_i^{(n)} \sigma_i(w,b), \quad \forall (w,b)\in \supp(\mu),
$$ 
where the Lagrangian multipliers $(\lambda_i^{(n)})_{i=1}^m$ are determined by the following random linear system
\begin{equation}\label{eq4-reduce-to-l2}
    \big\langle \sum_{i=1}^m \lambda_i^{(n)} \sigma_i, \ \sigma_j \big\rangle_n = y_j, \quad j=1,\ldots,m. 
\end{equation}
Note that $\bar{\alpha}^\prime_n$ is uniformly continuous.

Similarly, by the optimality condition for problem \eqref{l2}, $\bar{\alpha}$ can be given by
$$
\bar{\alpha}(w,b) = \sum_{i=1}^m \lambda_i \sigma_i(w,b), 
$$
where the Lagrangian multipliers $(\lambda_i)_{i=1}^m$ are determined by the following determinant linear system
\begin{equation}\label{eq5-reduce-to-l2}
\big\langle \sum_{i=1}^m \lambda_i \sigma_i, \ \sigma_j \big\rangle = y_j, \quad j=1,\ldots,m.
\end{equation}
Here $\langle \cdot, \cdot \rangle$ denotes the usual inner product in $L^2(\mu)$, i.e. $\langle f, g \rangle = \int fg \ \diff \mu$. 
Note that $\{\sigma_i\}_{i\in [m]}$ are all continuous functions and that $\mu$ is compactly supported. Then by law of large number, we have $\langle \sigma_i, \sigma_j \rangle_n$ converges in probability to $\langle \sigma_i, \sigma_j \rangle$ for all $i,j\in[m]$, which means the coefficients in system \eqref{eq4-reduce-to-l2} converges in probability to the coefficients in system \eqref{eq5-reduce-to-l2}. 
It then follows that 
\begin{equation}\label{lambdan-l2}
    \| \lambda^{(n)} - \lambda \|_\infty = o_p(1). 
\end{equation}

Given a fixed $x\in \R^d$, recall that
$$
\begin{dcases}
    g_n(x, \bar{\alpha}) = \int \bar{\alpha}(w,b)[w^Tx-b]_+\diff \mu_n;\\
    g(x, \bar{\alpha}) = \int \bar{\alpha}(w,b)[w^Tx-b]_+\diff \mu.
\end{dcases}
$$
Noting that $\bar{\alpha}$ and the ReLU activation function is uniformly continuous and that $\mu$ is sub-Gaussian, by law of large number we have
\begin{equation}\label{gn-eq1}
    |g_n(x,\bar{\alpha}) - g(x,\bar{\alpha})| = o_p(1). 
\end{equation}

On the other side, we have
\begin{align*}
    |g_n(x, \bar{\alpha}) - g_n(x, \bar{\alpha}_n^\prime)| &\leq \int \left(\sum_{i=1}^m | \lambda_i - \lambda_i^{(n)}|\sigma_i(w,b) \right)  [w^Tx-b]_+ \diff \mu_n \\
    &\leq \|\lambda - \lambda^{(n)}\|_\infty \int \left( \sum_{i=1}^m \sigma_i(w,b)\right) [w^Tx-b]_+ \diff \mu_n. 
\end{align*}
Again by law of large number, 
$$ 
\int  [w^Tx-b]_+ \sum_{j=1}^m \sigma_j(w,b) \ \mathrm{d}\mu_n \overset{P}{\rightarrow} \int  [w^Tx-b]_+ \sum_{j=1}^m \sigma_j(w,b) \ \mathrm{d}\mu.
$$ 
Hence, with large enough $n$ and with high probability, $\int  [w^Tx-b]_+ \sum_{j=1}^m \sigma_j(w,b) \ \mathrm{d}\mu_n$ can be bounded by a finite number independent of $n$. Therefore, by \eqref{lambdan-l2} we have 
$$
|g_n(x, \bar{\alpha}) - g_n(x, \bar{\alpha}_n^\prime)| = o_p(1). 
$$
Finally, according to \eqref{gn-eq0} and \eqref{gn-eq1}, we obtain
\begin{align*}
    |g(x, \bar{\alpha}) - g_n(x, \bar{\alpha}_n)| &\leq |g(x, \bar{\alpha}) - g_n(x, \bar{\alpha})| + |g_n(x, \bar{\alpha})-g_n(x, \bar{\alpha}_n^\prime)| + |g_n(x, \bar{\alpha}_n^\prime) - g_n(x, \bar{\alpha}_n)| \\ 
    &= o_p(1), 
\end{align*}
which concludes the proof. 
\end{proof}

Note the above proof can be directly applied to networks with absolute value activations. Then Proposition~\ref{prop:reduce-to-l2} for networks with absolute value activations is stated as follows.

\begin{proposition}[\textbf{Proposition~\ref{prop:reduce-to-l2} for networks absolute value activations}] 
    \label{prop:reduce-to-l2-abs}
    Let $g_n$ and $g$ defined as follows:
    \begin{equation}\label{gn-g-abs}
    \begin{dcases}
    g_n(x, \alpha) = \int \alpha(w,b) |w^Tx-b| \ \diff\mu_n\\
    g(x, \alpha) = \int \alpha(w,b) |w^Tx-b|\ \diff\mu. 
    \end{dcases}
    \end{equation}
    For a potential $\Phi$ that satisfies Assumption~\ref{assump-potential-unscaled}, 
    assume $\bar{\boldsymbol{a}}$ is the solution to the following problem:
    \begin{equation*}
        \min_{\tba \in \R^n}\ D_\Phi(\tba, \tba(0)) \quad \st \ \sum_{k\in[n]} \tilde{a}_k |w_k(0)^Tx_i - b_k(0)| = y_i, \ \forall i \in [m]. 
    \end{equation*}
    Let $\Theta_{\mathrm{Abs}} = \{ \alpha \in C(\supp(\mu))\colon \alpha \text{ is even and uniformly continuous}\}$ and $\bar{\alpha}_n\in \Theta_{\mathrm{Abs}}$ be any continuous function that satisfies $\bar{\alpha}_n(w_k(0), b_k(0))=n(\bar{a}_k - a_k(0))$. 
    Assume $\bar{\alpha}$ is the solution of the following problem:
    \begin{equation*}
        \min_{\alpha\in \Theta_{\mathrm{Abs}}} \int (\alpha(w,b))^2 \diff \mu, \quad \st \ g(x_i, \alpha) = y_i - f(x_i, \hat{\theta}), \ \forall i\in[m], 
    \end{equation*}
    Then given any $x\in \R^d$, with high probability over initialization, $\lim_{n\rightarrow \infty} |g_n(x, \bar{\alpha}_n)-g(x,\bar{\alpha})| = 0$. 
\end{proposition}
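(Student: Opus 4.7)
The plan is to follow the proof of Proposition~\ref{prop:reduce-to-l2} essentially line-by-line, substituting the absolute value feature $\sigma_i(w,b) = |w^T x_i - b|$ for the ReLU feature $[w^T x_i - b]_+$ and restricting the ambient function space from $\Theta_{\mathrm{ReLU}}$ to the even subspace $\Theta_{\mathrm{Abs}}$. The structural ingredients of the original argument—a quotient-space formulation with the $L^2(\mu_n)$-seminorm, the Taylor expansion of $\phi$ near zero, a comparison to a quadratic surrogate objective, and a Lagrange-multiplier representation of the optimum—all carry over unchanged.

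First I would introduce the same seminorm $\|\cdot\|_n$ and inner product $\langle \cdot,\cdot\rangle_n$ on the quotient $\Theta_{\mathrm{Abs}}/_{\|\cdot\|_n}$, and set up the analogues of problems \eqref{prob1-reduce} and \eqref{prob2-reduce} with $\sigma_i = |w^T x_i - \cdot|$. Since $\sigma_i$ is itself even in $(w,b)$ and $\mu$ is symmetric under the assumed antisymmetry of $(\mathcal{W},\mathcal{B})$, both the constraints $\langle \alpha, \sigma_i\rangle_n = y_i - f(x_i,\hat\theta)$ and the quantity $g_n(x,\alpha)$ depend only on the even component of $\alpha$. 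This justifies lifting the finite-dimensional problem over $\tba\in\R^n$ to an optimization over $\Theta_{\mathrm{Abs}}/_{\|\cdot\|_n}$ via the assignment $\alpha(w_k(0),b_k(0)) = n(\tilde a_k - a_k(0))$, with the even representative existing and being consistent.

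Next I would invoke the Taylor expansion $\phi(z) = \phi(0) + \phi^\prime(0)z + \tfrac{\phi^{\prime\prime}(0)}{2}z^2 + R(z)z^2$ with $R(z)\to 0$ as $z\to 0$, together with the $O_p(n^{-1/2})$ parameter displacement from Proposition~\ref{lazy-unscaled-linear}, to establish $|L_n(\bar\alpha_n) - L_n^\prime(\bar\alpha_n)| = o_p(1)$ and analogously for $\bar\alpha_n^\prime$. The $\phi^{\prime\prime}(0)$-strong convexity of $L_n^\prime$, combined with the optimality of each minimizer and the orthogonality forced by the interpolation constraints, yields $\|\bar\alpha_n - \bar\alpha_n^\prime\|_n = o_p(1)$. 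Writing $\bar\alpha_n^\prime$ and $\bar\alpha$ as linear combinations of $\{\sigma_i\}_{i=1}^m$ from first-order optimality conditions, and applying the law of large numbers to the Gram matrices $(\langle \sigma_i,\sigma_j\rangle_n)$—valid because $\sigma_i$ is uniformly continuous on $\supp(\mu)$ and $\mu$ is sub-Gaussian—I obtain convergence of the associated Lagrange multipliers, from which $|g_n(x,\bar\alpha_n) - g(x,\bar\alpha)| = o_p(1)$ follows by triangle inequality.

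The only substantive point to verify is that the Gram matrix of $\{\sigma_i\}_{i=1}^m$ under the $L^2(\mu)$ inner product restricted to even functions is invertible, analogous to the positive-definiteness shown in Lemma~\ref{MNTK0}; this was already addressed there for absolute value activations. Aside from this, I do not anticipate a new obstacle relative to the ReLU case—the modifications are essentially notational, reflecting the fact that restricting to even $\alpha$ is precisely the symmetry that the absolute value activation imposes on the representable functions.
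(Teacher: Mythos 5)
Your proposal matches the paper's approach exactly: the paper itself states that the proof of Proposition~\ref{prop:reduce-to-l2} applies directly to absolute value activations and omits the argument, and your step-by-step reuse of the quotient-space setup, Taylor comparison, strong convexity, Lagrange-multiplier representation, and law-of-large-numbers Gram-matrix convergence is precisely what that omitted proof consists of. The one point requiring verification—positive definiteness of the limiting Gram matrix for the $\sigma_i$—you correctly note is already covered by Lemma~\ref{MNTK0}, which is stated for both ReLU and absolute value activations.
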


\subsection{Function description of the implicit bias}
\label{app:uns-func}

Assume the input dimension is one, i.e., $d=1$. 
Next we aim to characterize the solution to \eqref{l2} in the function space:
$$
\mathcal{F}_{\mathrm{ReLU}} = \left\{g(x,\alpha) = \int \alpha(w,b)[wx-b]_+ \diff\mu \colon \alpha \in \Theta_{\mathrm{ReLU}} \right\}.
$$

Recall the following result from classical analysis. 

\begin{lemma}\label{lem-diff-limit}
    Assume that (1) $f$ is a continuous function defined on a neighborhood $U$ of $x_0\in \R$; (2) $f$ is continuously differentiable on $U-\{x_0\}$; and (3) $\lim_{x\rightarrow x_0^+} f^\prime(x) = \lim_{x\rightarrow x_0^-} f^\prime(x) = k$ for some constant $k\in\R$. Then $f$ is differentiable at $x_0$ and $f^\prime(x_0) = k$. 
\end{lemma}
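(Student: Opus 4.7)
The plan is to apply the mean value theorem (MVT) to the difference quotient at $x_0$ and use the one-sided limits of $f'$ to identify the derivative. Since $f$ is continuous on the neighborhood $U$ and continuously differentiable on $U\setminus\{x_0\}$, for any sufficiently small $h>0$ we have $f$ continuous on $[x_0, x_0+h]$ and differentiable on the open interval $(x_0, x_0+h)$, which is exactly what MVT requires. This yields some $\xi_h \in (x_0, x_0+h)$ with
\begin{equation*}
\frac{f(x_0+h)-f(x_0)}{h} = f'(\xi_h).
\end{equation*}
As $h\to 0^+$, we have $\xi_h \to x_0^+$, so by hypothesis (3), $f'(\xi_h)\to k$, which shows the right-hand derivative of $f$ at $x_0$ exists and equals $k$.

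The same argument applied on intervals $[x_0+h, x_0]$ with $h<0$ produces $\xi_h\in(x_0+h, x_0)$ with the analogous identity, and sending $h\to 0^-$ gives $\xi_h \to x_0^-$; hypothesis (3) then yields the left-hand derivative of $f$ at $x_0$ equals $k$ as well. Since both one-sided derivatives at $x_0$ exist and agree with $k$, $f$ is differentiable at $x_0$ with $f'(x_0)=k$.

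There is essentially no obstacle here; the only delicacy is verifying the hypotheses of MVT despite $f'$ not being defined (a priori) at $x_0$, but continuity of $f$ on a full neighborhood of $x_0$ is enough to make MVT applicable on closed intervals touching $x_0$ since differentiability is only needed on the open interior. No further structure or subtler analytical tool is required.
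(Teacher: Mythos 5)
Your proof is correct and follows essentially the same route as the paper's: apply the mean value theorem to $[x_0, x_0+h]$ (and the mirror interval) to produce an intermediate point $\xi_h$, then let $h\to 0^\pm$ and invoke hypothesis (3) to identify both one-sided derivatives with $k$.
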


\begin{proof}{[Proof of Lemma~\ref{lem-diff-limit}]}
    Consider $x>x_0$ and $x\in U$. 
    Since $f$ is continuous on $[x_0, x]$ and continuously differentiable on $(x_0, x)$, by mean value theorem there exists $x^\prime \in (x_0,x)$ such that
    \begin{align*}
        \frac{f(x)- f(x_0)}{x-x_0} = f^\prime(x^\prime). 
    \end{align*}
    Considering the limit process $x\rightarrow x_0$, we know $f$ is right differentiable at $x_0$ and 
    $$
    f^\prime_+(x_0) = \lim_{x^\prime \rightarrow x_0^+} f^\prime(x^\prime) = k. 
    $$
    Similarly, we know $f$ is left differentiable at $x_0$ as well and 
    $$
    f^\prime_-(x_0) = f^\prime_+(x_0) = k. 
    $$
    This implies $f$ is differentiable at $x_0$ and $f^\prime(x_0)=k$. 
\end{proof}

In the following result, we characterize the analytical properties of functions in $\mathcal{F}_{\mathrm{ReLU}}$. 

\begin{proposition}\label{prop-func-relu}
    Assume $h\in\mathcal{F}_{\mathrm{ReLU}}$. 
    Then (1) $h$ is continuously differentiable on $\R$; (2) 
    $h^\prime(-\infty)=\lim_{x\rightarrow-\infty}h^\prime(x)$ and $h^\prime(\infty)=\lim_{x\rightarrow+\infty}h^\prime(x)$ are well-defined; 
    (3) $h$ is twice continuously differentiable almost everywhere on $\R$; (4) $\supp(h^{\prime\prime}) \subset \supp(p_{\mathcal{B}})$; and 
    (5) $\int_{\R} h^{\prime\prime}(b)|b|\diff b$ is well-defined. 
\end{proposition}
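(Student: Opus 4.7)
In the univariate case ($d=1$), the initialization~\eqref{init} gives
$\mu = \tfrac{1}{2}(\delta_{1}+\delta_{-1})\otimes p_{\mathcal{B}}(b)\diff b$, where $p_{\mathcal{B}}$ is even by the symmetry of $(\mathcal{W},\mathcal{B})$ (independence of $\mathcal{W}$ and $\mathcal{B}$ in \eqref{init} makes $\mathcal{B}\sim -\mathcal{B}$). Splitting over $w\in\{\pm 1\}$, every $h\in\mathcal{F}_{\mathrm{ReLU}}$ admits the representation
\begin{equation*}
h(x) = \tfrac{1}{2}\int_{\R}\alpha(1,b)[x-b]_{+}\,p_{\mathcal{B}}(b)\diff b + \tfrac{1}{2}\int_{\R}\alpha(-1,b)[-x-b]_{+}\,p_{\mathcal{B}}(b)\diff b.
\end{equation*}
The whole plan is to differentiate this expression under the integral sign twice. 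Two ingredients legitimize the interchange: (i) uniform continuity of $\alpha$ on $\supp(\mu)$ yields at most linear growth $|\alpha(w,b)|\leq C(1+|b|)$ via the standard chaining estimate along the modulus of continuity, and (ii) $p_{\mathcal{B}}$ has sub-Gaussian tails. Together they dominate every integrand appearing below by a function of the form $(1+|b|)^{k}p_{\mathcal{B}}(b)$ with $k\leq 2$, uniformly in $x$.

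\textbf{Parts (1) and (2).} Since the difference quotients of $[\pm x-b]_{+}$ in $x$ are uniformly bounded by $1$, dominated convergence applied termwise gives
\begin{equation*}
h'(x) = \tfrac{1}{2}\int_{-\infty}^{x}\alpha(1,b)p_{\mathcal{B}}(b)\diff b - \tfrac{1}{2}\int_{-\infty}^{-x}\alpha(-1,b)p_{\mathcal{B}}(b)\diff b.
\end{equation*}
Part (1) follows because indefinite integrals of $L^{1}$ functions are continuous. For (2), letting $x\to\pm\infty$ collapses one integral to its full value and the other to $0$, yielding $h'(+\infty)=\tfrac{1}{2}\int\alpha(1,b)p_{\mathcal{B}}(b)\diff b$ and $h'(-\infty)=-\tfrac{1}{2}\int\alpha(-1,b)p_{\mathcal{B}}(b)\diff b$, which are finite by the same domination.

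\textbf{Parts (3)--(5).} Applying the fundamental theorem of calculus at points where the integrands above are continuous, and using $p_{\mathcal{B}}(-x)=p_{\mathcal{B}}(x)$, yields
\begin{equation*}
h''(x) = \tfrac{p_{\mathcal{B}}(x)}{2}\bigl(\alpha(1,x)+\alpha(-1,-x)\bigr).
\end{equation*}
Since $p_{\mathcal{B}}\in C(\R)$ and $\alpha$ is uniformly continuous on $\supp(\mu)$, this identity holds at every $x\in\R$ except possibly on the topological boundary of $\supp(p_{\mathcal{B}})$, a Lebesgue-null set; this proves (3). The explicit factor $p_{\mathcal{B}}(x)$ then gives (4) at once. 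For (5), plugging the formula into $\int|h''(b)||b|\diff b$ and invoking $|\alpha(w,b)|\leq C(1+|b|)$ bounds the integral by the first and second absolute moments of $p_{\mathcal{B}}$, both finite by sub-Gaussianity.

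\textbf{Main obstacle.} The statement is essentially a differentiation-under-the-integral exercise, and the only genuine subtlety is securing enough integrability when $\supp(p_{\mathcal{B}})$ is unbounded. Here the uniform (rather than merely pointwise) continuity of $\alpha$ combined with the sub-Gaussian tail of $\mathcal{B}$ plays the key role, since together they furnish a single integrable dominating function valid for all $x\in\R$; without either hypothesis the twice-differentiability and the finiteness in (5) could fail.
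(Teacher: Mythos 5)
Your proof is correct and takes essentially the same route as the paper's: split $\mu$ over $w\in\{-1,1\}$, differentiate under the integral, and read off $h'$, $h''$, and the end limits from the resulting closed forms. The differences are bookkeeping rather than substance. Where you invoke dominated convergence together with the linear-growth bound $|\alpha(w,b)|\leq C(1+|b|)$ (which you correctly deduce from uniform continuity, a step the paper uses implicitly but never states), the paper instead applies the Leibniz integral rule on the interior of $\supp(p_{\mathcal{B}})$ and then patches differentiability of $h$ at any finite endpoint of the support through an auxiliary fact (their Lemma~\ref{lem-diff-limit}), matching the one-sided limits of $h'$; this is their mechanism for concluding $h\in C^{1}(\R)$ across $\partial\supp(p_{\mathcal{B}})$. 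Your version gets the same conclusion from the remark that the integrand $\alpha(w,\cdot)p_{\mathcal{B}}(\cdot)$ is an $L^{1}$ function and its indefinite integral is continuous, which is fine, though it is worth noting that the continuity of $h''$ at interior points depends on $p_{\mathcal{B}}$ being continuous on all of $\R$, which also forces $p_{\mathcal{B}}$ to vanish at any finite endpoint of its support -- the point you are implicitly using when you say the formula for $h''$ fails at most on a null boundary set. The remaining parts (4) and (5) follow exactly as in the paper, with the explicit $p_{\mathcal{B}}(x)$ factor in $h''$ giving the support inclusion and the growth bound giving absolute convergence of $\int h''(b)|b|\,\diff b$.
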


\begin{proof}{[Proof of Lemma~\ref{prop-func-relu}]}
    Assume $h(x)$ takes the form of $h(x) = \int \alpha(w,b) [wx-b]_+ \diff \mu$. Notice that $\alpha$ and ReLU activation function are uniformly continuous and that $\mu$ is sub-Gaussian. Hence, $h(x)$ is well-defined for all $x\in\R$. 

    By our assumption $(\mathcal{W}, \mathcal{B})$ is symmetric and $\mathcal{B}$ has continuous density function. Hence, we have
    \begin{align*}
        p_{\mathcal{B}}(b) =  p_{\mathcal{W},\mathcal{B}}(1,b) +  p_{\mathcal{W},\mathcal{B}}(-1,b) = p_{\mathcal{W},\mathcal{B}}(-1,-b) +  p_{\mathcal{W},\mathcal{B}}(1,-b) = p_{\mathcal{B}}(-b),
    \end{align*}
    which implies $p_{\mathcal{B}}$ is an even function.

    Since $p_\mathcal{B}$ is a continuous on $\R$, $(\supp(p_{\mathcal{B}}))^{\mathrm{o}}$, i.e., the interior of $\supp(p_{\mathcal{B}})$, is an open set. Hence, $(\supp(p_{\mathcal{B}}))^{\mathrm{o}}$ can be represented by a countable union of disjoint open intervals. 
    Without loss of generality, we assume $\supp(p_{\mathcal{B}})=[m, M]$ where $m,M\in \{\infty, -\infty\}\cup \R$. We have
    $$
    h(x)= \frac{1}{2}\int_m^M \alpha(1,b) p_{\mathcal{B}}(b) [x-b]_+ \diff b + \frac{1}{2}\int_m^M \alpha(-1,-b) p_{\mathcal{B}}(b) [b-x]_+ \diff b. 
    $$

    We first show $h$ is continuously differentiable on $\R$. 
    For $x\in(\supp(p_{\mathcal{B}}))^{\mathrm{o}}$, we have that
    \begin{align*}
        h(x) = \frac{1}{2}\int_m^{x} \alpha(1,b) p_{\mathcal{B}}(b) (x-b) \diff b + \frac{1}{2} \int_{x}^M \alpha(-1,-b) p_{\mathcal{B}}(b) (b-x)\diff b. 
    \end{align*}
    By Leibniz integral rule, differentiating $h(x)$ with respect to $x$ gives
    \begin{equation}\label{hp-1}
        h^\prime(x) = \frac{1}{2} \int_m^x \alpha(1,b) p_{\mathcal{B}}(b) \diff b - \frac{1}{2}\int_x^M \alpha(-1,-b) p_{\mathcal{B}}(b) \diff b. 
    \end{equation}
    Note \eqref{hp-1} implies that $h^\prime$ is continuous on $(\supp(p_{\mathcal{B}}))^{\mathrm{o}}$. 
    If $M$ is a finite number, we examine the continuity of $h^\prime$ at $x\geq M$. For $x>M$, we have
    $$
    h(x) = \frac{1}{2} \int_{m}^{M} \alpha(1,b) p_{\mathcal{B}}(b) (x-b) \diff b. 
    $$
    Again by Leibniz integral rule we have
    \begin{equation}\label{hp-2}
    h^\prime(x) = \frac{1}{2} \int_{m}^{M} \alpha(1,b) p_{\mathcal{B}}(b) \diff b. 
    \end{equation}
    Clearly, $h^\prime(x)$ is continuous when $x>M$. 
    According to \eqref{hp-1} and \eqref{hp-2}, we have $\lim_{x\rightarrow M^+}h^\prime(x) = \lim_{x\rightarrow M^-}h^\prime(x)$. 
    By Lemma \eqref{lem-diff-limit}, $h$ is differentiable at $x=M$ and $h^\prime$ is continuous at $x=M$. Therefore, $h^\prime(x)$ is continuous when $x\geq M$. 
    Noticing a similar discussion can be applied to $m$, we conclude that $h^\prime$ is continuous on $\R$. 

    Next, we show $h^\prime(\infty)$ and $h^\prime(-\infty)$ are well-defined. 
    Note by \eqref{hp-1} and \eqref{hp-2}
    we have
    \begin{equation}\label{hp-inf}
        h^\prime(+\infty) = \frac{1}{2}\int_m^M \alpha(1,b) p_{\mathcal{B}}(b) \diff b.
    \end{equation}
    This integral is always well-defined since $\alpha$ is uniformly continuous and $p_{\mathcal{B}}(b)$ has heavier tail than $e^{-b^2}$. 
    Similarly, we have 
    \begin{equation}\label{hp-neg-inf}
        h^\prime(-\infty) = - \frac{1}{2}\int_m^M \alpha(-1, -b) p_{\mathcal{B}}(b) \diff b,
    \end{equation}
    which implies $h^\prime(-\infty)$ is also well-defined. 

    We then examine the second-order differentiability of $h$. 
    For $x\in (\supp(p_{\mathcal{B}}))^{\mathrm{o}}$, 
    differentiating \eqref{hp-1} gives: 
    \begin{equation}\label{hpp-1}
    \begin{aligned}
        h^{\prime\prime}(x) &= \frac{1}{2}  \alpha(1,x) p_{\mathcal{B}}(b) + \frac{1}{2}\alpha(-1,-x) p_{\mathcal{B}}(b)\\
        &= p_{\mathcal{B}}(b) \alpha^+(1,x),
    \end{aligned}
    \end{equation}
    which implies $h$ is twice continuously differentiable on $(\supp(p_{\mathcal{B}}))^{\mathrm{o}}$. 
    For $x\in R\setminus (\supp(p_{\mathcal{B}}))^{\mathrm{o}}$, notice such $x$ exists only if $m$ or $M$ is a finite number. 
    For the case of $M<\infty$, by \eqref{hp-2}, we have $h^{\prime\prime}(x)=0$ when $x>M$. Similarly, if $m>-\infty$, we have $h^{\prime\prime}(x)=0$ when $x<m$. 

    Finally, if $m=-\infty$ or $M=\infty$, 
    by \eqref{hpp-1} and by noticing that $\alpha$ is uniformly continuous and that $\mathcal{B}$ is sub-Gaussian, 
    we have $\int h^{\prime\prime}(b)|b|\diff b$ is well-defined. 
    If both $m$ and $M$ are finite numbers, then we've just shown $h^{\prime\prime}$ is compactly supported. Therefore, $\int h^{\prime\prime}(b)|b|\diff b$ is also well-defined. 
    By this, we conclude the proof. 
\end{proof}

Recall that, for any $\alpha\in \Theta_{\mathrm{ReLU}}$, $\alpha$ can be uniquely decomposed into the sum of an even function and an odd function\footnote{Recall a function $\alpha$ on $\{-1,1\}\times[-B_b,B_b]$ is called an even function if $\alpha(w,b) = \alpha(-w,-b)$ for all $(w,b)$ in its domain; and an odd function if $\alpha(w,b) = -\alpha(-w,-b)$ for all $(w,b)$ in its domain.}: $\alpha(w,b)=\alpha^+(w,b) + \alpha^-(w,b)$, where $\alpha^+(w,b)=(\alpha(w,b)+\alpha(-w,-b))/2$ and $\alpha^-(w,b)=(\alpha(w,b)-\alpha(-w,-b))/2$. 
In the following result, we present the closed-form solution for the minimal representation cost problem \eqref{min-rep} with the cost functional described in problem \eqref{l2}. 

\begin{proposition}\label{min-rep-unscaled}
    Given $h\in \mathcal{F}_{\mathrm{ReLU}}$, consider the minimal representation cost problem 
    \begin{equation}\label{min-rep-relu}
    \min_{\alpha \in \Theta_{\mathrm{ReLU}}} \int (\alpha(w,b))^2 \diff \mu \quad \st \ h(x) = g(x,\alpha), \forall x\in \R, 
    \end{equation}
    where $g(\cdot,\alpha)$ is defined in \eqref{gn-g-relu}. 
    The solution to problem \eqref{min-rep-relu}, denoted by $\mathcal{R}(h)$, is given by
    \begin{equation}\label{rep-sol-l2}
        \begin{dcases}
        (\mathcal{R}(h))^+(1,b) &= \frac{h^{\prime\prime}(b)}{p_{\mathcal{B}}(b)}\\
        (\mathcal{R}(h))^-(1,b) &= \frac{\int_{\R} h^{\prime\prime}(b)|b|\diff b - 2h(0)}{E[\mathcal{B}^2]}b + h^\prime(+\infty)+h^\prime(-\infty)s
        \end{dcases}
    \end{equation}
    where $h^\prime(+\infty)=\lim_{x\rightarrow +\infty}h^\prime(x)$ and $h^\prime(-\infty)=\lim_{x\rightarrow -\infty}h^\prime(x)$. 
\end{proposition}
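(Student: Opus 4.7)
I will build on the even--odd decomposition recorded in observations (1)--(3) of Section~\ref{sec:genral-framework}. Because $\mu$ is invariant under $(w,b)\mapsto(-w,-b)$, the subspaces $\Theta_{\mathrm{Even}}$ and $\Theta_{\mathrm{Odd}}$ are orthogonal in $L^2(\mu)$, so the quadratic cost decomposes as $\int\alpha^2\,\diff\mu = \int(\alpha^+)^2\,\diff\mu+\int(\alpha^-)^2\,\diff\mu$. Combining this with the splitting \eqref{eq:decomp-abs-aff} of the ReLU activation, the constraint $g(\cdot,\alpha)=h$ decouples as $g_{\mathrm{Abs}}(\cdot,\alpha^+) = h^+$ and $g_{\mathrm{Aff}}(\cdot,\alpha^-) = h^-$, where $h=h^++h^-$ is the corresponding decomposition of $h$ into a function in $\mathcal{F}_{\mathrm{Abs}}$ plus an affine function on $\R$. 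Thus problem \eqref{min-rep-relu} splits into two independent convex problems with disjoint cost contributions, and it suffices to solve each separately.

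\textbf{Even subproblem.} For $\alpha^+\in\Theta_{\mathrm{Even}}$, the argument leading to \eqref{hpp-1} in Proposition~\ref{prop-func-relu} shows that $g_{\mathrm{Abs}}(\cdot,\alpha^+)$ is twice continuously differentiable on $(\supp(p_{\mathcal{B}}))^{\mathrm{o}}$ with second derivative $p_{\mathcal{B}}(x)\,\alpha^+(1,x)$. The constraint $g_{\mathrm{Abs}}(\cdot,\alpha^+)=h^+$ therefore pins down $\alpha^+(1,b)=(h^+)^{\prime\prime}(b)/p_{\mathcal{B}}(b)=h^{\prime\prime}(b)/p_{\mathcal{B}}(b)$ on $\supp(p_{\mathcal{B}})$ (the affine $h^-$ contributes nothing to $h^{\prime\prime}$). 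Since $\mu$ charges only $\{-1,1\}\times\supp(p_{\mathcal{B}})$, $\alpha^+$ is determined $\mu$-a.e., so it is simultaneously the unique feasible element and the minimizer; this yields the first line of \eqref{rep-sol-l2}.

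\textbf{Odd subproblem.} For $\alpha^-\in\Theta_{\mathrm{Odd}}$, $g_{\mathrm{Aff}}(x,\alpha^-)$ is automatically affine in $x$, and the constraint $g_{\mathrm{Aff}}(\cdot,\alpha^-)=h^-$ reduces to the two scalar linear conditions
\begin{equation*}
\tfrac{1}{2}\!\int \alpha^-(w,b)\,w\,\diff\mu = (h^-)^{\prime}, \qquad -\tfrac{1}{2}\!\int \alpha^-(w,b)\,b\,\diff\mu = h^-(0).
\end{equation*}
Minimizing $\int(\alpha^-)^2\,\diff\mu$ subject to two linear functionals, the Lagrange condition forces the minimizer to lie in the span of the functionals, so $\alpha^-(1,b)$ is affine in $b$, say $\lambda_1+\lambda_2 b$. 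The coefficients $\lambda_1,\lambda_2$ are then obtained by substituting back and using the symmetry of $\mu$ together with $\int b^2\,\diff\mu=\mathbb{E}[\mathcal{B}^2]$; the zeroth moment vanishes by symmetry, which uncouples the two equations.

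\textbf{Main obstacle.} The principal bookkeeping is to rewrite the affine data $(h^-)^{\prime}$ and $h^-(0)$ purely in terms of intrinsic invariants of $h$. Using \eqref{hp-inf}, \eqref{hp-neg-inf} applied to the odd part gives $(h^-)^{\prime}=\tfrac{1}{2}(h^{\prime}(+\infty)+h^{\prime}(-\infty))$, since the even (absolute-value) part contributes slopes of opposite sign at $\pm\infty$. Evaluating the even representation at $x=0$ and applying integration by parts against $\alpha^+(1,b)=h^{\prime\prime}(b)/p_{\mathcal{B}}(b)$ gives $h^+(0)=\tfrac{1}{2}\int_{\R} h^{\prime\prime}(b)|b|\,\diff b$, so $h^-(0)=h(0)-\tfrac{1}{2}\int_{\R} h^{\prime\prime}(b)|b|\,\diff b$. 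Substituting these back into the Lagrange system and dividing by $\mathbb{E}[\mathcal{B}^2]$ produces the second line of \eqref{rep-sol-l2}. The rest is routine Hilbert-space minimum-norm analysis; the delicate points are the evenness of $p_{\mathcal{B}}$ and the correct cancellation of boundary terms when $\supp(p_{\mathcal{B}})$ is unbounded, both of which are already secured by Proposition~\ref{prop-func-relu}.
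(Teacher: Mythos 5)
Your proof is correct and rests on the same pillars as the paper's: the even--odd split with the Pythagorean identity, the observation that $\alpha^+$ is pinned down by the second derivative, and a two-constraint minimum-norm problem for $\alpha^-$ solved by Lagrange multipliers with the zeroth moment $\mathbb{E}[\mathcal{B}]=0$ uncoupling the system. The only organizational difference is that you first decompose $h=h^++h^-$ (which implicitly invokes $\mathcal{F}_{\mathrm{ReLU}}=\mathcal{F}_{\mathrm{Abs}}\oplus\mathcal{F}_{\mathrm{Affine}}$, proved in the paper as Proposition~\ref{prop-func-abs} for another purpose) and then match $g_{\mathrm{Abs}}(\cdot,\alpha^+)=h^+$, $g_{\mathrm{Aff}}(\cdot,\alpha^-)=h^-$; the paper instead derives the equivalent explicit constraints \eqref{constraints-rep} directly on $\alpha$ by showing $s=h-g(\cdot,\alpha)$ is the zero constant and never splits $h$ itself. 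A tiny note: no integration by parts is actually needed for $h^+(0)$ --- once $\alpha^+(1,b)p_{\mathcal{B}}(b)=h^{\prime\prime}(b)$, evaluating $g_{\mathrm{Abs}}$ at $x=0$ gives $h^+(0)=\int\alpha^+(1,b)\tfrac{|b|}{2}\,\diff\mu_{\mathcal{B}}=\tfrac12\int h^{\prime\prime}(b)|b|\,\diff b$ by direct substitution, so the worry about boundary terms when $\supp(p_{\mathcal{B}})$ is unbounded is already handled by Proposition~\ref{prop-func-relu}'s integrability claim for $\int h^{\prime\prime}(b)|b|\,\diff b$.
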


\begin{proof}{[Proof of Proposition~\ref{min-rep-unscaled}]}
    Fix an $h\in\mathcal{F}_{\mathrm{ReLU}}$. We now show that the constraints $g(\cdot, \alpha) = h(\cdot)$ in problem \eqref{min-rep-relu} is equivalent to 
    \begin{equation}\label{constraints-rep}
        \begin{dcases}
        \alpha^+(1,b) = \frac{h^{\prime\prime}(b)}{p_{\mathcal{B}}(b)},\\
        \int \alpha^-(1,b) \diff \mu_{\mathcal{B}} = h^\prime(+\infty)+h^\prime(-\infty),\\ 
        \int \alpha^-(1,b) b  \diff \mu_{\mathcal{B}} = \int_{\R} h^{\prime\prime}(b)|b|\diff b - 2h(0). 
    \end{dcases}
    \end{equation}
    
    We first show \eqref{constraints-rep} is sufficient for the equality $g(\cdot, \alpha) = h(\cdot)$. 
    For $\alpha \in \Theta_{\mathrm{ReLU}}$ satisfying \eqref{constraints-rep}, consider the function 
    $$
    s(x) = h(x) - g(x, \alpha). 
    $$
    Clearly, $s(x)\in\mathcal{F}_{\mathrm{ReLU}}$. 
    By Proposition \ref{prop-func-relu}, for $x\notin \supp(p_{\mathcal{B}})$, we have $s^{\prime\prime}(x) = 0 - 0 = 0$. 
    For $x\in \supp(p_{\mathcal{B}})$, applying the computation given in \eqref{hpp-1}, we have
    $$
    s^{\prime\prime}(x) = h^{\prime\prime}(x) - p_{\mathcal{B}}(b) \alpha^+(1,b) = 0. 
    $$
    This implies $s^\prime(x)$ is a piece-wise constant function on $\R$. According to Proposition \ref{prop-func-relu}, $s^{\prime}(x)$ is continuous. Hence, $s^{\prime}(x)$ is constant on $\R$. 
    Using similar computations as presented in \eqref{hp-inf} and \eqref{hp-neg-inf}, we have that $\lim_{x\rightarrow \infty} \partial_x g(x, \alpha) + \lim_{x\rightarrow -\infty} \partial_x g(x, \alpha) = \int \alpha^-(1,b) \diff \mu_{\mathcal{B}}$. 
    Therefore, 
    $$
    s^\prime(+\infty) + s^\prime(-\infty) = h^\prime(+\infty) + h^\prime(-\infty) - \int \alpha^-(1,b) \diff \mu_{\mathcal{B}} = 0.
    $$
    Therefore, $s^\prime(x)=0$ for all $x\in\R$ and $s(x)$ is a constant function. 
    By \eqref{constraints-rep}, we have
    \begin{equation}\label{g0-relu}
    \begin{aligned}
        g(0,\alpha) &= \int \alpha(w,b)[-b]_+ \diff \mu\\
        &= \int \alpha^+(w,b)\frac{|-b|}{2} + \alpha^-(w,b)\frac{-b}{2}  \diff \mu\\ 
        &= \int \alpha^+(1,b) \frac{|-b|}{2} \diff \mu_{\mathcal{B}} + \int \alpha^-(1,b) \frac{-b}{2} \diff \mu_{\mathcal{B}}\\
        &= \int_{\R} h^{\prime\prime}(b) \frac{|b|}{2} \diff b - \int \alpha^-(1,b) \frac{b}{2} \diff \mu_{\mathcal{B}}\\
        &=h(0), 
    \end{aligned}
    \end{equation} 
    which implies $s=0$ and hence $h(x)=g(x,\alpha)$ for $x\in \R$. 

    We now show \eqref{constraints-rep} is also necessary for the equality $g(\cdot, \alpha) = h(\cdot)$. Assume $g(\cdot, \alpha) = h(\cdot)$. It's clear from \eqref{hp-inf}, \eqref{hp-neg-inf}, and \eqref{hpp-1} that $\alpha$ must satisfy the first and second equations in \eqref{constraints-rep}. 
    Noting that $g(0, \alpha) = h(0)$ and according to \eqref{g0-relu}, we have $\alpha$ must satisfy the third equation in \eqref{constraints-rep} as well. 
    Therefore, $g(\cdot, \alpha) = h(\cdot)$ is equivalent to \eqref{constraints-rep} and the minimal representational cost problem \eqref{min-rep-relu} is equivalent to the following problem:
    \begin{equation}\label{rep-problem-clean}
        \min_{\alpha \in \Theta_{\mathrm{ReLU}}} \int \alpha^2(w,b) \diff \mu \quad \st \ \alpha \text{ satisfies }\eqref{constraints-rep}.     
    \end{equation}

    We now explicitly solve problem \eqref{rep-problem-clean}. To simplify the notation, we let
    \begin{equation}\label{sh-ch}
        \begin{dcases}
        S_h = h^\prime(+\infty)+h^\prime(-\infty),\\ 
        C_h = \int_{\R} h^{\prime\prime}(b)|b|\diff b - 2h(0). 
        \end{dcases}
    \end{equation}
    Since the measure $\mu$ is symmetric, the following Pythagorean-type equality holds for any $\alpha$ in $\Theta_{\mathrm{ReLU}}$:
    \begin{align*}
        \int (\alpha(w,b))^2 \diff \mu &= \int (\alpha^+(w,b))^2 \diff \mu + \int (\alpha^-(w,b))^2 \diff \mu\\
        &= \int (\alpha^+(1,b))^2 \diff \mu_{\mathcal{B}} + \int (\alpha^-(1,b))^2 \diff \mu_{\mathcal{B}}. 
    \end{align*}
    Note in \eqref{constraints-rep}, $\alpha^+(1,b)$ is completely determined. 
    Hence, to solve \eqref{rep-problem-clean} it suffices to solve
    \begin{equation}\label{rep-prop}
    \begin{aligned}
        \min_{\substack{\alpha^- \in \Theta_{\mathrm{ReLU}}\\ \alpha^-\colon \text{ odd} }} \int (\alpha^-(w,b))^2 \diff \mu \quad
        \st  \begin{dcases}
         \int \alpha^-(1,b)  \diff \mu_{\mathcal{B}} = S_h,\\
        \int \alpha^-(1,b) b  \diff \mu_{\mathcal{B}} = C_h. \end{dcases}   
    \end{aligned}
    \end{equation}
    The first order optimality condition for the above question can be written as 
    \begin{equation}\label{lagrangian-rep}
        2\alpha^-(1,b) - \lambda_1 - \lambda_2 b = 0, 
    \end{equation}
    where $\lambda_1, \lambda_2 \in \mathbb{R}$ are Lagrangian multipliers. Plugging in \eqref{lagrangian-rep} to the constraints in \eqref{rep-prop} gives
    $$
    \begin{dcases}
        \frac{1}{2} \lambda_1 + \frac{\mathbb{E}[\mathcal{B}]}{2} \lambda_2 = S_h\\
        \frac{\mathbb{E}[\mathcal{B}]}{2} \lambda_1 + \frac{\mathbb{E}[\mathcal{B}^2]}{2}\lambda_2 = C_h. 
    \end{dcases}
    $$

    Since the measure $\mu$ is symmetric, the density function $p_{\mathcal{B}}$ must be an even function on $\R$. 
    This implies $\mathbb{E}[\mathcal{B}]=0$. Therefore, $\lambda_1 = 2 S_h$ and $\lambda_2 = 2 C_h/\mathbb{E}[\mathcal{B}^2]$. Finally, $\mathcal{R}(h)$, which is the solution to \eqref{rep-problem-clean}, is determined by
    $$
    \begin{dcases}
        (\mathcal{R}(h))^+(1,b) = \frac{h^{\prime\prime}(b)}{p_{\mathcal{B}}(b)}\\
        (\mathcal{R}(h))^-(1,b) = \frac{C_h}{\mathbb{E}[\mathcal{B}^2]} b + S_h, 
    \end{dcases}
    $$
    which concludes the proof. 
\end{proof}

Finally, with Proposition~\ref{prop-param-to-func} and Proposition~\ref{min-rep-unscaled}, we can write 
the implicit bias in the function space. 

\begin{proposition}\label{prop:app-ib-unscaled}
    Assume $\bar{\alpha}$ solves \eqref{l2}, then $\bar{h}(x) = \int \bar{\alpha}(w,b) [wx-b]_+\diff \mu$ solves the following variational problem
    \begin{equation*}
    \begin{aligned}
        &\min_{h\in \mathcal{F}_{\mathrm{ReLU}}} \mathcal{G}_1(h)+\mathcal{G}_2(h)+\mathcal{G}_3(h)
        \quad \st \  h(x_i) = y_i - f(x_i, \hat{\theta}),\ \forall i \in [m],\\
        &\text{where }
        \begin{dcases}
            \mathcal{G}_1(h) = \int_{-B_b}^{B_b}  \frac{\left(h^{\prime\prime}(x)\right)^2}{p_{\mathcal{B}}(x)} \diff x\\
            \mathcal{G}_2(h) = ( h^\prime(+\infty) + h^\prime(-\infty) )^2\\
            \mathcal{G}_3(h) = \frac{1}{\mathbb{E}[\mathcal{B}^2]}\Big(\int_{\R} h^{\prime\prime}(b)|b|\diff b - 2h(0))\Big)^2. 
        \end{dcases}        
    \end{aligned}
    \end{equation*} 
\end{proposition}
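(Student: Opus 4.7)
The plan is to combine the two preceding building blocks: Proposition~\ref{prop-param-to-func}, which moves an implicit-bias problem from parameter space into function space via the pull-back $\mathcal{C}_\Phi \circ \mathcal{R}_\Phi$, and Proposition~\ref{min-rep-unscaled}, which gives an explicit closed form for the representation map $\mathcal{R}_\Phi = \mathcal{R}$ associated to the $L^2(\mu)$ cost. So the pull-back is fully computable and we only need to verify it matches $\mathcal{G}_1 + \mathcal{G}_2 + \mathcal{G}_3$.

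First, I would take $\mathcal{C}_\Phi(\alpha) = \int \alpha(w,b)^2 \, d\mu$, the cost that appears in problem \eqref{l2}, and check the hypotheses of Proposition~\ref{prop-param-to-func}: $\Theta_{\mathrm{ReLU}}$ is a vector subspace of $C(\supp(\mu))$ by construction, and $\mathcal{C}_\Phi$ is strictly convex on the quotient by the kernel of the $L^2(\mu)$ seminorm, which is the only place where strict convexity is used (two $\alpha$'s that agree $\mu$-a.e.\ give the same $g(\cdot,\alpha)$, so feasibility passes to the quotient). Applying the proposition, $\bar h = g(\cdot,\bar\alpha)$ solves $\min_{h \in \mathcal{F}_{\mathrm{ReLU}}} \mathcal{C}_\Phi(\mathcal{R}(h))$ subject to $h(x_i) = y_i - f(x_i, \hat\theta)$, so the remaining task is a direct evaluation of $\mathcal{C}_\Phi(\mathcal{R}(h))$.

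Next I would use the symmetry of $\mu$ to split $\mathcal{C}_\Phi(\mathcal{R}(h)) = \int ((\mathcal{R}(h))^+)^2 \, d\mu + \int ((\mathcal{R}(h))^-)^2 \, d\mu$ (the Pythagorean identity used in Proposition~\ref{min-rep-unscaled}), and then substitute the explicit formulas \eqref{rep-sol-l2}. For the even part, using that on the univariate case $\mathcal{W} \in \{-1,1\}$ and $(\mathcal{R}(h))^+(1,b) = h''(b)/p_{\mathcal{B}}(b)$, a change of variables against the density $p_{\mathcal{B}}$ gives
\[
\int ((\mathcal{R}(h))^+)^2 \, d\mu \;=\; \int_{\R} \frac{(h''(b))^2}{p_{\mathcal{B}}(b)^2} \, p_{\mathcal{B}}(b) \, db \;=\; \int_{\supp(p_{\mathcal{B}})} \frac{(h''(b))^2}{p_{\mathcal{B}}(b)} \, db \;=\; \mathcal{G}_1(h),
\]
where the restriction to $\supp(p_{\mathcal{B}})$ is legitimate because Proposition~\ref{prop-func-relu}(4) guarantees $\supp(h'') \subset \supp(p_{\mathcal{B}})$. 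For the odd part, $(\mathcal{R}(h))^-(1,b) = (C_h/\E[\mathcal{B}^2])\, b + S_h$ with $S_h = h'(+\infty)+h'(-\infty)$ and $C_h = \int h''(b)|b|\,db - 2h(0)$; expanding the square, integrating against $p_{\mathcal{B}}$ and using $\E[\mathcal{B}]=0$ (which follows from the symmetry of $\mu$, hence the evenness of $p_{\mathcal{B}}$ already noted in the proof of Proposition~\ref{prop-func-relu}) kills the cross term and leaves $S_h^2 + C_h^2/\E[\mathcal{B}^2] = \mathcal{G}_2(h) + \mathcal{G}_3(h)$.

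I do not expect a serious obstacle: the only subtle points are the quotient argument needed to invoke Proposition~\ref{prop-param-to-func} with a genuine seminorm rather than a norm, and the support condition that converts the full-line integral in $\mathcal{G}_1$ to the integral over $\supp(p_{\mathcal{B}})$ stated in the theorem. The compactly-supported formulation of $\mathcal{G}_3$ claimed in Theorem~\ref{thm:ib-md-unscaled} follows by integration by parts in $\int_{-B}^{B} h''(b)|b|\,db$, splitting at $0$ and using continuity of $h'$ and the boundary values $h'(\pm\infty) = h'(\pm B)$ on functions with $h''$ vanishing outside $[-B,B]$.
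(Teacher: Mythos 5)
Your proposal is correct and follows essentially the same route as the paper's own proof: invoke Proposition~\ref{prop-param-to-func} to pass to the function-space variational problem, then use the even/odd Pythagorean split and substitute the closed form of $\mathcal{R}$ from Proposition~\ref{min-rep-unscaled}, with $\E[\mathcal{B}]=0$ annihilating the cross term. The only superfluous caution is the quotient argument for strict convexity: since elements of $\Theta_{\mathrm{ReLU}}$ are continuous functions on $\supp(\mu)$ and $\mu$ has full support there, $\int\alpha^2\,d\mu=0$ already forces $\alpha\equiv 0$, so $\mathcal{C}_\Phi$ is strictly convex on $\Theta_{\mathrm{ReLU}}$ itself and no quotient is needed.
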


\begin{proof}{[Proof of Proposition~\ref{prop:app-ib-unscaled}]}
    By Proposition~\ref{prop-param-to-func}, $\bar{h}(\cdot)$ solves the following problem
    $$
    \min_{h\in \mathcal{F}_{\mathrm{ReLU}}} \int \big(\mathcal{R}(h)(w,b)\big)^2 \diff \mu \quad \st \ h(x_i) = y_i - f(x_i, \hat{\theta}) \ \forall i \in [m].
    $$
    Notice that
    \begin{equation}\label{eq-variational-computation}
    \begin{aligned}
        \int \big(\mathcal{R}(h)(w,b)\big)^2 \diff \mu &= \int ((\mathcal{R}(h))^+(w,b))^2 + ((\mathcal{R}(h))^-(w,b))^2 \diff \mu\\
        &= \int ((\mathcal{R}(h))^+(1,b))^2 + ((\mathcal{R}(h))^-(1,b))^2 \diff \mu_{\mathcal{B}}.
    \end{aligned}
    \end{equation}
    Recall $S_h$ and $C_h$ as defined in \eqref{sh-ch}. 
    Then plugging in \eqref{rep-sol-l2} to \eqref{eq-variational-computation} gives:
    $$
    \begin{aligned}
        \int \big(\mathcal{R}(h)(w,b)\big)^2 \diff \mu 
        &= \int \Big(\frac{h^{\prime\prime}(x)}{p_{\mathcal{B}}(x)}\Big)^2 \diff \mu_{\mathcal{B}} + 
        \int \Big( \frac{C_h}{\mathbb{E}[\mathcal{B}^2]} x + S_h \Big)^2 \diff \mu_{\mathcal{B}}\\
        &= \int_{-B_b}^{B_b} \frac{(h^{\prime\prime}(x))^2}{p_{\mathcal{B}}(x)} \diff x 
        + \Big(\frac{C_h}{\mathbb{E}[\mathcal{B}^2]}\Big)^2 \int_{-B_b}^{B_b} x^2 \diff \mu_{\mathcal{B}} 
        + S_h^2 \int_{-B_b}^{B_b} \diff \mu_{\mathcal{B}}\\
        &= \int_{-B_b}^{B_b} \frac{(h^{\prime\prime}(x))^2}{p_{\mathcal{B}}(x)} \diff x 
        + \frac{C_h^2}{\mathbb{E}[\mathcal{B}^2]} + S_h^2,
    \end{aligned}
    $$
    which concludes the proof. 
\end{proof}

The above results can be summarized by the following commutative diagram: 
\begin{equation}\label{diag-func}
\begin{tikzcd}[column sep=small]
	{\Theta_{\mathrm{ReLU}}} \arrow["\mathcal{C}_\Phi", bend right=50, dd, swap] & {=} & {\Theta_{\mathrm{Even}}} \arrow["\mathcal{C}_\Phi", bend left=40, dd] & \oplus & {\Theta_{\mathrm{Odd}}} \arrow["\mathcal{C}_\Phi", bend left=50, dd] \\
	{\mathcal{F}_{\mathrm{ReLU}}} & {=} & {\mathcal{F}_{\mathrm{Abs}}} & \oplus & {\mathcal{F}_{\mathrm{Affine}}} \\
	{\mathbb{R}} && {\mathbb{R}} && {\mathbb{R}.}
	\arrow["{\mathcal{R}_\Phi}"', from=2-1, to=1-1]
	\arrow["{\mathcal{G}_1+\mathcal{G}_2+\mathcal{G}_3}", from=2-1, to=3-1]
	\arrow["{\mathcal{R}_\Phi}", from=2-3, to=1-3]
	\arrow["{\mathcal{G}_1}"', from=2-3, to=3-3]
	\arrow["{\mathcal{R}_\Phi}", from=2-5, to=1-5]
	\arrow["{\mathcal{G}_2+\mathcal{G}_3}"', from=2-5, to=3-5]
\end{tikzcd}
\end{equation}

We state Proposition~\ref{min-rep-unscaled} for networks with absolute value activations below. The proof can be viewed as a special case of the proof of Proposition~\ref{min-rep-unscaled} and thus is omitted. 

\begin{proposition}[\textbf{Proposition~\ref{min-rep-unscaled} for networks with absolute value activations}]
    \label{min-rep-unscaled-abs}
    Let $\mathcal{F}_{\mathrm{Abs}}=\{ h(x) = \int \alpha(w,b) |wx-b| \diff \mu \colon \alpha\in C(\supp(\mu)), \alpha\text{ is even and uniformly continuous}\}$. 
    Given $h\in \mathcal{F}_{\mathrm{Abs}}$, consider the minimal representation cost problem 
    \begin{equation}\label{min-rep-abs}
    \min_{\alpha \in \Theta_{\mathrm{Abs}}} \int (\alpha(w,b))^2 \diff \mu \quad \st \ h(x) = g(x,\alpha), \forall x\in \R. 
    \end{equation}
    The solution to problem \eqref{min-rep-abs}, denoted by $\mathcal{R}(h)$, is given by
    $$
    \mathcal{R}(h)(1,b) = \frac{h^{\prime\prime}(b)}{2 p_{\mathcal{B}}(b)}. 
    $$
\end{proposition}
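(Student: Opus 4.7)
The strategy is to mirror the proof of Proposition~\ref{min-rep-unscaled}, but the even-symmetry imposed on $\alpha \in \Theta_{\mathrm{Abs}}$ collapses the odd-component degrees of freedom that produce $\mathcal{G}_2$ and $\mathcal{G}_3$ in the ReLU analysis, so there is no genuine minimization to perform: the constraint $g(\cdot, \alpha) = h(\cdot)$ already pins down $\alpha$ uniquely on $\supp(\mu)$.

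First, I would derive the absolute-value counterpart of \eqref{hpp-1}: for every $h(x) = \int \alpha(w, b) |wx - b| \diff\mu$ with $\alpha \in \Theta_{\mathrm{Abs}}$, the function $h$ is continuously differentiable on $\R$, twice continuously differentiable on $(\supp(p_\mathcal{B}))^{\mathrm{o}}$, and satisfies
\begin{equation*}
h''(b) = 2\, p_\mathcal{B}(b)\, \alpha(1, b) \qquad \text{on } (\supp(p_\mathcal{B}))^{\mathrm{o}}.
\end{equation*}
This follows by the differentiation-under-the-integral-sign argument of Proposition~\ref{prop-func-relu}, splitting the integral on $w \in \{-1, 1\}$, substituting $b \to -b$ in the $w = -1$ half, and using the evenness of $\alpha$ together with the symmetry of $\mu$ to merge the two halves into a single formula; the factor of $2$ relative to \eqref{hpp-1} is exactly the doubling visible in \eqref{eq:decomp-abs-aff}.

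Next, I would show that this pointwise identity is equivalent to the full constraint $g(\cdot, \alpha) = h$, i.e.\ it uniquely determines $\alpha$ on $\supp(\mu)$ modulo $\mu$-null sets. Necessity is immediate from the first step. For sufficiency, if $\alpha \in \Theta_{\mathrm{Abs}}$ satisfies the identity then the residual $s(x) := h(x) - \int \alpha(w,b) |wx - b| \diff\mu$ lies in $\mathcal{F}_{\mathrm{Abs}}$ and has $s'' \equiv 0$ on $(\supp(p_\mathcal{B}))^{\mathrm{o}}$; applying the derived formula to $s$ itself shows that its representing coefficient is $\mu$-a.e.\ zero, and a final evaluation at $x = 0$ (testing the identity against $|b|$) forces the remaining additive constant to vanish, yielding $s \equiv 0$. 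Since $\int \alpha^2 \diff\mu$ depends on $\alpha$ only through its values on $\supp(\mu)$, and these are now pinned down, the minimizer is the unique feasible point and equals $\mathcal{R}(h)(1, b) = h''(b) / (2 p_\mathcal{B}(b))$.

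The main obstacle is the sufficiency direction of the second step, specifically ruling out the affine-type ambiguities that play a nontrivial role in the ReLU case through $\mathcal{G}_2$ and $\mathcal{G}_3$. Here the even-symmetry constraint on $\alpha$ forces any such would-be ambiguity to itself be even, and hence --- once its second derivative vanishes on $\supp(p_\mathcal{B})$ --- to be constant; the integration-against-$|b|$ check at $x=0$ rules out a nonzero constant, eliminating the ambiguity and producing the claimed explicit formula.
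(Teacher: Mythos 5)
Your proof takes essentially the same route as the paper's (omitted) argument, paralleling the proof of the companion Proposition~\ref{lem-min-rep-phi}: deduce $h''(b)=2p_\mathcal{B}(b)\,\alpha(1,b)$ on $(\supp p_\mathcal{B})^\mathrm{o}$ by differentiating under the integral, note that this relation together with evenness of $\alpha$ pins it down on all of $\supp(\mu)$, and conclude that the feasible set is a single point so the minimization is vacuous. The only superfluous step is the ``final evaluation at $x=0$'': once you have shown the representing coefficient of $s$ is $\mu$-a.e.\ zero, $s(x)=\int \gamma(w,b)\,|wx-b|\,\diff\mu = 0$ for every $x$ follows at once, so there is no remaining additive constant to eliminate (the paper's Proposition~\ref{prop-func-abs} reaches the same conclusion via the slightly more circuitous $\mathcal{F}_{\mathrm{Abs}} \cap \mathcal{F}_{\mathrm{Affine}} = \{0\}$ argument).
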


With Proposition~\ref{lazy-unscaled}, Proposition~\ref{linear-unscaled}, Proposition~\ref{prop:reduce-to-l2-abs}, and Proposition~\ref{min-rep-unscaled-abs}, we obtain Theorem~\ref{thm:ib-md-unscaled} for networks with absolute value functions, which is stated below. 

\begin{theorem}[\textbf{Implicit bias of mirror flow for wide univariate absolute value network}]
    \label{thm:ib-md-unscaled-abs}
    Consider a two layer absolute value network \eqref{model} with $d$ input units and $n$ hidden units, where we assume $n$ is sufficiently large. 
    Consider parameter initialization \eqref{init} and, specifically, let $p_{\mathcal{B}}(b)$ denote the density function for random variable $\mathcal{B}$. 
    Consider any finite training data set $\{(x_i, y_i)\}_{i=1}^m$ that satisfies $x_i \neq x_j$ when $i \neq j$ and $\{\|x_i\|_2\}_{i=1}^m \subset \supp(p_{\mathcal{B}})$, and consider mirror flow \eqref{traj-param} with a potential function $\Phi$ satisfying Assumption~\ref{assump-potential-unscaled} and learning rate $\eta = \Theta(1/n)$. 
    Then, there exist constants $C_1, C_2>0$ such that with high probability over the random parameter initialization, for any $t\geq 0$, 
    \begin{equation*}
        \|\theta(t) - \hat{\theta}\|_\infty  \leq C_1 n^{-1}, \ \lim_{n\rightarrow \infty} \|H(t)-H(0)\|_2 = 0, \ 
        \|\boldsymbol{f}(t)-\boldsymbol{y}\|_2 \leq e^{- \eta_0 C_2 t} \|\boldsymbol{f}(0)-\boldsymbol{y}\|_2. 
    \end{equation*}
    Moreover, letting $\theta(\infty)=\lim_{t\rightarrow \infty} \theta(t)$, we have for any given $x\in\mathbb{R}^d$, that 
    $\lim_{n\rightarrow \infty}|f(x, \theta(\infty)) - f(x, \theta_{\mathrm{GF}}(\infty))|=0$, where $\theta_{\mathrm{GF}}(\infty)$ denotes the limiting point of gradient flow, i.e., mirror flow \eqref{traj-param} with $\Phi(\theta)=\|\theta\|_2^2$, on the same training data and initial parameter. 
    
    Assuming univariate input data, i.e., $d=1$, we have for any given $x\in\mathbb{R}$, that $\lim_{n\rightarrow \infty}|f(x, \theta(\infty)) - f(x,\hat{\theta}) - \bar{h}(x)|=0$, where $\bar{h}(\cdot)$ is the solution to the following variational problem: 
    \begin{equation*}
    \begin{aligned}
        &\min_{h\in \mathcal{F}_{\mathrm{Abs}}} 
        \int_{\supp(p_{\mathcal{B}})}  \frac{\left(h^{\prime\prime}(x)\right)^2}{p_{\mathcal{B}}(x)} \diff x
        \quad \st \  h(x_i) = y_i-f(x_i, \hat{\theta}),\ \forall i \in [m]. 
    \end{aligned}
    \end{equation*}
    Here $\mathcal{F}_{\mathrm{Abs}}=\{ h(x) = \int \alpha(w,b) |wx-b|  \diff \mu \colon \alpha\in C(\supp(\mu)), \alpha$ is even and uniformly continuous$\}$ is the space of functions that can be represented by an infinitely wide absolute value network with an even and uniformly continuous output weight function $\alpha$.  
\end{theorem}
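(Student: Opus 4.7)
The plan is to reuse the four-step framework introduced in Section~\ref{sec:genral-framework} essentially unchanged, invoking the propositions already established, with adaptation only in the final function-space step. The key simplification relative to the ReLU case is that $\mathcal{F}_{\mathrm{Abs}}$ is parametrized only by \emph{even} coefficient functions $\alpha$, so the odd component that produced the functionals $\mathcal{G}_2$ and $\mathcal{G}_3$ in Theorem~\ref{thm:ib-md-unscaled} disappears and the variational problem collapses to the single curvature penalty.

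First, for the lazy-training statements, I would invoke Proposition~\ref{lazy-unscaled} (stated for both ReLU and absolute value activations) to obtain the bound on $\|\theta(t)-\hat{\theta}\|_\infty$, the kernel stability $\|H(t)-H(0)\|_2\to 0$, and the exponential decay of the loss. Proposition~\ref{lazy-unscaled-linear} together with Proposition~\ref{linear-unscaled} then gives, for every fixed $x\in\R^d$, $\sup_{t\geq 0}|f(x,\theta(t))-f(x,\tilde\theta(t))|\to 0$ in probability, where $\tilde\theta$ denotes the mirror-flow trajectory updating only the output weights. It therefore suffices to characterize the infinite-width limit of $f(\cdot,\tilde\theta(\infty))$.

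For the parameter-space bias, Theorem~\ref{guna-ib} identifies $\tilde{\boldsymbol a}(\infty)$ as the $D_\Phi$-projection of $\hat{\boldsymbol a}$ onto the interpolation subspace, and Proposition~\ref{prop:reduce-to-l2-abs} shows that, as $n\to\infty$, this projection reduces --- for any unscaled potential satisfying Assumption~\ref{assump-potential-unscaled} --- to the $L^2(\mu)$-minimization over the space $\Theta_{\mathrm{Abs}}$ of even uniformly continuous coefficient functions. This simultaneously delivers the independence from $\Phi$ (yielding the ``same implicit bias as gradient flow'' clause for general $d$) and identifies the limit network as $g(x,\bar\alpha)$ with $\bar\alpha$ solving that $L^2$ problem.

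For $d=1$, the final step is Proposition~\ref{prop-param-to-func}, which pulls the parameter-space problem back to function space via the minimal-representation-cost map $\mathcal{R}_\Phi$, combined with Proposition~\ref{min-rep-unscaled-abs}, which gives the explicit solution $\mathcal{R}_\Phi(h)(1,b)=h''(b)/(2p_{\mathcal B}(b))$ for $h\in\mathcal{F}_{\mathrm{Abs}}$. A short computation using the evenness of $\alpha$ and the symmetry of $\mu$ reduces the pulled-back cost to $\int(h''(x))^2/p_{\mathcal B}(x)\,\diff x$ up to an immaterial multiplicative constant, so $\bar h$ minimizes this functional over $\mathcal{F}_{\mathrm{Abs}}$ subject to the interpolation constraints. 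The only nontrivial remaining point is an absolute-value analogue of Proposition~\ref{prop-func-relu}, verifying that elements of $\mathcal{F}_{\mathrm{Abs}}$ are $C^1$ on $\R$ and $C^2$ on the interior of $\supp(p_{\mathcal B})$ with $\supp(h'')\subset\supp(p_{\mathcal B})$; but this is strictly easier than the ReLU analogue since the absence of an odd component removes the need to track boundary slopes at $\pm\infty$ and the coupling to $h(0)$.
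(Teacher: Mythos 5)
Your proof takes essentially the same route as the paper, which assembles Propositions~\ref{lazy-unscaled}, \ref{lazy-unscaled-linear}, \ref{linear-unscaled}, \ref{prop:reduce-to-l2-abs}, \ref{prop-param-to-func}, and \ref{min-rep-unscaled-abs}, with the key observation that $\Theta_{\mathrm{Abs}}$ contains only even coefficient functions so the odd component producing $\mathcal{G}_2$ and $\mathcal{G}_3$ vanishes (this is Proposition~\ref{prop-func-abs}), and the pulled-back cost $\int(\mathcal{R}_\Phi(h)(1,b))^2\,\diff\mu_{\mathcal{B}}=\tfrac14\int\frac{(h''(x))^2}{p_{\mathcal{B}}(x)}\,\diff x$ matches the stated functional up to the immaterial constant you note. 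One minor remark: invoking Proposition~\ref{lazy-unscaled} yields $\|\theta(t)-\hat\theta\|_\infty=O(n^{-1/2})$, not $O(n^{-1})$ as the theorem statement reads; since the ReLU version (Theorem~\ref{thm:ib-md-unscaled}) states $n^{-1/2}$ and the same proposition underlies both, the $n^{-1}$ is evidently a typo in the paper and your chain of reasoning delivers the correct rate.
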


Before ending this section, we present the proof of Proposition~\ref{prop-param-to-func}. 

\begin{proof}{[Proof of Proposition~\ref{prop-param-to-func}]}
    Note $\mathcal{R}_\Phi$ is a well-defined map since $\mathcal{C}_\Phi$ is strictly convex on $\Theta$. 
    By assumption, problem \eqref{ib-inf-param} has a unique solution. Notice $\mathcal{R}_\Phi(g(\cdot, \bar{\alpha}))$ is feasible for \eqref{ib-inf-param}. Hence, by the optimality of $\bar{\alpha}$, we have $\mathcal{R}_\Phi(g(\cdot, \bar{\alpha}))=\bar{\alpha}$, i.e. $\bar{\alpha}$ is the parameter with the minimal cost for $g(\cdot, \bar{\alpha})$. 
    Now assume $\hat{h}$ solves \eqref{min-rep}. By noticing $g(\cdot, \bar{\alpha})$ is feasible for \eqref{ib-func}, we have $\mathcal{C}_\Phi \circ \mathcal{R}_\Phi(\hat{h}) \leq \mathcal{C}_\Phi \circ \mathcal{R}_\Phi(g(\cdot, \bar{\alpha})) = \mathcal{C}_\Phi(\bar{\alpha})$. 
    Since $\mathcal{R}_\Phi(\hat{h})$ is also feasible for problem \eqref{ib-inf-param}, $\mathcal{R}_\Phi(\hat{h})=\bar{\alpha}$ and therefore $\hat{h}(\cdot)=g(\cdot, \bar{\alpha})$. 
\end{proof}

\subsection{Theorem~\ref{thm:ib-md-unscaled} for uncentered potential} 
\label{app:uncentered}

In this subsection, we characterize the implicit bias of mirror flow with ``uncentered'' and unscaled potential functions that satisfying the following assumptions. 

\begin{assumption}
\label{assump-potential-unscaled-uncentered} 
    The potential function $\Phi\colon \R^{3n+1} \rightarrow \R$ satisfies: 
    (i) $\Phi$ can be written in the form $\Phi(\theta) = \sum_{k=1}^{3n+1} \phi (\theta_k)$, where $\phi$ is a real-valued function on $\R$; 
    (ii) $\Phi$ is twice continuously differentiable; 
    (iii) $\nabla^2 \Phi(\theta)$ is positive definite for all $\theta\in \R^{3n+1}$. 
\end{assumption}

\begin{theorem}[Theorem~\ref{thm:ib-md-unscaled} for uncentered potentials] 
    \label{thm:ib-md-unscaled-uncentered}
    Consider a two layer ReLU network \eqref{model} with $d$ input units and $n$ hidden units, where we assume $n$ is sufficiently large. 
    Consider parameter initialization \eqref{init} and assume $\mathcal{B}$ is compactly supported on $[-B,B]$ for some $B>0$. 
    Let $p_{\mathcal{B}}(b)$ denote the density function for random variable $\mathcal{B}$. 
    Consider any finite training data set $\{(x_i, y_i)\}_{i=1}^m$ that satisfies $x_i \neq x_j$ when $i \neq j$ and $\{\|x_i\|_2\}_{i=1}^m \subset \supp(p_{\mathcal{B}})$, and consider mirror flow \eqref{traj-param} with a potential function $\Phi$ satisfying Assumption~\ref{assump-potential-unscaled-uncentered} and learning rate $\eta = \Theta(1/n)$. 
    Then, there exist constants $C_1, C_2>0$ such that 
    with high probability over the random parameter initialization, 
    for any $t\geq 0$, 
    \begin{equation}
        \|\theta(t) - \hat{\theta}\|_\infty  \leq C_1 n^{-1/2}, \ \lim_{n\rightarrow \infty} \|H(t)-H(0)\|_2 = 0, \ 
        \|\boldsymbol{f}(t)-\boldsymbol{y}\|_2 \leq e^{- \eta_0 C_2 t}\|\boldsymbol{f}(0)-\boldsymbol{y}\|_2. 
    \end{equation}
    Moreover, letting $\theta(\infty)=\lim_{t\rightarrow \infty} \theta(t)$, we have for any given $x\in\mathbb{R}^d$, that 
    $\lim_{n\rightarrow \infty}|f(x, \theta(\infty)) - f(x, \theta_{\mathrm{GF}}(\infty))|=0$, where $\theta_{\mathrm{GF}}(\infty)$ denotes the limiting point of gradient flow, i.e., mirror flow \eqref{traj-param} with $\Phi(\theta)=\|\theta\|_2^2$, on the same training data and initial parameter. 
    
    Assuming univariate input data, i.e., $d=1$, we have for any given $x\in\mathbb{R}$, that $\lim_{n\rightarrow \infty}|f(x, \theta(\infty)) - f(x, \hat{\theta}) - \bar{h}(x)|=0$, where $\bar{h}(\cdot)$ is the solution to the following variational problem: 
    \begin{equation}
    \begin{aligned} 
        &\min_{h\in \mathcal{F}_{\mathrm{ReLU}}} \mathcal{G}_1(h)+\mathcal{G}_2(h)+\mathcal{G}_3(h)
        \quad \st \  h(x_i) = y_i - f(x_i, \hat{\theta}),\ \forall i \in [m],\\
        &\text{where }
        \begin{dcases}
            \mathcal{G}_1(h) = \int_{\supp(p_{\mathcal{B}})}  \frac{\left(h^{\prime\prime}(x)\right)^2}{p_{\mathcal{B}}(x)} \diff x\\
            \mathcal{G}_2(h) = ( h^\prime(+\infty) + h^\prime(-\infty) )^2\\
            \mathcal{G}_3(h) = \frac{1}{\mathbb{E}[\mathcal{B}^2]} ( B (h^\prime(+\infty) - h^\prime(-\infty) ) - (h(B) + h(-B) ) )^2. 
        \end{dcases}        
    \end{aligned}
    \end{equation}
    Here $\mathcal{F}_{\mathrm{ReLU}}=\{ h(x) = \int \alpha(w,b) [wx-b]_+  \diff \mu \colon \alpha\in C(\supp(\mu)), \alpha \text{ is uniformly continuous}\}$, 
    $h^\prime(+\infty)=\lim_{x\rightarrow +\infty}h^\prime(x)$, and $h^\prime(-\infty)=\lim_{x\rightarrow -\infty}h^\prime(x)$. 
\end{theorem}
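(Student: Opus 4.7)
The plan is to repeat the three-step strategy (linearization, parameter-space bias, function-space description) from the proof of Theorem~\ref{thm:ib-md-unscaled}, isolating the places where replacing $\phi(\theta_k - \hat{\theta}_k)$ by $\phi(\theta_k)$ actually matters and showing that the compact support of $\mathcal{B}$ exactly compensates. The key observation is that, with high probability, every coordinate of $\hat\theta$ lies in a fixed compact set: $\|\hat w_k\|_2 = 1$, $|\hat b_k| \leq B$ by assumption, $\max_k|\hat a_k| = O(\sqrt{(\log n)/n}) \to 0$, and $|\hat d|$ is sub-Gaussian-bounded. On such a compact set $\phi''$ is continuous, hence bounded, bounded away from zero, and uniformly continuous; this is what was previously secured for free by evaluating $\phi''$ at $0$.

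For the linearization step, I would replay the bootstrap argument of Proposition~\ref{lazy-unscaled}, redefining $\beta = \sup_{z \in K} 1/\phi''(z)$ for $K$ a small enlargement of the initial compact set, and replacing $\phi''(\theta_k(t) - \hat\theta_k)$ by $\phi''(\theta_k(t))$ throughout. The telescoping decomposition \eqref{H-H0-decomp-unscaled-2} and the $o_p(1)$ bounds \eqref{potential-property-2}--\eqref{Ht-H0-unscaled} go through verbatim once the uniform modulus of continuity of $1/\phi''$ on $K$ is used. For the asymptotic kernel (Lemma~\ref{MNTK0}), only the output-weight block contributes at leading order because $\hat a_k, \hat a_k^2 \to 0$ and the output-bias term $1/(n\phi''(\hat d))$ vanishes; so $\operatorname{plim} H(0) = (\phi''(0))^{-1} G$ just as in the centered case, and the positive-definiteness argument for $G$ is unchanged.

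For the parameter-space bias, Theorem~\ref{guna-ib} still gives $\bar{\boldsymbol{a}} = \arg\min D_\Phi(\tilde{\boldsymbol{a}}, \hat{\boldsymbol{a}})$, but now a Taylor expansion around $\hat a_k$ (not $0$) yields
\begin{equation*}
D_\Phi(\tilde{\boldsymbol{a}}, \hat{\boldsymbol{a}}) = \sum_k \tfrac{\phi''(\hat a_k)}{2}(\tilde a_k - \hat a_k)^2 + R_k(\tilde a_k - \hat a_k)(\tilde a_k - \hat a_k)^2,
\end{equation*}
with a remainder $R_k$ that is uniformly $o_p(1)$ since $\phi''$ is uniformly continuous on a neighborhood of $0$ and $\max_k |\tilde a_k - \hat a_k| = O_p(n^{-1/2})$. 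Because $\max_k|\hat a_k| \to 0$, also $\max_k |\phi''(\hat a_k) - \phi''(0)| \to 0$, so the approximation \eqref{Dphi-uns} is recovered with the same $\phi''(0)$ prefactor. Proposition~\ref{prop:reduce-to-l2} and the subsequent reduction to the $L^2(\mu)$ problem \eqref{l2} then transfer without modification, as does the function-space derivation in Propositions~\ref{min-rep-unscaled} and \ref{prop:app-ib-unscaled}, giving the objective $\mathcal{G}_1 + \mathcal{G}_2 + \mathcal{G}_3$.

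It remains to rewrite $\mathcal{G}_3$ under the compact support assumption. Since $\supp(h'') \subset [-B, B]$, $h$ is affine outside $[-B, B]$ with $h'(B) = h'(+\infty)$ and $h'(-B) = h'(-\infty)$. Integrating by parts on $[-B, 0]$ and $[0, B]$ gives
\begin{equation*}
\int_{-B}^{B} h''(x)\,|x|\,dx = B\bigl(h'(+\infty) - h'(-\infty)\bigr) - \bigl(h(B) + h(-B)\bigr) + 2 h(0),
\end{equation*}
and subtracting $2h(0)$ yields the claimed closed form for $\mathcal{G}_3$. The main obstacle in the whole plan is precisely the need for uniform-in-$k$ control of $\phi''(\hat\theta_k)$ and its modulus of continuity along the trajectory; without compact support of $\mathcal{B}$, $\max_k |\hat b_k|$ would grow like $\sqrt{\log n}$ and no single modulus would control the telescoping bounds, which is exactly why the compact-support hypothesis is placed in the theorem statement.
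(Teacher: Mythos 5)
Your proposal is correct and follows essentially the same approach as the paper's proof: both reduce the uncentered case to the centered one by observing that, under compact support of $\mathcal{B}$, the entries of $\hat\theta$ (and hence $\theta(t)$ along the lazy trajectory) stay in a fixed compact set on which $\phi''$ is uniformly continuous and bounded above and below, and both replay the same three steps (linearization, Taylor expansion of $D_\Phi$ around $\hat a_k$ with uniform $o_p(1)$ remainder, and the function-space translation) followed by the integration-by-parts to recover the closed form of $\mathcal{G}_3$.
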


The proof of Theorem~\ref{thm:ib-md-unscaled-uncentered} mirrors the proof of Theorem~\ref{thm:ib-md-unscaled}, which we discussed in previous subsections. 
For brevity, below we only provide the proof sketch and highlight the differences brought by the change in assumptions.

First, we show Lemma~\ref{MNTK0} holds for uncentered potentials. Notice that 
\begin{align*}
    H_{ij}(0) &= \frac{1}{n}\sum_{k=1}^{3n+1} \nabla_{\theta_k} f(x_i;\hat{\theta}) \frac{1}{\phi^{\prime\prime}(\hat{\theta}_k)} \nabla_{\theta_k} f(x_j;\hat{\theta}) \\ 
        &= \frac{1}{n} \sum_{k=1}^n \frac{1}{\phi^{\prime\prime}(\hat{a}_k)} \sigma( \hat{w}_k^T x_i - \hat{b}_k) \sigma(\hat{w}_k^T x_j - \hat{b}_k) + \frac{1}{n\phi^{\prime\prime}(\hat{d})} \\ 
            &\quad + \frac{1}{n} \sum_{k=1}^n \hat{a}_k^2 \sigma^\prime(\hat{w}_k^T x_i - \hat{b}_k) \sigma^\prime(\hat{w}_k^T x_j - \hat{b}_k ) \Big(\frac{1}{\phi^{\prime\prime}(\hat{b}_k)}+\sum_{r=1}^d\frac{x_{i,r}x_{j,r}}{\phi^{\prime\prime}(\hat{w}_{k,r})}\Big). 
\end{align*}
By sub-Gaussian concentration inequality, we have $\max_{k\in[n]}|\hat{a}_k|=O_p(\sqrt{(\log n)/n})$. 
By noticing that $\max_{k\in[n],i\in[m]}\sigma( \hat{w}_k^T x_i - \hat{b}_k)=O(1)$ since $\mathcal{W}$ and $\mathcal{B}$ are bounded, we have
$$
\frac{1}{n}\sum_{k=1}^n\Big(\frac{1}{\phi^{\prime\prime}(\hat{a}_k)} - \frac{1}{\phi^{\prime\prime}(0)} \Big) \sigma( \hat{w}_k^T x_i - \hat{b}_k) \sigma(\hat{w}_k^T x_j - \hat{b}_k) \overset{P}{\rightarrow} 0,
$$
which implies $H_{ij}(0) \overset{P}{\rightarrow}H^\infty$. 
The remainder analysis proceeds as in the proof of Lemma~\ref{MNTK0}.

Next, we show Proposition~\ref{lazy-unscaled} and Proposition~\ref{linear-unscaled} hold. 
Note the key ingredients which are needed in proving these two results and which relate to the potentials are: (i) there exists a constant $\beta>0$ such that with high probability, $\|Q(0)\|_2\leq \beta$ and $\|Q(t)\|_2\leq \beta$ if $\|\theta(t)-\theta(0)\|_\infty =o_p(1)$; 
and (ii) $\|Q(t)-Q(0)\|_2 = o_p(1)$ if $\|\theta(t)-\theta(0)\|_\infty =o_p(1)$, 
where we use the same notation as in Proposition~\ref{lazy-unscaled}. 
We verify these two properties for uncentered potentials. 
Since $\mathcal{B}$ is compactly supported, we have with high probability, $\max_{k\in[3n+1]}|\theta_k(0)| < C$ for some constant $C$ independent of $n$. 
Since $1/\phi^{\prime\prime}(\cdot)$ is uniformly continuous on $[-2C,2C]$, we have 
$$
\|Q(t)-Q(0)\|_2=\max_{k\in[3n+1]} \Big|\frac{1}{\phi^{\prime\prime}(\theta_k(t))} - \frac{1}{\phi^{\prime\prime}(\theta_k(0))}  \Big| = o_p(1). 
$$
given $\|\theta(t)-\theta(0)\|_\infty =o_p(1)$. 
Meanwhile, notice that
$$
\|Q(t)\|_2
= \max_{k\in[3n+1]} \frac{1}{\phi^{\prime\prime}(\theta_k(t))} 
\leq \max_{k\in[3n+1]} \frac{1}{\phi^{\prime\prime}(\theta_k(0))} +\Big| \frac{1}{\phi^{\prime\prime}(\theta_k(t))}-\frac{1}{\phi^{\prime\prime}(\theta_k(0))} \Big|. 
$$
By the continuity of $\phi^{\prime\prime}$ we have $\|Q(t)\|_2$ can be bounded by some constant $\beta>0$ with high probability, given $\|\theta(t)-\theta(0)\|_\infty =o_p(1)$.

Then, we show Proposition~\ref{prop:reduce-to-l2} holds. 
Notice that for uncentered potentials, 
\begin{align*}
    D_\Phi(\ba, \ba(0)) &= \sum_{k\in[n]}\Big( \phi(a_k) - \phi(a_k(0)) - \phi^\prime(a_k(0))(a_k-a_k(0)) \Big)\\
    &=\sum_{k\in[n]} \Big(\frac{\phi^{\prime\prime}(a_k(0))}{2}+R(a_k-a_k(0))\Big)(a_k-a_k(0))^2, 
\end{align*}
where $R$ captures the error produced by Taylor approximation and satisfies $\lim_{z\rightarrow 0}R(z)=0$. 
Consider $L_n(\alpha) = \int \upsilon(w,b) \big(\alpha(w,b)\big)^2 \diff \mu_n$ where $\upsilon(w_k(0),b_k(0))=\frac{\phi^{\prime\prime}(a_k(0))}{2}+R(a_k-a_k(0))$ for $k\in[n]$. 
Hence, if $\bar{\ba}$ solves \eqref{ib-linear-raw}, $\bar{\alpha}_n$, which satisfies $\bar{\alpha}_n(w_k(0),b_k(0))=\bar{a}_k-a_k(0)$ for $k\in[n]$, solves the following problem: 
$$
\begin{aligned}
\min_{\alpha\in \Theta_{\mathrm{ReLU}}} L_n(\alpha)\quad\ \st\ \inner{\alpha}{\sigma}_n = y_i - f(x_i,\hat{\theta}). 
\end{aligned}
$$
Let $\bar{\alpha}_n^\prime$ be the solution to the following problem: 
\begin{equation*}
\begin{aligned}
&{\min}_{\alpha \in \Theta_{\mathrm{ReLU}}} L_n^\prime(\alpha) \quad \st\ \langle \alpha, \sigma_i \rangle_n = y_i-f(x_i, \hat{\theta}), \ i\in[m],\\   
&\text{where } L_n^\prime(\alpha) \triangleq  \int \frac{\phi^{\prime\prime}(0)}{2} \big( \alpha(w,b)  \big)^2 \diff\mu_n. 
\end{aligned}
\end{equation*}
Notice that
\begin{equation*}
    \begin{aligned}
        |L_n(\alpha)-L_n^\prime(\alpha)| &= |\int \Big( \upsilon(w,b)-\frac{\phi^{\prime\prime}(0)}{2}\Big)\alpha^2(w,b) \diff \mu_n|\\
        &\leq \max_{k\in [n]} \Big| \frac{\phi^{\prime\prime}(a_k(0))}{2} - \frac{\phi^{\prime\prime}(0)}{2} +R(a_k-a_k(0)) \Big| \cdot \|\alpha\|_n^2 . 
    \end{aligned}
\end{equation*}
Following the same analysis as in the proof of Proposition~\ref{prop:reduce-to-l2}, 
we have with high probability $\max_{k\in[n]} \bar{a}_k - a_k(0) = o_p(1)$, 
$\max_{k\in[n]} \bar{a}^\prime_k - a_k(0) = o_p(1)$, 
$\|\bar{\alpha}_n\|_n^2 = O_p(1)$, and $\|\bar{\alpha}_n^\prime\|_n^2 = O_p(1)$. 
Meanwhile, noticing that $\max_{k\in[n]}|a_k(0)|=O_p(\sqrt{(\log n)/n})$ holds with high probability and by the continuity of $\phi^{\prime\prime}$, we have 
$\max_{k\in[n]}|\phi^{\prime\prime}(a_k(0)) - \phi^{\prime\prime}(0)|=o_p(1)$. 
Therefore, we have
\begin{equation*}
|L_n(\bar{\alpha}_n)-L^\prime_n(\bar{\alpha}_n)|, \ |L_n(\bar{\alpha}_n^\prime)-L^\prime_n(\bar{\alpha}_n^\prime)| = o_p(1). 
\end{equation*}
The remainder analysis proceeds as in the proof of Proposition~\ref{prop:reduce-to-l2}.

So far, we have shown that the training converges to zero training error, the lazy training occurs, and the implicit bias in the infinite width limit can be exactly described as in Proposition~\ref{prop:reduce-to-l2}. 
Then Proposition~\ref{prop:app-ib-unscaled} directly applies and gives the claimed results.

\section{Proof of Theorem~\ref{thm:ib-md-scaled}}
\label{app:ib-scaled}

In this section, we present the proof of Theorem~\ref{thm:ib-md-scaled}. We follow the strategy that we introduced in Section~\ref{sec:genral-framework} and break down the main Theorem into several intermediate results. 
Specifically, in Proposition~\ref{lazy-scaled} we show the mirror flow training converges to zero training error and is in the lazy regime. 
In Proposition~\ref{inf-scaled}, we characterize the implicit bias of mirror flow with scaled potentials in the parameter space. 
In Proposition~\ref{var-sol-scaled}, we obtain the function space description of the implicit bias.

We use $\operatorname{plim}_{n\rightarrow\infty} X_n=X$ or $X_n\overset{P}{\rightarrow}X$ to denote that random variables $X_n$ converge to $X$ in probability. 
We use the notations $O_p$ and $o_p$ to denote the standard mathematical orders in probability (see Appendix~\ref{app:ib-md-unscaled} for a detailed description).

\subsection{Linearization}
\label{app:s-lin}

We show that for mirror flow with scaled potentials, if the Hessian of the potential function is flat enough, the mirror flow converges to zero training error and is in the lazy regime (Proposition~\ref{lazy-scaled}), and the network obtained by training all parameters can be well-approximated by that obtained by training only output weights (Proposition~\ref{linear-scaled}). 
Further, we show that if the potential is induced a univariate function of the form $\phi(z)=\frac{1}{\omega+1}(\psi(z)+\omega z^2)$ with convex function $\psi$ and positive constant $\omega$ as in Assumption~\ref{assump-potential-scaled}, one can always choose sufficiently large $\omega$ to obtain the sufficient ``flatness'' (Proposition~\ref{condition-a}).

Note that under mirror flow with scaled potential, the evolution of the network output is governed by the following matrix: 
$$
H(t) = \frac{1}{n} J_\theta \boldsymbol{f}(t) (\nabla^2\Phi(\theta))^{-1} J_\theta \boldsymbol{f}(t)^T, 
$$
where, by Assumption~\ref{assump-potential-scaled}, the inverse Hessian matrix takes the form of
$$
(\nabla^2\Phi(\theta(t)))^{-1} = \operatorname{Diag}\Big( \frac{1}{\phi^{\prime\prime}(n(\theta_k - \hat{\theta}))}\Big). 
$$
It's clear that the behavior of the Hessian map is determined by the behavior of $\phi$. In the following analysis, we consider the following functions to measure the flatness of the Hessian map: for $C\geq 0$, 
$$
\kappa_1(C) = \inf_{z \in[-C,C]} (\phi^{\prime\prime}(z))^{-1}, \ 
\kappa_2(C) = \sup_{z \in[-C,C]} (\phi^{\prime\prime}(z))^{-1}, \ 
\kappa_3(C) = \sup_{z\in[-C,C]}|\phi^{\prime\prime\prime}(z)|. 
$$
Note function $\kappa_1(C)$ characterizes the smallest possible eigenvalue of $(\nabla^2\Phi(\theta))^{-1}$ when $\theta$ satisfies $\|\theta - \hat{\theta}\|_\infty \leq Cn^{-1}$. Similarly, $\kappa_2$ characterizes the largest possible eigenvalue. Hence, $\kappa_1/\kappa_2$ captures the largest possible condition number of the inverse Hessian matrix. 
By definition, $\kappa_1/\kappa_2$ is monotonically decreasing. 
In the following Proposition, we show that if $\kappa_1/\kappa_2$ decays with a moderate rate such that it takes a larger value than $L/C$ for a particular constant $L$ determined by the training data and the network architecture at some point $C$, the mirror flow converges to zero training error and it is in the lazy training regime. 

\begin{proposition}\label{lazy-scaled}
    Consider a single-hidden-layer univariate network \eqref{model} with ReLU or absolute value activation function, $d$ input units and $n$ hidden units. 
    Assume $n$ is sufficiently large. 
    Consider parameter initialization \eqref{init}. Assume $\mathcal{B}$ has a compact support. 
    Given any finite training data set $\{(x_i, y_i)\}_{i=1}^m$ that satisfies $x_i \neq x_j$ when $i \neq j$ and $\{\|x_i\|_2\}_{i=1}^m \subset \supp(p_{\mathcal{B}})$, consider mirror flow \eqref{traj-param} with 
    learning rate $\eta = \eta_0/n$, $\eta_0>0$ and potential function $\Phi$. 
    Assume $\Phi$ can be written in the form of 
    $\Phi(\theta) = \frac{1}{n^2} \sum_{k=1}^{3n+1} \phi\big( n(\theta_k-\hat{\theta}_k ) \big)$,  
    where $\phi$ is a real-valued $C^3$-smooth function on $\R$ and has strictly positive second derivatives everywhere. 
    Assume there exists constant $C_1>0$ satisfying
    \begin{equation}\label{convergence-condtion-scaled}
        \frac{\kappa_1(C_1)}{\kappa_2(C_1)} \geq \frac{8 \sqrt{m} (B_x+1) K \|\boldsymbol{f}(0)-\boldsymbol{y}\|_2}{\lambda_0} \cdot \frac{1}{C_1}, 
    \end{equation}
    where $\lambda_0 = \operatorname{plim}_{n\rightarrow\infty}\lambda_{\min}(\frac{1}{n}J_{\theta} \boldsymbol{f}(0) J_{\theta} \boldsymbol{f}(0)^T)$, $m$ is the size of the training data set, $B_x=\max_{i\in[m]} \|x_i\|_2$, and $K$ is a constant determined by $\mathcal{B}$ and $\mathcal{A}$.  
    Then with high probability over random initialization, 
    $$
    \sup_{t\geq 0}\|\theta(t)-\hat{\theta}\|_\infty \leq C_1 n^{-1}, \ \inf_{t\geq 0}\lambda_{\min}(H(t))\geq \frac{ \kappa_1(C_1) \lambda_0}{4}. 
    $$
    Moreover,
    $$
    \|\boldsymbol{f}(t)-\boldsymbol{y}\|_2 \leq \exp\Big(- \frac{ \kappa_1(C_1) \lambda_0}{4} \eta_0 t\Big) \|\boldsymbol{f}(0)-\boldsymbol{y}\|_2, \ \forall t\geq 0. 
    $$
\end{proposition}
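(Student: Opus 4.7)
My plan is to follow the same continuity/bootstrap strategy as Proposition~\ref{lazy-unscaled}, but to track tighter parameter-drift bounds that reflect the extra $1/n$ factor sitting inside the scaled potential. Define
\[
\tau(t)=n\|\theta(t)-\hat{\theta}\|_\infty,
\]
and let $t_1=\inf\{t\ge 0:\tau(t)>C_1\}$. Assuming, for a contradiction, that $t_1<\infty$, continuity forces $\tau(t_1)=C_1$. The goal is to prove the three claimed bounds on $[0,t_1]$ and then show that the output-weight drift is in fact strictly less than $C_1/n$ at $t_1$, contradicting $\tau(t_1)=C_1$ and forcing $t_1=+\infty$.

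On $[0,t_1]$, the hypothesis $\tau(t)\le C_1$ says $|n(\theta_k(t)-\hat{\theta}_k)|\le C_1$ for every coordinate $k$, so the diagonal entries of $(\nabla^2\Phi(\theta(t)))^{-1}$ lie in the interval $[\kappa_1(C_1),\kappa_2(C_1)]$. Factoring the potential out of $H(t)$ yields
\[
H(t)\succeq \kappa_1(C_1)\cdot \tfrac{1}{n}J_\theta\boldsymbol{f}(t)J_\theta\boldsymbol{f}(t)^T.
\]
Since the raw drift is only $O(n^{-1})$, a straightforward adaptation of the $\|H(t)-H(0)\|_2$ estimate in Proposition~\ref{lazy-unscaled} (with smaller error terms, because the drift is smaller by a factor $n^{-1/2}$) gives $\lambda_{\min}(\tfrac{1}{n}J_\theta\boldsymbol{f}(t)J_\theta\boldsymbol{f}(t)^T)\ge \lambda_0/2$ with high probability, via Lemma~\ref{MNTK0}. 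Hence $\lambda_{\min}(H(t))\ge \kappa_1(C_1)\lambda_0/4$, and Gr\"onwall applied to \eqref{traj-func} gives the claimed exponential decay of $\|\boldsymbol{f}(t)-\boldsymbol{y}\|_2$ with rate $C_2=\kappa_1(C_1)\lambda_0/4$.

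To close the bootstrap I would bound the output-weight drift, which is the tightest. From the mirror flow
\[
|\dt a_k(t)|\le \tfrac{\eta_0}{n}\,\kappa_2(C_1)\,\|J_{a_k}\boldsymbol{f}(t)\|_2\,\|\boldsymbol{f}(t)-\boldsymbol{y}\|_2,
\]
with $\|J_{a_k}\boldsymbol{f}(t)\|_2\le 2K(B_x+1)\sqrt{mn}$ by the same sub-Gaussian concentration used to derive \eqref{uns-a-jac}. Integrating against the exponential decay of $\|\boldsymbol{f}-\boldsymbol{y}\|_2$ produces
\[
|a_k(t)-\hat{a}_k|\le \frac{8K(B_x+1)\sqrt{m}\,\|\boldsymbol{f}(0)-\boldsymbol{y}\|_2}{\lambda_0\,n}\cdot\frac{\kappa_2(C_1)}{\kappa_1(C_1)},
\]
and assumption~\eqref{convergence-condtion-scaled} is precisely what makes this bound strictly less than $C_1/n$. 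The input weights, input biases and output bias carry an additional factor $\max_k|\hat{a}_k|=O_p(n^{-1/2})$ (or the $1/\sqrt{n}$ from $\|J_d\boldsymbol{f}\|_2$) in their Jacobian norms, so their contribution to $\tau$ is automatically $O_p(n^{-1/2})$ smaller and within $C_1/n$ for large $n$. This yields $\tau(t_1)<C_1$, the required contradiction.

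The main obstacle, absent in the unscaled case, is that the argument $n(\theta_k(t)-\hat{\theta}_k)$ of $\phi^{\prime\prime}$ is $\Theta(1)$ throughout training even though the raw drift $|\theta_k(t)-\hat{\theta}_k|$ is only $O(n^{-1})$. Consequently $(\nabla^2\Phi(\theta(t)))^{-1}$ is \emph{not} close to its initial value $(\phi^{\prime\prime}(0))^{-1}I$, and its eigenvalues can move by order-one amounts along the trajectory; the best uniform control one has is the envelope $[\kappa_1(C_1),\kappa_2(C_1)]$. This is what forces both the effective convergence rate and the parameter-drift estimate to degrade by the condition-number factor $\kappa_2(C_1)/\kappa_1(C_1)$, and is why the bootstrap closes only under the geometric condition~\eqref{convergence-condtion-scaled} relating that ratio to $C_1$.
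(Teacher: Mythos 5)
Your overall strategy — the continuity/bootstrap set-up with $\tau(t) = n\|\theta(t)-\hat\theta\|_\infty$, factoring $(\nabla^2\Phi)^{-1}$ out of $H(t)$ to get $\lambda_{\min}(H(t)) \geq \kappa_1(C_1)\lambda_{\min}(n^{-1}J J^T)$, and applying Gr\"onwall — matches the paper's proof, and your closing paragraph correctly identifies that the new analytic difficulty in the scaled case is that $n(\theta_k(t)-\hat\theta_k)$ stays $\Theta(1)$, so the inverse Hessian is never close to $(\phi''(0))^{-1}I$ and only the envelope $[\kappa_1(C_1),\kappa_2(C_1)]$ is available.

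However, there is a concrete, load-bearing error in your drift estimate. You write $\|J_{a_k}\boldsymbol{f}(t)\|_2 \le 2K(B_x+1)\sqrt{mn}$, citing the sub-Gaussian concentration from the unscaled case \eqref{uns-a-jac}. That bound has a $\sqrt{n}$ in it because in Proposition~\ref{lazy-unscaled} the biases are merely sub-Gaussian, so $\|\boldsymbol{b}(0)\|_\infty = O(\sqrt{n})$. Plugging $\|J_{a_k}\boldsymbol{f}\|_2 = O(\sqrt{mn})$ into your own differential inequality and integrating against $e^{-C_2\eta_0 t}$ gives a drift of order
\[
\frac{4\kappa_2(C_1)}{n\,\kappa_1(C_1)\lambda_0}\cdot\bigl(2K(B_x+1)\sqrt{mn}\bigr)\|\boldsymbol{f}(0)-\boldsymbol{y}\|_2 = \Theta(n^{-1/2}),
\]
not the $\Theta(n^{-1})$ you wrote on the next line; the final bound you state does not follow from the bound you cite, and with an $\Theta(n^{-1/2})$ drift the bootstrap $n\|\theta(t)-\hat\theta\|_\infty \le C_1$ cannot close. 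The fix is precisely the compact-support hypothesis on $\mathcal{B}$ that the proposition grants you (and which the unscaled proposition does not have): it gives $\|\boldsymbol{b}(0)\|_\infty \le K$ with no $\sqrt{n}$, hence $|\sigma_{ik}(t)| \le 2K(B_x+1)$ and $\|J_{a_k}\boldsymbol{f}(t)\|_2 \le 2\sqrt{m}\,K(B_x+1)$; this is \eqref{s-a-jac} in the paper and is where the $n^{-1}$ drift for the output weights comes from. You should also note that with this correction all coordinate groups ($a_k$, $w_k$, $b_k$, $d$) have $O(\sqrt{m})$ Jacobians and hence the \emph{same} $O(n^{-1})$ drift order, so the single scaling $\tau(t)=n\|\theta(t)-\hat\theta\|_\infty$ tracks all of them uniformly — unlike the unscaled case where $\tau$ has to be built with different powers of $n$ per block.
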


\begin{proof}{[Proof of Proposition~\ref{lazy-scaled}]}
    To ease the notation, let $J(t) = J_{\theta} \boldsymbol{f}(t)$ and $Q(t) = (\nabla^2\Phi(\theta(t)))^{-1}$. 
    For $i\in[m]$ and $k\in[n]$, let $\sigma_{ik}(t) = \sigma(w_k(t)^Tx_i - b_k(t))$ and $\sigma^\prime_{ik}(t) = \sigma^\prime(w_k(t)^Tx_i - b_k(t))$.

    Assume $C_1$ satisfies \eqref{convergence-condtion-scaled}. 
    Consider the map $\tau\colon [0,\infty)\rightarrow \mathbb{R},  \tau(t) = n\|\theta(t)-\hat{\theta}\|_\infty$. 
    Consider $t_1=\inf \{t\geq 0 \colon   \tau(t)> C_1 \}$. 
    Clearly, $t_1>0$. Now we aim to show $t_1=+\infty$. Assume $t_1 < \infty$ for the sake of contradiction. Since $\tau(\cdot)$ is continuous, $\tau(t_1)=C_1$ and $\tau(t)\leq C_1$ for $t\in[0,t_1]$. 
    
    In the following, let $t$ be any point in $[0,t_1]$. 
    Since $n\|\theta(t)-\hat{\theta}\|_\infty \leq C_1$ we have
    $$
    \lambda_{\min}(Q(t)) = \min_k\Big\{ \frac{1}{\phi^{\prime\prime}(n (\theta(t)-\hat{\theta}) )} \Big\} \geq \kappa_1(C_1) > 0. 
    $$

    By Lemma~\ref{MNTK0}, with sufficiently large $n$ and with high probability, $\lambda_{\min} ( n^{-1} J(0)J(0)^T ) > \frac{\lambda_0}{2}$. 
    For $i,j\in[m]$ we have 
    \begin{equation}
    |(n^{-1}J(t)J(t)^T)_{ij} -(n^{-1}J(0)J(0)^T)_{ij}| \leq I_1 + I_2 + I_3,
    \end{equation}
    where
    \begin{equation}
    \begin{dcases}
        I_1 = \frac{1}{n}\sum_{k=1}^n | \sigma_{ik}(t) \sigma_{jk}(t)  - \sigma_{ik}(0) \sigma_{jk}(0)|;
        \\
        I_2 = |x_i^Tx_j|\frac{1}{n}\sum_{k=1}^n |a_k(t)^2 \sigma^\prime_{ik}(t)\sigma^\prime_{jk}(t) - a_k(0)^2\sigma^\prime_{ik}(0)\sigma^\prime_{jk}(0)|; 
        \\
        I_3 = \frac{1}{n}\sum_{k=1}^n \Big|a_k(t)^2 \sigma^\prime_{ik}(t)\sigma^\prime_{jk}(t) - a_k(0)^2\sigma^\prime_{ik}(0)\sigma^\prime_{jk}(0)|. 
    \end{dcases}
    \end{equation}
    Since $\mathcal{A}$ is sub-Gaussian random variable, and $\mathcal{B}$ are bounded random variable, 
    there exists $K>1$ such that with high probability, 
    \begin{equation}\label{s-wba}
        \|\boldsymbol{b}(0)\|_\infty \leq K, \quad \|\boldsymbol{a}(0)\|_\infty \leq \|\boldsymbol{a}(0)\|_2 \leq K.         
    \end{equation}
    Then we have for any $i,j\in[m], k\in[n]$, 
    $$
    |\sigma_{ik}(t)| \leq |w_k(t)^Tx_i| + |b_k(t)| \leq  (B_x + 1) 2K, 
    $$
    and
    $$
    |\sigma_{ik}(t)-\sigma_{jk}(t)| \leq \|w_k(t)-w_k(0)\|_2\|x_i\|_2 + |b_k(t)-b_k(0)| \leq \sqrt{d}(B_x+1) C_1 n^{-1}. 
    $$    
    Therefore, we have
    \begin{align*}
        I_1 &\leq \frac{1}{n} \sum_{k=1}^n \Big( |\sigma_{ik}(t) - \sigma_{ik}(0)| |\sigma_{jk}(t)|  + |\sigma_{ik}(0)| |\sigma_{jk}(t) - \sigma_{jk}(0)| \Big)
        \\
        &\leq \frac{1}{n}\sum_{k\in[n]} O_p(1) o_p(1) + O_p(1) o_p(1) 
        \\     
        &= o_p(1). 
    \end{align*}
    Noticing $\|\boldsymbol{a}(0)\|_2 = O_p(1)$, 
    for $I_2$ we have 
    \begin{align*}
        I_2 &\leq \frac{B_x^2}{n}\sum_{k\in[n]}\Big( a_k(0)^2 |\sigma^\prime_{ik}(t)\sigma^\prime_{jk}(t) - \sigma^\prime_{ik}(0)\sigma^\prime_{ik}(0)| + |a_k(t)^2 - a_k(0)^2| \Big)
        \\
        & = \frac{B_x^2}{n}  \Big( 2 \|\boldsymbol{a}(0)\|_2^2 
        + \sum_{k\in[n]} o_p(1)\Big)\\
        &= o_p(1). 
    \end{align*}
    Using a technique similar to that applied for $I_2$ above, we have $I_3 = o_p(1)$. 
    By applying a union bound we have 
    $\|n^{-1}J(t)J(t)^T - n^{-1}J(0)J(0)^T\|_2 = o_p(1)$. 
    Then for large enough $n$, we have $\lambda_{\min}(n^{-1}J(t)J(t)^T) \geq \lambda_0/4$. 
    Therefore, we have 
    \begin{equation}\label{lambda-min-Ht-scaled}
        \lambda_{\min}(H(t)) \geq \lambda_{\min}(n^{-1}J(t)J(t)^T) \lambda_{\min}(Q(t)) > \frac{\kappa_1(C_1) \lambda_0}{4}.
    \end{equation}
    Then we have
    $$\dt \|\boldsymbol{f}(t)-\boldsymbol{y}\|_2^2 \leq -2\eta_0 \frac{\kappa_1(C_1) \lambda_0}{4} \|\boldsymbol{f}(t)-\boldsymbol{y}\|_2^2$$
    and
    \begin{equation}\label{ft-y-scaled}
        \|\boldsymbol{f}(t)-\boldsymbol{y}\|_2 \leq \exp(- \frac{\kappa_1(C_1) \lambda_0}{4} \eta_0 t)\|\boldsymbol{f}(0) - \boldsymbol{y}\|_2. 
    \end{equation}

    Notice that, 
    \begin{equation}\label{s-a-jac}
        \|J_{a_k}\boldsymbol{f}(t)\|_2 \leq \sqrt{m}\|J_{a_k}\boldsymbol{f}\|_\infty \leq \sqrt{m} (B_x\|w_k(t)\|_2+|b_k(t)|) \leq 2\sqrt{m} K(B_x+1). 
    \end{equation}
    Then with large enough $n$ and for all $k\in[n]$, we have
    \begin{equation}\label{dtak-scaled}
    \begin{aligned}
        |\dt a_k(t)| & \leq \frac{\eta_0}{n} \|Q(t)\|_2 \|J_{a_k}\boldsymbol{f}(t)\|_2 \|\boldsymbol{f}(t)-\boldsymbol{y}\|_2\\
        &\leq \frac{\eta_0}{n} \cdot \kappa_2(C_1) \cdot 2\sqrt{m} K(B_x+1) \cdot \exp(- \frac{\kappa_1(C_1) \lambda_0}{4} \eta_0 t)\|\boldsymbol{f}(0) - \boldsymbol{y}\|_2. 
    \end{aligned}
    \end{equation}
    and hence 
    \begin{equation}\label{ak-scaled}
    \begin{aligned}
        |a_k(t) -a_k(0)| &\leq \int_0^t |\dt a_k(s)|\diff s\\
        &< \frac{8 \kappa_2(C_1)  \sqrt{m} K(B_x+1) \|\boldsymbol{f}(0) - \boldsymbol{y}\|_2}{ \kappa_1(C_1) \lambda_0} n^{-1}\\
        &\leq C_1 n^{-1}. 
    \end{aligned}
    \end{equation}
    
    Similarly, by noticing that with large enough $n$, 
    \begin{equation}\label{s-wbd-jac}
    \begin{dcases}
        \|J_{w_{k,r}}\boldsymbol{f}(t)\|_2 \leq \sqrt{m} B_x |a_k(t)| \leq 2\sqrt{m} B_x K, \quad \forall k\in[n], r\in[d];\\
        \|J_{b_k}\boldsymbol{f}(t)\|_2 \leq \sqrt{m} |a_k(t)| \leq 2\sqrt{m} K, \quad \forall k\in[n];\\
        \|J_d \boldsymbol{f}(t)\|_2 = \sqrt{m}. 
    \end{dcases}
    \end{equation}
    Comparing \eqref{s-wbd-jac} to \eqref{s-a-jac} and using a similar computations as in \eqref{dtak-scaled} and \eqref{ak-scaled}, 
    we conclude that $\tau(t_1) < C_1 $, which yields a contradiction. 
    It then follows that \eqref{lambda-min-Ht-scaled} and \eqref{ft-y-scaled} hold for $t\geq 0$. This concludes the proof. 
\end{proof}

We point out that Proposition~\ref{lazy-scaled} can be extended to the case where only output weights are trained, without requiring any additional efforts. 
We present the results in the following proposition. 

\begin{proposition}\label{lazy-scaled-lin}
    Assume the same assumptions in Proposition~\ref{lazy-scaled}. 
    Assume constant $C_1>0$ satisfies \eqref{convergence-condtion-scaled}. 
    Then with high probability over random initialization, for any $t\in[0,\infty)$,
    $$
    \|\ttheta(t)-\hat{\theta}\|_\infty  \leq C_1 n^{-1}, \quad \|\tbf(t)-\boldsymbol{y}\|_2 \leq \exp(- \frac{\kappa_1(C_1) \lambda_0}{4} \eta_0 t) \|\boldsymbol{f}(0) - \boldsymbol{y}\|_2.
    $$ 
\end{proposition}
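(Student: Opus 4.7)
The argument mirrors the bootstrap strategy of Proposition~\ref{lazy-scaled} and is in fact simpler because only the output weights evolve. My first observation is that under the dynamics \eqref{traj-param-lin} the input weights, biases, and output bias are frozen, so the Jacobian $J_{\tba}\tbf(t)$ does not depend on $t$: its $(i,k)$-entry equals $\sigma(\hat{w}_k^\top x_i - \hat{b}_k)$. Consequently
\begin{equation*}
    \tilde{H}(t) = n^{-1} J_{\tba}\tbf(0)\,(\nabla^2\Phi(\tba(t)))^{-1}\, J_{\tba}\tbf(0)^\top,
\end{equation*}
and all time-dependence of the kernel enters only through the inverse Hessian.

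The next step is to set $\tau(t) = n\|\tba(t) - \hat{\ba}\|_\infty$, define $t_1 = \inf\{t \geq 0 : \tau(t) > C_1\}$, and show $t_1 = \infty$ by contradiction. On $[0, t_1]$ the definitions of $\kappa_1, \kappa_2$ give $\lambda_{\min}((\nabla^2\Phi(\tba(t)))^{-1}) \geq \kappa_1(C_1)$ and $\|(\nabla^2\Phi(\tba(t)))^{-1}\|_2 \leq \kappa_2(C_1)$. Combining with the high-probability lower bound $\lambda_{\min}(n^{-1} J_{\tba}\tbf(0) J_{\tba}\tbf(0)^\top) \geq \lambda_0/2$ (which Lemma~\ref{MNTK0} delivers because it identifies the limits of $H(0)$ and $\tilde{H}(0)$), one obtains $\lambda_{\min}(\tilde{H}(t)) \geq \kappa_1(C_1)\lambda_0/4$ throughout $[0,t_1]$. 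Plugging this into \eqref{traj-func-lin} yields the exponential contraction $\|\tbf(t) - \boldsymbol{y}\|_2 \leq \exp(-\tfrac{\kappa_1(C_1)\lambda_0}{4}\eta_0 t)\|\boldsymbol{f}(0) - \boldsymbol{y}\|_2$ on $[0, t_1]$.

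To close the bootstrap I would bound the per-coordinate velocity
\begin{equation*}
    |\dt \tilde{a}_k(t)| \leq \frac{\eta_0}{n}\, \kappa_2(C_1)\, \|J_{\tilde{a}_k}\tbf(0)\|_2\, \|\tbf(t) - \boldsymbol{y}\|_2,
\end{equation*}
use the sub-Gaussian bound \eqref{s-wba} and the estimate $\|J_{\tilde{a}_k}\tbf(0)\|_2 \leq 2\sqrt{m}K(B_x+1)$ from \eqref{s-a-jac}, and then integrate the resulting exponential in $t$. Hypothesis \eqref{convergence-condtion-scaled} is exactly what makes the final quantity strictly smaller than $C_1/n$ uniformly in $k$, giving $\tau(t_1) < C_1$ and the desired contradiction. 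It then follows that $t_1 = \infty$ and both estimates in the proposition hold globally in $t$.

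No new obstacle arises beyond those already handled in Proposition~\ref{lazy-scaled}; the only subtle point is ensuring that the spectral lower bound on $n^{-1} J_{\tba}\tbf(0) J_{\tba}\tbf(0)^\top$ holds with the same $\lambda_0$, which is precisely the content of Lemma~\ref{MNTK0}. The constancy of the Jacobian removes the need for the $I_1, I_2, I_3$ perturbation estimates used in the full-parameter proof, shortening the argument considerably.
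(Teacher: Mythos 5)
Your proposal is correct and follows essentially the same bootstrap strategy that the paper uses for Proposition~\ref{lazy-scaled}; the paper omits the proof of Proposition~\ref{lazy-scaled-lin} explicitly because, as it says, the extension requires no additional effort, and you have correctly identified that the work reduces further since $J_{\tba}\tbf(t)$ is time-independent, eliminating the $I_1, I_2, I_3$ perturbation bounds. The only detail worth flagging is that the strict inequality $\tau(t_1) < C_1$ needed for the contradiction comes not from \eqref{convergence-condtion-scaled} alone (which gives only $\leq$) but from the fact that the integral over $[0,t_1]$ is strictly less than the integral over $[0,\infty)$; this is the same subtlety handled in the paper's proof of \eqref{ak-scaled}.
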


In the following result we show that training only the output weights well approximates training all parameters.

\begin{proposition}\label{linear-scaled}
    Assume the same assumptions as in Proposition~\ref{lazy-scaled}. 
    Assume there exists a constant $C_1$ which satisfies \eqref{convergence-condtion-scaled} and 
    \begin{equation}\label{linearization-condition-scaled}
        \kappa_2(C_1) \kappa_3(C_1) 
        \Big( \lambda_0 \frac{\kappa_2(C_1)}{\kappa_1(C_1)} + 10  m(B_x+1)^2 K^2 \Big(\frac{\kappa_2(C_1)}{\kappa_1(C_1)}\Big)^2 \Big) \leq \frac{(\lambda_0)^2}{16(B_x + 1)K \sqrt{m}\|\boldsymbol{f}(0)-\boldsymbol{y}\|_2}. 
    \end{equation}
    Then for any $x\in\R^d$ and with high probability over initialization, 
    $$
    \lim_{n\rightarrow \infty}\sup_{t\geq 0}\|f(x, \theta(t))-f(x,\ttheta(t))\|_2 = 0. 
    $$
\end{proposition}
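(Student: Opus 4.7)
The plan is to mirror the strategy used for Proposition~\ref{linear-unscaled}, now incorporating a bootstrap argument that tracks the rescaled parameter gap $v_k(t):=n(a_k(t)-\tilde a_k(t))$ and $V(t):=\sup_{k\le n,\,s\le t}|v_k(s)|$. The quantity $V(t)$ governs both $\|H(t)-\tilde H(t)\|_2$ and, through a Gronwall-type estimate, $\|r(t)\|_2:=\|\boldsymbol{f}(t)-\tbf(t)\|_2$, so it will be the main object to control; condition~\eqref{linearization-condition-scaled} is exactly what closes the feedback loop between $V$ and $\|r\|_2$.

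First I would invoke Propositions~\ref{lazy-scaled} and \ref{lazy-scaled-lin} to obtain $\|\theta(t)-\hat\theta\|_\infty,\|\ttheta(t)-\hat\theta\|_\infty\le C_1/n$ uniformly in $t\ge 0$. This confines the rescaled arguments $n(\theta_k-\hat\theta_k)$ and $n(\tilde\theta_k-\hat\theta_k)$ to $[-C_1,C_1]$, so $(\phi''(\cdot))^{-1}$ lies in $[\kappa_1(C_1),\kappa_2(C_1)]$ with Lipschitz constant at most $\kappa_3(C_1)/\kappa_1(C_1)^2$, and $\|\tbf(t)-\boldsymbol{y}\|_2$ decays as $e^{-ct}\|\boldsymbol{f}(0)-\boldsymbol{y}\|_2$ with $c=\eta_0\kappa_1(C_1)\lambda_0/4$. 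I would then decompose $H(t)-\tilde H(t)$ along the lines of \eqref{H-H0-decomp-unscaled}: the contributions to $H(t)$ from input weights, input biases, and output bias (absent from $\tilde H$) are each $O(1/n)$ because $a_k(t)^2=O(1/n)$; the output-weight piece, after splitting as $(\phi''(u_k))^{-1}(\sigma_{ik}(t)\sigma_{jk}(t)-\sigma_{ik}(0)\sigma_{jk}(0))+\big((\phi''(u_k))^{-1}-(\phi''(\tilde u_k))^{-1}\big)\sigma_{ik}(0)\sigma_{jk}(0)$, is bounded by $O(1/n)+C_\star V(t)$ with $C_\star\propto m(B_x+1)^2K^2\kappa_3(C_1)/\kappa_1(C_1)^2$. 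Plugging this into the Gronwall estimate from \eqref{rt-bound-unscaled} gives $\sup_t\|r(t)\|_2\le C_r(1/n+V^*)$ whenever $V(t)\le V^*$ on $[0,t]$.

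The key step is then a self-bounding inequality for $V$. Differentiating $v_k$ and using $n\eta=\eta_0$,
\[
\dot v_k=-\eta_0(\phi''(u_k))^{-1}\bigl(\partial_{a_k}L(\theta)-\partial_{a_k}L(\ttheta)\bigr)-\eta_0\bigl((\phi''(u_k))^{-1}-(\phi''(\tilde u_k))^{-1}\bigr)\partial_{a_k}L(\ttheta),
\]
where the first bracket is bounded by a constant times $\|r(t)\|_2/\sqrt m+e^{-ct}/n$, and the second by $(\kappa_3/\kappa_1^2)|v_k(t)|\cdot\|\tbf(t)-\boldsymbol{y}\|_2/\sqrt m$. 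Integrating over $[0,t]$, using the $\|r\|_2$ bound above and $\int_0^t s e^{-cs}\,ds\le 1/c^2=16/(\eta_0\kappa_1\lambda_0)^2$, yields a uniform-in-$t$ inequality $V(t)\le A/n+B\,V^*$, with $B$ proportional to $16(B_x+1)K\sqrt m\|\boldsymbol{f}(0)-\boldsymbol{y}\|_2\cdot\kappa_2\kappa_3(\lambda_0\kappa_2/\kappa_1+10m(B_x+1)^2K^2(\kappa_2/\kappa_1)^2)/\lambda_0^2$. Hypothesis~\eqref{linearization-condition-scaled} is precisely the requirement $B\le 1/2$, so a bootstrap started at $V(0)=0$ yields $V(t)\le 2A/n$ for all $t$, and hence $\sup_t\|r(t)\|_2=O(1/n)$.

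Finally, to pass from $r(t)$ to the scalar difference, I would apply the same Gronwall argument to the ODEs \eqref{trajectory-test}--\eqref{trajectory-test-lin}, since $\|H(t,x)-\tilde H(t,x)\|_2$ admits a bound of the form $O(1/n+V)$ by the same decomposition. Integrating against the exponentially decaying $\|\tbf(s)-\boldsymbol{y}\|_2$ and the $O(1/n)$ bound on $\|r(s)\|_2$ gives $\sup_t|f(x,\theta(t))-f(x,\ttheta(t))|=O(1/n)\to 0$. The main obstacle is closing the bootstrap in the previous paragraph: unlike the unscaled setting, $\|H(t)-\tilde H(t)\|_2$ does not vanish merely from lazy training because the scaled arguments $n(\theta_k-\hat\theta_k)$ are $O(1)$ rather than $o(1)$ and the inverse Hessian varies non-trivially across $[-C_1,C_1]$; the coupling between $V$ and $\|r\|_2$ must therefore be closed by hand, and condition~\eqref{linearization-condition-scaled} is exactly what forces the contraction constant strictly below $1$.
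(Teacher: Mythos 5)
Your proposal is correct and follows essentially the same route as the paper's proof. The paper also introduces a running $\ell_\infty$ gap quantity (there, $\iota(t)=n^{3/2}\|\ba(t)-\tba(t)\|_\infty$), decomposes $H(t)-\tilde H(t)$ into the $O(1/n)$ contributions from $\boldsymbol{W},\boldsymbol{b},d$ plus a term controlled by the Lipschitz constant of $(\phi'')^{-1}$ on $[-C_1,C_1]$ times the rescaled output-weight gap, feeds this into a Gronwall estimate for $\boldsymbol{r}(t)=\boldsymbol{f}(t)-\tbf(t)$, and integrates the resulting ODE for $a_k-\tilde a_k$; condition \eqref{linearization-condition-scaled} is what makes the loop contract. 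The only cosmetic difference is that the paper runs this as a first-hitting-time contradiction ($\iota(t_1)=C$ forces $\iota(t_1)<C$) rather than your monotone-supremum bootstrap $V(t)\le A/n+BV(t)$, but these are the same argument; your choice of normalization happens to make the self-consistent rate $O(1/n)$ visible, whereas the paper states the (equivalent but looser) $n^{3/2}\|\ba-\tba\|_\infty\le C$, and either suffices. One small caveat: write the bootstrap with a strict contraction $B<1$ (your ``$B\le 1/2$'' gives $V(t_1)\le 2A/n$ with equality, so the contradiction should be run with strict inequality as the paper does), and note that $B\le 1$ rather than $B\le\tfrac12$ is what \eqref{linearization-condition-scaled} literally delivers.
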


\begin{proof}{[Proof of Proposition~\ref{linear-scaled}]}
    We define the following function defined for $t\geq 0$:
    $$
    \iota(t) = n^{3/2}\|\ba(t) -\tba(t) \|_\infty. 
    $$
    For any fixed $C>0$, consider $t_1 = \inf\{t\geq 0 \colon  \iota(t) > C \}$. 
    Now we aim to prove $t_1 = +\infty$. For the sake of contradiction, assume $t_1<+\infty$. 
    Then we have for $t\in[0, t_1]$, $\iota(t) \leq C$ and in particular, $\iota(t_1) = C$.

    Assume that $C_1>0$ satisfies \eqref{convergence-condtion-scaled} and \eqref{linearization-condition-scaled}. 
    According to Proposition~\ref{lazy-scaled} and Proposition~\ref{lazy-scaled-lin}, we have $\sup_{t\geq0}\|n(\theta(t) - \hat{\theta})\|_\infty\leq C_1$ and $\sup_{t\geq0}\|n(\ttheta(t) - \hat{\theta})\|_\infty\leq C_1$. 
    By assumption, $(\phi^{\prime\prime}(x))^{-1}$ is Lipschitz continuous on $[-C_1, C_1]$. Let $L$ denote the corresponding Lipschitz constant. 
    Note that 
    $$
    L\leq \sup_{x\in[C_1,C_1]} |(\frac{1}{\phi^{\prime\prime}(x)})^{\prime}| = \sup_{x\in[C_1,C_1]}\frac{|\phi^{\prime\prime\prime}(x)|}{(\phi^{\prime\prime}(x))^2}\leq \kappa_2(C_1)^2 \kappa_3(C_1). 
    $$

    Next we bound the difference between $H(t)$ and $\tilde{H}(t)$. To simplify the notation, we let $\sigma_{ik}(t) = \sigma(w_k(t)^Tx_i - b_k(t))$ for $i\in[m]$ and $k\in[n]$. 
    For $i,j\in[m]$ we have the following decomposition: 
    \begin{equation}\label{H-tH-decomp-scaled}
    |H_{ij}(t) - \tilde{H}_{ij}(t)| \leq I_1 + I_2 + I_3, 
    \end{equation}
    where
    \begin{equation}\label{H-tH-decomp-scaled-2}
    \begin{dcases}
        I_1 = \frac{1}{n}\sum_{k=1}^n \Big| \frac{ \sigma_{ik}(t) \sigma_{jk}(t)}{\phi^{\prime\prime}(n(a_k(t)-a_k(0)))} 
        - \frac{ \sigma_{ik}(0) \sigma_{jk}(0)}{\phi^{\prime\prime}(n(\tilde{a}_k(t) - a_k(0)))} \Big|;
        \\
        I_2 = \frac{1}{n}\sum_{k=1}^n \sum_{r=1}^d \Big( |x_{i,r}x_{j,r}|
        \Big|\frac{a_k(t)^2 \sigma^\prime_{ik}(t)\sigma^\prime_{jk}(t)}{\phi^{\prime\prime}(n(w_{k,r}(t)-w_{k,r}(0)))}\Big| \Big)
        + \Big|\frac{a_k(t)^2 \sigma^\prime_{ik}(t)\sigma^\prime_{jk}(t)}{\phi^{\prime\prime}(n(b_k(t)-b_k(0)))}\Big|
        \\
        I_3 = \frac{1}{n} \Big|\frac{1}{\phi^{\prime\prime}(n(d(t)-d(0)))}\Big|. 
    \end{dcases}
    \end{equation}
    
    Since $\mathcal{A}$ is sub-Gaussian and $\mathcal{B}$ is bounded, 
    there exists $K>1$ such that with high probability, 
    \begin{equation}\label{s-wba-2} 
        \|\boldsymbol{b}(0)\|_\infty \leq K, \quad \|\boldsymbol{a}(0)\|_\infty \leq \|\boldsymbol{a}(0)\|_2 \leq K.         
    \end{equation} 
    Noticing $\|\theta(t)-\hat{\theta}\|_\infty<C_1 n^{-1}$, 
    we have for large enough $n$ and for any $i,j\in[m], k\in[n], t\geq 0$, 
    $$
    |\sigma_{ik}(t)| \leq |w_k(t)^Tx_i| + |b_k(t)| \leq (B_x+1)2K  
    $$
    and
    $$
    |\sigma_{ik}(t)-\sigma_{jk}(t)| \leq \|w_k(t)-w_k(0)\|_2\|x_i\|_2 + |b_k(t)-b_k(0)| \leq \sqrt{d}(B_x+1)C_1 n^{-1}. 
    $$
    Recall $\kappa_2(C_1) = \sup_{z\in[-C_1,C_1]} (\phi^{\prime\prime}(z))^{-1}$. 
    By noticing $\|\ba(t) -\tilde{\ba}(t) \|_\infty<C n^{-3/2}$, we have for $t\in [0, t_1]$,
    \begin{align*}
        I_1 &\leq \frac{1}{n} \sum_{k=1}^n \Big( |\sigma_{ik}(t) - \sigma_{ik}(0)| |\sigma_{jk}(t)| \kappa_2(C_1) + |\sigma_{ik}(0)| |\sigma_{jk}(t) - \sigma_{jk}(0)| \kappa_2(C_1) + \\
            &\quad |\sigma_{ik}(0)| |\sigma_{jk}(0)| (\frac{1}{\phi^{\prime\prime}(na_k(t))}-\frac{1}{\phi^{\prime\prime}(n\tilde{a}_k(t))})\Big)\\
        &\leq \frac{1}{n} \sum_{k=1}^n \Big( O_p(n^{-1}) \kappa_2(C_1) +  O_p(n^{-1}) \kappa_2(C_1) + 4K^2(B_x+1)^2 LC n^{-1/2} \Big)\\
        &= 4K^2(B_x+1)^2 LC n^{-1/2} + O_p(n^{-1})\kappa_2(C_1). 
    \end{align*}

    In the meantime, note that 
    $\|\ba(t)-\ba(0)\|_2 \leq \sqrt{n} \|\ba(t)-\ba(0)\|_\infty = O(n^{-1/2})$. 
    Therefore, $\|\ba(t)\|_2\leq \|\ba(0)\|_2 + \|\ba(t)-\ba(0)\|_2 = O_p(1)$. 
    Then we have that for $t\in[0,t_1]$, 
    \begin{align*}
        I_2 \leq \frac{d}{n} (B_x^2 +1) \|\ba(t)\|_2^2 \kappa_2(C_1) = O_p(n^{-1}) \kappa_2(C_1), \quad I_3 \leq O_p(n^{-1})\kappa_2(C_1).  
    \end{align*}
    Then with large enough $n$ we have for all $i,j\in[m]$ that 
    $$
    \sup_{t\in[0,t_1]} |H_{ij}(t) - \tilde{H}_{ij}(t)|\leq 5K^2(B_x+1)^2 LC n^{-1/2}
    $$ 
    and hence
    \begin{equation}\label{H-tH-scaled}
    \sup_{t\in[0,t_1]}\|H(t) - \tilde{H}(t)\|_2 \leq 5mK^2(B_x+1)^2 LC n^{-1/2}. 
    \end{equation}
    
    Recall $\kappa_1(C_1) = \inf_{z\in[-C_1,C_1]} (\phi^{\prime\prime}(z))^{-1}$. 
    Let $\boldsymbol{r}(t)=\boldsymbol{f}(t)-\tbf(t)$ and $C_2=(\kappa_1(C_1)\lambda_0)/4$. 
    Consider the function $u(t)=\|\boldsymbol{r}(t)\|_2$ defined on $t\in[0,t_1]$. 
    Clearly, $u(t)$ is continuous on $[0,t_1]$. 
    Since $u(t)=0$ if and only if $\boldsymbol{r}(t) = 0$, we have that $u(t)$ is differentiable whenever $u(t)\neq 0$. 
    In particular, when $u(t) \neq 0$, notice that
    \begin{equation}\label{ut-scaled}
    \begin{aligned}
        \dt (e^{C_2\eta_0 t} u(t))^2 &= \dt \|e^{C_2\eta_0 t}\boldsymbol{r}(t)\|_2^2 \\
        &= 2e^{C_2 \eta_0 t} \boldsymbol{r}(t)^T \dt(e^{C_2\eta_0 t} \boldsymbol{r}(t))\\
        &= 2e^{2C_2\eta_0t} \boldsymbol{r}(t)^T \Big(C_2 \eta_0 \boldsymbol{r}(t) - \eta_0H(t)(\boldsymbol{f}(t)-\boldsymbol{y}) + \eta_0 \tilde{H}(t)(\tbf(t)-\boldsymbol{y})\Big)\\
        &= 2\eta_0 e^{2C_2\eta_0t} \Big(\boldsymbol{r}(t)^T(\tilde{H}(t) - H(t))(\tbf(t)-\boldsymbol{y}) - \boldsymbol{r}(t)^T(H(t)-C_2 I_n)\boldsymbol{r}(t)\Big)\\
        &\leq 2\eta_0 e^{2C_2\eta_0t} \boldsymbol{r}(t)^T(\tilde{H}(t) - H(t))(\tbf(t)-\boldsymbol{y})\\
        &\leq 2\eta_0 e^{2C_2\eta_0t} u(t) \|\tilde{H}(t) - H(t)\|_2 \|\tbf(t)-\boldsymbol{y}\|_2 ,  
    \end{aligned}
    \end{equation}
    where in the first inequality we use the property $\lambda_{\min}(H(t)) > C_2$ for any $t\geq 0$ from Proposition~\ref{lazy-scaled}. 
    Meanwhile, we have 
    \begin{align*}
        \dt (e^{C_2\eta_0 t} u(t))^2 = 2e^{C_2\eta_0 t} u(t) \dt (e^{C_2\eta_0 t} u(t))
    \end{align*}
    and hence by \eqref{ut-scaled}, 
    $$
    \dt (e^{C_2\eta_0 t} u(t)) \leq \eta_0 e^{C_2\eta_0 t}  \|\tilde{H}(t) - H(t)\|_2 \|\tbf(t)-\boldsymbol{y}\|_2. 
    $$

    Now consider $t\in[0,t_1]$. Let $t^\prime = \inf\{ s\in [0,t] \colon \boldsymbol{r}(s) \neq 0\}$. Then we have 
    \begin{align*}
        e^{C_2\eta_0 t} u(t) \leq \int_{t^\prime}^t \eta_0 e^{C_2\eta_0 s} \|\tilde{H}(s) - H(s)\|_2 \|\tbf(s)-\boldsymbol{y}\|_2 \diff s.
    \end{align*}
    Hence for all $t\in[0,t_1]$, we have
    \begin{equation}\label{rt-bound}
       \begin{aligned}
        \|\boldsymbol{r}(t)\|_2 &\leq e^{-C_2\eta_0 t}  \int_{t^\prime}^t \eta_0 e^{C_2\eta_0 s} \|\tilde{H}(s) - H(s)\|_2 \|\tbf(s)-\boldsymbol{y}\|_2 \diff s\\
        &\leq \eta_0 e^{-C_2\eta_0 t} (5mK^2(B_x+1)^2 LC n^{-1/2}) \int_{t^\prime}^t \|\boldsymbol{f}(0)-\boldsymbol{y}\|_2 \diff s
        \\
        &\leq \eta_0e^{-C_2\eta_0 t} 5mK^2(B_x+1)^2 LC \|\boldsymbol{f}(0)-\boldsymbol{y}\|_2 t n^{-1/2}. 
        \end{aligned} 
    \end{equation}

    We have that for all $k\in[n]$ and with sufficiently large $n$,  
    \begin{align*}
        &|\dt(a_k(t) - \tilde{a}_k(t))| \\
        =& \frac{\eta_0}{n} \Big|\frac{1}{\phi^{\prime\prime}(na_k(t))} J_{a_k}\boldsymbol{f}(t)^T (\boldsymbol{f}(t)-\boldsymbol{y}) - \frac{1}{\phi^{\prime\prime}(n\tilde{a}_k(t))} J_{a_k}\boldsymbol{f}(0)^T(\tbf(t)-\boldsymbol{y}) \Big|\\
        \leq& \frac{\eta_0}{n}\Big( \left|\frac{1}{\phi^{\prime\prime}(na_k(t))}-\frac{1}{\phi^{\prime\prime}(n\tilde{a}_k(t))}\right| \|J_{a_k}\boldsymbol{f}(t)\|_2 \|\boldsymbol{f}(t)-\boldsymbol{y}\|_2\\
            &\quad + \left|\frac{1}{\phi^{\prime\prime}(n\tilde{a}_k(t))}\right| \|J_{a_k}\boldsymbol{f}(t)-J_{a_k}\boldsymbol{f}(0)\|_2 \|\boldsymbol{f}(t)-\boldsymbol{y}\|_2\\
            &\quad + \left|\frac{1}{\phi^{\prime\prime}(n\tilde{a}_k(t))}\right| \|J_{a_k}\boldsymbol{f}(0)\|_F \|\boldsymbol{r}(t)\|_2 \Big)
    \end{align*}
    Noticing $\|\theta(t)-\hat{\theta}\|_\infty \leq C_1 n^{-1}$ and using \eqref{rt-bound}, 
    we have that
    \begin{align*}
        &|\dt(a_k(t) - \tilde{a}_k(t))|\\
        \leq& \frac{\eta_0}{n}\Big( LCn^{-1/2}\cdot \sqrt{m}(B_x\|w_k(t)\|_2+|b_k(t)|)\cdot \|\boldsymbol{f}(0) - \boldsymbol{y}\|_2 e^{-C_2\eta_0t} \\
            &+ \kappa_2(C_1) \cdot \sqrt{m}(B_x \|w_k(t)-w_k(0)\|_2 + |b_k(t)- b_k(0)|) \cdot \|\boldsymbol{f}(0) - \boldsymbol{y}\|_2 e^{-C_2\eta_0t})\\
            &+ \kappa_2(C_1) \cdot \sqrt{m}(B_x \|w_k(0)\|_2+|b_k(0)|)\cdot \eta_0e^{-C_2\eta_0 t} 5mK^2(B_x+1)^2 LC \|\boldsymbol{f}(0) - \boldsymbol{y}\|_2 t n^{-1/2} \Big)
        \\
        \leq& \frac{2 \eta_0 \sqrt{m} \|\boldsymbol{f}(0)-\boldsymbol{y}\|_2}{\exp(C_2\eta_0 t)} \Big( LC\cdot K(B_x + 1) n^{-3/2} + O_p(n^{-2})\kappa_2(C_1)C_1 \\
            &+ 5 \kappa_2(C_1) (B_x+1)^3 K^3 \eta_0 mLC t n^{-3/2}\Big)
        \\
        \leq& \frac{2 LC(B_x+1)K \eta_0 \sqrt{m} \|\boldsymbol{f}(0)-\boldsymbol{y}\|_2}{\exp(C_2\eta_0 t)} \big(2 + 5\kappa_2(C_1) (B_x+1)^2K^2 \eta_0m t \Big)n^{-3/2}
    \end{align*}
    Recall $L\leq \kappa_2(C_1)^2 \kappa_3(C_1)$ and $C_2=(\kappa_1(C_1)\lambda_0)/4$. Then $t\in[0,t_1]$ and $k\in[n]$ we have
    \begin{align*}
        &|a_k(t)-\tilde{a}_k(t)| \\
        <& 2LC(B_x+1)K \eta_0 \sqrt{m} \|\boldsymbol{f}(0)-\boldsymbol{y}\|_2 \Big( \int_0^\infty \frac{2 + 5 \kappa_2(C_1) (B_x + 1)^2 K^2\eta_0m s}{\exp(C_2\eta_0 s)}\diff s\Big)n^{-3/2}\\
        =& 2LC(B_x + 1)K \sqrt{m} \|\boldsymbol{f}(0)-\boldsymbol{y}\|_2 \frac{8\kappa_1(C_1)\lambda_0 + 80 \kappa_2(C_1) m (B_x + 1)^2K^2}{(\kappa_1(C_1)\lambda_0)^2} n^{-3/2}\\
        \leq& \kappa_2(C_1)^2 \kappa_3(C_1) \frac{ \kappa_1(C_1)\lambda_0 + 10  m(B_x + 1)^2K^2 \kappa_2(C_1) }{\kappa_1(C_1)^2} \cdot \frac{16(B_x + 1)K\sqrt{m} \|\boldsymbol{f}(0)-\boldsymbol{y}\|_2}{(\lambda_0)^2}  C n^{-3/2}\\
        \leq& C n^{-3/2},
    \end{align*}
    where the last inequality comes from the fact that $C_1$ satisfies \eqref{linearization-condition-scaled}. 
    Hence, we have $\iota(t) < C$ for $t\in[0, t_1]$. Note this holds at $t=t_1$, which yields a contradiction. 
    Hence, $t_1 = +\infty$. 
    By \eqref{rt-bound} we have that all $t\geq 0$, 
    \begin{equation}
        \|\boldsymbol{f}(t)-\tbf(t)\|_2 = o_p(1) e^{-C_2\eta_0 t} t  .   
        \label{ft-flint-scaled}
    \end{equation}

    Next we fix the constants $C_1$ and $C$ and aim to 
    show $f(\cdot,\theta(t))$ agrees with $f(\cdot,\ttheta(t))$ on any given test data $x\in \R^d$. For a fix $x\in\R^d$, consider 
    \begin{align*}
        \dt f(x, \theta(t)) &= -\eta_0 H(t,x)(\boldsymbol{f}(t)-\boldsymbol{y})\\
        H(t,x) &\triangleq \nabla_\theta f(x, \theta(t)) (\nabla^2\Phi(\theta(t)))^{-1} J_\theta \boldsymbol{f}(t)^T
    \end{align*}
    and
    \begin{align*}
    \dt f(x, \ttheta(t)) &= -\eta_0 \tilde{H}(t,x)(\tbf(t)-\boldsymbol{y})\\
    \tilde{H}(t,x) &= \nabla_{\ba} f(x, \hat{\theta}) (\nabla^2\Phi(\tba(t)))^{-1} J_{\ba} \boldsymbol{f}(0)^T
    \end{align*}

    Using the fact that $\| \ba(t) -\tba(t) \|\leq C n^{-3/2}$ and applying a decomposition similar to \eqref{H-tH-decomp-scaled} and \eqref{H-tH-decomp-scaled-2}, we have
    \begin{equation}\label{Htx-scaled}
        \sup_{t\geq 0} \| H(t,x) - \tilde{H}(t,x) \|_2 = O_p(n^{-1/2}). 
    \end{equation}

    For large enough $n$, by \eqref{s-wba-2} and $\|\theta(t)-\hat{\theta}\|_\infty \leq C_1n^{-1}$, we have
    \begin{equation}
    \begin{aligned}
        \|J_{\theta} \boldsymbol{f}(t)\|_F^2 &=\sum_{i=1}^m \Big( 1+\sum_{k=1}^n |\sigma_{ik}(t)|^2  + \sum_{k=1}^n |a_k(t)|^2 |\sigma_{ik}^\prime(t)|^2 (\|x_i\|_2^2 + 1) \Big)\\
        &\leq m + \sum_{i=1}^m \sum_{k=1}^n \Big( 2 \|w_k(t)\|_2^2 \|x_i\|^2 + 2|b_k(t)|^2  +  |a_k(t)|^2 (\|x_i\|^2+1) \Big)\\
        &=O_p(n),
    \end{aligned}
    \label{init-J-norm-s}
    \end{equation}
    and hence $\|J_{\theta} \boldsymbol{f}(t)\|_2 = O_p(\sqrt{n})$. Note a similar bound applies to $\|J_{\theta} f(x, \theta(t))\|_2$. 
    Then with large enough $n$ we have for $t\geq 0$, 
    $$
    \|H(t,x)\|_2 \leq \frac{1}{n} \|J_\theta f(x, \theta(t))\|_2  \|Q(t)\|_2 \|J_\theta \boldsymbol{f}(t)\|_2 = O_p(1). 
    $$
    
    Finally, using \eqref{ft-flint-scaled} and \eqref{Htx-scaled}, we have for $t\geq 0$,
    \begin{align*}
        &\quad \quad |f(x, \theta(t))-f(x, \ttheta(t))| \\
        &\leq  \int_0^t \Big|\frac{\mathrm{d}}{\mathrm{d}s}(f(x,\theta(s))-f(x, \theta(s)))\Big|\diff s\\
        &\leq \int_0^t \Big| \eta_0 H(s,x)  (\boldsymbol{f}(s)-\boldsymbol{y}) - \eta_0 \tilde{H}(s,x)  (\tbf(s)-\boldsymbol{y})  \Big|\diff s\\
        &\leq \int_0^t \eta_0  \|H(s,x)\|_2 \|\boldsymbol{f}(s)-\tbf(s)\|_2  \\
        &\quad +  \eta_0 \|H(s,x)-\tilde{H}(s,x)\|_2 \|\tbf(s)-\boldsymbol{y}\|_2 \diff s\\
        &\leq \int_0^\infty O_p(1) o_p(1) e^{-C_2\eta_0 s} s \diff s + O_p(n^{-1/2})\int_0^\infty \|\boldsymbol{f}(0)-\boldsymbol{y}\|_2^2 e^{-C_2\eta_0 s}\diff s\\
        &= o_p(1),
    \end{align*}
    which gives the claimed results. 
\end{proof}

Next we show that, if the potential $\Phi$ satisfies Assumption~\ref{assump-potential-scaled}, one can select a sufficiently large $\omega$ to ensure that the conditions in Proposition~\ref{lazy-scaled} and \ref{linear-scaled} are satisfied. 

\begin{proposition}\label{condition-a}
    Consider a potential function that satisfies the assumptions in Proposition~\ref{lazy-scaled}. 
    Further, assume that $\phi(x)=\frac{1}{\omega+1}(\psi(x) + \omega x^2)$ with convex function $\psi \in C^3(\R)$ and positive constant $\omega>0$. 
    Let $D_1, D_2$ be constants 
    defined as follows:
    $$
        D_1 = \frac{8\sqrt{m}(B_x + 1)K\|\boldsymbol{f}(0)-\boldsymbol{y}\|_2}{\lambda_0}, \quad 
        D_2 = 10m(B_x + 1)^2K^2. 
    $$
    Consider functions $\zeta_1$ and $\zeta_2$ defined as follows:
    \begin{equation}
        \begin{dcases}
            \zeta_1(C) &= \frac{ \sup_{z\in[-C,C]}\psi^{\prime\prime}(z) \cdot D_1 }{2(C- D_1)}, \quad \forall C>D_1; \\
            \zeta_2(C) &= \frac{\sup_{|z|<C}|\psi^{\prime\prime\prime}(z)|\cdot D_1}{\lambda_0} \Big( \lambda_0 \Big(1+ \sup_{|z|<C}\psi^{\prime\prime}(z)\Big) + D_2 \Big(1+ \sup_{|z| < C}\psi^{\prime\prime}(z)\Big)^2\Big), \  \forall C>0. 
        \end{dcases}
    \end{equation}
    Then there exists $C_1>0$ that satisfies \eqref{convergence-condtion-scaled} and \eqref{linearization-condition-scaled} whenever $\omega > \omega_0$ with $\omega_0$ defined as follows:
    \begin{equation*}
    \omega_0 = \max\{\frac{1}{2}, 1+\inf_{C^\prime>D_1}\zeta_1(C^\prime), \zeta_2(C^*)\}, \quad C^* = \sup \{ C>D_1 \colon \zeta_1(C) < 1+\inf_{C^\prime>D_1}\zeta_1(C^\prime) \} . 
    \end{equation*}
\end{proposition}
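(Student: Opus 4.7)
The plan is to first expand $\kappa_1,\kappa_2,\kappa_3$ for the potential $\phi(x)=(\psi(x)+\omega x^2)/(\omega+1)$, then translate each of \eqref{convergence-condtion-scaled} and \eqref{linearization-condition-scaled} into an explicit lower bound on $\omega$ that depends on the choice of $C_1$, and finally pick a convenient $C_1$ by exploiting the opposing monotonicities of $\zeta_1$ and $\zeta_2$.

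First I compute $\phi''(x)=(\psi''(x)+2\omega)/(\omega+1)$ and $\phi'''(x)=\psi'''(x)/(\omega+1)$. Since $\psi$ is convex, $\inf\psi''\geq 0$, and the three quantities read
\[\kappa_1(C)=\frac{\omega+1}{\sup_{|z|\leq C}\psi''(z)+2\omega},\quad \kappa_2(C)=\frac{\omega+1}{\inf_{|z|\leq C}\psi''(z)+2\omega},\quad \kappa_3(C)=\frac{\sup_{|z|\leq C}|\psi'''(z)|}{\omega+1}.\]
In particular $\kappa_2(C)\kappa_3(C)\leq \sup_{|z|\leq C}|\psi'''(z)|/(2\omega)$, and whenever $\omega\geq 1/2$ one has $\kappa_2(C)/\kappa_1(C)\leq 1+\sup_{|z|\leq C}\psi''(z)$. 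Next I rewrite \eqref{convergence-condtion-scaled} as $C_1(\inf\psi''+2\omega)\geq D_1(\sup\psi''+2\omega)$ on $[-C_1,C_1]$; using $\inf\psi''\geq 0$ and $C_1>D_1$, this is implied by $\omega\geq \zeta_1(C_1)$. For \eqref{linearization-condition-scaled}, the right-hand side simplifies to $\lambda_0/(2D_1)$; plugging in the two coarse bounds above and a short algebraic manipulation yields that $\omega\geq \zeta_2(C_1)$ is sufficient.

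The remaining task is to exhibit $C_1>D_1$ making both conditions strict when $\omega>\omega_0$. Observe that $\zeta_1$ tends to $\zeta_1^\ast\triangleq \inf_{C'>D_1}\zeta_1(C')$ as $C\to\infty$ (the linear growth of the denominator eventually dominates the non-decreasing $\sup\psi''$), while $\zeta_2$ is non-decreasing in $C$ since both $\sup\psi''$ and $\sup|\psi'''|$ on $[-C,C]$ are. With $C^\ast$ as in the proposition, continuity of $\zeta_1$ (inherited from $\psi\in C^2$) implies that the set $\{C>D_1\colon \zeta_1(C)<1+\zeta_1^\ast\}$ is nonempty and open-ended up to $C^\ast$. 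Picking any $C_1$ in this set close enough to $C^\ast$ gives $\zeta_1(C_1)\leq 1+\zeta_1^\ast<\omega$ and $\zeta_2(C_1)\leq \zeta_2(C^\ast)<\omega$ by the respective monotonicities, which is the desired $C_1$.

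The main obstacle is precisely this opposing monotonicity: shrinking $C_1$ forces the condition number $\kappa_2/\kappa_1$ to blow up via the margin $C_1-D_1$ in \eqref{convergence-condtion-scaled}, while enlarging $C_1$ lets $\sup\psi''$ and $\sup|\psi'''|$ grow and worsens \eqref{linearization-condition-scaled}. The definition of $\omega_0$ performs this trade-off explicitly: the term $1+\zeta_1^\ast$ captures the unavoidable convergence cost, and $\zeta_2(C^\ast)$ captures the largest window one can still afford on the linearization side. Verifying continuity of $\zeta_1,\zeta_2$ and handling the edge case $C^\ast=+\infty$—where $\zeta_2(C^\ast)$ is interpreted as a limit and is finite only when $\psi''$ and $|\psi'''|$ stay bounded—are routine technical points stemming from $\psi\in C^3(\R)$.
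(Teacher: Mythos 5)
Your proposal is essentially the same argument as the paper's: compute $\kappa_1,\kappa_2,\kappa_3$ from $\phi''=(\psi''+2\omega)/(\omega+1)$ and $\phi'''=\psi'''/(\omega+1)$, translate \eqref{convergence-condtion-scaled} into $\omega\geq\zeta_1(C_1)$ and \eqref{linearization-condition-scaled} (with the threshold $\omega>1/2$ to coarsen $1+\sup\psi''/(2\omega)$ by $1+\sup\psi''$) into $\omega\geq\zeta_2(C_1)$, then select $C_1$ from the sublevel set $\{C>D_1:\zeta_1(C)<1+\inf\zeta_1\}$ and use monotonicity of $\zeta_2$ together with $C_1\leq C^*$. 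One small inaccuracy: the claim that $\zeta_1(C)\to\inf\zeta_1$ as $C\to\infty$ is not true in general (e.g.\ $\psi''(z)\sim z^2$ makes $\zeta_1(C)\to\infty$), and the sublevel set need not be an interval; but neither fact is needed --- nonemptiness of the sublevel set follows directly from the definition of the infimum, and any $C_1$ in that set works without being ``close to $C^*$.''
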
 
\begin{proof}{[Proof of Proposition~\ref{condition-a}]}
    Since $\psi^{\prime\prime}\geq 0$, we have that for $z\in[-C,C]$, 
    $$
    \frac{\omega+1}{ \sup_{z\in[-C,C]}\psi^{\prime\prime}(z) + 2\omega} \leq \frac{1}{\phi^{\prime\prime}(z)} =\frac{\omega+1}{\psi^{\prime\prime}(z) + 2\omega} \leq \frac{\omega+1}{2\omega}.
    $$
    Hence we can take
    $$
    \kappa_1(C)=\frac{\omega+1}{ \sup_{z\in[-C,C]}\psi^{\prime\prime}(z) + 2\omega},\quad \kappa_2(C) \equiv \frac{\omega+1}{2\omega}. 
    $$
    Then we have 
    $$
    \frac{\kappa_1(C)}{\kappa_2(C)} = \frac{\omega+1}{ \sup_{z\in[-C,C]}\psi^{\prime\prime}(z) + 2\omega} \Big/ \frac{\omega+1}{2\omega} = \frac{2\omega}{\sup_{z\in[-C,C]}\psi^{\prime\prime}(z) + 2\omega}. 
    $$
    Then \eqref{convergence-condtion-scaled} can be reformulated as 
    \begin{equation}
        \frac{2\omega}{\sup_{z\in[-C,C]}\psi^{\prime\prime}(z) + 2\omega} \cdot C \geq D_1 . 
        \label{convergence-condtion-scaled-2}
    \end{equation}
    When $C > D_1$, we can rearrange \eqref{convergence-condtion-scaled-2} as follows
    \begin{equation}
    \omega \geq \frac{ \sup_{z\in[-C,C]}\psi^{\prime\prime}(z) \cdot D_1 }{2(C- D_1)} \triangleq \zeta_1(C)
    \label{convergence-condtion-scaled-3}
    \end{equation}
    Note $\zeta_1(C)$ is a continuous function on $C>D_1$. This implies whenever $\omega$ satisfies
    \begin{equation}
        \omega > \inf_{C^\prime>D_1} \zeta(C^\prime)
        \label{gamma-condition-1}
    \end{equation}
    there exists $C>D_1$ such that \eqref{convergence-condtion-scaled-3} holds and thus \eqref{convergence-condtion-scaled} holds. 

    On the other side, notice that
    $$
    \kappa_3(C) = \sup_{z\in[-C,C]}|\phi^{\prime\prime\prime}(z)| = \frac{\sup_{z\in[-C,C]}|\psi^{\prime\prime\prime}(z)|}{\omega+1}. 
    $$
    Then \eqref{linearization-condition-scaled} can be reformulated as 
    \begin{equation} 
    \frac{\sup_{z\in[-C,C]}|\psi^{\prime\prime\prime}(z)|}{2\omega} \Big( \lambda_0 \Big(1+\frac{\sup_{z\in[-C,C]}\psi^{\prime\prime}(z)}{2\omega}\Big) + D_2 \Big(1+\frac{\sup_{z\in[-C,C]}\psi^{\prime\prime}(z)}{2\omega}\Big)^2\Big) \leq \frac{\lambda_0}{2D_1} . 
    \label{linearization-condition-scaled-2}
    \end{equation}
    If $\omega > 1/2$, we have 
    $$
    1+\frac{\sup_{z\in[-C,C]}\psi^{\prime\prime}(z)}{2\omega} < 1+ \sup_{z\in[-C,C]}\psi^{\prime\prime}(z). 
    $$
    Therefore, \eqref{linearization-condition-scaled-2} holds if the following holds
    \begin{equation}
    \omega \geq 
    \frac{\sup_{|z|<C}|\psi^{\prime\prime\prime}(z)|\cdot D_1}{\lambda_0} \Big( \lambda_0 \Big(1+ \sup_{|z|<C}\psi^{\prime\prime}(z)\Big) + D_2 \Big(1+ \sup_{|z| < C}\psi^{\prime\prime}(z)\Big)^2\Big) \triangleq \zeta_2(C). 
    \label{linearization-condition-scaled-3}
    \end{equation}

    Now, assume $\omega$ satisfies
    $$
    \omega > \max\{1+\inf_{C^\prime>D_1}\zeta_1(C^\prime), \zeta_2(C^*)\}, \quad C^* = \sup \{ C>D_1 \colon \zeta_1(C) < 1+\inf_{C^\prime>D_1}\zeta_1(C^\prime) \}. 
    $$
    Since $\omega> 1+\inf_{C^\prime>D_1}\zeta_1(C^\prime)$, then for any $C$ that satisfies
    \begin{equation}
        C\in \{ C>D_1 \colon \zeta_1(C) < 1+\inf_{C^\prime>D_1}\zeta_1(C^\prime) \},
        \label{C-condition}
    \end{equation}
    we have \eqref{convergence-condtion-scaled-3} holds and thus \eqref{convergence-condtion-scaled} holds. 
    At the same time, since $\zeta_2$ is non-decreasing, we have for any $C$ that satisfies \eqref{C-condition}, 
    $$
    \omega > \zeta_2(C^*) \geq \zeta_2(C),
    $$
    which implies \eqref{linearization-condition-scaled} also holds. 
    This concludes the proof. 
\end{proof}

In the following result, we show that when $\phi$ is the hypentropy potential $\rho_\beta$, the conditions in Proposition~\ref{lazy-scaled} and \ref{linear-scaled} are satisfied with sufficiently large $\beta$.

\begin{proposition}\label{lazy-linear-hypen}
    Consider a potential function that satisfies the assumptions in Proposition~\ref{lazy-scaled}. 
    Further, assume that $\phi(x)=\rho_\beta(x)$ is the hypentropy function with sufficiently large $\beta$. 
    Then there exists $C_1>0$ that satisfies \eqref{convergence-condtion-scaled} and \eqref{linearization-condition-scaled}. 
\end{proposition}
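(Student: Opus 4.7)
The plan is to compute the relevant quantities explicitly for the hypentropy potential and exploit the fact that, as $\beta\to\infty$, the Hessian $\phi^{\prime\prime}$ becomes essentially flat on any fixed compact interval, so both structural conditions turn into mild perturbations of the quadratic (gradient-flow) case.

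First I would record the derivatives
\[
\phi^{\prime\prime}(x)=\frac{1}{\sqrt{x^2+\beta^2}},\qquad \phi^{\prime\prime\prime}(x)=-\frac{x}{(x^2+\beta^2)^{3/2}},
\]
which yields the closed forms $\kappa_1(C)=\beta$, $\kappa_2(C)=\sqrt{C^2+\beta^2}$, and $\kappa_3(C)=\sup_{|z|\le C}|z|(z^2+\beta^2)^{-3/2}$. The latter is bounded above by $C/\beta^{3}$ for any fixed $C$ (and even by $2/(3\sqrt{3}\beta^2)$ globally after optimizing in $z$). With $D_1,D_2$ as in Proposition~\ref{condition-a}, fix once and for all $C_1=2D_1$, a constant independent of $\beta$.

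Next I would verify condition \eqref{convergence-condtion-scaled}. With this choice,
\[
\frac{\kappa_1(C_1)}{\kappa_2(C_1)}\;=\;\frac{\beta}{\sqrt{C_1^2+\beta^2}}\;\longrightarrow\;1\quad\text{as }\beta\to\infty,
\]
while the right-hand side $D_1/C_1=1/2$ is constant. Hence for all sufficiently large $\beta$ the inequality $\kappa_1(C_1)/\kappa_2(C_1)\ge D_1/C_1$ holds.

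Then I would verify \eqref{linearization-condition-scaled}. Using the bounds above and $\kappa_2/\kappa_1=\sqrt{1+C_1^2/\beta^2}\le 2$ for $\beta\ge C_1$, the left-hand side satisfies
\[
\kappa_2(C_1)\kappa_3(C_1)\!\left(\lambda_0\frac{\kappa_2(C_1)}{\kappa_1(C_1)}+D_2\Bigl(\frac{\kappa_2(C_1)}{\kappa_1(C_1)}\Bigr)^{\!2}\right)
\le\;\sqrt{C_1^2+\beta^2}\,\frac{C_1}{\beta^{3}}\bigl(2\lambda_0+4D_2\bigr),
\]
which is $O(1/\beta^2)$ as $\beta\to\infty$, whereas the right-hand side of \eqref{linearization-condition-scaled} is a fixed positive constant. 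Choosing $\beta$ large enough so that this $O(1/\beta^2)$ term lies below that threshold completes the verification. The only mild subtlety is ensuring that the bound on $\kappa_3$ is taken uniformly on $[-C_1,C_1]$ rather than just at the endpoint (the interior maximum at $|z|=\beta/\sqrt{2}$ falls outside $[-C_1,C_1]$ once $\beta\ge\sqrt{2}\,C_1$, so the supremum is attained at $|z|=C_1$ for all large $\beta$); this is a short monotonicity check rather than a genuine obstacle.
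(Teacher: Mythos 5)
Your proposal matches the paper's proof in essentially every respect: you compute the same derivatives, take $\kappa_1(C)=\beta$ and $\kappa_2(C)=\sqrt{C^2+\beta^2}$, fix $C_1=2D_1$, verify \eqref{convergence-condtion-scaled} by noting $\kappa_1/\kappa_2\to 1$, and verify \eqref{linearization-condition-scaled} by showing the left-hand side is $O(1/\beta^2)$. The only (minor and harmless) differences are that you bound $\kappa_3(C_1)$ by $C_1/\beta^3$ rather than using the exact endpoint value $C_1/(C_1^2+\beta^2)^{3/2}$, and you make explicit the monotonicity check that the supremum defining $\kappa_3$ sits at $|z|=C_1$ once $\beta\ge\sqrt{2}\,C_1$ — a detail the paper leaves implicit.
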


\begin{proof}
    Notice that 
    $$
    \rho_\beta^{\prime\prime}(x) = \frac{1}{\sqrt{x^2 + \beta^2}}, \quad \rho_\beta^{\prime\prime\prime}(x) = -\frac{x}{(x^2+\beta^2)^{3/2}}. 
    $$
    Therefore, we can take $\kappa_1$, $\kappa_2$ and $\kappa_3$ as follows:
    $$
    \kappa_1(C) = \beta, \ 
    \kappa_2(C) = \sqrt{C^2 + \beta^2}, \
    \kappa_3(C) =  \Big| \frac{C}{(C^2+\beta^2)^{3/2}} \Big|. 
    $$
    Therefore, given any fixed $C>0$, we have
    $$
    \lim_{\beta\rightarrow \infty} \frac{C\kappa_1(C)}{\kappa_2(C)} = \lim_{\beta\rightarrow \infty}\frac{C \beta}{\sqrt{C^2 + \beta^2}} =C. 
    $$
    Therefore, by selecting $C_1 = 2\frac{8 \sqrt{m} (B_x+1) K \|\boldsymbol{f}(0)-\boldsymbol{y}\|_2}{\lambda_0}$, we have that for sufficiently large $\beta$, 
    $$
    \frac{C_1 \kappa_1(C_1)}{\kappa_2(C_1)} \geq \frac{C_1}{2} = \frac{8 \sqrt{m} (B_x+1) K \|\boldsymbol{f}(0)-\boldsymbol{y}\|_2}{\lambda_0}. 
    $$
    This implies \eqref{convergence-condtion-scaled} holds. 
    Now fix $C_1$. 
    Since $\kappa_2(C_1)\kappa_3(C_1)=|\frac{C_1}{\sqrt{C_1^2+\beta^2}}|$, we have $\kappa_2(C_1)\kappa_3(C_1)\rightarrow 0$ as $\beta\rightarrow \infty$.  
    Meanwhile, notice that $\frac{\kappa_2(C_1)}{\kappa_1(C_1)}\rightarrow 1$ as $\beta\rightarrow \infty$. 
    It then follows that
    $$
    \lim_{\beta\rightarrow \infty}
    \kappa_2(C_1) \kappa_3(C_1) 
        \Big( \lambda_0 \frac{\kappa_2(C_1)}{\kappa_1(C_1)} + 10  m(B_x+1)^2 K^2 \Big(\frac{\kappa_2(C_1)}{\kappa_1(C_1)}\Big)^2 \Big)  = 0. 
    $$
    Therefore, with large enough $\beta$, we have \eqref{linearization-condition-scaled} also holds for $C_1$. 
    This concludes the proof.  
\end{proof}

\subsection{Implicit bias in parameter space}

By Theorem \ref{guna-ib}, the final parameter of the network with absolute value activation function whose only output weights are trained is the solution to following problem: 
\begin{equation}
    \begin{aligned}
        \min_{\tba \in \R^n}\ D_\Phi(\tba, \ba(0)) \quad \st \ \sum_{k\in[n]} \tilde{a}_k |w_k(0)^Tx_i - b_k(0)| = y_i, \ \forall i \in [m]. 
    \end{aligned}
    \label{ib-linear-raw-2}
\end{equation}

Similar to the case of ReLU networks, we can write infinitely wide absolute-value networks as follows:
$$
g(x, \alpha) = \int \alpha(w,b) |w^Tx-b|\ \diff\mu. 
$$
Note the above integral always annihilates the odd component of $\alpha$ since $|w^Tx-b|$ is an even function of $(w,b)$ and $\mu$ is symmetric. 
Therefore, for networks with absolute value activation functions, 
we lose no generality to restrict the parameter space to the space of even and continuous functions: 
\begin{equation}
    \Theta_{\mathrm{Abs}}=\{\alpha \in \Theta_{\mathrm{ReLU}} \colon \alpha \text{ is even and uniformly continuous}\}. 
    \label{param-space-abs}
\end{equation}

In the following result, we characterize the limit of the solution to \eqref{ib-linear-raw-2} when $n$ tends to infinity. 

\begin{proposition}\label{inf-scaled}
    Let $g_n$ and $g$ be defined as follows: 
    \begin{equation}
    \begin{dcases}
        g_n(x, \alpha) = \int \alpha(w,b) |w^Tx-b|\ \diff\mu_n,\\
        g(x, \alpha) = \int \alpha(w,b) |w^Tx-b|\ \diff\mu. 
    \end{dcases}
    \label{model-abs}
    \end{equation} 
    Assume $\bar{\boldsymbol{a}}$ is the solution to \eqref{ib-linear-raw-2}. 
    Assume the potential function $\Phi$ can be written in the form of 
    $\Phi(\theta) = \frac{1}{n^2} \sum_{k=1}^{3n+1} \phi\big( n(\theta_k-\hat{\theta}_k ) \big)$,  
    where $\phi$ is a real-valued $C^3$-smooth function on $\R$ and has strictly positive second derivatives everywhere. 
    %
    %
    Let $\Theta_{\mathrm{Abs}}$ be defined as in \eqref{param-space-abs} and $\bar{\alpha}_n \in \Theta_{\mathrm{Abs}}$ be any function that satisfies $\bar{\alpha}_n(w_k(0), b_k(0))=n(\bar{a}_k-a_k(0))$. 
    Assume $\bar{\alpha}$ is the solution of the following problem: 
    \begin{equation}\label{ib-inf-scaled} 
        \min_{\alpha\in \Theta_{\mathrm{Abs}}} \int D_\phi\big(\alpha(w,b), 0\big) \diff \mu, \quad \st \ g(x_i, \alpha) = y_i - f(x_i, \hat{\theta}), \ \forall i\in[m]. 
    \end{equation} 
    Then given any $x\in \R^d$, with high probability over initialization, $\lim_{n\rightarrow \infty} |g_n(x, \bar{\alpha}_n)-g(x,\bar{\alpha})| = 0$. 
\end{proposition}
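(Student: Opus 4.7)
The plan is to follow the template of Proposition~\ref{prop:reduce-to-l2} but to replace the Taylor linearization step with a direct Lagrangian duality argument, which is feasible here because the scaling $n$ inside $\phi$ keeps the arguments of $\phi$ at order one, so the Bregman divergence does not collapse to a quadratic. Using $\bar{\alpha}_n(\hat{w}_k,\hat{b}_k)=n(\bar{a}_k-\hat{a}_k)$ together with the form of the scaled potential, as already computed in \eqref{Dphi-s} we have $D_\Phi(\bar{\tba},\ba(0))=\tfrac{1}{n}\int D_\phi(\bar{\alpha}_n(w,b),0)\,\diff\mu_n$, while the interpolation constraints become $\inner{\bar{\alpha}_n}{\sigma_i}_n=y_i-f(x_i,\hat{\theta})$ with $\sigma_i(w,b)=|w^Tx_i-b|$ and the empirical inner product. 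Thus $\bar{\alpha}_n$ solves the empirical analogue of \eqref{ib-inf-scaled}, with $\mu$ replaced by $\mu_n$.

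Since $D_\phi(\cdot,0)$ is strictly convex (as $\phi''>0$) and $\phi'$ is therefore a homeomorphism from $\R$ onto its image, the KKT conditions for both problems give
\begin{equation*}
\bar{\alpha}_n(w,b)=(\phi')^{-1}\Bigl(\phi'(0)+\sum_{i=1}^m\lambda_i^{(n)}\sigma_i(w,b)\Bigr),\quad \bar{\alpha}(w,b)=(\phi')^{-1}\Bigl(\phi'(0)+\sum_{i=1}^m\lambda_i\sigma_i(w,b)\Bigr),
\end{equation*}
where $\lambda^{(n)},\lambda\in\R^m$ are determined by the respective interpolation constraints. The right-hand formula defines a canonical continuous representative of $\bar{\alpha}_n$ on all of $\supp(\mu)$; its extension off $\supp(\mu_n)$ does not affect $g_n(x,\bar{\alpha}_n)$. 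The task reduces to showing $\lambda^{(n)}\overset{P}{\to}\lambda$, after which the limit of $g_n(x,\bar{\alpha}_n)$ follows from one more application of the law of large numbers together with continuity of $(\phi')^{-1}$.

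To prove $\lambda^{(n)}\overset{P}{\to}\lambda$, I would define $F(\lambda)_j=\int(\phi')^{-1}\bigl(\phi'(0)+\sum_i\lambda_i\sigma_i\bigr)\sigma_j\,\diff\mu-(y_j-f(x_j,\hat{\theta}))$ and $F_n$ analogously with $\mu_n$, so that $F_n(\lambda^{(n)})=0$ and $F(\lambda)=0$. Three ingredients suffice: (i) Proposition~\ref{lazy-scaled-lin} implies $\|\bar{\alpha}_n\|_\infty\leq C_1$ on $\supp(\mu_n)$, which together with the asymptotic linear independence of $\{\sigma_i\}$ from Lemma~\ref{MNTK0} confines $\lambda^{(n)}$ to a bounded ball with high probability; (ii) $\partial F/\partial\lambda$ is a Gram-type matrix weighted by $(\phi')^{-1\,\prime}>0$, hence positive definite, so $\lambda$ is the unique root of $F$ and depends stably on $F$; (iii) because $\mathcal{B}$ has compact support and $\lambda$ is restricted to a compact ball, the integrand defining $F_n$ is uniformly bounded and equicontinuous in $\lambda$, so a Glivenko--Cantelli-type argument gives $\sup_{\|\lambda\|\leq R}\|F_n(\lambda)-F(\lambda)\|\overset{P}{\to}0$. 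Combining these via an inverse-function continuity argument yields $\lambda^{(n)}\overset{P}{\to}\lambda$.

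The main obstacle I expect is ingredient (iii): $(\phi')^{-1}$ need not be Lipschitz or globally bounded, so uniform convergence $F_n\to F$ rests on jointly exploiting the compactness of $\supp(\mathcal{B})$, the confinement of $\lambda$ to a compact ball provided by (i), and continuity of $(\phi')^{-1}$ on the resulting bounded range of $\phi'(0)+\sum_i\lambda_i\sigma_i$. A secondary point to verify is strict positive-definiteness of $\partial F/\partial\lambda$ at the relevant $\lambda$, which requires linear independence of $\{\sigma_i\}$ in $L^2(\mu)$ weighted by the positive density $(\phi')^{-1\,\prime}$; this reduces to the same support/differentiability argument used to prove positive-definiteness of $G$ in Lemma~\ref{MNTK0}.
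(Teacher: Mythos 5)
Your proposal takes essentially the same route as the paper's proof. Both pass from the KKT conditions to the closed-form expressions $\bar{\alpha}_n=(\phi')^{-1}\bigl(\phi'(0)+\sum_i\lambda^{(n)}_i\sigma_i\bigr)$ and $\bar{\alpha}=(\phi')^{-1}\bigl(\phi'(0)+\sum_i\lambda_i\sigma_i\bigr)$, then establish $\lambda^{(n)}\overset{P}{\to}\lambda$ by combining uniform convergence of the empirical constraint map over a compact neighborhood (your Glivenko--Cantelli step, the paper's Lemma~\ref{lemma-uniform_convergence}) with non-singularity of the weighted Gram matrix $J_\lambda G(\bar{\lambda})$, which in both cases forces a local inverse-function/root-convergence argument (the paper packages this as Lemma~\ref{lemma-convergence_of_root}); your substitute for the paper's uniqueness-of-root step is the a priori confinement of $\lambda^{(n)}$ via the $L^\infty$ bound from Proposition~\ref{lazy-scaled-lin}, but this plays the same localizing role.
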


We will need the following two lemmas for the proof. We omit the proof of the Lemmas to maintain clarity. 

\begin{lemma}[\citealp{jacod2018review}, Theorem 2.5]\label{lemma-convergence_of_root}
    For $n\in\mathbb{N}$, let $G_n(\cdot)=G_n(\cdot, \boldsymbol{z})$ be a random $\mathbb{R}^m$-valued function defined on $\mathbb{R}^m$, where $\boldsymbol{z}=(z_1,\cdots,z_n)$ consists of $n$ independent samples from a pre-specified random variable $z_k \sim \mathcal{Z}$. Let $G(\cdot)$ be a deterministic $\mathbb{R}^m$-valued function defined on $\mathbb{R}^m$. Assume there exist $\bar{\lambda}\in\mathbb{R}^m$ and a neighborhood $U$ of $\bar{\lambda}$ such that the following hold: 
    \begin{enumerate}[i)]
        \item $G_n(\bar{\lambda})\overset{P}{\rightarrow}0$ as $n\rightarrow\infty$, and $G(\bar{\lambda})=0$. 
        \item For any $\boldsymbol{z}$, $G_n(\cdot)$ and $G(\cdot)$ are continuously differentiable on $\mathbb{R}^m$, and as $n\rightarrow\infty$
        $$ \sup_{\lambda\in U} \|J_\lambda G_n(\lambda) - J_\lambda G(\lambda)  \|_F \overset{P}{\rightarrow} 0.$$
        \item The matrix $J_\lambda G(\bar{\lambda})$ is non-singular. 
    \end{enumerate}
    Then there exists a sequence $\{\bar{\lambda}^{(n)}\}_{n\in\mathbb{N}}$ such that $G_n(\lambda^{(n)})=0$ and 
    $$
    \| \bar{\lambda}^{(n)} - \bar{\lambda}\|_\infty \overset{P}{\rightarrow} 0. 
    $$
\end{lemma}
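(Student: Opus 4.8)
The plan is to prove Lemma~\ref{lemma-convergence_of_root} by a stochastic Newton--Kantorovich argument: linearize $G_n$ at $\bar\lambda$ using the deterministic invertible matrix $A := J_\lambda G(\bar\lambda)$, and show that the associated Newton-type iteration map is, with probability tending to one, a contraction of a small fixed ball around $\bar\lambda$ into itself. Write $c := \|A^{-1}\|_{\mathrm{op}} \in (0,\infty)$, let $\|\cdot\|$ denote the Euclidean norm on $\R^m$, and for $\rho > 0$ let $B_\rho := \{\lambda \in \R^m : \|\lambda - \bar\lambda\| \le \rho\}$. Define $\Psi_n(\lambda) := \lambda - A^{-1} G_n(\lambda)$; since $A^{-1}$ is nonsingular, $\lambda$ is a fixed point of $\Psi_n$ if and only if $G_n(\lambda) = 0$, so it suffices to locate a fixed point of $\Psi_n$ near $\bar\lambda$.

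First I would fix the radius deterministically. Because $G$ is $C^1$ by~(ii), $J_\lambda G$ is continuous, so there is $r > 0$ with $B_r \subset U$ and $\|J_\lambda G(\lambda) - A\|_{\mathrm{op}} \le \tfrac{1}{4c}$ for all $\lambda \in B_r$; henceforth $r$ is frozen. The randomness enters through two events. Let $\mathcal{E}_n^{(1)}$ be the event that $\sup_{\lambda \in U}\|J_\lambda G_n(\lambda) - J_\lambda G(\lambda)\|_F \le \tfrac{1}{4c}$; by~(ii) and $\|\cdot\|_{\mathrm{op}} \le \|\cdot\|_F$, $P(\mathcal{E}_n^{(1)}) \to 1$, and on $\mathcal{E}_n^{(1)}$ one has $\|A - J_\lambda G_n(\lambda)\|_{\mathrm{op}} \le \tfrac{1}{2c}$ for all $\lambda \in B_r$. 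Since $G_n \in C^1$, for $\lambda_1,\lambda_2 \in B_r$ we may write $G_n(\lambda_1) - G_n(\lambda_2) = \int_0^1 J_\lambda G_n\big(\lambda_2 + t(\lambda_1-\lambda_2)\big)(\lambda_1-\lambda_2)\,dt$ (the segment lies in $B_r$ by convexity), hence $\Psi_n(\lambda_1) - \Psi_n(\lambda_2) = A^{-1}\int_0^1 \big(A - J_\lambda G_n(\lambda_2 + t(\lambda_1-\lambda_2))\big)(\lambda_1-\lambda_2)\,dt$, which gives $\|\Psi_n(\lambda_1)-\Psi_n(\lambda_2)\| \le c \cdot \tfrac{1}{2c}\|\lambda_1-\lambda_2\| = \tfrac12\|\lambda_1-\lambda_2\|$ on $\mathcal{E}_n^{(1)}$; that is, $\Psi_n$ is a $\tfrac12$-contraction of $B_r$. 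For the self-mapping property let $\mathcal{E}_n^{(2)} := \{c\|G_n(\bar\lambda)\| \le r/2\}$; by~(i) (using $G(\bar\lambda)=0$ and $G_n(\bar\lambda)\overset{P}{\to}0$), $P(\mathcal{E}_n^{(2)})\to 1$, and on $\mathcal{E}_n^{(1)}\cap\mathcal{E}_n^{(2)}$, for $\lambda\in B_r$, $\|\Psi_n(\lambda)-\bar\lambda\| \le \|\Psi_n(\lambda)-\Psi_n(\bar\lambda)\| + \|A^{-1}G_n(\bar\lambda)\| \le \tfrac12 r + \tfrac12 r = r$, so $\Psi_n(B_r)\subset B_r$.

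Set $\mathcal{E}_n := \mathcal{E}_n^{(1)}\cap\mathcal{E}_n^{(2)}$, so $P(\mathcal{E}_n)\to 1$. On $\mathcal{E}_n$, $B_r$ is a complete metric space and $\Psi_n$ maps it into itself as a contraction, so Banach's fixed point theorem yields a unique fixed point $\bar\lambda^{(n)} \in B_r$, which is necessarily a zero of $G_n$ (and measurable, being the limit of the measurable iterates $\Psi_n^{k}(\bar\lambda)$); on $\mathcal{E}_n^c$ define $\bar\lambda^{(n)}$ to be any zero of $G_n$ when one exists and $\bar\lambda$ otherwise, so $\{\bar\lambda^{(n)}\}$ is globally defined and $G_n(\bar\lambda^{(n)}) = 0$ holds at least on $\mathcal{E}_n$, hence with probability tending to one. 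Applying the contraction estimate at the fixed point gives, on $\mathcal{E}_n$, $\|\bar\lambda^{(n)}-\bar\lambda\| \le \|\Psi_n(\bar\lambda^{(n)})-\Psi_n(\bar\lambda)\| + \|A^{-1}G_n(\bar\lambda)\| \le \tfrac12\|\bar\lambda^{(n)}-\bar\lambda\| + c\|G_n(\bar\lambda)\|$, i.e. $\|\bar\lambda^{(n)}-\bar\lambda\| \le 2c\|G_n(\bar\lambda)\|$. Finally, for any $\varepsilon > 0$, $P(\|\bar\lambda^{(n)}-\bar\lambda\|_\infty > \varepsilon) \le P(\mathcal{E}_n^c) + P\big(\mathcal{E}_n \cap \{2c\|G_n(\bar\lambda)\| > \varepsilon\}\big) \le P(\mathcal{E}_n^c) + P(2c\|G_n(\bar\lambda)\| > \varepsilon)$ (using $\|\cdot\|_\infty \le \|\cdot\|$), and both terms tend to $0$ by $P(\mathcal{E}_n)\to 1$ and~(i); thus $\bar\lambda^{(n)}\overset{P}{\to}\bar\lambda$. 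There is no genuine obstacle here — the statement is a routine stochastic implicit-function result — and the only points requiring care are interleaving the deterministic continuity of $J_\lambda G$ (which fixes $r$ independently of the samples) with the two convergence-in-probability hypotheses, and observing that it is precisely the nonsingularity of $A$ in~(iii) that converts a fixed point of $\Psi_n$ into a zero of $G_n$.
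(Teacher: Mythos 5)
The paper does not actually prove this lemma: it cites Theorem~2.5 of \citet{jacod2018review} and states explicitly that the proofs of the auxiliary lemmas are omitted, so there is no in-paper argument to compare your proposal against. Your Banach fixed-point / stochastic Newton--Kantorovich argument is correct and self-contained, and it is the standard way such ``existence of a consistent root'' results for $Z$-estimators are established: you freeze the radius $r$ using only the deterministic continuity of $J_\lambda G$ at $\bar\lambda$ together with the nonsingularity of $A=J_\lambda G(\bar\lambda)$, then use the uniform Jacobian convergence in (ii) (with $\|\cdot\|_{\mathrm{op}}\le\|\cdot\|_F$) to make $\Psi_n$ a $\tfrac12$-contraction of $B_r$ on $\mathcal{E}_n^{(1)}$, and use (i) on $\mathcal{E}_n^{(2)}$ to force $\Psi_n(B_r)\subset B_r$; the mean-value identity you invoke is valid since $G_n\in C^1$ and $B_r$ is convex, and the measurability of the fixed point as a limit of the iterates $\Psi_n^k(\bar\lambda)$ is handled correctly. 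The only caveat, which you already flag, is that your construction yields $G_n(\bar\lambda^{(n)})=0$ only on $\mathcal{E}_n$, i.e.\ with probability tending to one rather than surely; this is unavoidable, since for some realizations of $\boldsymbol{z}$ the function $G_n$ may have no zero at all, and it is the intended asymptotic reading of the cited result. (If one also wants a globally measurable selection on $\mathcal{E}_n^c$, the simplest choice is to set $\bar\lambda^{(n)}=\bar\lambda$ there, avoiding any measurable-selection issue with ``any zero when one exists.'')
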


\begin{lemma}[\citealp{newey1994large}, Lemma 2.4]
    \label{lemma-uniform_convergence}
    Let $a(\cdot, z)$ be a random real-valued function on a compact set $U\subset \mathbb{R}^m$. Let $z_1, \ldots, z_n$ be independent samples from a pre-specified random variable $\mathcal{Z}$. Let $\mu_n$ denote the empirical measure associated with samples $\{z_k\}_{k=1}^n$ and $\mu$ denote the probability measure of $\mathcal{Z}$. Assume the following hold:  
    \begin{enumerate}[i)]
        \item For any $z$, $a(\cdot, z)$ is continuous. 
        \item There exists $d(z)$ such that $|a(\lambda, z)|<d(z)$ holds for all $\lambda\in U$ and $\mathbb{E}_{\mathcal{Z}}[d(z)]<\infty$. 
    \end{enumerate}
    Then
    $$
    \sup_{\lambda\in U} \Big|\int a(\lambda, z) \mathrm{d}\mu_n - \int a(\lambda, z) \mathrm{d}\mu\Big| \overset{P}{\rightarrow}0. 
    $$
\end{lemma}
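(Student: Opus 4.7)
The plan is to prove this uniform law of large numbers via the classical stochastic equicontinuity plus $\varepsilon$-net argument. At a high level, I will combine (a) pointwise convergence at finitely many grid points, which follows from the ordinary weak law of large numbers together with the integrability guaranteed by the envelope $d$, and (b) a stochastic equicontinuity estimate which controls the fluctuations of $\lambda \mapsto \int a(\lambda,z)\,\diff \mu_n$ between grid points.

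First, I would set up the modulus of continuity. For each $\lambda \in U$ and $\delta > 0$, define
\begin{equation*}
\omega(\lambda,\delta,z) \;=\; \sup_{\lambda' \in U,\; \|\lambda'-\lambda\| \le \delta} |a(\lambda',z) - a(\lambda,z)|.
\end{equation*}
Because $a(\cdot,z)$ is continuous on the compact set $U$, it is uniformly continuous on $U$, so $\omega(\lambda,\delta,z) \downarrow 0$ as $\delta\downarrow 0$ for each fixed $z$. Moreover, $\omega(\lambda,\delta,z) \le 2 d(z)$, which is integrable. By dominated convergence, $\E_{\mathcal{Z}}[\omega(\lambda,\delta,\mathcal{Z})] \to 0$ as $\delta \to 0$ for each $\lambda$. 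A compactness argument (covering $U$ by finitely many balls on which $\omega$ has small expectation) then yields a single $\delta_0 = \delta_0(\varepsilon)$ such that $\sup_{\lambda \in U}\E[\omega(\lambda,\delta_0,\mathcal{Z})] < \varepsilon/3$.

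Next, I would choose a finite $\delta_0$-net $\{\lambda_1,\ldots,\lambda_N\} \subset U$ covering $U$, which exists by compactness. For any $\lambda\in U$ and the nearest grid point $\lambda_i$, the standard triangle-inequality decomposition gives
\begin{equation*}
\Big|\!\int\! a(\lambda,z)\diff\mu_n - \int\! a(\lambda,z)\diff\mu\Big| \le \int\!\omega(\lambda_i,\delta_0,z)\diff\mu_n + \Big|\!\int\!a(\lambda_i,z)\diff\mu_n - \int\!a(\lambda_i,z)\diff\mu\Big| + \int\!\omega(\lambda_i,\delta_0,z)\diff\mu.
\end{equation*}
The third term is bounded by $\varepsilon/3$ by choice of $\delta_0$. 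The middle term, for each of the finitely many $\lambda_i$, tends to zero in probability by the ordinary weak law of large numbers applied to the integrable function $a(\lambda_i,\cdot)$. The first term is handled by applying the weak law of large numbers to the integrable function $\omega(\lambda_i,\delta_0,\cdot)$, which converges in probability to $\E[\omega(\lambda_i,\delta_0,\mathcal{Z})]<\varepsilon/3$. A union bound over $i=1,\ldots,N$ and taking $n$ large enough yields $\sup_{\lambda\in U}|\int a(\lambda,z)\diff\mu_n - \int a(\lambda,z)\diff\mu| < \varepsilon$ with probability tending to $1$, which is the desired statement.

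The main technical subtlety is ensuring that $\omega(\lambda_i,\delta_0,\cdot)$ is a measurable function of $z$, since as written it is a supremum over the uncountable set $\{\lambda' : \|\lambda'-\lambda_i\|\le\delta_0\}\cap U$. This is resolved by using continuity of $a(\cdot,z)$ and separability of $U$ to replace the supremum by one over a countable dense subset, which makes $\omega$ a countable supremum of measurable functions and hence measurable. Beyond this, the rest of the argument is routine; no further assumptions on $\mathcal{Z}$ (beyond those needed for integrability of $d$) are required.
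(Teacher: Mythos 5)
The paper does not actually prove this lemma: it is quoted from \citet{newey1994large} (their Lemma~2.4) and the proof is explicitly omitted, so there is no in-paper argument to compare against. Your proof is correct and is essentially the standard proof of the cited result: pointwise weak law of large numbers at finitely many parameter values, combined with control of the modulus $\omega(\lambda,\delta,z)=\sup_{\|\lambda'-\lambda\|\le\delta,\,\lambda'\in U}|a(\lambda',z)-a(\lambda,z)|$ via the envelope bound $\omega\le 2d(z)$ and dominated convergence, followed by a finite cover of the compact set $U$ and a union bound. One minor simplification: you do not need the intermediate step of producing a single uniform $\delta_0$ with $\sup_{\lambda\in U}\mathbb{E}[\omega(\lambda,\delta_0,\mathcal{Z})]<\varepsilon/3$ (which, while provable, requires an extra compactness argument). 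It suffices to choose, for each $\lambda$, a radius $\delta_\lambda$ with $\mathbb{E}[\omega(\lambda,\delta_\lambda,\mathcal{Z})]<\varepsilon/3$, extract a finite subcover $B(\lambda_1,\delta_1),\ldots,B(\lambda_N,\delta_N)$ of $U$, and run your three-term decomposition with the modulus centered at the relevant $\lambda_i$; the weak law applied to the finitely many integrable functions $a(\lambda_i,\cdot)$ and $\omega(\lambda_i,\delta_i,\cdot)$ then closes the argument exactly as you describe. Your remark on measurability of the modulus, replacing the supremum by one over a countable dense subset using continuity of $a(\cdot,z)$, is the right fix and is the only genuinely delicate point.
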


We now present the proof of Proposition~\ref{inf-scaled}. 

\begin{proof}{[Proof of Proposition~\ref{inf-scaled}]}
    To ease the notation, for $k\in[n]$, let $z_k=(w_k(0),b_k(0))\in\mathbb{R}^{d+1}$ denote the input weight and bias associated with the $k$-th neuron at initialization. 
    For $j\in[m]$, define $\sigma_j(z)=\sigma_j(w,b)=|w^Tx_j-b|$. 
    Let $\mathcal{Z}$ denote the random vector $(\mathcal{W},\mathcal{B})$, 
    and $\tilde{y}_i = y_i - f(x_i, \hat{\theta})$. 

    Notice that for any $\boldsymbol{a}\in\R^n$ and $\alpha$ that satisfies $\alpha(w_k(0), b_k(0)) = n(a_k-a_k(0))$ for $k\in[n]$, we have
    \begin{align*} 
        D_\Phi(\ba, \ba(0)) & 
        = \frac{1}{n^2} \sum_{k\in[n]} \Big( \phi(n(a_k-a_k(0))) - \phi(0) - n\phi^\prime(0)(a_k-a_k(0))\Big)         \\ 
        &= \frac{1}{n} \int \phi(\alpha(w,b)) - \phi(0) - \phi^\prime(0) \alpha(w,b) \diff \mu_n \\
        &= \frac{1}{n} \int D_\phi( \alpha(w,b) , 0) \diff \mu_n. 
    \end{align*} 
    Therefore, since $\bar{\boldsymbol{a}}$ is the solution to \eqref{ib-linear-raw-2}, we have that $\bar{\alpha}_n$ solves: 
    \begin{equation}\label{ib-linear-scaled-n} 
    \begin{aligned} 
        \min_{\alpha_n\in \Theta_{\mathrm{Abs}}} & 
        \int D_\phi( \alpha_n(w,b) , 0) \diff \mu_n \quad
        \st\ g_n(x_i, \alpha_n) = \tilde{y}_i, \quad  i\in[m]. 
    \end{aligned} 
    \end{equation}

    The Lagrangian of problem~\ref{ib-linear-scaled-n} can be written as 
    $$
    L(\alpha_n, \boldsymbol{\lambda})= \int D_\phi( \alpha_n(w,b) , 0) \diff \mu_n - \sum_{j=1}^m \lambda_j \Big(\int  \alpha_n(z) \sigma_j(z)  \mathrm{d}\mu_n - \tilde{y}_i \Big).
    $$ 
    Here $\lambda = (\lambda_1,\ldots,\lambda_m)^T\in\mathbb{R}^m$ is the vector Lagrangian multipliers. 
    The optimality condition $\nabla_{\alpha_n} L=0$ implies that
    $$
    0 = 
    \phi^\prime(\alpha_n(z_k)) - \phi^{\prime}(0)
    -\sum_{j=1}^m\lambda_j\sigma_j(z_k),\quad \forall k\in [n]. 
    $$
    By Assumption, 
    $\phi^{\prime\prime}(x) >0$ everywhere 
    and $\phi\in C^3(\R)$. 
    Therefore, $\phi^\prime$ is invertible and $(\phi^\prime)^{-1}$ is continuously differentiable. Therefore, $\bar{\alpha}_n$ must satisfy:
    $$
    \bar{\alpha}_n(z_k) = (\phi^\prime)^{-1}\Big(\phi^{\prime}(0) + \sum_{j=1}^m\bar{\lambda}_j^{(n)}\sigma_j(z_k)\Big), \quad \forall k\in[n],
    $$
    where $\bar{\lambda}^{(n)}$ is determined by the following equations
    \begin{equation}\label{lagrangian_multiplier_finite}
        \int  (\phi^\prime)^{-1}\Big( \phi^{\prime}(0) + \sum_{j=1}^m\bar{\lambda}_j^{(n)}\sigma_j(z)\Big)\cdot \sigma_i(z)\ \mathrm{d}\mu_n = \tilde{y}_i, \quad \forall i\in[m],
    \end{equation}
    
    Problem \eqref{ib-linear-scaled-n} is strictly convex in the quotient space of $\Theta_{\mathrm{Abs}}$ by the subspace of functions with zero $\ell_2(\mu_n)$ semi-norm. Hence, $\bar{\alpha}_n$ is unique in the $\ell_2(\mu_n)$ sense. 
    Note that $(\phi^{\prime})^{-1}$ is injective, and that by Lemma~\ref{MNTK0}, for large enough $n$ and with high probability, the matrix $J_{\ba}\boldsymbol{f}(0)=(\sigma_j(z_k))_{j,k}$ is full row-rank. Therefore, the Lagrangian multipliers $\bar{\lambda}^{(n)}$ is uniquely determined by \eqref{lagrangian_multiplier_finite}. 
    
    According to \eqref{lagrangian_multiplier_finite}, we can select $\bar{\alpha}_n$ as
    \begin{equation}\label{solution_ib_finite_measure}
        \bar{\alpha}_n(z) = (\phi^\prime)^{-1}\Big(\phi^{\prime}(0) +\sum_{j=1}^m\bar{\lambda}_j^{(n)}\sigma_j(z)\Big), \quad \forall z\in \supp(\mu). 
    \end{equation}
    Note $\supp(\mu)$ is a compact set and $(\phi^\prime)^{-1}$ is continuous, we have $\bar{\alpha}_n$ defined above is even and uniformly continuous and therefore lies in 
    $\Theta_{\mathrm{Abs}}$.

    Let $\boldsymbol{z}=(z_1,\ldots,z_n)^T$. We define a random $\mathbb{R}^m$-valued function as follows:  
    $$
    \big(G_n(\lambda, \boldsymbol{z})\big)_i \triangleq \int  (\phi^\prime)^{-1}\Big( \phi^{\prime}(0) + \sum_{j=1}^m \lambda_j\sigma_j(z)\Big)\cdot \sigma_i(z)\ \mathrm{d}\mu_n - \tilde{y}_i, \quad i=1,\ldots, m.
    $$
    Then by \eqref{lagrangian_multiplier_finite}, $\bar{\lambda}^{(n)}$ is a root of $G_n$, and, as we have shown, it is the unique root. 
        
    Similarly, according to the optimality condition, the solution $\bar{\alpha}$ to problem~\eqref{ib-inf-scaled} can be given by
    \begin{equation}\label{solution_ib_measure}
        \bar{\alpha}(z) = (\phi^\prime)^{-1}\Big(\phi^{\prime}(0) +\sum_{j=1}^m\bar{\lambda}_j\sigma_j(z)  \Big), \quad \forall z \in \supp(\mu), 
    \end{equation}
    where $\bar{\lambda}$ is a root of the following deterministic $\mathbb{R}^m$-valued function:
    \begin{equation*}
        \big(G(\lambda)\big)_i \triangleq \int  (\phi^\prime)^{-1}\Big(\phi^{\prime}(0) +  \sum_{j=1}^m\lambda_j\sigma_j(z)\Big)\cdot \sigma_i(z)\ \mathrm{d}\mu - \tilde{y}_i, \quad i=1,\ldots, m. 
    \end{equation*}
    Since problem \eqref{ib-inf-scaled} is strictly convex in $\Theta_{\mathrm{Abs}}$, $\bar{\alpha}$ is unique. At the same time, note that $(\phi)^{-1}$ is injective, and that $\{\sigma_j\}_{j=1}^m$ are linearly independent in $L^2(\mu)$, as we have shown in the proof of Proposition~\ref{MNTK0}. Hence, $\bar{\lambda}$ is the unique root of $G$.

    Now we aim to prove that $\bar{\lambda}^{(n)}$ converges to $\bar{\lambda}$ in probability. Notice that, by law of large number, $G_n(\lambda)$ converges to $G(\lambda)$ in probability for any fixed $\lambda$. This implies that 
    $$
    G_n(\bar{\lambda})\overset{P}{\rightarrow} G(\bar{\lambda})=0. 
    $$ 
    Since $\phi\in C^3(\R)$, we know for any fixed $\boldsymbol{z}$, $G_n(\lambda, \boldsymbol{z})$ and $G(\lambda)$ are continuously differentiable in $\lambda$. 
    To ease the notation, let $\xi(x) = (\phi^\prime)^{-1}(x)$. 
    Then we have that 
    $$
    \begin{dcases}
        \Big( J_\lambda G_n(\lambda) \Big)_{ij} &= \int  \xi^\prime\Big( \phi^{\prime}(0) + \sum_{l=1}^m\lambda_l\sigma_l(z)\Big) \sigma_i(z) \sigma_j(z)\ \mathrm{d}\mu_n,\\
        \Big( J_\lambda G(\lambda) \Big)_{ij} &= \int  \xi^\prime\Big( \phi^{\prime}(0) + \sum_{l=1}^m\lambda_l\sigma_l(z)\Big) \sigma_i(z) \sigma_j(z)\ \mathrm{d}\mu.
    \end{dcases}
    $$
    
    Let $U$ be a compact neighborhood of $\bar{\lambda}$. 
    Notice that $\mu$ is compactly supported and $\xi^\prime$ is continuous. 
    Then there exists a constant $K$ such that
    $$
    \xi^\prime(\phi^{\prime}(0) + \sum_{l=1}^m\lambda_l\sigma_l(z)) \leq K,\quad \forall \lambda\in U, z\in\supp(\mu). 
    $$
    Define $d_{ij}(z) = K \sigma_i(z)\sigma_j(z)$. 
    Then we have that
    \begin{equation*}
    \begin{aligned}
            \Big|\xi^\prime\Big(  \phi^{\prime}(0) + \sum_{l=1}^m\lambda_l\sigma_l(z)\Big) \sigma_i(z) \sigma_j(z)\Big| \leq K \sigma_i(z)\sigma_j(z) = d_{ij}(z), \quad \forall \lambda \in U. 
    \end{aligned}
    \end{equation*}
    Meanwhile, it's clear that for any $i,j\in[m]$, there exists a polynomial $P_{ij}(z)$ of $z$, whose degree is independent of $n$, such that $|d_{ij}(z)|\leq |P_{ij}(z)|$ holds for any $z\in\supp(\mu)$. 
    Recall $\mathcal{Z}$ is a bounded random variable. Therefore, $\mathbb{E}_{\mathcal{Z}}(d_{ij}(z))<\infty$. Then, by Lemma~\ref{lemma-uniform_convergence}, we have
    $$
    \sup_{\lambda\in U} \Big|\Big( J_\lambda G_n(\lambda) \Big)_{ij} - \Big( J_\lambda G(\lambda) \Big)_{ij} \big| \overset{P}{\rightarrow} 0.
    $$
    Notice the size of the matrix $J_\lambda G_n$ and $J_\lambda G$ is independent of $n$. With a union bound we then have 
    $$
    \sup_{\lambda\in U} \| J_\lambda G_n(\lambda)- J_\lambda G(\lambda)\|_F \overset{P}{\rightarrow} 0. 
    $$
    
    Next we show $J_\lambda G(\bar{\lambda})$, given below, is non-singular,
    $$
    \Big( J_\lambda G(\bar{\lambda}) \Big)_{ij} = \int  \xi^\prime\Big( \phi^{\prime}(0) + \sum_{l=1}^m\bar{\lambda}_l\sigma_l(z)\Big) \sigma_i(z) \sigma_j(z)\ \mathrm{d}\mu.
    $$
    To this end, we consider the map $\langle \cdot, \cdot\rangle_* \colon L_2(\mu)\times L_2(\mu) \rightarrow \mathbb{R}$ defined by
    $$
    \langle f, g\rangle_* \triangleq \int  \xi^\prime\Big( \phi^{\prime}(0) + \sum_{l=1}^m\bar{\lambda}_l\sigma_l(z)\Big) f(z) g(z)\ \mathrm{d}\mu, \quad \forall f,g\in L_2(\mu).  
    $$
    Clearly, $\langle \cdot, \cdot\rangle_*$ is symmetric and linear. For any $f \in L_2(\mu)$ with $f\neq 0$, i.e. $\mu\big(\{z\colon f(z)\neq 0\}\big)>0$. 
    Then there exists a compact set $A\subset \supp(\mu)$ such that $\mu(A \cap \{f(z)\neq 0\}) > 0$. 
    Since $\{\sigma_l\}_{l\in[m]}$ are continuous and $\{\bar{\lambda}_l\}_{l\in[m]}$ are fixed, the set
    $$
    A^\prime \triangleq \{ \sum_{l=1}^m\bar{\lambda}_l\sigma_l(z)\colon z\in A\} \subset \mathbb{R}
    $$
    is non empty and bounded. Then by noticing $\xi^\prime$ is continuous and $\xi^\prime(x)>0, \forall x\in\mathbb{R}$, we have
    \begin{align*}
    \langle f, f\rangle_* &= \int  \xi^\prime\Big(  \phi^{\prime}(0) + \sum_{l=1}^m\bar{\lambda}_l\sigma_l(z)\Big) f^2(z)\ \mathrm{d}\mu \\
    &\geq \inf_{x\in A^\prime}\xi^\prime(x) \cdot \int_{A} (f(z))^2 \ \mathrm{d}\mu\\
    &> 0.
    \end{align*}
    Therefore, $\langle \cdot,\cdot \rangle_*$ is a well-defined inner product on $L^2(\mu)$. Noting that $J_\lambda G(\bar{\lambda})$ can be the viewed as the Gram matrix of $\{\sigma_l\}_{l=1}^m$ with respect to $\langle \cdot,\cdot \rangle_*$, and that $\{\sigma_l\}_{l=1}^m$ are linearly independent in $L^2(\mu)$, we have $J_\lambda G(\bar{\lambda})$ is non-singular. 
    Since $\bar{\lambda}^{(n)}$ and $\bar{\lambda}$ is the unique root of $G_n(\cdot)$ and $G(\cdot)$, respectively, by Lemma~\ref{lemma-convergence_of_root} we have
    \begin{equation}\label{finite2inf_convergence_Lagrangian}
        \|\bar{\lambda}^{(n)} - \bar{\lambda}\|_\infty = o_p(1). 
    \end{equation}

    Now consider a fixed $x\in R$. We aim to show that $|g_n(x, \bar{\alpha})-g(x, \bar{\alpha})|=o_p(1)$. 
    Recall that
    $$
    \begin{dcases}
        g_n(x, \bar{\alpha}) = \int  \bar{\alpha}(w, b) |w^T x - b|\ \mathrm{d}\mu_n, \\
        g(x, \bar{\alpha}) = \int  \bar{\alpha}(w, b) |w^Tx-b|\ \mathrm{d}\mu.
    \end{dcases}
    $$
    Since $\bar{\alpha}$ and the absolute value functions are continuous and $\mu$ is compactly supported, by law of large number, we have
    \begin{equation}\label{finite2inf_1}
    |g_n(x, \bar{\alpha})-g(x, \bar{\alpha})|=o_p(1). 
    \end{equation}
    Hence, it remains to show $|g_n(x,\bar{\alpha}_n) -g_n(x,\bar{\alpha})|=o_p(1)$. 
    Notice that $\bar{\lambda}$ is a fixed point in $\R^m$ and $\bar{\lambda}^{(n)}$ converges to $\bar{\lambda}$ in probability, and that $\mu$ is compact support. Then there exists a constant $K^\prime>0$ independent of $n$ such that with high probability, 
    $$
    \Big| \phi^{\prime}(0) +  \sum_{j=1}^m\bar{\lambda}^{(n)}_j\sigma_j(w,b) \Big|, \ 
    \Big|  \phi^{\prime}(0) +\sum_{j=1}^m\bar{\lambda}_j\sigma_j(w,b) \Big| \leq K^\prime, \quad \forall (w,b)\in \supp(\mu). 
    $$
    Notice $(\phi^\prime)^{-1}$ is continuously differentiable on $\R$ and hence is Lipschitz continuous on $[-K^\prime, K^\prime]$. 
    Let $L$ denote the corresponding Lipschitz coefficient. 
    Then we have that
    \begin{align*}
        &\ |g_n(x,\bar{\alpha}_n) -g_n(x,\bar{\alpha})|\\ \leq &\int \Big| \bar{\alpha}_n(w,b) - \bar{\alpha}(w,b) \Big| \sigma(w^Tx-b)\ \mathrm{d}\mu_n\\
        \leq& \int \Big|(\phi^\prime)^{-1}\Big( \phi^{\prime}(0) +  \sum_{j=1}^m\bar{\lambda}^{(n)}_j\sigma_j(w,b)\Big)  - (\phi^\prime)^{-1}\Big(   \phi^{\prime}(0) +\sum_{j=1}^m\bar{\lambda}_j\sigma_j(w,b)\Big) \Big| \sigma(w^Tx-b)\ \mathrm{d}\mu_n\\
        \leq& \int L \Big| \sum_{j=1}^m \big(\bar{\lambda}^{(n)}_j-\bar{\lambda}_j\big)\sigma_j(w,b)\Big| \sigma(w^Tx-b)\ \mathrm{d}\mu_n\\
        \leq&  \int  L \|\bar{\lambda}^{(n)}-\bar{\lambda}\|_\infty  \sum_{j=1}^m\Big( \sigma(w^Tx_j-b)\Big) \cdot \sigma(w^Tx-b) \ \mathrm{d}\mu_n\\ 
        \leq& \Big( L \int  \sigma(w^Tx-b) \sum_{j=1}^m \sigma(w^Tx_j-b) \ \mathrm{d}\mu_n\Big) \|\bar{\lambda}^{(n)}-\bar{\lambda}\|_\infty.
    \end{align*}
    By the law of large numbers, 
    $$ 
    \int  \sigma(w^Tx-b) \sum_{j=1}^m \sigma(w^Tx_j-b) \ \mathrm{d}\mu_n \overset{P}{\rightarrow} \int  \sigma(w^Tx-b) \sum_{j=1}^m \sigma(w^Tx_j-b) \ \mathrm{d}\mu.
    $$ 
    Hence $\int  \sigma(w^Tx-b) \sum_{j=1}^m \sigma(w^Tx_j-b) \ \mathrm{d}\mu_n$ can be bounded by a finite number independent of $n$. Then by \eqref{finite2inf_convergence_Lagrangian}, we have
    \begin{equation}\label{finite2inf_alpha_n_to_alpha}
    |g_n(x,\bar{\alpha}_n) -g_n(x,\bar{\alpha})|=o_p(1).
    \end{equation}
    Finally, combine \eqref{finite2inf_1} and \eqref{finite2inf_alpha_n_to_alpha}, we have
    $$
    |g_n(x,\bar{\alpha}_n) - g(x,\bar{\alpha})| < |g_n(x,\bar{\alpha}_n) - g_n(x, \bar{\alpha})| + |g_n(x,\bar{\alpha}) - g(x,\bar{\alpha})| =o_p(1), 
    $$
    which concludes the proof. 
\end{proof}

\subsection{Function description of the implicit bias}

Assume the input dimension is one, i.e., $d=1$. 
Now we study the solution to problem \eqref{ib-inf-scaled} in the function space
$$
\mathcal{F}_{\mathrm{Abs}}=\left\{ g(x,\alpha) = \int \alpha(w,b) |wx-b| \diff \mu \colon \alpha\in \Theta_{\mathrm{Abs}} \right\}. 
$$

In the following result, we show that $\mathcal{F}_{\mathrm{Abs}}$ is a subspace of $\mathcal{F}_{\mathrm{ReLU}}$ which can be obtained by eliminating all non-zero affine functions in $\mathcal{F}_{\mathrm{ReLU}}$. 

\begin{proposition}\label{prop-func-abs}
    Let $\mathcal{F}_{\mathrm{Affine}}$ denote the space of affine functions on $\R$. Then $\mathcal{F}_{\mathrm{ReLU}} = \mathcal{F}_{\mathrm{Abs}} \oplus \mathcal{F}_{\mathrm{Affine}}$. Moreover, if $h\in \mathcal{F}_{\mathrm{Abs}}$, then $\mathcal{G}_2(h)=0$ and $\mathcal{G}_3(h)=0$, where $\mathcal{G}_2$ and $\mathcal{G}_3$ are defined in \eqref{ib-func-space-unscaled}.
\end{proposition}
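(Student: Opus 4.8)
The plan is to establish $\mathcal{F}_{\mathrm{ReLU}}=\mathcal{F}_{\mathrm{Abs}}\oplus\mathcal{F}_{\mathrm{Affine}}$ in three steps — that $\mathcal{F}_{\mathrm{Abs}}$ and $\mathcal{F}_{\mathrm{Affine}}$ are both subspaces of $\mathcal{F}_{\mathrm{ReLU}}$, that every $h\in\mathcal{F}_{\mathrm{ReLU}}$ splits into a piece in each, and that $\mathcal{F}_{\mathrm{Abs}}\cap\mathcal{F}_{\mathrm{Affine}}=\{0\}$ — and then to read off $\mathcal{G}_2(h)=\mathcal{G}_3(h)=0$ for $h\in\mathcal{F}_{\mathrm{Abs}}$ from an elementary description of $h$ outside a compact interval, which also yields the trivial intersection. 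Throughout I would use the even--odd identity~\eqref{eq:decomp-abs-aff} and Proposition~\ref{prop-func-relu}, which guarantees every $h\in\mathcal{F}_{\mathrm{ReLU}}$ is $C^1$ and piecewise $C^2$ with all quantities entering $\mathcal{G}_1,\mathcal{G}_2,\mathcal{G}_3$ well defined. For the first step: given $h=\int\alpha(w,b)\,|wx-b|\,\diff\mu$ with $\alpha\in\Theta_{\mathrm{Abs}}$ (even, uniformly continuous), I would apply~\eqref{eq:decomp-abs-aff} to the ReLU parameter $\gamma:=2\alpha\in\Theta_{\mathrm{ReLU}}$; since $\gamma$ is even, $\gamma^-\equiv 0$ and $\gamma^+=2\alpha$, so~\eqref{eq:decomp-abs-aff} reads $\int\gamma(w,b)[wx-b]_+\diff\mu=h(x)$, giving $h\in\mathcal{F}_{\mathrm{ReLU}}$. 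For $\mathcal{F}_{\mathrm{Affine}}$ I would exhibit a spanning pair: feeding the odd parameters $\gamma_1(w,b)=w$ and $\gamma_2(w,b)=b$ (uniformly continuous on $\supp(\mu)$) into~\eqref{eq:decomp-abs-aff} and using $\mathcal{W}^2\equiv 1$ produces $\tfrac12 x-\tfrac12\mathbb{E}[\mathcal{W}\mathcal{B}]$ and $\tfrac12\mathbb{E}[\mathcal{W}\mathcal{B}]\,x-\tfrac12\mathbb{E}[\mathcal{B}^2]$; their $2\times 2$ coefficient matrix has determinant $\tfrac14(\mathbb{E}[\mathcal{W}\mathcal{B}]^2-\mathbb{E}[\mathcal{B}^2])$, which is strictly negative because Cauchy--Schwarz is strict here (equality would force $\mathcal{B}=\lambda\mathcal{W}$ a.s., which a continuous density precludes), so the two functions are independent and span $\mathcal{F}_{\mathrm{Affine}}$. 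As $\mathcal{F}_{\mathrm{ReLU}}$ is a vector space, $\mathcal{F}_{\mathrm{Abs}}+\mathcal{F}_{\mathrm{Affine}}\subseteq\mathcal{F}_{\mathrm{ReLU}}$.

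For the reverse inclusion, write $h=\int\alpha(w,b)[wx-b]_+\diff\mu$ with $\alpha\in\Theta_{\mathrm{ReLU}}$, split $\alpha=\alpha^++\alpha^-$, and invoke~\eqref{eq:decomp-abs-aff} to get $h=h_1+h_2$ with $h_1(x)=\int\tfrac12\alpha^+(w,b)|wx-b|\diff\mu$ and $h_2(x)=\tfrac12 x\int\alpha^-(w,b)w\,\diff\mu-\tfrac12\int\alpha^-(w,b)b\,\diff\mu$. Since $(w,b)\mapsto(-w,-b)$ is an isometry of $\supp(\mu)$, $\tfrac12\alpha^+$ is again even and uniformly continuous, so $h_1\in\mathcal{F}_{\mathrm{Abs}}$; and $h_2$ is affine (the integrals converge, immediately when $\mathcal{B}$ is compactly supported, and in general since a uniformly continuous $\alpha^-$ grows at most linearly while $\mu$ is sub-Gaussian), so $h_2\in\mathcal{F}_{\mathrm{Affine}}$. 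Combined with the first step this gives $\mathcal{F}_{\mathrm{ReLU}}=\mathcal{F}_{\mathrm{Abs}}+\mathcal{F}_{\mathrm{Affine}}$.

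Finally, to obtain $\mathcal{G}_2(h)=\mathcal{G}_3(h)=0$ on $\mathcal{F}_{\mathrm{Abs}}$ and the directness, let $h=\int\alpha(w,b)|wx-b|\diff\mu$ with $\alpha$ even and let $B>0$ bound the support of $\mathcal{B}$. For $|x|>B$ one has $\operatorname{sgn}(wx-b)=w\operatorname{sgn}(x)$ for $\mu$-a.e.\ $(w,b)$, hence $|wx-b|=\operatorname{sgn}(x)(x-wb)$ and $h(x)=\operatorname{sgn}(x)\bigl(Sx-\int\alpha(w,b)wb\,\diff\mu\bigr)$ with $S:=\int\alpha\,\diff\mu$; thus $h$ is affine with slope $S$ on $(B,\infty)$ and slope $-S$ on $(-\infty,-B)$, so $h'(+\infty)+h'(-\infty)=0$, i.e.\ $\mathcal{G}_2(h)=0$. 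Evaluating these affine pieces at $\pm B$ (using continuity of $h$) gives $h(B)+h(-B)=2BS=B(h'(+\infty)-h'(-\infty))$, so the compactly-supported form of $\mathcal{G}_3$ from Theorem~\ref{thm:ib-md-unscaled} vanishes, $\mathcal{G}_3(h)=0$. Since an affine function $x\mapsto cx+e$ has $\mathcal{G}_2=(2c)^2$ and $\mathcal{G}_3=4e^2/\mathbb{E}[\mathcal{B}^2]$, membership in $\mathcal{F}_{\mathrm{Abs}}$ forces $c=e=0$, so $\mathcal{F}_{\mathrm{Abs}}\cap\mathcal{F}_{\mathrm{Affine}}=\{0\}$ and the sum is direct.

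The argument is largely bookkeeping around~\eqref{eq:decomp-abs-aff} and Proposition~\ref{prop-func-relu}; the one genuinely non-formal point is the surjectivity onto $\mathcal{F}_{\mathrm{Affine}}$, which hinges on the strict estimate $\mathbb{E}[\mathcal{W}\mathcal{B}]^2<\mathbb{E}[\mathcal{B}^2]$ and is where the continuous-density hypothesis on $\mathcal{B}$ enters. A secondary technicality is confirming integrability of $\int\alpha^-(w,b)b\,\diff\mu$ for a general sub-Gaussian $\mathcal{B}$, which is immediate under the compact-support assumption used in Theorem~\ref{thm:ib-md-scaled}.
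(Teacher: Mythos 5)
Your proposal is correct, and it reaches the same conclusions via the same underlying even--odd decomposition~\eqref{eq:decomp-abs-aff}, but several pieces of the bookkeeping are organized differently from the paper's proof. Two genuine divergences are worth noting. First, for $\mathcal{F}_{\mathrm{Affine}}\subset\mathcal{F}_{\mathrm{ReLU}}$, the paper writes down an explicit odd $\alpha_h$ whose image is exactly the given affine function $zx+w$ (exploiting $\mathbb{E}[\mathcal{B}]=0$ so that the two moment equations decouple), whereas you exhibit the two odd generators $\gamma_1(w,b)=w$, $\gamma_2(w,b)=b$ and check their images span $\mathcal{F}_{\mathrm{Affine}}$ via a determinant. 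Your Cauchy--Schwarz argument for strict inequality of the determinant works, but it is more than is needed: under the paper's hypotheses $\mathcal{W}$ and $\mathcal{B}$ are independent and $p_{\mathcal{B}}$ is even, so $\mathbb{E}[\mathcal{W}\mathcal{B}]=\mathbb{E}[\mathcal{W}]\mathbb{E}[\mathcal{B}]=0$ and the determinant is simply $-\tfrac14\mathbb{E}[\mathcal{B}^2]<0$. Second, for directness of the sum, the paper argues locally from $h''(x)=2\alpha(1,x)p_{\mathcal{B}}(x)=0$ on the interior of $\supp(p_{\mathcal{B}})$ to conclude $\alpha\equiv 0$, whereas you obtain it as a corollary of $\mathcal{G}_2=\mathcal{G}_3=0$ on $\mathcal{F}_{\mathrm{Abs}}$ combined with the observation that these functionals are strictly positive on nonzero affine functions; both are valid, and your route has the modest elegance of deriving directness for free from the second half of the statement. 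Finally, for $\mathcal{G}_3(h)=0$ you verify the compactly-supported form of $\mathcal{G}_3$ by evaluating the affine pieces of $h$ at $\pm B$, while the paper establishes $\int h''(b)\,|b|\,\diff b - 2h(0)=0$ directly by showing $h$ equals $g(\cdot,\alpha_h)$ for the canonical $\alpha_h(1,b)=h''(b)/(2p_{\mathcal{B}}(b))$ and evaluating at $0$; both are correct under the compact-support assumption that is in force for Theorem~\ref{thm:ib-md-scaled}, but the paper's computation also handles the general form of $\mathcal{G}_3$ without invoking the reformulation at the end of Theorem~\ref{thm:ib-md-unscaled}.
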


\begin{proof}{[Proof of Proposition~\ref{prop-func-abs}]}
    We first show both $\mathcal{F}_{\mathrm{Abs}}$ and $\mathcal{F}_{\mathrm{Affine}}$ are subspaces of $\mathcal{F}_{\mathrm{ReLU}}$. 
    Assume $h(x) = \int \alpha(w,b)|wx-b|\diff \mu \in\mathcal{F}_{\mathrm{Abs}}$. 
    Notice the ReLU activation function can be decomposed as $[z]_+ = z/2 + |z|/2$. Since $\alpha$ is an even function, we have for all $x\in\R$, 
    \begin{align*}
        h(x) = \int \alpha(w,b) |wx-b| \diff \mu = \int 2\alpha(w,b) [wx-b]_+ \diff \mu.
    \end{align*}
    Since $2\alpha \in \Theta_{\mathrm{ReLU}}$, $\mathcal{F}_{\mathrm{Abs}}\subset \mathcal{F}_{\mathrm{ReLU}}$. Now assume $h(x)=zx+w$ with $z,w\in\R$. Consider $\alpha_h$ defined by 
    $$
    \begin{dcases}
        \alpha_h(1,b) = -\frac{2w}{\mathbb{E}[\mathcal{B}^2]}b + 2z;  \\
        \alpha_h(-1,b) = - \alpha_h(1,-b).   
    \end{dcases}
    $$
    Note $\alpha_h$ is an odd function. Then we have for any $x\in\R$,
    \begin{align*}
        \int \alpha_h(w,b)[wx-b]_+\diff \mu &= \int \alpha_h(1,b)\frac{x-b}{2} \diff \mu_{\mathcal{B}}\\
        &= \left(\int_{\supp(p_{\mathcal{B}})} \frac{\alpha_h(1,b)}{2}p_{\mathcal{B}}(b) \diff b \right)x - \left(\int_{\supp(p_{\mathcal{B}})} \frac{\alpha_h(1,b)}{2}p_{\mathcal{B}}(b) b \diff b \right)\\
        &= \left(\int_{\supp(p_{\mathcal{B}})} z p_{\mathcal{B}}(b) \diff b \right)x - \left(\int_{\supp(p_{\mathcal{B}})} -\frac{w}{\mathbb{E}[\mathcal{B}^2]} p_{\mathcal{B}}(b) b^2 \diff b \right)\\
        &= zx + w. 
    \end{align*}
    Hence, $\mathcal{F}_{\mathrm{Affine}}\subset \mathcal{F}_{\mathrm{ReLU}}$. 
    
    Next we show $\mathcal{F}_{\mathrm{Abs}}\cap \mathcal{F}_{\mathrm{Affine}} = \{0\}$, i.e. the only affine function in $\mathcal{F}_{\mathrm{Abs}}$ is the zero function. 
    Assume $h(x) = \int \alpha(w,b)|wx-b|\diff \mu \in\mathcal{F}_{\mathrm{Abs}}$ is an affine function. 
    For $x \in (\supp(p_{\mathcal{B}}))^{\mathrm{o}}$, by Leibniz integral rule we have
    \begin{equation}\label{hpp-abs}
        h^{\prime\prime}(x) = 2\alpha(1,x)p_{\mathcal{B}}(x). 
    \end{equation}
    Since $h$ is an affine function, we have $2\alpha(1,x)p_{\mathcal{B}}(x)=0$. Now since $p_{\mathcal{B}}(x)>0$, $\alpha(1,x)=\alpha(-1,-x)=0$, which implies $h(x)=0$ for $x\in \R$. 
    
    Next we show that for any $h\in \mathcal{F}_{\mathrm{ReLU}}$, there exists $h_1\in \mathcal{F}_\mathrm{Abs}$ and $h_2 \in \mathcal{F}_\mathrm{Affine}$ such that $h=h_1 + h_2$. This can be directly verified by the following computation: for any $\alpha \in \Theta_{\mathrm{ReLU}}$, 
    \begin{align*}
        \int \alpha(w,b)[wx-b]_+ \diff \mu &= \int \alpha^+(w,b)\frac{|wx-b]}{2} \diff \mu + \int \alpha^-(w,b)\frac{wx-b}{2} \diff \mu\\
        &= \int \alpha^+(w,b)\frac{|wx-b]}{2} \diff \mu + \int \frac{\alpha^-(1,b)}{2} \diff \mu_{\mathcal{B}} \cdot x - \int \frac{\alpha^-(1,b) b}{2} \diff \mu_{\mathcal{B}}.
    \end{align*}
    Therefore, we have $\mathcal{F}_{\mathrm{ReLU}}=\mathcal{F}_{\mathrm{Abs}}\oplus \mathcal{F}_{\mathrm{Affine}}$.

    We now show $\mathcal{G}_2$ and $\mathcal{G}_3$ vanish on $\mathcal{F}_{\mathrm{Abs}}$. 
    Without loss of generality, we assume $\supp(p_{\mathcal{B}}) = [m, M]$ where $m, M$ are finite numbers. 
    Assume $h(x) = \int \alpha(w,b)|wx-b|\diff \mu \in\mathcal{F}_{\mathrm{Abs}}$
    Then for $x>M$, we have
    $$
    h(x) = \int_{m}^{M} \alpha(1,b) (x-b) p_{\mathcal{B}}(b) \diff b
    $$
    and
    $$
    h^\prime(x) = \int_{m}^{M} \alpha(1,b) p_{\mathcal{B}}(b) \diff b.
    $$
    Similarly, for $x<m$ we have
    $$
    h(x) = \int_{m}^{M} \alpha(1,b) (-x+b) p_{\mathcal{B}}(b) \diff b
    $$
    and
    $$
    h^\prime(x) = -\int_{m}^{M} \alpha(1,b) p_{\mathcal{B}}(b) \diff b.
    $$
    Therefore, $h^\prime(+\infty)+h^\prime(-\infty)=0$ and $\mathcal{G}_2(h)=0$. 

    To show $\mathcal{G}_3(h)=0$, we consider the following function
    $$
    s(x) = h(x) - \int \alpha_h(w,b) |wx-b| \diff \mu , 
    $$
    where $\alpha_h \in \Theta_{\mathrm{Abs}}$ is defined by
    \begin{equation*}
    \alpha_h(1,b) = \frac{h^{\prime\prime}(b)}{2p_{\mathcal{B}}(b)}. 
    \end{equation*}
    
    It is clear that $s(x)\in\mathcal{F}_{\mathrm{Abs}}\subset \mathcal{F}_{\mathrm{Relu}}$. By Proposition~\ref{prop-func-relu}, $s^{\prime\prime}$ vanishes when $x\neq \supp(p_{\mathcal{B}})$. For $x\in(\supp(p_{\mathcal{B}}))^{\mathrm{o}}$, similar to computation in \eqref{hpp-abs} we also have $s^{\prime\prime}(x) = 0$. Therefore, $s^\prime$ is a piece-wise affine function. According to Proposition~\ref{prop-func-relu}, $s^\prime$ is continuous. Therefore, $s^\prime$ is a constant function. By what we just shown above, the only constant function in $\mathcal{F}_{\mathrm{Abs}}$ is the zero function. Therefore, $h(x)=g(x,\alpha_h)$ for all $x\in\R$. In particular, we have that
    \begin{align*}
        h(0) = g(0,\alpha_h) = \int \alpha_h(w,b)|-b| \diff \mu = \int_{\supp(p_{\mathcal{B}})} h^{\prime\prime}(b) \frac{|b|}{2} \diff b. 
    \end{align*}
    Therefore, $\int_{\supp(p_{\mathcal{B}})} h^{\prime\prime}(b)|b|\diff b - 2h(0)=0$ 
    and 
    $\mathcal{G}_3(h)=0$.  
    With this, we conclude the proof. 
\end{proof}

In the following result, we solve the minimal representation cost problem \eqref{min-rep} with the cost functional described in problem \eqref{ib-inf-scaled}, i.e. $\mathcal{C}_\Phi(\alpha) = \int D_\phi(\alpha(w,b), 0) \ \diff \mu$. 

\begin{proposition}\label{lem-min-rep-phi}
    Let $g$ be as defined in \eqref{model-abs}. 
    Given $h\in \mathcal{F}_{\mathrm{Abs}}$, consider the following minimal representation cost problem 
    \begin{equation}\label{min-rep-phi}
        \min_{\alpha \in \Theta_{\mathrm{Abs}}} \int D_\phi(\alpha(w,b), 0) \diff \mu \quad \st \ g(x,\alpha) = h(x), \ \forall x \in \R. 
    \end{equation}
    Then the solution to problem \eqref{min-rep-phi}, denoted by, $\mathcal{R}_\Phi(h)$, is given by
    \begin{equation}\label{sol-min-rep-phi}
        \mathcal{R}_\Phi(h)(1,b) = \frac{h^{\prime\prime}(b)}{2p_{\mathcal{B}}(b)}. 
    \end{equation}
\end{proposition}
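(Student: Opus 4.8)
The plan is to exploit a feature special to absolute value activations: for $h\in\mathcal{F}_{\mathrm{Abs}}$ the constraint $g(\cdot,\alpha)=h(\cdot)$ already determines $\alpha$ uniquely on $\supp(\mu)$, so the feasible set of \eqref{min-rep-phi} is a singleton and the Bregman cost $D_\phi$ plays no role in selecting the minimizer. Thus the proof reduces to two ingredients that are essentially already contained in Propositions~\ref{prop-func-relu} and \ref{prop-func-abs}: feasibility of the candidate \eqref{sol-min-rep-phi}, and uniqueness of the feasible point.

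First I would record feasibility. Let $\alpha_h\in\Theta_{\mathrm{Abs}}$ be the even function with $\alpha_h(1,b)=h^{\prime\prime}(b)/(2p_{\mathcal{B}}(b))$. This is well defined and uniformly continuous: by Proposition~\ref{prop-func-relu}, since $h\in\mathcal{F}_{\mathrm{Abs}}\subset\mathcal{F}_{\mathrm{ReLU}}$, $h$ is $C^1$ on $\R$, it is twice continuously differentiable on $(\supp(p_{\mathcal{B}}))^{\mathrm{o}}$ with $h^{\prime\prime}=2\,\alpha_0(1,\cdot)\,p_{\mathcal{B}}$ (via \eqref{hpp-abs}) for any $\alpha_0\in\Theta_{\mathrm{Abs}}$ representing $h$, and $\supp(h^{\prime\prime})\subset\supp(p_{\mathcal{B}})$; hence $\alpha_h(1,\cdot)$ agrees with $\alpha_0(1,\cdot)$ on $\supp(p_{\mathcal{B}})$. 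The identity $g(\cdot,\alpha_h)=h(\cdot)$ is then exactly the computation performed in the proof of Proposition~\ref{prop-func-abs} when establishing $\mathcal{G}_3(h)=0$: one sets $s=h-g(\cdot,\alpha_h)\in\mathcal{F}_{\mathrm{Abs}}$, uses \eqref{hpp-abs} together with Proposition~\ref{prop-func-relu} to get $s^{\prime\prime}\equiv0$, then continuity of $s^\prime$ to conclude $s$ is affine, and finally the fact (also from that proof) that the only affine function in $\mathcal{F}_{\mathrm{Abs}}$ is zero, so $s\equiv0$.

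Next I would prove uniqueness. If $\alpha\in\Theta_{\mathrm{Abs}}$ satisfies $g(\cdot,\alpha)=h$, then \eqref{hpp-abs} on $(\supp(p_{\mathcal{B}}))^{\mathrm{o}}$ gives $h^{\prime\prime}(b)=2\,\alpha(1,b)\,p_{\mathcal{B}}(b)$, hence $\alpha(1,\cdot)=\alpha_h(1,\cdot)$ there; reducing without loss of generality to $\supp(p_{\mathcal{B}})$ being an interval (as in the proof of Proposition~\ref{prop-func-abs}), its interior is dense, so by continuity $\alpha(1,\cdot)=\alpha_h(1,\cdot)$ on all of $\supp(p_{\mathcal{B}})$, and evenness forces $\alpha=\alpha_h$ on $\supp(\mu)$; since $\mu$ has full support there and the elements of $\Theta_{\mathrm{Abs}}$ are continuous, $\alpha$ and $\alpha_h$ coincide as elements of $\Theta_{\mathrm{Abs}}$. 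Thus the feasible set of \eqref{min-rep-phi} equals $\{\alpha_h\}$, so $\alpha_h$ is vacuously its unique minimizer; it then only remains to observe that $\mathcal{C}_\Phi(\alpha_h)=\int D_\phi(\alpha_h(w,b),0)\,\diff\mu<\infty$, which holds because $\mathcal{B}$ is compactly supported, $\alpha_h$ is bounded on the compact set $\supp(\mu)$, and $D_\phi(\cdot,0)$ is continuous. This gives \eqref{sol-min-rep-phi}.

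The only genuinely delicate point — and the one I would be most careful about — is the analytic bookkeeping around $h^{\prime\prime}$: one must lean on Proposition~\ref{prop-func-relu} to know that $h^{\prime\prime}$ exists and is continuous on the interior of $\supp(p_{\mathcal{B}})$, vanishes outside it, and that the relevant integrals of $h^{\prime\prime}$ converge, and then perform the same ``without loss of generality the support is an interval'' reduction used in Proposition~\ref{prop-func-abs} so that the interior is dense in the support. Everything else is routine. In particular, in sharp contrast with the ReLU case of Proposition~\ref{min-rep-unscaled}, there is no nontrivial optimization to carry out here, since restricting to even output weights removes the affine freedom and makes the representation map injective on $\Theta_{\mathrm{Abs}}$.
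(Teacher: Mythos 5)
Your proposal is correct and follows essentially the same route as the paper: both observe that the constraint $g(\cdot,\alpha)=h(\cdot)$ already pins down $\alpha$ uniquely via \eqref{hpp-abs} (necessity) together with the $s=h-g(\cdot,\alpha)\equiv 0$ argument from Proposition~\ref{prop-func-abs} (sufficiency), so the feasible set is a singleton and the Bregman cost plays no role. Your added remarks about extending the identity from the interior of $\supp(p_{\mathcal{B}})$ to its closure by continuity, and about finiteness of the cost, are minor refinements of the same argument rather than a different approach.
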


\begin{proof}{[Proof of Proposition~\ref{lem-min-rep-phi}]}
    Note it suffices to show that the constraint $g(\cdot, \alpha)=h(\cdot)$ in problem \eqref{min-rep-phi} is equivalent to
    \begin{equation}\label{constraint-abs}
    \alpha(1,b) = \frac{h^{\prime\prime}(b)}{2p_{\mathcal{B}}(b)}.         
    \end{equation}

    If $h(\cdot)=g(\cdot, \alpha)$, then by calculation as in \eqref{hpp-abs} we have $\alpha$ must satisfy \eqref{constraint-abs}. This implies \eqref{constraint-abs} is necessary for the equality $g(\cdot, \alpha)=h(\cdot)$. 
    Assume $\alpha$ satisfies \eqref{constraint-abs}. Consider the following function
    $$
    s(x) = h(x) - \int \alpha(w,b) |wx-b| \diff \mu. 
    $$
    As we shown in the proof for Proposition~\ref{prop-func-abs}, $s(x)=0$ on $\R$. 
    Therefore, the solution to \eqref{min-rep-phi} is given by \eqref{sol-min-rep-phi} and we conclude the proof. 
\end{proof}

Now we can write down the implicit bias problem in the function space. The following Proposition is a direct result of Proposition~\ref{prop-param-to-func} and Proposition~\ref{lem-min-rep-phi}. 

\begin{proposition}\label{var-sol-scaled}
    Assume $\bar{\alpha}$ solves problem~\eqref{ib-inf-scaled}. Then $\bar{h}(x)=g(x, \bar{\alpha})$, as defined in \eqref{model-abs}, solves the following variational problem
    \begin{equation}\label{not-rkhs}
        \min_{h\in\mathcal{F}_{\mathrm{Abs}}} \int_{\supp(p_{\mathcal{B}})} D_\phi\Big( \frac{h^{\prime\prime}(x)}{2p_{\mathcal{B}}(x)}, 0 \Big) p_{\mathcal{B}}(x) \diff x \quad \st \ h(x_i) = y_i - f(x_i, \hat{\theta}),\ \forall i\in[m]. 
    \end{equation}
\end{proposition}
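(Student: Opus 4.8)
The claim is an assembly of the two preceding propositions, so the plan is to check that their hypotheses hold in the present setting and then chain them. \textbf{Verifying the hypotheses of Proposition~\ref{prop-param-to-func}.} First I would note that $\Theta_{\mathrm{Abs}}$ is a vector subspace of $C(\supp(\mu))$, being the intersection of the subspace of even functions with the subspace of uniformly continuous functions. Next I would check that the cost functional $\mathcal{C}_\Phi(\alpha)=\int D_\phi(\alpha(w,b),0)\,\diff\mu$ is strictly convex on $\Theta_{\mathrm{Abs}}$. Under Assumption~\ref{assump-potential-scaled}, $\phi^{\prime\prime}(z)=\tfrac{1}{1+\omega}(\psi^{\prime\prime}(z)+2\omega)\geq \tfrac{2\omega}{1+\omega}>0$ for all $z$, so $z\mapsto D_\phi(z,0)=\phi(z)-\phi(0)-\phi^\prime(0)z$ is strictly convex; consequently $\alpha\mapsto\int D_\phi(\alpha,0)\,\diff\mu$ is convex, and it is strictly convex on $\Theta_{\mathrm{Abs}}$ because two distinct continuous functions differ on a nonempty open subset of $\supp(\mu)$, which carries positive $\mu$-mass. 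This also gives uniqueness of the minimizer $\bar\alpha$ of~\eqref{ib-inf-scaled} over its feasible set, which is a nonempty affine slice: the functions $(w,b)\mapsto|wx_i-b|$ lie in $\Theta_{\mathrm{Abs}}$ and are linearly independent in $L^2(\mu)$ by Lemma~\ref{MNTK0}, so the interpolation constraints are solvable (alternatively, nonemptiness is inherited from the finite-width problems, which converge to zero training error by Proposition~\ref{lazy-scaled}).

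\textbf{Translating to function space and inserting the representation map.} With these hypotheses, Proposition~\ref{prop-param-to-func}, applied with $\Theta=\Theta_{\mathrm{Abs}}$ and the above $\mathcal{C}_\Phi$, yields that $\bar h(x)=g(x,\bar\alpha)$ solves
\[
\min_{h\in\mathcal{F}_{\mathrm{Abs}}}\ \mathcal{C}_\Phi\circ\mathcal{R}_\Phi(h)\qquad \st\ h(x_i)=y_i-f(x_i,\hat{\theta}),\ \forall i\in[m],
\]
where $\mathcal{R}_\Phi$ is the minimal representation cost map of~\eqref{min-rep}; this is legitimate since every $h\in\mathcal{F}_{\mathrm{Abs}}$ is by definition $g(\cdot,\alpha)$ for some $\alpha\in\Theta_{\mathrm{Abs}}$, which is continuous on the compact set $\supp(\mu)$, so $\mathcal{C}_\Phi(\alpha)<\infty$ and $\mathcal{R}_\Phi(h)$ is well defined by the strict convexity above. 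By Proposition~\ref{lem-min-rep-phi}, $\mathcal{R}_\Phi(h)$ is the even function on $\supp(\mu)$ determined by $\mathcal{R}_\Phi(h)(1,b)=h^{\prime\prime}(b)/(2p_{\mathcal{B}}(b))$. Using the symmetry of $\mu$ together with evenness of $\mathcal{R}_\Phi(h)$ to collapse the integral over $\{-1,1\}\times\R$ into one over $\supp(p_{\mathcal{B}})$ against the marginal density $p_{\mathcal{B}}$ — the same reduction already used in the proofs of Propositions~\ref{min-rep-unscaled} and~\ref{prop-func-abs} — gives
\[
\mathcal{C}_\Phi\circ\mathcal{R}_\Phi(h)=\int D_\phi\!\big(\mathcal{R}_\Phi(h)(w,b),0\big)\,\diff\mu=\int_{\supp(p_{\mathcal{B}})} D_\phi\!\Big(\frac{h^{\prime\prime}(x)}{2p_{\mathcal{B}}(x)},0\Big)p_{\mathcal{B}}(x)\,\diff x,
\]
which is precisely the objective in~\eqref{not-rkhs}. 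Substituting into the displayed problem completes the argument.

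\textbf{Main obstacle.} The genuinely hard parts — the infinite-width limit of the parameter-space bias (Proposition~\ref{inf-scaled}) and the solution of the minimal representation cost problem for the Bregman cost (Proposition~\ref{lem-min-rep-phi}, which cannot rely on the Pythagorean decomposition available for the $L^2$ cost) — are already established, so this proposition is essentially bookkeeping. The only points needing care are the verification in Step~1 that $\mathcal{C}_\Phi$ is strictly convex on $\Theta_{\mathrm{Abs}}$, so that $\mathcal{R}_\Phi$ is single-valued and~\eqref{ib-inf-scaled} has a unique solution and Proposition~\ref{prop-param-to-func} applies verbatim, and the routine measure-theoretic reduction used to identify $\mathcal{C}_\Phi\circ\mathcal{R}_\Phi$ with the integral appearing in~\eqref{not-rkhs}.
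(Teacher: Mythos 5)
Your proposal is correct and matches the paper's reasoning: the paper states this proposition to be a direct consequence of Propositions~\ref{prop-param-to-func} and~\ref{lem-min-rep-phi}, which is exactly the chain you execute, together with the same symmetry reduction of the integral. Your verification of the hypotheses of Proposition~\ref{prop-param-to-func} (vector-subspace structure of $\Theta_{\mathrm{Abs}}$ and strict convexity of $\mathcal{C}_\Phi$) is a sound and welcome expansion of what the paper leaves implicit.
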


In the following result, we show that Theorem~\ref{thm:ib-md-scaled} also holds for hypentropy potentials $\rho_\beta(x) = x\operatorname{arcsinh}(x/\beta) + \sqrt{x^2 - \beta^2}$ when $\beta$ is sufficiently large.

\begin{theorem}[\textbf{Implicit bias of mirror flow with scaled hypentropy}]\label{prop:hypentory}
    Assume the same assumptions as in Theorem~\ref{thm:ib-md-scaled}, except that the potential is given by: $\Phi(\theta) = n^{-2 }\sum_{k\in[3n+1]} \rho_\beta( n (\theta_k - \hat{\theta}_k ) )$, where $\rho_\beta$ is the hypentropy potential. 
    Assume $\beta$ is sufficiently large. 
    Then with high probability over the random initialization,  there exist constant $C_1, C_2>0$ such that for any $t\geq 0$, 
    \begin{equation*}
        \|\theta(t) - \hat{\theta}\|_\infty  \leq C_1 n^{-1}, \ 
        \|\boldsymbol{f}(t)-\boldsymbol{y}\|_2 \leq e^{- \eta_0 C_2 t}\|\boldsymbol{f}(0)-\boldsymbol{y}\|_2
    \end{equation*}
    Moreover, assuming univariate input data, i.e., $d=1$, and letting $\theta(\infty)=\lim_{t\rightarrow \infty} \theta(t)$, we have for any given $x\in\mathbb{R}$ that $\lim_{n\rightarrow \infty}|f(x, \theta(\infty)) - f(x,\hat{\theta}) -\bar{h}(x)|=0$, where $\bar{h}(\cdot)$ is the solution to the following variational problem: 
    \begin{equation*}
        \min_{h\in\mathcal{F}_{\mathrm{Abs}}} \int_{\supp(p_{\mathcal{B}})} 
        D_{\rho_\beta} \Big( \frac{h^{\prime\prime}(x)}{2p_{\mathcal{B}}(x)} , 0\Big) 
        p_{\mathcal{B}}(x) \diff x \quad \st \ h(x_i) = y_i-f(x_i, \hat{\theta}),\ \forall i\in[m]. 
    \end{equation*}
    Here $D_{\rho_\beta}$ denotes the Bregman divergence on $\R$ induced by the hypentropy potential and 
    is given by 
    $D_{\rho_\beta}(x,y)=x\left(\operatorname{arcsinh}(x/\beta)-\operatorname{arcsinh}(y/\beta)\right)-\sqrt{x^2+\beta^2}+\sqrt{y^2+\beta^2}.$
\end{theorem}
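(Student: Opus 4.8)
The plan is to follow verbatim the three-step strategy of Theorem~\ref{thm:ib-md-scaled} — linearization, implicit bias in parameter space, and function-space description — noting that none of the supporting results actually requires the special form $\phi(x)=\tfrac{1}{1+\omega}(\psi(x)+\omega x^2)$ of Assumption~\ref{assump-potential-scaled}: Propositions~\ref{lazy-scaled}, \ref{lazy-scaled-lin}, \ref{linear-scaled}, \ref{inf-scaled}, \ref{prop-param-to-func}, and \ref{lem-min-rep-phi} are all stated for a general $\phi\in C^3(\R)$ with $\phi''>0$ everywhere, supplemented by the flatness conditions \eqref{convergence-condtion-scaled} and \eqref{linearization-condition-scaled}. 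Since $\rho_\beta\in C^\infty(\R)$ with $\rho_\beta''(x)=(x^2+\beta^2)^{-1/2}>0$, the structural requirements hold automatically, and the flatness conditions for $\phi=\rho_\beta$ are exactly what Proposition~\ref{lazy-linear-hypen} guarantees once $\beta$ is large enough. So the argument is essentially a chain of citations with $\phi=\rho_\beta$.

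In detail, I would first invoke Proposition~\ref{lazy-linear-hypen} to obtain, for $\beta$ sufficiently large, a constant $C_1>0$ satisfying \eqref{convergence-condtion-scaled} and \eqref{linearization-condition-scaled} (there one takes $\kappa_1(C_1)=\beta$, $\kappa_2(C_1)=\sqrt{C_1^2+\beta^2}$, and the closed form of $\rho_\beta'''$, giving $\kappa_1/\kappa_2\to1$ and $\kappa_2\kappa_3\to0$ as $\beta\to\infty$). Feeding this $C_1$ into Propositions~\ref{lazy-scaled} and \ref{lazy-scaled-lin} gives, with high probability, $\sup_{t\ge0}\|\theta(t)-\hat\theta\|_\infty\le C_1 n^{-1}$ and $\sup_{t\ge0}\|\ttheta(t)-\hat\theta\|_\infty\le C_1 n^{-1}$, the bound $\inf_{t\ge0}\lambda_{\min}(H(t))\ge C_2$ with $C_2=\beta\lambda_0/4$, and $\|\boldsymbol f(t)-\boldsymbol y\|_2\le e^{-\eta_0 C_2 t}\|\boldsymbol f(0)-\boldsymbol y\|_2$; this is the first display of the theorem. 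Proposition~\ref{linear-scaled} then yields $\lim_{n\to\infty}\sup_{t\ge0}|f(x,\theta(t))-f(x,\ttheta(t))|=0$ for every fixed $x$, reducing the problem to the linearized flow \eqref{traj-param-lin} with $\Phi(\tba)=n^{-2}\sum_k\rho_\beta(n(\tilde a_k-\hat a_k))$.

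For the linearized flow, Theorem~\ref{guna-ib} characterizes the limiting output weights as the minimizer of $D_\Phi(\tba,\hat{\ba})$ subject to the interpolation constraints, which by \eqref{Dphi-s} equals $n^{-1}\int D_{\rho_\beta}(\alpha(w,b),0)\,\diff\mu_n$. Proposition~\ref{inf-scaled} with $\phi=\rho_\beta$ then shows that for $d=1$ the trained-minus-initial network converges pointwise, as $n\to\infty$, to $g(\cdot,\bar\alpha)$, where $\bar\alpha$ solves \eqref{ib-inf-scaled} with cost $\int D_{\rho_\beta}(\alpha(w,b),0)\,\diff\mu$; combined with the linearization bound this gives $\lim_{n\to\infty}|f(x,\theta(\infty))-f(x,\hat\theta)-\bar h(x)|=0$ with $\bar h=g(\cdot,\bar\alpha)$. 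Finally Proposition~\ref{var-sol-scaled}, i.e.\ Proposition~\ref{prop-param-to-func} composed with the minimal-representation-cost identity of Proposition~\ref{lem-min-rep-phi}, rewrites this as the variational problem over $\mathcal{F}_{\mathrm{Abs}}$ with integrand $D_{\rho_\beta}(h''(x)/(2p_{\mathcal B}(x)),0)\,p_{\mathcal B}(x)$. The stated closed form of $D_{\rho_\beta}$ follows from $\rho_\beta'(x)=\operatorname{arcsinh}(x/\beta)$: substituting into $D_{\rho_\beta}(x,y)=\rho_\beta(x)-\rho_\beta(y)-\rho_\beta'(y)(x-y)$ and cancelling the $x\operatorname{arcsinh}(y/\beta)$ terms yields $x(\operatorname{arcsinh}(x/\beta)-\operatorname{arcsinh}(y/\beta))-\sqrt{x^2+\beta^2}+\sqrt{y^2+\beta^2}$.

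The only step with genuine content is Proposition~\ref{lazy-linear-hypen} — verifying the two flatness inequalities for $\rho_\beta$ — but this is a short computation since every quantity other than $C_1$ in \eqref{convergence-condtion-scaled}–\eqref{linearization-condition-scaled} is fixed, so taking $\beta$ large enough makes both inequalities solvable in $C_1$. I therefore expect the main ``obstacle'' to be purely organizational, namely confirming that the scaled-potential propositions are indeed invoked under the hypotheses they were actually proved with ($\phi\in C^3$, $\phi''>0$, plus \eqref{convergence-condtion-scaled}–\eqref{linearization-condition-scaled}), which $\rho_\beta$ satisfies, rather than the narrower Assumption~\ref{assump-potential-scaled}.
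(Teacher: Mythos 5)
Your proposal is correct and follows exactly the same route as the paper: invoke Proposition~\ref{lazy-linear-hypen} to secure the flatness conditions \eqref{convergence-condtion-scaled}--\eqref{linearization-condition-scaled} for $\rho_\beta$ at large $\beta$, then chain through Propositions~\ref{lazy-scaled}, \ref{lazy-scaled-lin}, \ref{linear-scaled}, \ref{inf-scaled}, and \ref{lem-min-rep-phi}, noting these only need $\phi\in C^3(\R)$ with $\phi''>0$ plus the flatness bounds rather than the specific form in Assumption~\ref{assump-potential-scaled}. The paper's proof is just a terser statement of this same chain of citations; your added checks (e.g.\ the closed form of $D_{\rho_\beta}$ from $\rho_\beta'(x)=\operatorname{arcsinh}(x/\beta)$) are correct but not a different argument.
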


\begin{proof}{[Proof of Theorem~\ref{prop:hypentory}]}
    By Proposition~\ref{lazy-linear-hypen}, with sufficiently large $\beta$, Proposition~\ref{lazy-scaled} and \ref{linear-scaled} holds for $\phi=\rho_\beta$. 
    Note in Proposition~\ref{inf-scaled} we only require function $\phi$ to be $C^3$-smooth and to have positive second derivatives everywhere, which is satisfied by the hypentropy function. 
    Also, notice Proposition~\ref{lem-min-rep-phi} does not use information on the potential function. 
    Therefore, Theorem~\ref{thm:ib-md-scaled} holds in this case and we conclude the proof. 
\end{proof}

\section{Discussion on parametrization and initialization}
\label{app:init-param}
In this section, we comment on the parametrization \eqref{model} and initialization scheme \eqref{init} that we considered in this work, and compare these to NTK parametrization and Anti-Symmetrical Initialization.

\paragraph{Unit-norm initialized input weights} 
In initialization \eqref{init}, we require the input weights to be sampled from the unit sphere $\mathbb{S}^{d-1}$. 
This choice aims to make the minimal representation cost problem \eqref{min-rep} more tractable. 
Specifically, by this, for univariate input data we have $\supp(\mu)=\{-1,1\}\times \R$ and the parameter space for infinitely wide ReLU networks, i.e. the space of uniformly continuous functions on $\supp(\mu)$, becomes much smaller than the case when input weights are sampled from a general sub-Gaussian distribution and $\supp(\mu)=\R\times \R$. 
We note this is a common setting in literature on representation cost for overparameterized networks \citep{savarese2019infinite, ongie2019function}. 
In Appendix~\ref{app:multivariate-rep}, we further discuss the challenge of solving the minimal representation cost problem for input weights sampled from general distributions.

\paragraph{Dependence of lazy training on parametrization and initialization} 
Lazy training is known to be dependent on the way the model is parameterized and how the parameters are initialized \citep{chizat2019lazy}. 
In this work, we consider the parametrization \eqref{model} and initialization \eqref{init} for shallow ReLU networks. 
However, there are other settings that also lead to lazy training in gradient descent. 
Generally, consider a two-layer ReLU network of the following form:
$$
f(x)= \frac{1}{s_n} \sum_{i=1}^n a_k [w_k^Tx-b]_+ + d,
$$
where $s_n$ is a scaling factor dependent on the number of hidden units $n$. 
According to \citet{williams2019gradient}, for gradient descent training, there are two ways to enter the lazy regime: (1) initializing the input weights and biases significantly larger than output weights; and (2) setting $s_n=o(n)$ and considering $n\rightarrow \infty$. 
Infinitely wide networks with standard parametrization, which we consider in this work, aligns with the second approach, as $s_n$ is set to $1$. 

\paragraph{Standard vs.\ NTK parametrization}
When $s_n$ is set to $\sqrt{n}$, we obtain the NTK parametrization \citep{jacot2018neural}, 
in which the networks are parametrized as follows:
$$
f(x) = \frac{1}{\sqrt{n}} \sum_{i=1}^n a_k \sigma(w_k^T x - b_k) + d. 
$$
Under NTK parametrization, a typical initialization scheme is to independently sample all parameters from a standard Gaussian distribution or, more generally, from a sub-Gaussian distribution. 
Note that, compared to standard parametrization in \eqref{model} and \eqref{init}, the factor $1/\sqrt{n}$ appears before the sum, rather than as a scaling factor in the initialization distribution of the output weights. 
As a result, not only the output weights are normalized at initialization, their gradients are also normalized throughout training. 
It is known that, although gradient descent under both settings is in the kernel regime (lazy regime), but they correspond to training different kernel models and yield different implicit biases \citep{williams2019gradient, ghorbani2021linearized}. 
Specifically, noted by \citet{jin2023implicit}, under NTK parametrization one can not approximate the training dynamics of the full model by that of the model whose only output weights are trained. 
This implies the approach in this work, in particular the formulation of problem~\ref{ib-inf-param}, can not be directly applied to networks with NTK parametrization.

\paragraph{Anti-Symmetrical Initialization} 
Anti-Symmetrical Initialization (ASI) \citep{zhang2019type} is a common technique in gradient descent analysis to ensure zero function output at initialization. 
In the study of implicit bias of gradient descent, \citet{jin2023implicit} utilize ASI as a base case and then extend to other initializations. 
However, we observe that for mirror flow, ASI could in contrast greatly complicate the implicit bias problem. In the following we briefly discuss why this happens. 

To achieve zero function output at initialization, ASI creates duplicate hidden units for the network with flipped output weight signs. 
Specifically, a network with ASI is described as follows: 
\begin{equation}\label{model-ASI}
    f(x, \theta) = \frac{1}{\sqrt{2}} \sum_{k=1}^n a_k [w_k^Tx - b_k]_+  -  \frac{1}{\sqrt{2}} \sum_{k=1}^n a^\prime_k [(w^{\prime}_k)^Tx - b^\prime_k]_+. 
\end{equation}
Here $\theta=\operatorname{vec}(\boldsymbol{W},\bb,\ba, \boldsymbol{W}^\prime,\bb^\prime,\ba^\prime)$ and the initial parameter $\theta(0)$ is set as 
$$
\boldsymbol{W}(0)=\boldsymbol{W}^\prime(0)=\hat{\boldsymbol{W}}, \;\; \bb(0)=\bb^\prime(0)=\hat{\bb}, \;\; \ba(0)=\ba^\prime(0)=\hat{\ba},
$$
where $\hat{\boldsymbol{W}}, \hat{\bb}$ and $\hat{\ba}$ are as described in \eqref{init}. 
Assume one can simplify the implicit bias of training model \eqref{model-ASI} to that of training the following linear model as we did for networks without ASI: 
\begin{equation}\label{asi-model}
f(x,(\ba,\ba^\prime)) = \frac{1}{\sqrt{2}} \sum_{k=1}^n (a_k - a^\prime_k) \sigma\big(\hat{w}_k^Tx - \hat{b}_k\big). 
\end{equation}
The corresponding mirror flow is given by
\begin{equation}\label{asi-param-traj}
\dt (\ba(t),\ba^\prime(t)) = -\eta (\nabla^2\Phi((\ba(t),\ba^\prime(t))))^{-1} \cdot J_{(\ba,\ba^\prime)}\bar{\boldsymbol{f}}(t) \cdot (\bar{\boldsymbol{f}}(t) - \boldsymbol{y}), 
\end{equation}
where $\bar{\boldsymbol{f}}(t) = (f(x_1,(\ba(t), \ba^\prime(t))), \ldots, f(x_m,(\ba(t), \ba^\prime(t))))^T \in\R^m$. 
Assume \eqref{asi-param-traj} is able to attain global optimality. Then by Theorem~\ref{guna-ib}, the implicit bias is described by the following problem:
\begin{equation}\label{asi-ib-param}
    \min_{(\ba,\ba^\prime) \in \R^{2n}} D_\Phi((\ba,\ba^\prime), (\hat{\ba},\hat{\ba}^\prime)) \quad \st\ f(x_i,(\ba,\ba^\prime)) = y_i,\ \forall i\in[m]. 
\end{equation}
However, observe that model \eqref{asi-model} is ``essentially'' parameterized by $\ba-\ba^\prime$ rather than by $(\ba,\ba^\prime)$, meaning that the function properties of $f$ are tied to $\ba-\ba^\prime$ instead of $(\ba,\ba^\prime)$. 
A gap thus appears: the implicit bias in parameter space describes the behavior of $(\ba, \ba^\prime)$, whereas the function properties are related to $\ba-\ba^\prime$. 
To fill this gap, one can either solve the minimal representation cost problem with respect to a cost function defined on $(\ba,\ba^\prime)$ as given in \eqref{asi-ib-param}, which can be quite challenging, or to characterize $\ba(\infty)-\ba^\prime(\infty)$ returned by \eqref{asi-param-traj}. 
The latter is also challenging, as $\ba(t) - \ba^\prime(t)$ evolves according to the following differential equation: 
\begin{equation}\label{A-Aprime}
    \dt (\ba(t) - \ba^\prime(t)) = -\eta\Big( (\nabla^2\Phi(\ba(t)))^{-1} + (\nabla^2\Phi(\ba^\prime(t)))^{-1} \Big) J_{\ba} \bar{\boldsymbol{f}}(0) (\bar{\boldsymbol{f}}(t)-\boldsymbol{y}). 
\end{equation}
Noted by \citet{azulay2021implicit}, a possible way to characterize $\ba(\infty)-\ba^\prime(\infty)$ is to solve the following equation for $\Psi$:
\begin{equation}\label{eq-psi}
    \Big( (\nabla^2\Phi(\ba(t)))^{-1} + (\nabla^2\Phi(\ba^\prime(t)))^{-1} \Big)^{-1} = \nabla^2 \Psi\Big(\ba(t)-\ba^\prime(t)\Big).
\end{equation}
If such a $\Psi$ exists, we have $\dt (\nabla\Psi(\ba(t)-\ba^\prime(t)) ) = -\eta J_{\ba} \bar{\boldsymbol{f}}(0) (\bar{\boldsymbol{f}}(t)-\boldsymbol{y}).$ 
It follows that $\ba(\infty) - \ba^\prime(\infty)$ is the solution to the following problem:
$$
\min_{\ba-\ba^\prime} \Psi(\ba-\ba^\prime)\quad \st\ f(x_i,(\ba,\ba^\prime)) = y_i, \forall i \in[m]. 
$$
In the special case of gradient flow, i.e., $\Phi(\ba)=\frac{1}{2}\|\ba\|_2^2$, we have $\nabla^2\Phi(\ba)\equiv I$ for any $\ba\in\R^n$. Therefore, \eqref{eq-psi} has a simple solution. 
However, it is known that \eqref{eq-psi} has a solution only if $( (\nabla^2\Phi(\ba(t)))^{-1} + (\nabla^2\Phi(\ba^\prime(t)))^{-1})^{-1}$ satisfies the Hessian-map condition \citep[see, e.g.,][]{azulay2021implicit}. This is a very special property and typically does not hold for most potentials.

\section{Gradient flow with reparametrization}
\label{app:reparam}

In this section, we apply our results to study the implicit bias of gradient flow with a particular class of reparametrization. 
Recall in this work, we focus on the following mirror flow with respect to potential $\Phi$ for minimizing loss function $L$:
\begin{equation}\label{app-eq-mf}
    \dt \theta(t) = - \eta (\nabla^2 \Phi(\theta(t)))^{-1} \nabla L(\theta(t)). 
\end{equation}
Consider a reparametrization map $\theta = G(\nu)$ and the gradient flow in $\nu$-space for minimizing $L\circ G$:
\begin{equation}\label{app-eq-nu}
\dt \nu(t) = - \eta \nabla (L\circ G)(\nu(t)). 
\end{equation}
By the chain rule, the dynamics of $\theta(t)=G(\nu(t))$ in the $\theta$-space induced by \eqref{app-eq-nu} is given by 
\begin{equation}\label{app-eq-nu-theta}
\dt \theta(t) = - \eta J G(\nu(t)) (J G(\nu(t)))^T \nabla L(\theta(t)),
\end{equation}
where $JG(\nu(t))$ denotes the Jacobian matrix of map $G$ at $\nu(t)$. 
Notably, the two dynamics, \eqref{app-eq-mf} and \eqref{app-eq-nu-theta}, are identical if the following equality holds for any $\theta=G(\nu)$: 
\begin{equation}\label{app-eq-phi-G}
    \Big( \nabla^2 \Phi(\theta) \Big)^{-1} = J G(\nu) (J G(\nu))^T. 
\end{equation}
Based on this, we can directly apply our characterization of the implicit bias to gradient flow with a reparametrization that permits a potential function satisfying \eqref{app-eq-phi-G}.

\begin{corollary}
    [\textbf{Implicit bias of gradient flow with reparametrization}]
    \label{coro-reparam}
    Consider a two-layer ReLU network \eqref{model} with one input unit and $n$ hidden units, where we assume $n$ is sufficiently large. 
    Consider parameter initialization \eqref{init}. 
    Consider any finite training data set $\{(x_i, y_i)\}_{i=1}^m$ that satisfies $x_i \neq x_j$ when $i \neq j$ and $\{x_i\}_{i=1}^M \subset \supp(p_\mathcal{B})$. 
    Consider a bijective reparametrization map $G\colon \R^{3n+1} \rightarrow \R^{3n+1}, G(\nu)=\theta$ and the gradient flow in $\nu$-space \eqref{app-eq-nu} with $\eta = \Theta(1/n)$. 
    Assume there exists a function $\Phi\colon \R^{3n+1}\rightarrow \R$ such that $\Phi$ satisfies Assumption~\ref{assump-potential-unscaled} and the equality \eqref{app-eq-phi-G} holds for any $\theta=G(\nu)$. 
    Then letting $\nu(\infty)=\lim_{t\rightarrow \infty} \nu(t)$, we have for any given $x\in\mathbb{R}$, that $\lim_{n\rightarrow \infty}|f(x, G(\nu(\infty))) - f(x, \hat{\theta}) - \bar{h}(x)|=0$, where $\bar{h}(\cdot)$ is the solution to variational problem~\eqref{ib-func-space-unscaled}. 
\end{corollary}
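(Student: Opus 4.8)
The plan is to reduce the gradient flow in the reparametrized $\nu$-space directly to the mirror flow in $\theta$-space analyzed in Theorem~\ref{thm:ib-md-unscaled}, and then simply quote that theorem. First I would observe that the chain-rule computation leading to \eqref{app-eq-nu-theta} is exact: if $\nu(t)$ evolves by $\dt \nu(t) = -\eta \nabla(L\circ G)(\nu(t))$, then $\theta(t) = G(\nu(t))$ evolves by $\dt\theta(t) = -\eta\, JG(\nu(t))\,(JG(\nu(t)))^T\,\nabla L(\theta(t))$. By hypothesis, for every $\nu$ with $\theta = G(\nu)$ we have $JG(\nu)(JG(\nu))^T = (\nabla^2\Phi(\theta))^{-1}$, so the $\theta$-trajectory induced by the $\nu$-space gradient flow satisfies precisely the mirror flow ODE \eqref{traj-param} with potential $\Phi$. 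Since $G$ is bijective and the initial parameter $\hat\theta$ in \eqref{init} corresponds to $\hat\nu = G^{-1}(\hat\theta)$, the two initial value problems have the same solution curve $\theta(t)$ by uniqueness of solutions to the ODE (which holds here because $\nabla^2\Phi$ is continuous and positive definite by Assumption~\ref{assump-potential-unscaled}, so the right-hand side is locally Lipschitz).

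Next I would note that $\Phi$ satisfies Assumption~\ref{assump-potential-unscaled} by hypothesis, the network, initialization, and data assumptions of Corollary~\ref{coro-reparam} match those of Theorem~\ref{thm:ib-md-unscaled} (the one-input-unit ReLU case, i.e.\ $d=1$), and the learning rate satisfies $\eta = \Theta(1/n)$. Therefore Theorem~\ref{thm:ib-md-unscaled} applies verbatim to the $\theta$-trajectory: with high probability over the initialization, the loss converges to zero exponentially, training is lazy, and the limiting function $f(x,\theta(\infty)) = f(x, G(\nu(\infty)))$ satisfies $\lim_{n\to\infty}|f(x,G(\nu(\infty))) - f(x,\hat\theta) - \bar h(x)| = 0$ for each fixed $x\in\mathbb R$, where $\bar h$ solves the variational problem \eqref{ib-func-space-unscaled}. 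Since $\theta(\infty) = \lim_{t\to\infty}\theta(t) = \lim_{t\to\infty}G(\nu(t)) = G(\nu(\infty))$ by continuity of $G$, this is exactly the claimed conclusion.

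The only genuinely substantive point — and the step I would be most careful about — is verifying that the $\nu$-space gradient flow actually reaches zero loss and hence that $\nu(\infty)$ is well-defined. This is not automatic from \eqref{app-eq-phi-G} alone at the level of an abstract reparametrization, but it follows once we know the induced $\theta$-dynamics is the mirror flow \eqref{traj-param}: Proposition~\ref{lazy-unscaled} (invoked inside Theorem~\ref{thm:ib-md-unscaled}) shows $\|\boldsymbol f(t)-\boldsymbol y\|_2 \le e^{-\eta_0 C_2 t}\|\boldsymbol f(0)-\boldsymbol y\|_2$ and that $\theta(t)$ stays in a bounded $O(n^{-1/2})$ neighborhood of $\hat\theta$, so $\theta(t)$ has a limit; pulling back through the homeomorphism $G^{-1}$ gives that $\nu(t)$ converges as well. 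A minor subsidiary check is that the equality \eqref{app-eq-phi-G} forces $JG(\nu)$ to be invertible everywhere (its Gram matrix equals the invertible $(\nabla^2\Phi(\theta))^{-1}$), consistent with $G$ being a bijection with differentiable inverse, so no degeneracy of the trajectory can occur. With these points in place the corollary is an immediate consequence of Theorem~\ref{thm:ib-md-unscaled}.
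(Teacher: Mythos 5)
Your argument is correct and follows exactly the route sketched by the paper in the text preceding the corollary: identify the $\theta$-dynamics induced by the reparametrized gradient flow with the mirror flow \eqref{traj-param} via the chain-rule identity and the hypothesis \eqref{app-eq-phi-G}, then apply Theorem~\ref{thm:ib-md-unscaled}. The additional care you take regarding ODE uniqueness, the matching initial condition $\hat\nu = G^{-1}(\hat\theta)$, and the existence of $\nu(\infty)$ via continuity of $G^{-1}$ is sound and fills in details the paper leaves implicit.
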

A similar result holds for scaled potentials and absolute value networks by Theorem~\ref{thm:ib-md-scaled}, which we omit for brevity. 
We now give concrete examples of reparametrizations that satisfy \eqref{app-eq-phi-G} in Corollary~\ref{coro-reparam}. 
Consider the map $G$ determined by its inverse map $G^{-1}(\theta) = \nu$ as 
$$
(G^{-1}(\theta))_k = \int_0^{\theta_k} \sqrt{\phi^{\prime\prime}(x)} \ \diff x, \quad k=1,\ldots, 3n+1,
$$
where $\phi \in C^2(\R)$ satisfies $\phi^{\prime\prime}(x)>0$ for all $x\in \R$. 
Notice that $JG^{-1}(\theta) = \operatorname{Diag}(\sqrt{\phi^{\prime\prime}(\theta_k)})$ and by the inverse function theorem, 
$JG(\nu) = ( JG^{-1}(\theta))^{-1} = \operatorname{Diag}((\phi^{\prime\prime}(\theta_k))^{-1/2})$. 
It then it follows that \eqref{app-eq-phi-G} holds for $\Phi(\theta)=\sum_{k\in[3n+1]}\phi(\theta_k)$. 
In the special case when $\phi(x)=\frac{1}{12}x^4 + \frac{1}{2}x^2$, we have
$$
(G^{-1}(\theta))_k = \int_0^{\theta_k} \sqrt{x^2 + 1} \ \diff x = \frac{\theta_k}{2} \sqrt{\theta_k^2+1}+\frac{1}{2} \operatorname{arcsinh} \theta_k, \quad k=1,\ldots, 3n+1.
$$

We note that \citet{li2022implicit} established the equivalence between mirror flow and a broader class of reparametrizations, which the authors refer to as commutative reparametrizations. We did not include discussion on these reparametrizations and leave it as a future research direction.

\section{Minimal representation cost problem in general settings}
\label{app:min-rep-general}

In this section, we discuss solving the minimal representation cost problems for ReLU networks with respect to a general cost function and for multivariate networks with general parameter initialization. 

\subsection{Representation cost for ReLU network and general cost function}
\label{app:general-phi}

Consider the following the minimal representation cost problem for a given $h\in \mathcal{F}_{\mathrm{ReLU}}$:
\begin{equation}\label{min-rep-phi-relu}
    {\arg\min}_{\alpha \in \Theta_{\mathrm{ReLU}}} \int D_\phi(\alpha(w,b), 0) \diff \mu \quad \st \ h(x)=\int \alpha(w,b)[wx-b]_+ \diff \mu,\ \forall x\in \R. 
\end{equation}

As we show in Appendix~\ref{app:uns-func}, the constraints in problem \eqref{min-rep-phi-relu} is equivalent to 
\begin{equation}\label{constraints-rep-relu-app}
        \begin{dcases}
        \alpha^+(1,b) = \frac{h^{\prime\prime}(b)}{p_{\mathcal{B}}(b)},\\
        \int \alpha^-(1,b) \diff \mu_{\mathcal{B}} = h^\prime(+\infty)+h^\prime(-\infty),\\ 
        \int \alpha^-(1,b) b  \diff \mu_{\mathcal{B}} = \int_{\R} h^{\prime\prime}(b)|b|\diff b - 2h(0). 
    \end{dcases}
\end{equation}
For simplicity, assume the function $\phi$ satisfies $\phi(0)=0$ and $\phi^\prime(0)=0$. 
Then the objective function in \eqref{min-rep-phi-relu} can be rewritten as follows:
\begin{align*}
    \int D_\phi(\alpha(w,b), 0)\diff \mu &= \frac{1}{2}\int \phi(\alpha(1,b)) + \phi(\alpha(-1,b))\diff \mu_{\mathcal{B}}\\
    &= \frac{1}{2}\int \phi(\alpha^+(1,b) + \alpha^-(1,b)) + \phi(\alpha^+(1,-b) - \alpha^-(1,-b))\diff \mu_{\mathcal{B}}. 
\end{align*}
Then by \eqref{constraints-rep-relu-app} solving problem \eqref{min-rep-phi-relu} is equivalent to solving the following problem
\begin{equation}\label{hard-problem}
\begin{aligned}
    {\arg\min}_{\substack{\alpha^- \in \Theta_{\mathrm{ReLU}}\\ \alpha^-\colon \text{ odd} }} &
    \int \phi\Big(\frac{h^{\prime\prime}(b)}{p_{\mathcal{B}}(b)} + \alpha^-(1,b) \Big) + \phi\Big( \frac{h^{\prime\prime}(-b)}{p_{\mathcal{B}}(-b)} - \alpha^-(1,-b) \Big) \diff \mu_{\mathcal{B}} 
    \\
    \st \ &\begin{dcases}
     \int \alpha^-(1,b)  \diff \mu_{\mathcal{B}} = h^\prime(+\infty)+h^\prime(-\infty),\\
    \int \alpha^-(1,b) b  \diff \mu_{\mathcal{B}} = \int_{\R} h^{\prime\prime}(b)|b|\diff b - 2h(0). \end{dcases}   
\end{aligned}
\end{equation}
In the special case of $\phi(x)=x^2$, where the subspace of even functions is orthogonal to the subspace of odd functions,  
the objective functional in problem \eqref{hard-problem} simplifies to $\int (\alpha^-(1,b))^2 \diff \mu_{\mathcal{B}}$. 
This allows one to explicitly solve the problem. 
For a general $\phi$, however, obtaining a closed-form solution to problem \eqref{hard-problem} becomes much more challenging. 
We emphasize that, for any scaled potential that admits a closed-form solution to problem \eqref{hard-problem}, one can obtain the function description of the corresponding implicit bias with the strategy we introduced in Section~\ref{sec:genral-framework}.

\subsection{Representation cost for multivariate networks and general initialization}
\label{app:multivariate-rep}

Let $d\geq 2$ denote the input dimension for the network. 
Assume $(\mathcal{W}, \mathcal{B})$ is a random vector on $\R^d \times \mathbb{R}$, with distribution $\mu$ and density function $p_{\mathcal{W},\mathcal{B}}$. 
Consider the space of infinitely wide ReLU networks with $d$-dimensional inputs 
$$
\mathcal{F}_{\mathrm{ReLU}}^{(d)} = \Big\{ g(x,\alpha) = \int \alpha(w,b) [w^T x - b]_+ \diff \mu \colon \alpha \in \Theta_{\mathrm{ReLU}}^{(d)} \Big\}
$$ 
where $\Theta_{\mathrm{ReLU}}^{(d)} = \{\alpha \in C(\supp(\mu))\colon \alpha \text{ is uniformly continuous}\}$. 
For a given $h\in \mathcal{F}_{\mathrm{ReLU}}^{(d)}$, consider the minimal representation cost problem with the squared $L^2$ norm cost: 
\begin{equation}\label{min-rep-multi}
{\arg\min}_{\alpha \in \Theta_{\mathrm{ReLU}}^{(d)}} \int (\alpha(w,b))^2 \diff \mu \quad \st \ h(x) = g(x,\alpha), \forall x \in \R^d. 
\end{equation}
For univariate networks, we solve this problem by translating the constraints, $h(x) = g(x,\alpha)$, 
to more manageable equations as presented in \eqref{constraints-rep-relu-app}. 
However, this strategy could be more challenging for multivariate networks with general input weights initialization. 
We discuss this difficulty for multivariate networks with unit-norm initialized input weights and for univariate network with general input weights initialization separately.

For multivariate networks with unit-norm initialized input weights, i.e., $\supp(\mu)\subset \mathbb{S}^{d-1}\times \R$, 
let $\alpha$ be a candidate for Problem~\eqref{min-rep-multi}, i.e., $h(x) = g(x,\alpha), \forall x \in \R^d$. 
Determining the even component of $\alpha$ becomes quite challenging. 
We now present a potential approach with the Radon transform $\mathcal{R}$ and the dual Radon transform $\mathcal{R}^*$ \citep[for formal definition see, e.g.,][]{helgason1999radon}. 
Note that
\begin{equation}\label{deltaf-multi}
    \Delta h(x) = \int \alpha(w,b)\delta(w^Tx-b)\diff \mu = \mathcal{R}^* \{ \alpha p_{\mathcal{W}, \mathcal{B}}\}(x). 
\end{equation}
Assuming that $\alpha p_{\mathcal{W},\mathcal{B}}$ is in the Schwartz space $\mathcal{S}(\mathbb{S}^{d-1} \times \mathbb{R})$,\footnote{$\alpha \in \mathcal{S}(\mathbb{S}^{d-1} \times \mathbb{R})$ if $\alpha(w,b)$ is $C^\infty$-smooth and all its partial derivatives decreases faster than $(1+|b|^2)^{-k}$ as $b\rightarrow \infty$ for any $k>0$.} then by the inversion formula for the dual Radon transform \citep[][Theorem 8.1]{solmon1987asymptotic} one has: 
$$
\alpha^+(w,b) =  \frac{- \mathcal{R}\{ (-\Delta)^{\frac{d+1}{2}} h\}(w,b)}{2(2\pi)^{d-1}p_{\mathcal{W},\mathcal{B}}(w,b)}. 
$$
However, the assumption on $\alpha p_{\mathcal{W},\mathcal{B}}$ does not generally hold. 
In particular, consider $\bar{\alpha}$ as the solution to following problem:
\begin{equation}\label{l2-multi}
\underset{\alpha \in \Theta_{\mathrm{ReLU}}^{(d)}}{\arg\min} \int (\alpha(w,b))^2 \diff \mu \quad \st \ g(x_i, \alpha) = y_i, \forall i \in [m]. 
\end{equation}
By analogy with Proposition~\ref{prop:reduce-to-l2}, one can expect $\bar{\alpha}$ to characterize the parameter obtained by mirror flow training for infinitely wide ReLU networks in the multivariate setting. 
By the first-order optimality condition for problem \eqref{l2-multi}, $\bar{\alpha}$ takes the following form:
$$
\bar{\alpha}(w,b) = \sum_{i=1}^m \lambda_i [w^T x_i - b]_+,
$$
where $\{\lambda_i\}_{i\in[m]}$ are Lagrangian multipliers to be determined. 
It is clear that either $\bar{\alpha}=0$ or $\bar{\alpha}$ does not have continuous partial derivatives, implying that $\bar{\alpha}p_{\mathcal{W},\mathcal{B}}$ does not lie in $\mathcal{S}(\mathbb{S}^{d-1} \times \mathbb{R})$ even if $p_{\mathcal{W},\mathcal{B}}$ does. 
A similar observation has been made by \citet{ongie2019function}, who note the difficulty in explicitly characterizing the representation cost with the $L^1$ norm in the multivariate setting. 
Some studies have relaxed the Schwartz space condition for the inversion formula of the \emph{Radon transform} (\citealp[Theorem 6.2, Ch.\ I]{helgason1999radon}; \citealp[Theorem 5.1]{jensen2004sufficient}), which recovers a function $f$ on $\R^d$ using $\mathcal{R}(f)$. However, to the best of our knowledge, no corresponding relaxation has been established for the inversion formula of the \emph{dual Radon transform}, which recovers a function $\alpha$ on $\mathbb{S}^{d-1} \times \mathbb{R}$ using $\mathcal{R}^*(\alpha)$. The latter is necessary for characterizing the solution to problem \eqref{min-rep-multi} with the relation given in \eqref{deltaf-multi}.

For univariate ReLU networks with general input weight initializations, assume $(\mathcal{W},\mathcal{B})$ is a random vector on $\R\times \R$. 
Let $h(x)=g(x, \alpha), \forall x\in \R$. 
When determining the even component of $\alpha$, one can differentiate $h(x)$ twice and obtain the following equality
\begin{equation}\label{hard-problem-2-constrain}
\int_{\R} \alpha^+(w, wx) w^2 p_{\mathcal{W},\mathcal{B}}(w,wx) \diff w = h^{\prime\prime}(x). 
\end{equation}
Then the minimal representation cost problem on the subspace of even functions can be written as
\begin{equation}\label{hard-problem-2}
    {\arg\min}_{\substack{\alpha^+ \in \Theta_{\mathrm{ReLU}}\\ \alpha^+ \colon \text{ even} }} 
    \int (\alpha^+(w,b))^2 \diff \mu \quad \st \ \alpha^+ \text{satisfies \eqref{hard-problem-2-constrain}.}
\end{equation}
For unit-norm initialized input weights, i.e., $\mathcal{W}=\operatorname{Unif}(\{-1,1\})$, the constraint \eqref{hard-problem-2-constrain} becomes a discrete sum and it in fact completely determines $\alpha^+$. 
Under a general distribution for input weights, solving \eqref{hard-problem-2} can be challenging. 
We note that the unit-norm input weights constraint also appears in other works on representation cost for overparameterized networks \citep{savarese2019infinite, ongie2019function}. 
\citet{jin2023implicit} considered general initialization but at the same time they required either intractable data adjustments or otherwise adding skip connections to the network.

\section[]{Riemannian geometry of the parameter space $(\Theta, \nabla^2\Phi)$}
\label{app:riemann}

In this section, we briefly discuss the Riemannian geometry induced by the metric tensor $\nabla^2 \Phi$ on the parameter space.  

The Riemannian curvature tensor is a fundamental concept in differential geometry, describing how the space curves at different points and along various directions. For a detailed introduction and rigorous definition of the Riemannian curvature tensor, we refer the reader to \citet{do1992riemannian}.
Next we compute the Riemannian curvature tensor of the manifold $(\R^{3n+1}, \nabla^2\Phi)$, where $\Phi$ is a potential function satisfying Assumption~\ref{assump-potential-unscaled}. 

With the trivial coordinate chart $\operatorname{id} \colon \R^{3n+1} \rightarrow \R^{3n+1}$, the metric tensor $\nabla^2\Phi$ takes the form of a diagonal matrix $\operatorname{diag}(\phi^{\prime\prime}(\theta_k))$. 
The Riemannian curvature is determined by the Christoffel symbols for the Levi-Civita connection \citep[see, e.g.,][]{do1992riemannian}, which can be computed as follows: 
\begin{equation}\begin{aligned}
    \Gamma_{ij}^k &=\frac{1}{2}  \sum_{k=1}^p g^{km}(\partial_i g_{jm} + \partial_j g_{mi}-\partial_m g_{ij})    \\
    &= \frac{1}{2} g^{kk} (\partial_i g_{jk} + \partial_j g_{ki}-\partial_k g_{ij})  \\
    &= \begin{cases}
        (2\phi^{\prime\prime}(\theta_k))^{-1}\cdot \phi^{\prime\prime\prime}(\theta_k) & i=j=k; \vspace{.5em} \\
        0 & \text{otherwise.}
    \end{cases}
\end{aligned}\end{equation}
Then the components of the Riemannian curvature tensor is given by:
\begin{equation}\begin{aligned}
    R^l_{ijk} = \Gamma^{m}_{ik}\ \Gamma^l_{jm}-\Gamma^{m}_{jk}\ \Gamma^l_{im}+\partial_j \Gamma^l_{ik}-\partial_i \Gamma^l_{jk} =0
\end{aligned}\end{equation}
This implies the curvature tensor is everywhere zero and the Riemannian manifold $(\Theta, \nabla^2\Phi)$ is flat everywhere. 
We remark that a manifold equipped with a metric tensor of the form $\nabla^2 \Phi$ for some general function $\Phi$ is commonly known as a Hessian manifold. We refer the reader to \citet{shima1997geometry} for a broader discussion on the geometry on Hessian manifolds.

\section{Experiment details}
\label{app:ex-detail}

In this section, we provide details for experiments in Section~\ref{sec:experiment}.

\paragraph{Training data}
For the experiments with unscaled potentials in Figure~\ref{fig:thm1}, we use the following training data set: $\{ (-1, -0.15), (-0.2, -0.15), (0, 0.15), (0.2, -0.15), (1, -0.15) \}$. In the right panel of Figure~\ref{fig:thm1}, we apply the following shift values to the $y$-coordinates of all the data points: $2.5, 1, -0.5, -2, -3.5$. For the experiments with scaled potentials in Figure~\ref{fig:thm2}, 
we consider a different training data set in order to highlight the effect of different scaled potentials: 
$\{ (-1, 0.15), (0.35, 0.15), (0.65, -0.15), (1, 0.15) \}$. 
Note the $x$-coordinates of all the training data points are contained in $[-1,1]$.

\paragraph{Mirror descent training}
We use full-batch mirror descent to train the network. 
For mirror descent with unscaled potentials, as the number of hidden units $n$ varies, we set the learning rate as $\eta = 1/n$ across all three potentials $\phi_1$, $\phi_2$ and $\phi_3$. 
For mirror descent with scaled potentials, the learning rate for $\phi_1$ and $\phi_2$ is set to $1/n$ and the learning rate for $\phi_3$ is set to $2/n$. 
We observe that for scaled potentials, a larger $p$ value in $\phi(x)=|x|^p+x^2$ leads to slower convergence and thus we use a larger learning rate for $\phi_3$. 
During each training iteration, we manually compute the inverse Hessian matrix, multiply it by the back-propagated gradients, and then update the parameters using the preconditioned gradients. 
The stopping criterion for the training is that the training loss is lower than the threshold value $10^{-7}$. 
In the middle panel of Figures~\ref{fig:thm1} and \ref{fig:thm2}, we repeat the training five times with different random initializations for the network. 
The training is implemented by PyTorch version 2.0.0.

\paragraph{Numerical solution to the variational problem} 
We solve the variational problem using discretization. 
We discretize the interval $[- 1.5, 1.5]$ evenly with $501$ 
points $-1.5 = t_0 < t_1 < \cdots < t_{500}=1.5$ and then set the vector to be optimized as $u = [h_0, h_1, \ldots, h_{500}, h^\prime_-, h^\prime_+]$  
where $h_i$ represents the function value $h(t_i)$ for $i=0,\ldots,500$, $h^\prime_-$ represents the slope $h^\prime(-\infty)$, and $h^\prime_+$ represents $h^\prime(+\infty)$. 
As we shown in Proposition~\ref{prop-func-relu}, when $\mathcal{B}$ is compactly support on $[-1,1]$, we have $h^\prime(-\infty)=h^\prime(x)$ for any $x<-1$ and 
$h^\prime(+\infty)=h^\prime(x)$ for any $x>1$. 
Let $(x_k, y_k)$ for $k=1,\ldots, m$ denote the training data points. s
For $k=1,\ldots, m$, let $i_k = \arg\min_{i\in\{0,\ldots,500\}}|t_i-x_k|$ record the index for the nearest discretization point to $x_k$. 
To solve variational problem~\eqref{ib-func-space-unscaled}, we consider the following two sets of constraints applying to $\boldsymbol{u}$: 
(1) $h_{i_k}=y_k$ for $k=1,\ldots,m$; 
(2) $(h_k - h_{k-1})/(t_1-t_0) = h^\prime_-$ for $k\leq i_1$ and 
$(h_k - h_{k-1})/(t_1-t_0) = h^\prime_+$ for $k\geq i_5$. 
The second set of constraints is used to ensure $h$ lies in $\mathcal{F}_{\mathrm{ReLU}}$. 
We estimate the second derivative of $h$ by finite difference method, i.e. $h^{\prime\prime}_i = (h_{i+1}-2h_i+h_{i-1})/(t_1-t_0)^2$ for $i=1,\ldots,499$. 
Noting $\mathcal{G}_3$ has a closed-form formula under uniformly sampled input bias in Theorem~\ref{thm:ib-md-unscaled}, we estimate the objective functionals as follows: 
$$
\begin{dcases}
    \hat{\mathcal{G}}_1(h) = \sum_{i=1}^{499} (h^{\prime\prime}_i)^2 (t_1-t_0)\\
    \hat{\mathcal{G}}_2(h) = (h^\prime_+ + h^\prime_-)^2\\
    \hat{\mathcal{G}}_3(h) = 3( (h^\prime_+ - h^\prime_-) - (h_{i_1}+h_{i_m}))^2. 
\end{dcases}
$$
Then we optimize $u$ to minimize the above objective function. 
To solve variational problem~\eqref{ib-func-space-scaled}, 
we consider two additional sets of constraints 
(1) $\hat{\mathcal{G}}_2(h)=0$; and (2) $\hat{\mathcal{G}}_3(h)=0$. 
This is to ensure $h\in \mathcal{F}_{\mathrm{Abs}}$ according to Proposition~\ref{prop-func-abs}. 
Then we optimize $u$ to minimize $\hat{\mathcal{G}}_1$. 
We implement the optimization by CVXPY \citep{cvxpy} version 1.4.1. 
To estimate the $L^\infty$-error between the solution to the variational and the trained networks, we evaluate the trained network on $\{t_i\}_{i=0}^{500}$, denoted by $\{f_i\}_{i=0}^{500}$, and compute the $\ell_\infty$ norm of the difference vector $[f_0 - h_0, \ldots, f_{500}-h_{500}]$.

\section{Additional numerical experiments}
\label{app:additional-ex}

In this section, we provide additional numerical experiments on mirror descent training.

\paragraph{Training trajectory in parameter space}
The left panel in Figure~\ref{fig:traj} illustrates that, training wide ReLU networks using mirror descent with different unscaled potentials not only have the same implicit bias, they also yield very close parameter trajectories. 
The right panel in Figure~\ref{fig:traj} illustrates that, in contrast, training wide networks using mirror descent with different scaled potentials converge to different solution points and have significantly different trajectories.

\begin{figure}[ht]
    \centering
    \includegraphics[width=0.9\linewidth]{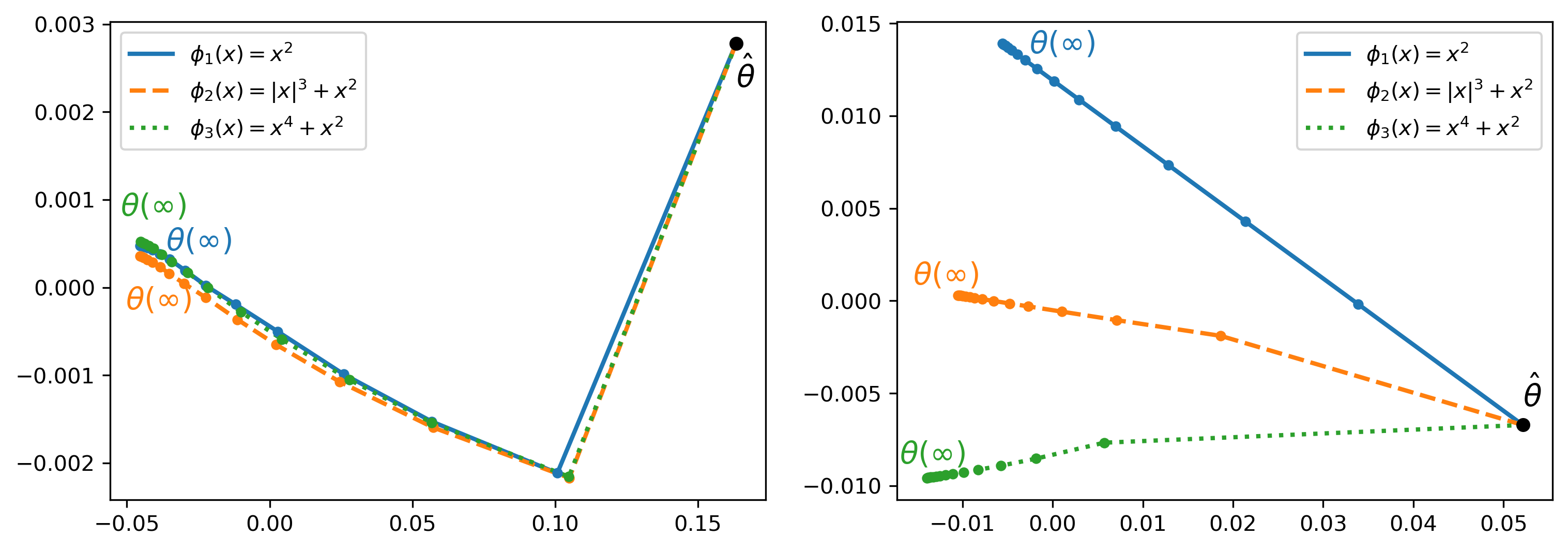}
    \caption{
    Left: 2D PCA representation of parameter trajectories under mirror descent with unscaled potentials $\phi_1= x^2$, $\phi_2 = |x|^3 + x^2$, and $\phi_3= x^4 + x^2$, for ReLU networks with $4860$ hidden units. 
    Right: 2D PCA representation of parameter trajectories under mirror descent with scaled potentials $\phi_1$, $\phi_2$, and $\phi_3$, for networks with absolute value activations and $4860$ hidden units. 
    }
    \label{fig:traj}
\end{figure}

\paragraph{Lazy regime and kernel regime} 
In Figure~\ref{fig:dist-uns}, we examine whether mirror descent with different unscaled potentials: $\phi_1= x^2$, $\phi_2= |x|^3 + x^2$, and $\phi_3= x^4 + x^2$, is in the lazy regime and whether it is in the kernel regime. 
According to the plot,as the network width $n$ increases, both the distance between the final parameter $\theta(\infty)$ and initial parameter $\hat{\theta}$, and the distance between the final kernel matrix $H(\infty)$ and initial kernel matrix $H(0)$ decrease to zero. 
This verifies that mirror descent with unscaled potentials is in both lazy regime and kernel regime, as shown in Theorem~\ref{thm:ib-md-unscaled}.

\begin{figure}[ht]
    \centering
    \includegraphics[width=0.9\linewidth]{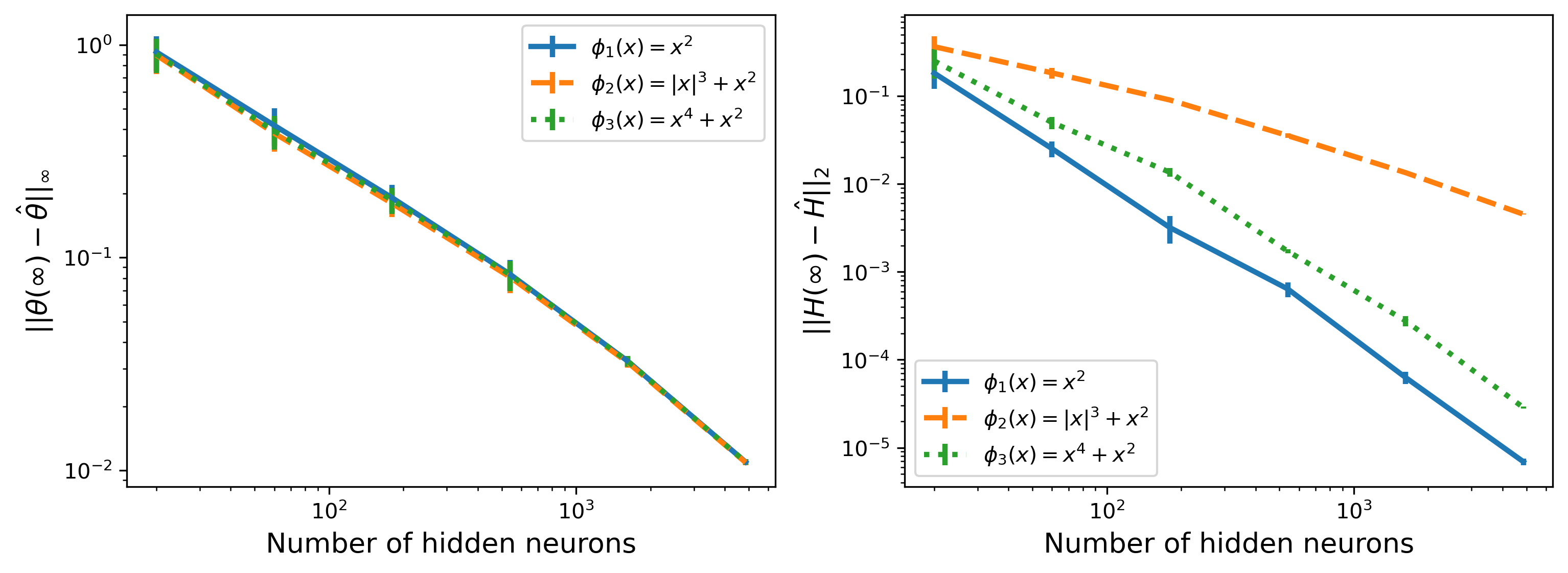}
    \caption{
    Left: $\ell_\infty$ norm of the difference between the final parameter and initial parameter of ReLU networks trained by mirror descent with different unscaled potentials: $\phi_1= x^2$, $\phi_2= |x|^3 + x^2$, and $\phi_3= x^4 + x^2$, plotted against the number of hidden units. 
    Right: spectral norm ($2$-norm) of the difference between the final kernel matrix and the initial kernel matrix from the same training instances as the left panel, plotted against the number of hidden units. 
    }
    \label{fig:dist-uns}
\end{figure}

In Figure~\ref{fig:dist-s}, we examine whether mirror descent with scaled potentials falls into lazy regime and whether it falls into kernel regime. 
Specifically, we observe that $\|\theta(\infty) - \hat{\theta}\|_\infty$ decreases to zero as $n$ increases, which verifies that the training is in lazy regime as we shown in Theorem~\ref{thm:ib-md-scaled}. 
In the right panel of Figure~\ref{fig:dist-s}, we see that for potentials having $\phi$ different from $x^2$, the kernel matrix moves significantly from its initial value during training, verifying our claim that networks trained by mirror descent with diverse scaled potentials do not fall in the kernel regime. 
Note $\phi_1$ is $2$-homogeneous. Therefore it induces the same potential function $\Phi$ under Assumption~\ref{assump-potential-unscaled} and Assumption~\ref{assump-potential-scaled}: 
$\frac{1}{n^2}\sum_{k\in[n]} \phi_1(n(\theta_k-\hat{\theta_k})) = \sum_{k\in[n]} \phi_1(\theta_k-\hat{\theta_k})$. 
This explains why mirror descent with the scaled potential induced by $\phi_1$ is also in the kernel regime. 

\begin{figure}[ht]
    \centering
    \includegraphics[width=0.9\linewidth]{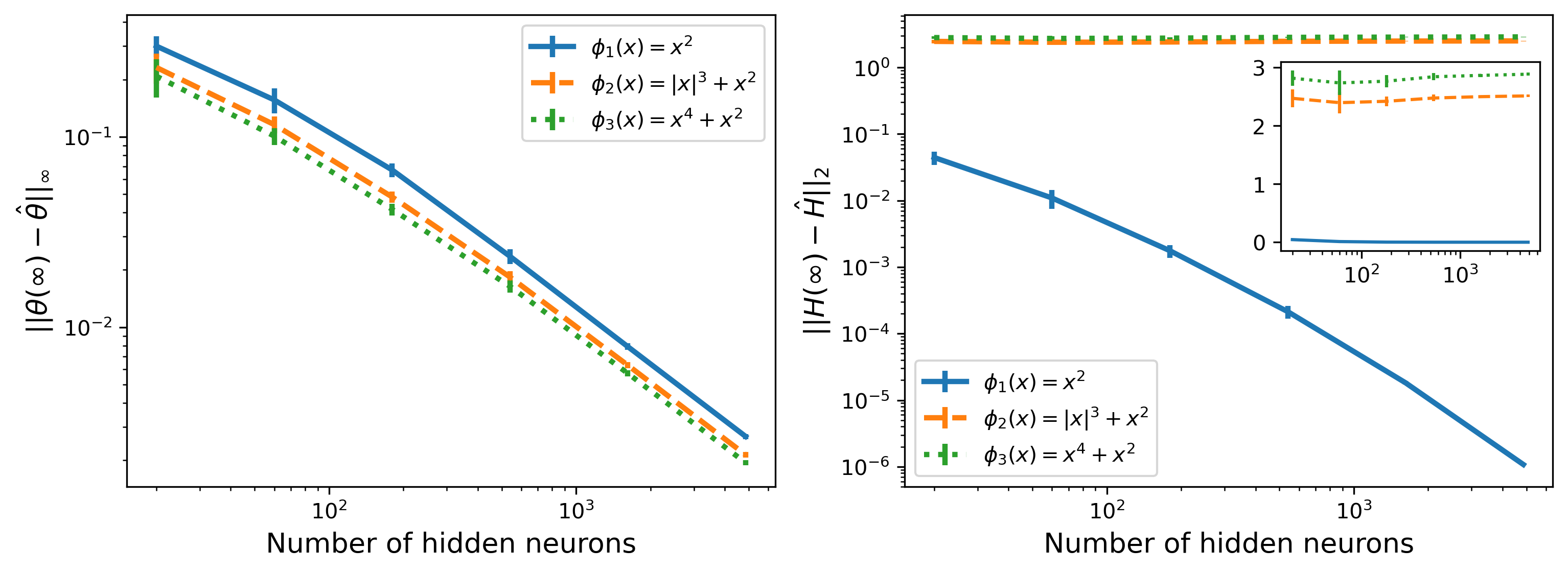}
    \caption{
    Left panel: $\ell_\infty$ norm of the difference between the final parameter and initial parameter of absolute value networks trained by mirror descent with different scaled potentials: $\phi_1= x^2$, $\phi_2= |x|^3 + x^2$, and $\phi_3= x^4 + x^2$, plotted against the number of hidden units.  
    Right panel: spectral norm ($2$-norm) of the difference between the final kernel matrix and initial kernel matrix from the same training instances as the left panel, plotted against the number of hidden units. 
    The inset shows the same data with the y-axis in linear scale instead of log-scale. 
    }
    \label{fig:dist-s}
\end{figure}

\paragraph{Effect of $\ell_p$+$\ell_2$ potential} 
We examine how scaled potentials of the form $\phi(x)=|x|^p+0.1x^2$ affect the trained networks. 
In the left panel of Figure~\ref{fig:lp-linear}, we compare the solution to the variational problem~\eqref{ib-func-space-scaled} with a constant $p_{\mathcal{B}}$ and  $\phi(x)=|x|^p+0.1x^2$ for different $p$ values, 
versus B-spline interpolations of the data with varying degrees. 
Interestingly, as $p$ approaches $1$, the variational solution behaves more like linear interpolation. 
This trend gets clearer in the right panel, which shows the 2D PCA representation of these interpolation functions. 
Here we directly consider the variational solution for two reasons: (1) these solutions well approximate absolute value networks trained by mirror descent with scaled potentials, as shown in Theorem~\ref{thm:ib-md-scaled}, and (2) we observe that when $p$ takes a large value, the mirror descent converges in a very slow rate and solving the variational problem is more efficient. 
Also note, potentials $\phi(x)=|x|^p + x^2$ with $p<2$ are not covered by Assumption~\ref{assump-potential-scaled}, as $\phi$ does not have a second derivative at $x=0$ and hence the trajectory \eqref{traj-param} can be ill-defined. 

\begin{figure}[ht]
    \centering
    \includegraphics[width=.95\linewidth]{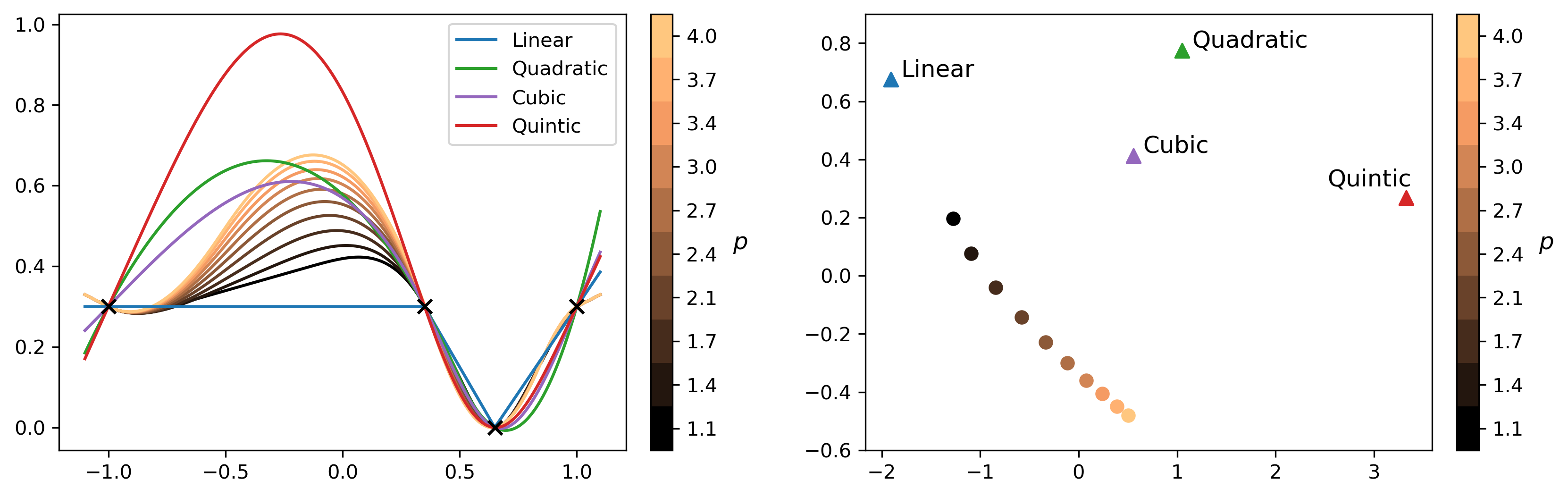}
    \caption{
    Left: comparison of multiple interpolating functions on a common data set (black crossing points): B-spline interpolations with degree $1$ (blue), $2$ (green), $3$ (purple), and $5$ (red); 
    solutions to the variational problem with a constant $p_{\mathcal{B}}$ and $\phi(x)=|x|^p + 0.1x^2$, where $p$ ranges from $1.1$ to $4.0$ with the corresponding color indicated in the colorbar. 
    Right: 2D PCA representation of the interpolation functions in the left panel. 
    }
    \label{fig:lp-linear}
\end{figure}

\paragraph{Effect of hypentropy potentials}
We examine the effect of hypentropy potentials $\rho_\beta(x) = x\operatorname{arcsinh}(x/\beta)-\sqrt{x^2+\beta^2}$ \citep{pmlr-v117-ghai20a} on the implicit bias of mirror descent training. 
The motivation behind these potentials is that, as $\beta\rightarrow 0^+$ their second derivatives, $\rho_\beta^{\prime\prime}(x)=1/\sqrt{x^2 + \beta^2}$, converges to the second derivative of the negative entropy. 
Moreover, the Bregman divergence induced by the hypentropy function interpolates the squared Euclidean distance and the relative entropy or Kullback–Leibler (KL) divergence, as $\beta$ varies from infinity to zero \citep{pmlr-v117-ghai20a}. 
In agreement with this property, in the left panel of Figure~\ref{fig:hypentro}, we observe that when $\beta$ takes large value, the trained networks behaves like the variational solution with $\phi(x)=x^2$, which is closely related to training under Euclidean distance. 
Interestingly, as $\beta$ tends to zero, the trained networks behaves more like the variational solution with $\phi(x)=|x|^3$. 
This pattern is also shown in the right panel, where we plot the 2D PCA representations for these interpolation functions.

\begin{figure}[ht]
    \centering
    \includegraphics[width=.95\linewidth]{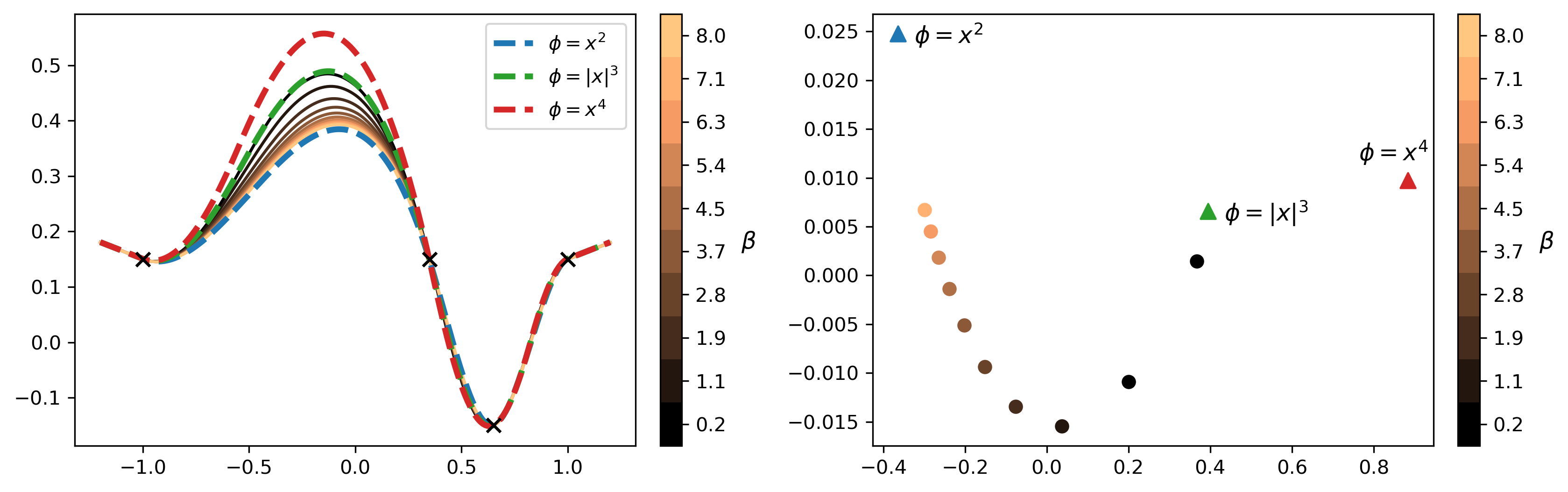}
    \caption{
    Left: comparison of multiple interpolating functions on a common data set (black crossing points): solutions to the variational problem with a constant $p_{\mathcal{B}}$ and $\phi(x)=|x|^p$ for $p=2,3,4$, 
    and 
    wide absolute value networks with uniformly initialized input biases, unit-norm initialized input weights and zero-initialized output weights and biases, trained by mirror flow with scaled hypentropy potentials $\phi_\beta$ where $\beta$ ranges from $0.2$ to $8.0$ with the corresponding color indicated in the colorbar. 
    Right: 2D PCA representation of the interpolation functions in the left panel. 
    }
    \label{fig:hypentro}
\end{figure}

\paragraph{Trained network vs.\ width} 
Figure~\ref{fig:nn-width-uns} compares ReLU networks of different width, trained by mirror descent with different unscaled potentials, against the solution to the variational problem \eqref{ib-func-space-unscaled}. 
In agreement with Theorem~\ref{thm:ib-md-unscaled} and 
Figure~\ref{fig:thm1} (middle panel), the network function converges to the solution to the variational problem as the width increases. 
A further experiment is conducted for absolute value networks and scaled mirror descent, with results shown in Figure~\ref{fig:nn-width-s}. 
In agreement with Theorem~\ref{thm:ib-md-scaled}, we observe that the network function trained with different scaled potentials converges to the solution to the corresponding variational problem as the width increases. 

\begin{figure}[ht]
    \centering
    \includegraphics[width=0.75\linewidth]{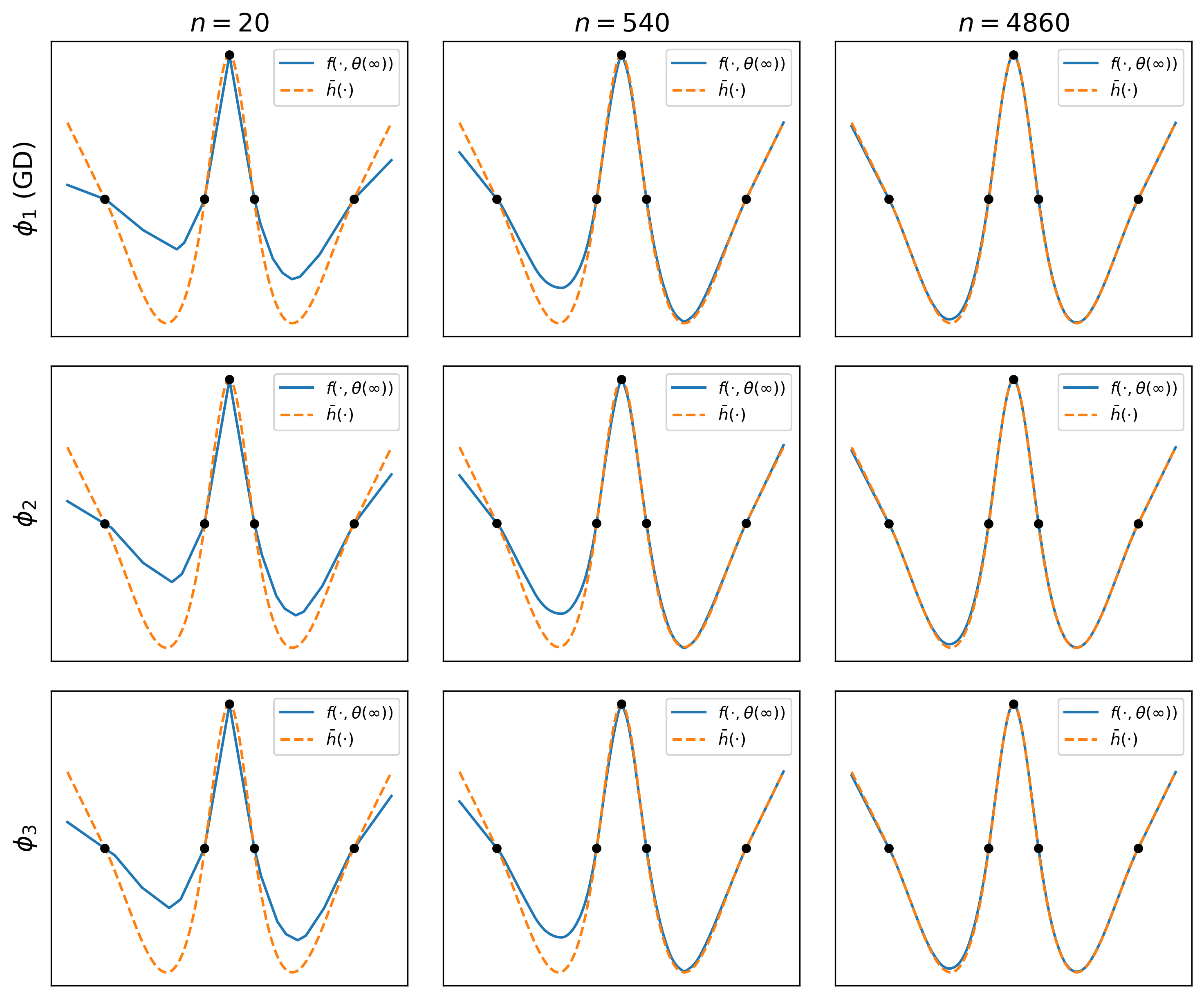}
    \caption{
    Comparison of ReLU networks of different width, trained by mirror descent with unscaled potentials (solid blue), against the solution to the variational problem (dashed orange). 
    Networks of widths $n=30, 540, 4860$ (columns) are trained using three different unscaled potentials: $\phi_1= x^2$, $\phi_2= |x|^3 + x^2$, and $\phi_3= x^4 + x^2$ (rows). 
    All networks have uniformly initialized input biases, unit-norm initialized input weights and zero-initialized output weights and biases. 
    }
    \label{fig:nn-width-uns}
\end{figure}

\begin{figure}[ht]
    \centering
    \includegraphics[width=0.75\linewidth]{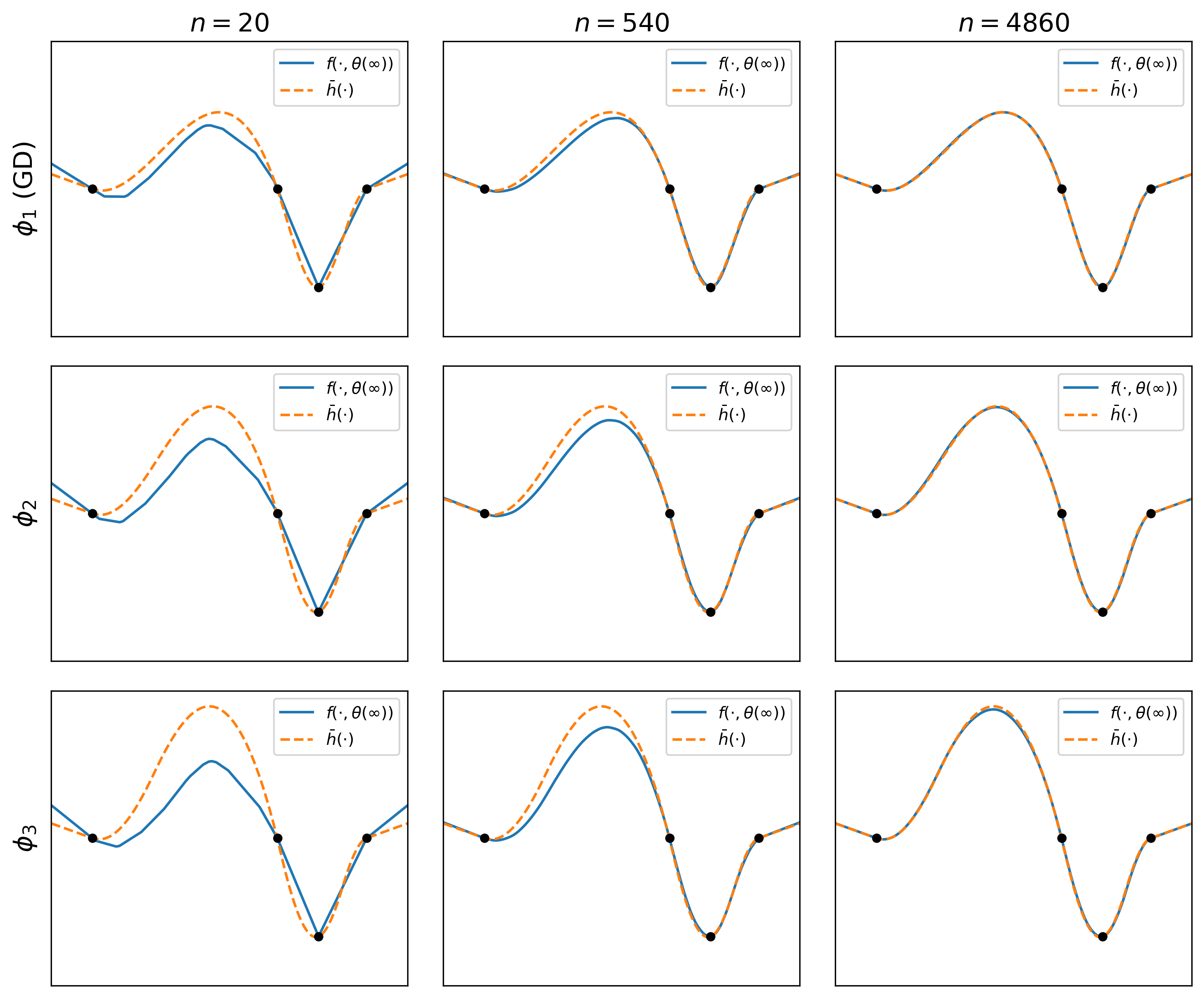}
    \caption{
    Comparison of absolute value networks of different width, trained by mirror descent with scaled potentials (solid blue), against solutions to the variational problem (dashed orange). 
    Networks of widths $n=30, 540, 4860$ (columns) are trained using three different unscaled potentials: $\phi_1= x^2$, $\phi_2= |x|^3 + x^2$, and $\phi_3= x^4 + x^2$ (rows). 
    All networks have uniformly initialized input biases, unit-norm initialized input weights and zero-initialized output weights and biases. 
    }
    \label{fig:nn-width-s}
\end{figure}

\end{document}